\documentclass{article}

\usepackage{microtype}
\usepackage{graphicx}
\usepackage{subcaption}
\usepackage{booktabs} 

\usepackage{hyperref}

\usepackage[accepted]{icml2026}

\usepackage{amsmath}
\usepackage{amssymb}
\usepackage{mathtools}
\usepackage{amsthm}
\usepackage{enumitem}

\usepackage{xparse}

\usepackage{subcaption}

\usepackage{siunitx}
\usepackage{multirow}
\usepackage{aliascnt}
\theoremstyle{plain}
\newtheorem{theorem}{Theorem}[section]

\theoremstyle{definition}
\newaliascnt{definition}{theorem}
\newtheorem{definition}[definition]{Definition}
\aliascntresetthe{definition}

\newaliascnt{lemma}{theorem}
\newtheorem{lemma}[lemma]{Lemma}
\aliascntresetthe{lemma}

\newaliascnt{corollary}{theorem}

\aliascntresetthe{corollary}

\newaliascnt{assumption}{theorem}

\aliascntresetthe{assumption}

\newaliascnt{condition}{theorem}
\newtheorem{condition}[condition]{Condition}
\aliascntresetthe{condition}

\newaliascnt{ihypothesis}{theorem}
\newtheorem{ihypothesis}[ihypothesis]{Hypothesis}
\aliascntresetthe{ihypothesis}

\theoremstyle{remark}
\newaliascnt{remark}{theorem}
\newtheorem{remark}[remark]{Remark}
\aliascntresetthe{remark}

\newcommand{\learnset}{\mathcal{S}_L}
\newcommand{\unlearnset}{\mathcal{S}_U}

\usepackage[capitalize,noabbrev]{cleveref}

\crefname{theorem}{Theorem}{Theorems}
\Crefname{theorem}{Theorem}{Theorems}

\crefname{lemma}{Lemma}{Lemmas}
\Crefname{lemma}{Lemma}{Lemmas}

\crefname{definition}{Definition}{Definitions}
\Crefname{definition}{Definition}{Definitions}

\crefname{proposition}{Proposition}{Propositions}
\Crefname{proposition}{Proposition}{Propositions}

\crefname{corollary}{Corollary}{Corollaries}
\Crefname{corollary}{Corollary}{Corollaries}

\crefname{assumption}{Assumption}{Assumptions}
\Crefname{assumption}{Assumption}{Assumptions}

\crefname{condition}{condition}{conditions}
\Crefname{condition}{Condition}{Conditions}

\newlist{conditems}{enumerate}{1}
\setlist[conditems,1]{label=(C\arabic*),ref=(C\arabic*)}

\crefname{conditemsi}{condition}{conditions}
\Crefname{conditemsi}{}{}

\crefname{ihypothesis}{hypothesis}{hypotheses}
\Crefname{ihypothesis}{Hypothesis}{Hypotheses}

\crefname{remark}{Remark}{Remarks}
\Crefname{remark}{Remark}{Remarks}

\crefname{claim}{Claim}{Claims}
\Crefname{claim}{Claim}{Claims}

\usepackage[textsize=tiny]{todonotes}

\icmltitlerunning{Toward Understanding Adversarial Distillation: Why Robust Teachers Fail}

\begin{document}

\twocolumn[
\icmltitle{Toward Understanding Adversarial Distillation: Why Robust Teachers Fail}



\icmlsetsymbol{equal}{*}

\begin{icmlauthorlist}
\icmlauthor{Hongsin Lee}{kaist}
\icmlauthor{Hye Won Chung}{kaist}
\end{icmlauthorlist}

\icmlaffiliation{kaist}{School of Electrical Engineering, KAIST, Daejeon, Korea}

\icmlcorrespondingauthor{Hye Won Chung}{hwchung@kaist.ac.kr}

\icmlkeywords{Machine Learning, ICML}

\vskip 0.3in
]



\printAffiliationsAndNotice{}  

%


\begin{abstract}
Adversarial Distillation aims to enhance student robustness by guiding the student with a robust teacher's soft labels within the min-max adversarial training framework, yet its success is notoriously inconsistent: a more robust teacher often fails to improve, or even harms, the student's robust generalization. 
In this paper, we identify a key mechanism of this teacher dependency: the misalignment between the teacher's supervisory confidence and the student's representational limitations on a consistent subset of training data---the Robustly Unlearnable Set.
We present a theoretical framework analyzing the feature learning dynamics of a two-layer neural network, demonstrating that this mismatch creates a dichotomy in distillation outcomes. 
We prove that when a teacher provides confident supervision on unlearnable samples, it compels the student to memorize spurious noise patterns that eventually overpower the learned robust signal, thereby driving robust overfitting.
Conversely, a teacher that exhibits high uncertainty on these samples effectively suppresses noise memorization, allowing the student to rely solely on the learnable signal for robust generalization.
We empirically validate our theory across both synthetic simulations and real-image classification datasets, confirming that robust overfitting is driven by the teacher's interaction with unlearnable samples.
Finally, we demonstrate that a teacher's predictive entropy on unlearnable samples serves as a strong indicator of student robustness, validating our theoretical framework and offering a principled guideline for robust teacher selection.
\end{abstract}

\section{Introduction}
\label{sec:main_introduction}
Deep neural networks have achieved strong performance across various domains, yet they remain vulnerable to small adversarial perturbations that can reliably induce misclassification~\citep{FGSM,PGD,CW_attack}, raising concerns for safety-critical deployment~\citep{safe1,safe2}. Adversarial training (AT)~\citep{PGD,TRADES} is currently the most effective empirical defense, utilizing a min-max optimization to minimize the maximum loss under adversarial attacks. However, AT often suffers from robust overfitting: robust test accuracy peaks at an intermediate epoch and then declines, even while robust training accuracy continues to improve~\citep{rice2020overfitting,yu2022understanding}.

A promising evolution of this paradigm is Adversarial Distillation (AD), which integrates knowledge distillation into the min-max adversarial training framework~\citep{ard,iad,rslad,adaad,IGDM}. Unlike standard AT that fits one-hot hard labels, AD trains a student model to match the softened probability distributions of a robust teacher on adversarially perturbed inputs. This richer supervisory signal is intended to improve generalization, and curb robust overfitting. Indeed, prior works indicate that distilling from an early-stage or well-generalized teacher can serve as an effective heuristic to mitigate performance degradation~\citep{chen2020robust}.

However, the success of AD is notoriously unpredictable and highly dependent on the choice of teacher. A more robust teacher does not necessarily yield a more robust student; in some cases, distillation can even exacerbate overfitting and degrade student performance~\citep{saad}. This strong teacher dependence exposes a critical gap in our understanding. Although recent studies have identified empirical indicators of failure, such as the absence of transferable adversarial samples under which the teacher's guidance remains effective~\citep{saad}, these are symptoms rather than causes. They describe what happens when AD fails, but not the underlying reasons.

This paper aims to bridge the gap between empirical correlation in teacher-dependent student robustness and mechanistic understanding. We argue that the key lies in a subset of training data that is intrinsically hard to learn under adversarial constraints--hereafter referred to as the robustly \textit{unlearnable set}. Our investigation begins with a crucial empirical observation: a specific, consistent set of training samples is misclassified by non-overfitted adversarially trained models, regardless of the training method. We posit that these samples are the primary drivers of robust overfitting and, consequently, of failures in AD.

In this paper, we develop a rigorous theoretical framework that resolves the paradox in which distillation from a highly robust teacher exacerbates student overfitting. Our analysis identifies the unlearnable set as the pivotal factor: we show that confident supervision by the teacher on this set forces the student to memorize spurious noise patterns, which makes the student vulnerable to adversarial perturbations and breaks robust generalization. This framework not only clarifies the underlying mechanism but also precisely characterizes the teacher properties required to suppress, rather than exacerbate, such overfitting.

We empirically validate our theoretical framework through extensive experiments on both controlled synthetic settings and real-world image classification benchmarks. The observed learning dynamics closely match the predictions of our model and theoretical analysis. Moreover, our study yields a key practical principle for teacher selection in AD: a teacher's effectiveness is strongly tied to its predictive confidence on the unlearnable samples in the training set. This provides a simple, data-driven criterion for identifying robust teachers that can successfully guide student models, thereby bridging theory and practice. Our contributions are as follows:

\begin{itemize}
\item First, we empirically show that a consistent subset of the training data forms a robustly \textit{unlearnable set} and serves as the primary driver of robust overfitting across different training paradigms (\Cref{sec:main_emp_mot}).

\item Second, we develop a theoretical framework and characterize the learning dynamics of AD. Our analysis provides a first-principles explanation of how unlearnable samples induce robust generalization failure and resolves the teacher-dependency puzzle (\Cref{sec:main_theory}).

\item Finally, we validate our theory through extensive experiments and propose a practical criterion for selecting an effective robust teacher a priori, based on its predictive entropy on unlearnable samples (\Cref{sec:main_experiments}).

\end{itemize}

\begin{table*}[!htbp]
\centering
\caption{\textbf{Identification of robustly unlearnable samples.} We categorize training samples based on the rigorous intersection of predictions from 60 models (spanning 6 training paradigms $\times$ 10 random seeds, including AD from 4 teachers detailed in \Cref{tab:teacher_specs}), evaluated at their respective peak robust accuracy epochs (see \Cref{sec:supp_subset_id} and \Cref{alg:subset_identification}). The `Intersection' column reports the size of the consistent subsets: samples correctly classified by all models (Learnable) or misclassified by all models (Unlearnable).}
\label{tab:split_learn_unlearn}
\begin{tabular}{ll rrrrrr|r}
\toprule
\multirow{2}{*}{\textbf{Model}}  & \multirow{2}{*}{\textbf{Category}} & \multicolumn{2}{c}{\textbf{Adversarial Training}}& \multicolumn{4}{c}{\textbf{Adversarial Distillation}}& \multirow{2}{*}{\textbf{Intersection}} \\
\cmidrule(lr){3-4} \cmidrule(lr){5-8}
 &  & PGD-AT & TRADES & Chen & Rebuffi & Bartoldson & \multicolumn{1}{c}{Gowal} &  \\
\midrule
\multirow{2}{*}{MN-V2} 
& Unlearnable & 13,898  & 12,261  & 11,926  & 13,575  & 12,948  & 13,357  & 8,979 \\
& Learnable   & 22,518  & 24,112  & 30,164  & 25,770  & 23,615  & 22,651  & 19,385 \\
\cmidrule(l){2-9}
\multirow{2}{*}{ResNet-18} 
& Unlearnable & 8,360   & 10,217  & 10,464  & 8,627   & 7,511   & 8,047   & 5,217   \\
& Learnable   & 25,927  & 26,903  & 34,007  & 32,714  & 27,598  & 26,323  & 21,899  \\
\cmidrule(l){2-9}
\multirow{2}{*}{WRN-28-10} 
& Unlearnable & 2,816   & 5,084   & 9,744   & 6,958   & 3,270   & 2,713   & 1,697   \\
& Learnable   & 24,003  & 31,519  & 36,405  & 38,401  & 28,533  & 25,706  & 19,610  \\
\cmidrule(l){2-9}
\multirow{2}{*}{WRN-34-10}
& Unlearnable &         2,608&         4,511&         9,682&         6,861&         3,112&         2,579&         1,559\\
& Learnable   &         21,277&         32,727&         36,567&         38,508&         20,188&         25,932&         16,397\\
\bottomrule
\end{tabular}
\end{table*}

\begin{figure*}[!ht]
\centering
\begin{subfigure}[t]{0.32\textwidth}
    \centering
    \includegraphics[width=\linewidth]{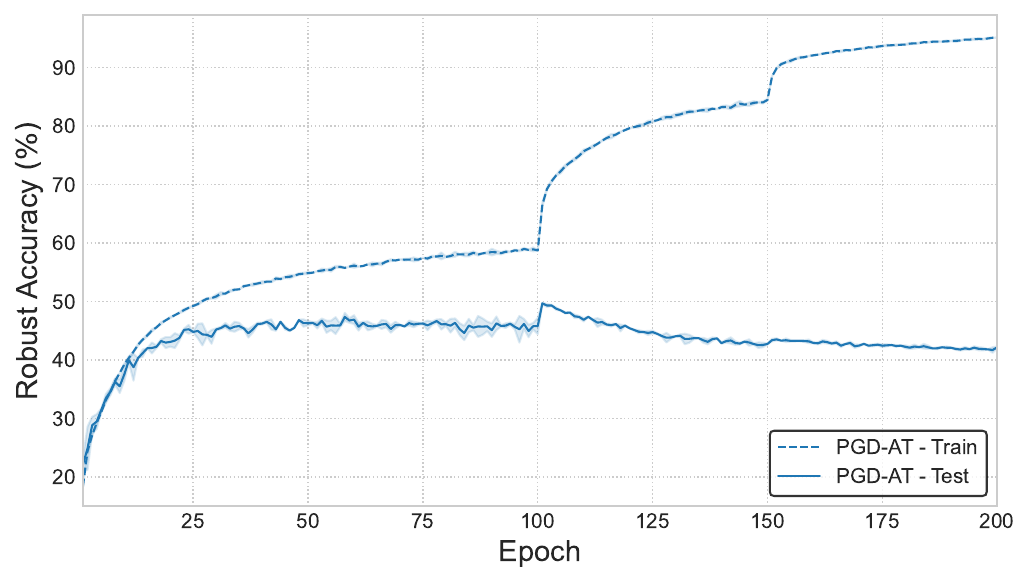}
    \caption{\textbf{Standard PGD-AT:} Robust test accuracy degrades after an early peak, indicating robust overfitting.}
    \label{fig:mot-at}
\end{subfigure}
\hfill
\begin{subfigure}[t]{0.32\textwidth}
    \centering
    \includegraphics[width=\linewidth]{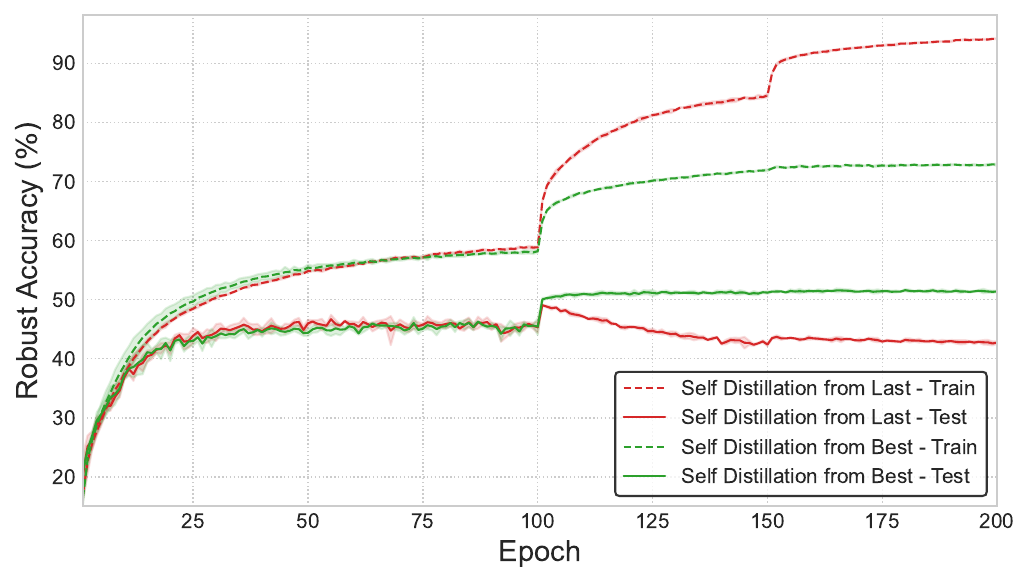}
    \caption{\textbf{Self-Distillation:} Distilling from a `Best' teacher mitigates overfitting, whereas a `Last' teacher amplifies it.}
    \label{fig:mot-ad}
\end{subfigure}
\hfill
\begin{subfigure}[t]{0.32\textwidth}
    \centering
    \includegraphics[width=\linewidth]{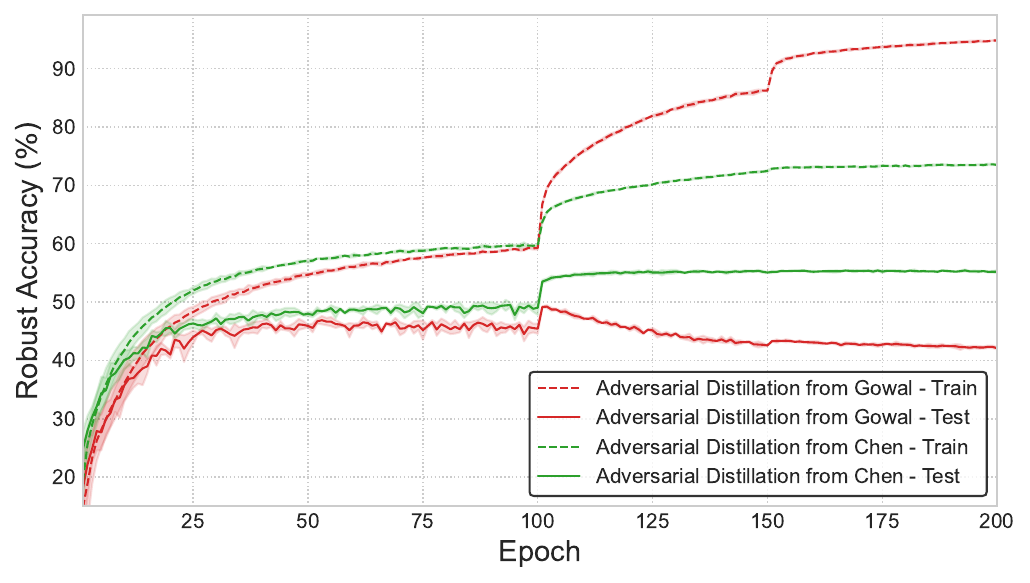}
    \caption{\textbf{External Teachers:} Student robustness varies significantly depending on the teacher, even among strong robust models.}
    \label{fig:mot-select}
\end{subfigure}

\caption{\textbf{Teacher-dependent dynamics of robust overfitting.} (a) Standard adversarial training suffers from robust overfitting. (b) In self-distillation, the outcome is strictly determined by the teacher's overfitting status. (c) Similarly, distillation from external teachers (specifically Gowal and Chen, detailed in \Cref{tab:teacher_specs}) yields inconsistent results, indicating that teacher robustness alone is not a sufficient predictor of student success.}
\label{fig:motivation}
\end{figure*}

\begin{figure*}[t]
\centering
\includegraphics[width=1\linewidth]{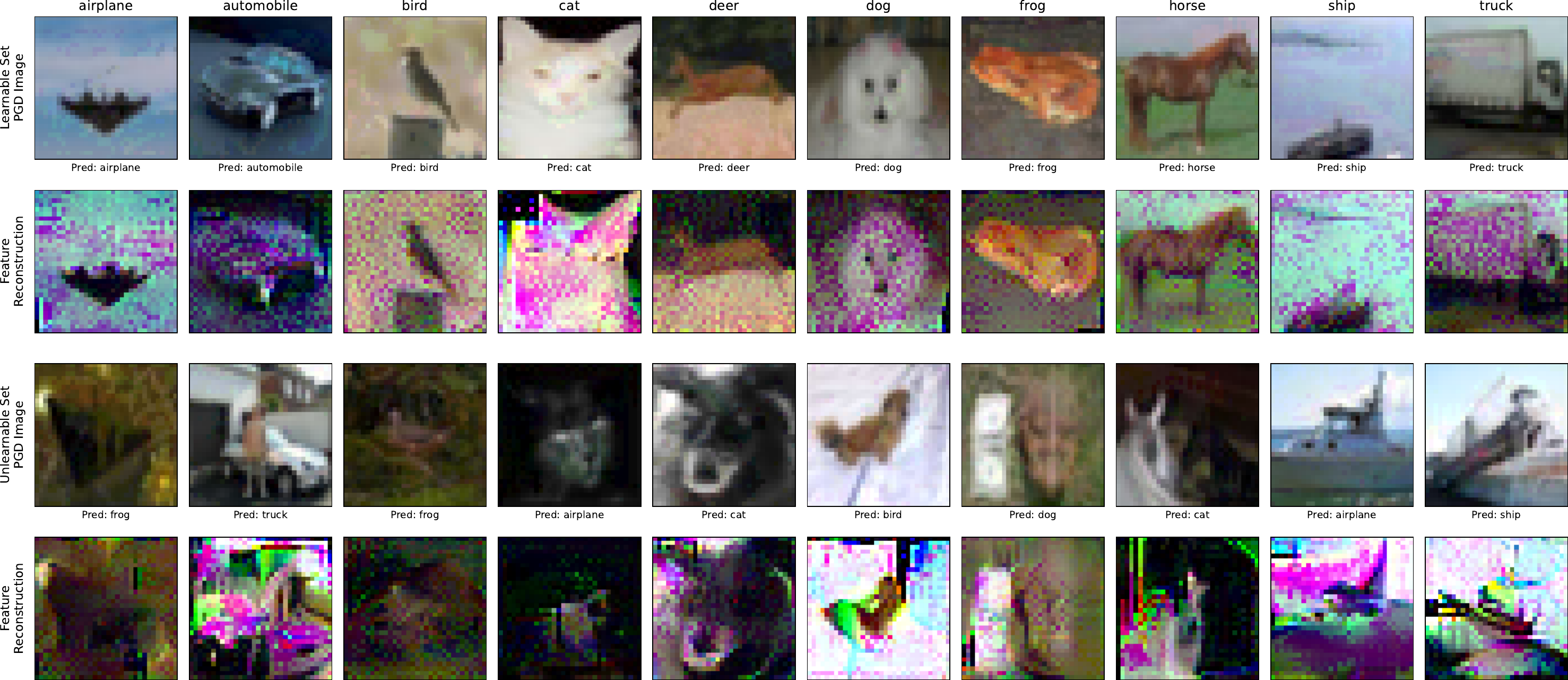}
\caption{\textbf{Feature reconstruction analysis.} Using robust feature inversion (details in \Cref{sec:supp_feature_recon}), we visualize the internal representations of the student model. The distinct gap between the clear semantic recovery of learnable samples (top) and the distorted artifacts or spurious features of unlearnable samples (bottom) provides visual evidence of the model's inability to extract ground-truth aligned robust features from the unlearnable set.}
\label{fig:learnable_unlearnable_examples}
\end{figure*}

\section{Related Work}
\label{sec:related_work}

We provide a detailed review of related work in \Cref{sec:supp_full_related_works}.

\textbf{Adversarial Training and Its Limitations.}
Adversarial training (AT) is the most effective empirical defense and is commonly formulated as a min-max optimization~\citep{PGD}. Although variants such as TRADES~\citep{TRADES} improve the robustness-accuracy trade-off, AT still faces significant challenges: robust overfitting can degrade performance in later training stages~\citep{rice2020overfitting}, and robustness is highly sensitive to model capacity~\citep{gowal2020uncovering, rebuffi2021fixing}. These limitations motivate the use of adversarial distillation (AD) to transfer robustness from larger models to smaller, resource-constrained ones.

\textbf{Adversarial Distillation and Robust Saturation.}
AD aligns a student's predictions with those of a robust teacher on adversarially perturbed inputs. Early methods such as ARD~\citep{ard} demonstrated its potential, and later works refined the objective by leveraging teacher logits or confidence information~\citep{iad, rslad, adaad, IGDM}.
However, recent studies have revealed a paradox: a stronger teacher does not necessarily produce a stronger student. \citet{rslad} reported a robustness saturation effect, while \citet{saad} showed that overly strong teachers can even degrade student performance by inducing overfitting. In particular, \citet{saad} identified the scarcity of transferable adversarial samples (TAS) as a key factor leading to high adversarial variance and, consequently, robust overfitting.
While TAS provides an important empirical signal, the underlying theoretical mechanism driving this failure remains an open question.

\textbf{Theoretical Analysis via Feature Learning.}
Theoretical studies of robustness have largely focused on simplified linear models or the lazy training regime~\citep{gao2019convergence}. However, \citet{wang2022adversarial} showed that the lazy regime is insufficient to explain robustness, motivating a shift toward the feature learning regime~\citep{allen2022feature}. While this framework has recently been applied to standard AT~\citep{li2025adversarial,li2025on}, the learning dynamics of AD remain unexplored. We fill this gap by extending feature learning theory to AD, rigorously showing how robust teacher supervision can, counterintuitively, distort student feature learning.

\section{Empirical Motivation}
\label{sec:main_emp_mot}
In this section, we first highlight the central empirical puzzle motivating our work: the success of AD depends critically on the choice of robust teacher, which can either suppress or exacerbate robust overfitting. We then introduce our core hypothesis that this behavior is driven by a consistent subset of training samples that are intrinsically difficult to learn robustly, which we term the \textit{unlearnable set}.

\subsection{Teacher-Dependent Robust Overfitting}
\label{sec:main_puzzle}
We first establish the baseline phenomenon of robust overfitting in standard AT. As shown in \Cref{fig:mot-at}, robust test accuracy peaks at an intermediate epoch and then declines, even as robust training accuracy continues to increase, indicating a breakdown in robust generalization.

This challenge is further compounded in AD, where the teacher's state plays a critical role. As seen in self-distillation (\Cref{fig:mot-ad}), a teacher from an early, well-generalized epoch can suppress student overfitting, whereas a teacher from a later, overfitted epoch instead exacerbates performance degradation. This puzzle persists even with external, stronger teachers (\Cref{fig:mot-select}): high standalone robustness of a teacher model does not guarantee successful distillation and can still induce severe overfitting in the student, while another teacher model leads to robust generalization. These observations raise a fundamental question: \textit{What underlying mechanism governs this teacher-dependent behavior in adversarial distillation?}

\subsection{A Consistent Set of Unlearnable Samples} 
\label{sec:main_cause}
We hypothesize that robust overfitting is driven by a specific and consistent subset of training examples that are intrinsically difficult to learn robustly. To test this, we conduct a systematic analysis across 10 random seeds and multiple robust training paradigms, including AT methods and AD with diverse teachers. As shown in \Cref{tab:split_learn_unlearn}, the training data consistently partitions into two groups: \textit{learnable samples}, which are reliably classified, and \textit{unlearnable samples}, which models persistently fail to learn. This partition is identified at the epoch of peak robust accuracy, indicating that unlearnability is an intrinsic property of the data-architecture pair rather than an effect of late-stage overfitting.

To probe the structure of this subset, we visualize internal representations of a ResNet-18 using feature inversion, as shown in \Cref{fig:learnable_unlearnable_examples}. Reconstructions of learnable samples preserve clear semantic content, whereas those of unlearnable samples exhibit severe representational collapse, degenerating into spurious noise or distorted artifacts. Quantitative results in \Cref{tab:split_learn_unlearn} further show a strong dependence on model capacity: the size of the unlearnable set decreases markedly with larger architectures, from nearly 9,000 samples in MobileNet-V2 to about 1,500 in WRN-34-10. This inverse relationship confirms that unlearnability is not due to inherent data corruption but reflects model capacity-limited representation.

Importantly, this pattern is consistent across training methods: larger models reduce the unlearnable set, but a non-trivial subset remains unlearnable.
We observe the same trend on CIFAR-100 under the same evaluation protocol (\Cref{tab:split_learn_unlearn_cifar100}). 
These results suggest that unlearnability is not an absolute property of the data, but is model-dependent: larger models can learn a wider portion of the training set robustly, whereas smaller students cannot, even at their peak robust accuracy.

\subsection{From Empirical Evidence to a Theoretical Model}
\label{sec:main_bridge}

These results suggest that robust training can be limited by the model architecture: the same training sample can be robustly learnable for a larger model but unlearnable for a smaller one.
A natural explanation is that robust classification depends on robust features whose utilization can vary with the model architecture.
This observation aligns with established findings that adversarially robust models rely on robust features that differ from those used by standard models~\citep{ilyas2019adversarial, tsipras2018robustness}, and that robustness is strongly sensitive to model capacity and architecture~\citep{gowal2020uncovering, rebuffi2021fixing}.

Our experiments support this view at the sample level. 
We find a subset of training samples that smaller students cannot learn robustly, but larger models can, even under the same training protocol. 
This suggests that robustness is not equally easy to obtain for all samples: some samples rely on robust features that smaller architectures cannot represent well under adversarial training. 
As we show next, this difference matters for adversarial distillation: the teacher can be confident based on robust features that the student cannot represent, which can drive the student to fit spurious patterns and induce robust overfitting.

\section{Theoretical Analysis}
\label{sec:main_theory}

To explain the teacher-dependent overfitting observed in \Cref{sec:main_emp_mot}, we present a theoretical framework capturing the representational mismatch between a capacity-constrained student and unlearnable robust features. The framework isolates how unlearnable samples can turn residual training gradients into noise memorization, and how teacher uncertainty can suppress this effect.

\subsection{Problem Setup}
\label{sec:main_problem_setup}

We model the distillation dynamics using orthogonal feature subspaces and a capacity-constrained student; see \Cref{tab:notations} for notations and Appendix~\ref{sec:supp_problem_setup} for the complete formal setup.
We use asymptotic notation, where \(\tilde{O}(\cdot)\), \(\tilde{\Omega}(\cdot)\), and \(\tilde{\Theta}(\cdot)\) hide polylogarithmic factors.

\begin{definition}[Data Generating Process]
\label{def:data_model}
We consider a binary classification task with labels $y \in \{+1, -1\}$. Let the input $\mathbf{X} \in \mathbb{R}^{P \times d}$ be a concatenation of $P$ patches. We define two orthogonal robust features: a \textit{learnable} feature $\mathbf{u} \coloneqq \mathbf{e}_1$ and an \textit{unlearnable} feature $\mathbf{v} \coloneqq \mathbf{e}_d$. Let $\mathcal{F} \coloneqq \mathrm{span}\{\mathbf{u}, \mathbf{v}\}$ be the feature subspace.
Fix a partition of the training indices $[N]$ into a learnable set $\learnset$ and an unlearnable set $\unlearnset$ with ratio $p_{\mathrm{un}} \coloneqq |\unlearnset|/N$. For each training index $i\in[N]$, the label $y_i$ and a signal patch index $s(\mathbf{X}_i) \in [P]$ are drawn uniformly. The signal patch is generated as:
\begin{equation}
\mathbf{x}_{i, s(\mathbf{X}_i)} =
\begin{cases}
\alpha y_i \mathbf{u}, & \text{if } i \in \learnset \quad (\text{learnable}),\\
\alpha y_i \mathbf{v}, & \text{if } i \in \unlearnset \quad (\text{unlearnable}).
\end{cases}
\end{equation}
Here, \(\alpha>0\) denotes the signal strength of the robust feature.
The remaining patches ($p \neq s(\mathbf{X}_i)$) contain random noise orthogonal to the feature subspace:
$\mathbf{x}_{i,p} \sim \mathcal{N}(\mathbf{0}, \sigma_n^2 (\mathbf{I}_d - \Pi_{\mathcal{F}}))$, where $\Pi_{\mathcal{F}}$ is the projection matrix onto the feature subspace. We analyze two regimes: the learnable-only regime $p_{\mathrm{un}}=0$ and the sparse-unlearnable regime $C N^{-1}\le p_{\mathrm{un}}\le C^{-1}N^{-1}\log d$.

For test-time evaluation, we consider the learnable-sample distribution \(\mathcal D_L\), defined by the same generation rule with the learnable signal patch \(\alpha y\mathbf u\) and independent Gaussian noise patches.
\end{definition}

\begin{definition}[Student Model]
\label{def:student_model}
The student model \(f_{\mathbf W}\) is a two-layer neural network with \(m\) filters and cubic activation function \(\phi(z)=(\max\{0,z\})^3\). The output aggregates patch responses via
\begin{equation}
    f_{\mathbf{W}}(\mathbf{X}) = \sum_{r=1}^m \sum_{p=1}^P \left[ \phi(\langle \mathbf{w}_r, \mathbf{x}_p \rangle) - \phi(-\langle \mathbf{w}_r, \mathbf{x}_p \rangle) \right].
\end{equation}
To formally capture the representational mismatch, we enforce the weights to be orthogonal to the unlearnable feature $\mathbf{v}$ throughout training, i.e., $\langle \mathbf{w}_r, \mathbf{v} \rangle = 0$ for all $r$.
Accordingly, we initialize $\mathbf{w}_r \sim \mathcal{N}(\mathbf{0}, \sigma_0^2 \mathbf{I}_d)$ with the component parallel to $\mathbf{v}$ zeroed out, rendering the student structurally blind to the robust signal in $\unlearnset$.
\end{definition}

\begin{definition}[Teacher Configurations]
\label{def:teacher_types}
We analyze two robust teachers: a Good Teacher \(f_{\mathbf W_{\mathrm G}}\) and a Bad Teacher \(f_{\mathbf W_{\mathrm B}}\). 
When the distinction is irrelevant, we write \(f_{\mathbf W_{\mathrm T}}\) to denote either teacher.
Both teachers generalize well on learnable samples and do not rely on sample-specific noise patches. In particular, for any \(i\in\learnset\), either teacher satisfies the target-aligned margin condition
\begin{equation}
y_i f_{\mathbf W_{\mathrm T}}(\mathbf X_i)\ge \Gamma,
\end{equation}
where \(\Gamma>0\) is a sufficiently large teacher margin. They differ only in their behavior on the unlearnable feature \(\mathbf v\):
\begin{enumerate}[leftmargin=*,label=\arabic*.]
    \item \textbf{Good Teacher \((f_{\mathbf W_{\mathrm G}})\).}
    The Good Teacher is orthogonal to \(\mathbf v\), and therefore remains uncertain on unlearnable samples:
    \begin{equation}
    y_i f_{\mathbf W_{\mathrm G}}(\mathbf X_i)=0
    \qquad (i\in\unlearnset).
    \end{equation}

    \item \textbf{Bad Teacher \((f_{\mathbf W_{\mathrm B}})\).}
    The Bad Teacher leverages \(\mathbf v\), and therefore provides high-confidence supervision on unlearnable samples:
    \begin{equation}
    y_i f_{\mathbf W_{\mathrm B}}(\mathbf X_i)\ge \Gamma
    \qquad (i\in\unlearnset).
    \end{equation}
\end{enumerate}
\end{definition}

Importantly, both \(f_{\mathbf W_{\mathrm G}}\) and \(f_{\mathbf W_{\mathrm B}}\) are robust models. The Bad Teacher is not ``bad'' in isolation; rather, it has access to the additional robust feature \(\mathbf v\), which the capacity-constrained student cannot represent. The designation ``Bad'' refers only to its incompatibility as a teacher for the student, not to any deficiency in its standalone robustness or performance.

\begin{definition}[Training Adversarial Data Generation]
\label{def:train_adv_generation}
We generate adversarial examples $\tilde{\mathbf{X}}$ by maximizing the logistic loss $\ell(z) \coloneqq \log(1+e^{-z})$ under perturbations restricted to the signal-patch direction:
\begin{equation}
    \tilde{\mathbf{X}} = \operatorname*{argmax}_{\substack{\|\mathbf{X}' - \mathbf{X}\|_\infty \le \epsilon \\ (\mathbf{X}' - \mathbf{X}) \in \mathrm{span}(\mathbf{x}_{s(\mathbf{X})})}} \ell\left(y f_{\mathbf{W}}(\mathbf{X}')\right).
\end{equation}
Following \citet{li2025on}, we restrict training adversarial perturbations to the signal patch, excluding perturbations in the noise subspace. Observing AD failure even in this idealized setting shows that the failure is intrinsic to the teacher's supervision on unlearnable features that the student cannot represent.
\end{definition}

\begin{definition}[Objectives and Optimization]
\label{def:objectives_optimization}
We train the student either via adversarial training (AT) or adversarial distillation (AD). The AT objective is
\begin{equation}
\mathcal L_{\mathrm{AT}}(\mathbf W;\mathbf X,y)
=
\ell\!\left(y f_{\mathbf W}(\tilde{\mathbf X})\right).
\end{equation}
The AD objective is the soft-target cross-entropy induced by the teacher:
\begin{equation}
\begin{aligned}
\mathcal L_{\mathrm{AD}}(\mathbf W;\mathbf X,y)
&=
\sigma\!\left(y f_{\mathbf W_{\mathrm T}}(\mathbf X)\right)
\ell\!\left(y f_{\mathbf W}(\tilde{\mathbf X})\right)\\
&\quad+
\sigma\!\left(-y f_{\mathbf W_{\mathrm T}}(\mathbf X)\right)
\ell\!\left(-y f_{\mathbf W}(\tilde{\mathbf X})\right),
\end{aligned}
\end{equation}
where \(\sigma(z)\coloneqq(1+e^{-z})^{-1}\) is the logistic sigmoid.

In both cases, we optimize the empirical objective by gradient descent. At each iteration \(t\), adversarial examples \(\{\tilde{\mathbf X}_i^{(t)}\}_{i=1}^N\) are generated using the current model \(\mathbf W^{(t)}\), and the parameters are updated by
\begin{equation}
\mathbf W^{(t+1)}
=
\mathbf W^{(t)}
-
\frac{\eta}{N}
\sum_{i\in[N]}
\nabla_{\mathbf W}
\mathcal L\!\left(\mathbf W^{(t)};\mathbf X_i,y_i\right).
\end{equation}
Here, \(\mathcal L\in\{\mathcal L_{\mathrm{AT}},\mathcal L_{\mathrm{AD}}\}\), \(\eta>0\) is the learning rate, and the student is trained for \(T\) iterations. 
\end{definition}

\begin{remark}
We impose sufficient parameter and regime requirements in \Cref{lemcond:hyper}; the following is a high-level summary. 
The training horizon \(T\) is long enough to cover both signal learning and possible noise memorization, with \(T\ge \tilde{\Omega}(N(\eta\sigma_0\sigma_n^3d^{3/2})^{-1})\). 
The dimension \(d\) is sufficiently large, roughly \(d\ge \tilde{\Omega}(N^2)\), so that the high-probability bounds and noise-interaction estimates used in the analysis hold uniformly. 
We also work in a signal-dominant adversarial regime: the robust signal is stronger than the random noise scale, with \(\alpha\ge \tilde{\Omega}(\sigma_n\sqrt d/N^{1/3})\), and the perturbation budget is nontrivial but does not erase the signal, summarized as \(\sigma_n<\epsilon<\alpha\).
Finally, the teacher margin is sufficiently large, \(\Gamma\ge\tilde{\Omega}(d)\), so that confident teacher predictions lie in the saturated regime and induce nearly hard-label gradients.
\end{remark}

\subsection{Main Theoretical Results}
\label{sec:main_results}

We characterize when adversarial training and adversarial distillation exhibit robust overfitting.
We define the robust test error on \(\mathcal D_L\) as
\begin{equation}
\mathcal{L}^{\mathrm{rob}}_{\mathcal{D}_L}(f_{\mathbf W})
\coloneqq
\Pr_{(\mathbf{X},y)\sim \mathcal{D}_L}
\left[
\min_{\|\boldsymbol{\delta}\|_\infty \le \epsilon}
y f_{\mathbf W}(\mathbf X+\boldsymbol{\delta}) < 0
\right].
\end{equation}

The core mechanism is that learning the robust feature \(\mathbf u\) is necessary but not sufficient for small robust test error \(\mathcal L^{\mathrm{rob}}_{\mathcal D_L}\). Once this feature is learned, robust generalization is determined by how the training dynamics handle unlearnable samples. If their gradients remain controlled, the learned signal dominates and robust generalization succeeds; if the model instead fits them through sample-specific noise, robust overfitting occurs.

All results in the theoretical analysis are stated under the requirements in \Cref{lemcond:hyper} and on the event \(\mathcal E\) from \Cref{lem:event_E}. For a sufficiently small failure probability \(\delta\in(0,1)\), this event occurs with probability at least \(1-\delta\); accordingly, we refer to statements on this event as holding ``with high probability.'' Formal statements and proofs appear in Appendix~\ref{sec:proof_preliminaries}--\ref{sec:theory_ad}.

\paragraph{Dichotomy of Standard AT.}
We first demonstrate that robust overfitting in AT is not intrinsic to adversarial training itself, but is triggered by the presence of unlearnable samples.

\begin{theorem}[Dichotomy of AT]
\label{thm:at_dichotomy}
Suppose a student is trained via AT. With high probability, the student successfully learns the robust feature $\mathbf{u}$, i.e., there exists a filter \(r\in[m]\) such that \(w^{(T)}_{r,1} \ge \tilde{\Omega}(\alpha^{-1})\). The robust generalization then diverges based on the data composition:
\begin{enumerate}[leftmargin=*,label=\arabic*.]
    \item \textbf{Learnable-only:} All noise responses remain suppressed at their initialization scale:
    \begin{equation}
    \max_{i\in[N],\,j\neq s(\mathbf X_i),\,r\in[m]}
    \left|\langle \mathbf{w}_r^{(T)}, \mathbf{x}_{i,j}\rangle\right|
    \le
    \tilde{O}(\sigma_0\sigma_n\sqrt{d}).
    \end{equation}
    Consequently, the robust test error converges to zero, i.e.,
    \(\mathcal{L}^{\mathrm{rob}}_{\mathcal{D}_L}\le o(1)\).

    \item \textbf{Sparse-unlearnable:} The noise responses are amplified to memorize spurious noise on at least one unlearnable sample: there exists \(i\in\unlearnset\) such that
    \begin{equation}
    \max_{r\in[m],\,j\neq s(\mathbf X_i)}
    y_i\langle \mathbf w_r^{(T)},\mathbf x_{i,j}\rangle
    \ge
    \tilde{\Omega}(1).
    \end{equation}
    Consequently, the robust test error remains high, i.e.,
    \(\mathcal L^{\mathrm{rob}}_{\mathcal D_L}\ge \frac12-o(1)\).

\end{enumerate}
\end{theorem}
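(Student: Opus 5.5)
The approach is the standard feature-learning decomposition. Each filter is written as $\mathbf w_r^{(t)} = w_{r,1}^{(t)}\mathbf u + \sum_{i,j}\rho_{r,i,j}^{(t)}\,\mathbf x_{i,j}/\|\mathbf x_{i,j}\|^2 + (\text{init residual})$. The $\mathbf v$-component stays exactly zero by \Cref{def:student_model}, and noise patches are orthogonal to $\mathcal F$. We then track two kinds of quantity through gradient descent on $\mathcal L_{\mathrm{AT}}$: the signal coefficient $w_{r,1}^{(t)}$, and the noise responses $\langle \mathbf w_r^{(t)},\mathbf x_{i,j}\rangle$.

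\textbf{The adversarial example.} Because the perturbation lies in $\mathrm{span}(\mathbf x_{s(\mathbf X_i)})$ with budget $\epsilon<\alpha$, and $f$ is odd and monotone along the signal direction, the adversarial signal patch is $(\alpha-\epsilon)y_i\mathbf u$ (respectively $(\alpha-\epsilon)y_i\mathbf v$). Consequently, unlearnable samples contribute nothing to the signal output: $y_if_{\mathbf W}(\tilde{\mathbf X}_i)=\sum_{r,j\neq s}(\phi(y_i\langle\mathbf w_r,\mathbf x_{i,j}\rangle)-\phi(-\cdot))$.

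\textbf{Phase 1 (signal learning).} Using \Cref{lem:event_E}, the noise inner products are nearly orthogonal across samples, with $|\langle\mathbf x_{i,j},\mathbf x_{i',j'}\rangle|\le\tilde O(\sigma_n^2\sqrt d)$ and $\|\mathbf x_{i,j}\|^2\approx\sigma_n^2d$. While all losses are $\Theta(1)$, $w_{r,1}$ grows by a tensor-power recursion $w_{r,1}^{(t+1)}\ge w_{r,1}^{(t)}+\Theta(\eta(1-p_{\mathrm{un}})(\alpha-\epsilon)^3)(w_{r,1}^{(t)})^2$. Noise coefficients grow like $\Theta(\eta\sigma_n^3d^{3/2}/N)(\rho)^2$, which is slower under the signal-dominant condition $\alpha\ge\tilde\Omega(\sigma_n\sqrt d/N^{1/3})$. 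A standard comparison lemma then shows that the maximal filter reaches $w_{r,1}\ge\tilde\Omega(\alpha^{-1})$ first, at a time $T_1$, while every noise response stays $\tilde O(\sigma_0\sigma_n\sqrt d)$. Afterwards a monotonicity argument keeps $w_{r,1}$ from decreasing, which gives the common claim.

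\textbf{Phase 2, learnable-only case.} After $T_1$ every training margin is already large, so $-\ell'(y_if(\tilde{\mathbf X}_i))$ is tiny. Summing over time, the total gradient mass $\sum_t\eta|\ell'_i|$ is bounded by $\tilde O(1/(\alpha-\epsilon)^3(w_{r,1})^2)$ via the usual loss-telescoping argument. Feeding this into the noise recursion shows the noise responses grow by at most a constant factor, giving the $\tilde O(\sigma_0\sigma_n\sqrt d)$ bound. At test time a fresh noise patch $\mathbf x$ satisfies $|\langle\mathbf w_r,\mathbf x\rangle|\le\tilde O(\sigma_0\sigma_n\sqrt d)$ with high probability, since $\mathbf w_r$'s noise part has norm $\tilde O(\sigma_0\sqrt d)$. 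The worst-case perturbation leaves the signal term $\phi((\alpha-\epsilon)w_{r,1})=\tilde\Omega(1)$, which dominates $mP\cdot\tilde O((\sigma_0\sigma_n\sqrt d)^3)=o(1)$. Hence the robust error is $o(1)$.

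\textbf{Phase 2, sparse-unlearnable case.} This is the main obstacle. For $i\in\unlearnset$ the margin depends only on its own noise, so $-\ell'_i\approx 1/2$ until that noise is memorized. The update on $\langle\mathbf w_r,\mathbf x_{i,j}\rangle$ is therefore $\gtrsim\frac{\eta}{N}\sigma_n^2d\,\phi'(\langle\mathbf w_r,\mathbf x_{i,j}\rangle)$, while cross-sample contamination is only $\tilde O(\sqrt d)$-fraction. I would take the filter/patch with the largest initial $y_i\langle\mathbf w_r^{(0)},\mathbf x_{i,j}\rangle\ge\tilde\Omega(\sigma_0\sigma_n\sqrt d)$; this is positive with high probability over the $m(P-1)$ choices. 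A tensor-power lemma then shows it reaches $\tilde\Omega(1)$ within $\tilde O(N/(\eta\sigma_0\sigma_n^3d^{3/2}))\le T$. The delicate part is showing the learnable samples' vanishing gradients and the other samples' noise do not cancel this growth; this uses $d\ge\tilde\Omega(N^2)$ and $p_{\mathrm{un}}N\le\log d/C$.

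\textbf{Robust test error.} Memorization drags the filter weight norm in the noise subspace up to $\|\mathbf w_r^{\perp}\|\gtrsim 1/(\sigma_n\sqrt d)$. A fresh Gaussian noise patch then has a symmetric response $\langle\mathbf w_r,\mathbf x\rangle$ of order $\tilde\Omega(1)$. We must show this symmetric noise term is of magnitude comparable to or exceeding the adversarially reduced signal $\phi((\alpha-\epsilon)w_{r,1})$. For that, I would argue $w_{r,1}$ stops growing once learnable losses saturate, while noise norms keep growing through the unlearnable gradients. By symmetry of the noise distribution, the sign of $yf$ is then nearly a fair coin, giving error $\ge\frac12-o(1)$. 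Balancing these two magnitudes is where I expect the real technical work to lie.
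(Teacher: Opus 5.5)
\textbf{Gap: the sparse-case robust test-error lower bound.} Your plan for signal learning, for noise stability in the learnable-only case, and for noise memorization in the sparse case follows the paper's route. That route is tensor-power growth of $w_{r,1}$, a cumulative bound on $\sum_t\psi$ after $T_0$, and tensor-power growth of the largest shifted noise coefficient on $\unlearnset$. The gap is the last step, the lower bound $\mathcal L^{\mathrm{rob}}_{\mathcal D_L}\ge\tfrac12-o(1)$. Your mechanism needs a fresh Gaussian noise patch to have response of order $\tilde\Omega(1)$, but this is off by a factor $\sqrt d$.

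Memorizing the $O(\log d)$ unlearnable patches, whose coefficients are capped at $10\log^{1/3}T/(\sigma_n^2d)$, only gives the noise part $\mathbf w_r^{\perp}$ norm $\tilde O(1/(\sigma_n\sqrt d))$. For fresh $\mathbf x\sim\mathcal N(\mathbf 0,\sigma_n^2(\mathbf I_d-\Pi_{\mathcal F}))$ we then have $\langle\mathbf w_r,\mathbf x\rangle\sim\mathcal N(0,\sigma_n^2\|\mathbf w_r^{\perp}\|_2^2)$, which is of order $d^{-1/2}$: a fresh patch is nearly orthogonal to every memorized training patch. The symmetric fresh-noise term is therefore $\tilde O(mP\,d^{-3/2})$. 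That is negligible against the signal margin $\gtrsim C_0^3(1-\epsilon/\alpha)^3/m$, so no coin-flip argument can give error $\tfrac12$. Against perturbations of the signal patch alone, the sparse-case student would in fact still have small robust error.

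\textbf{Missing idea: attack the noise patches along the memorized direction.} Take the memorized triple $(i,j,r)$ and set $\mathbf x_{\mathrm{vuln}}=\mathbf x_{i,j}$. The paper places
$\boldsymbol\delta^*_{j'}=-y\,y_i\,\epsilon\,\mathbf x_{\mathrm{vuln}}/\bigl(\sigma_n\sqrt{2\log(16dNP/\delta)}\bigr)$
on every $j'\neq s(\mathbf X)$. This is $\ell_\infty$-admissible by \eqref{eq:eventE-noise-linf}. Since $y_i\langle\mathbf w_r^{(T)},\mathbf x_{\mathrm{vuln}}\rangle\ge C_1/(3(mP)^{1/3})$ and the other filters' negative parts are controlled, it yields
$y\sum_{r',j'}\langle\mathbf z_{r'}^{(T)},\boldsymbol\delta^*_{j'}\rangle^3\le-(P-1)\bigl(\epsilon/(\sigma_n\sqrt{2\log(16dNP/\delta)})\bigr)^3C_1^3/(54mP)$.

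The lower bound on $\epsilon$ in \cref{cond:epsilon} makes this exceed the capped signal $\alpha^3\sum_{r'}(w_{r',1}^{(T)})^3\le 27m\log T$. The cap comes from the signal-weight bound $w_{r,1}^{(t)}\le 3\log^{1/3}T/\alpha$. So the comparison you need is an adversarially amplified memorized response against a bounded signal, not growing noise norms against a stalled signal. Expanding the cube, the part odd in the fresh noise is symmetric and supplies the $\tfrac12$. The remaining cross term $3\langle\mathbf z_r,\mathbf x_{j'}\rangle^2\langle\mathbf z_r,\boldsymbol\delta^*_{j'}\rangle$ is handled by sub-Weibull concentration using $\|\mathbf z_r^{(T)}\|_2\le 15\log^{1/3}T/\sigma_n$.

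\textbf{Smaller omission in the learnable-only test bound.} The adversary may also perturb noise patches there. You must bound $|\langle\mathbf z_r,\boldsymbol\delta_j\rangle|\le\epsilon\|\mathbf z_r\|_1\le 3\epsilon\sigma_0 d$. Then use the upper bound $\epsilon<1/(Cm\sigma_0 d)$ in \cref{cond:epsilon} to absorb these terms and the mixed cubic terms into the signal margin.
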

Thus, unlearnable samples change the role of AT: instead of relying only on the learned signal, the model is driven to fit sample-specific noise.

\paragraph{Teacher-Dependent Distillation Dynamics.}
We next resolve the distillation puzzle by contrasting Good and Bad teachers. While the robust feature \(\mathbf u\) is learned in both cases, the teacher's supervision on unlearnable samples determines what happens afterward: residual gradients are either suppressed by a Good Teacher or converted into memorized noise by a Bad Teacher. The Good Teacher guarantee is stated over the horizon \(T\le T_{\max}\) defined in \eqref{eq:good_teacher_horizon}, which already exceeds the noise-memorization timescale of AT and AD under a Bad Teacher.

\begin{theorem}[Dichotomy of Adversarial Distillation]
\label{thm:ad_dichotomy}
Suppose a student is trained via AD. With high probability, the student successfully learns the robust feature \(\mathbf{u}\), i.e., there exists a filter \(r\in[m]\) such that \(w^{(T)}_{r,1} \ge \tilde{\Omega}(\alpha^{-1})\). The robust generalization then diverges based on the teacher type:
\begin{enumerate}[leftmargin=*,label=\arabic*.]
    \item \textbf{Good Teacher \((f_{\mathbf W_{\mathrm G}})\):} For \(T\le T_{\max}\), high uncertainty on \(\unlearnset\) suppresses the noise gradients, maintaining all noise responses at their initialization scale:
    \begin{equation}
    \max_{i\in[N],\,j\neq s(\mathbf X_i),\,r\in[m]}
    \left|\langle \mathbf{w}_r^{(T)}, \mathbf{x}_{i,j}\rangle\right|
    \le
    \tilde{O}(\sigma_0\sigma_n\sqrt{d}).
    \end{equation}
    Consequently, the robust test error converges to zero, i.e.,
    \(\mathcal{L}^{\mathrm{rob}}_{\mathcal{D}_L}\le o(1)\).

    \item \textbf{Bad Teacher \((f_{\mathbf W_{\mathrm B}})\):} Confident guidance on \(\mathbf v\) generates persistent gradients, compelling the network to memorize spurious noise on at least one unlearnable sample: there exists \(i\in\unlearnset\) such that
    \begin{equation}
    \max_{r\in[m],\,j\neq s(\mathbf X_i)}
    y_i\langle \mathbf{w}_r^{(T)}, \mathbf{x}_{i,j}\rangle
    \ge
    \tilde{\Omega}(1).
    \end{equation}
    Consequently, the robust test error remains high, i.e.,
    \(\mathcal{L}^{\mathrm{rob}}_{\mathcal{D}_L}\ge \frac{1}{2} - o(1)\).
\end{enumerate}
\end{theorem}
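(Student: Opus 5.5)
The proof will follow the feature-learning template used for \Cref{thm:at_dichotomy}. The first step is to write the AD gradient in a form that makes the teacher's role explicit. Differentiating $\mathcal L_{\mathrm{AD}}$ gives a per-sample loss derivative
\[
\ell'_{i}{}^{(t)} \;=\; \sigma\bigl(y_i f_{\mathbf W_{\mathrm T}}(\mathbf X_i)\bigr) - \sigma\bigl(y_i f_{\mathbf W^{(t)}}(\tilde{\mathbf X}_i^{(t)})\bigr).
\]
This is the difference between the teacher's and the student's confidence in the true label. With it, each filter update splits into two parts:

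\begin{itemize}[leftmargin=*]
\item a signal component along $\mathbf u$, driven by $i\in\learnset$;
\item noise components $\langle \mathbf w_r,\mathbf x_{i,j}\rangle$, driven by the same coefficient $\ell'_i$ times $\phi'(\langle\mathbf w_r,\mathbf x_{i,j}\rangle)\|\mathbf x_{i,j}\|^2$, plus cross terms of size $\tilde O(\sigma_n^2\sqrt d)$ on the event $\mathcal E$.
\end{itemize}

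The student is orthogonal to $\mathbf v$ throughout. Hence on $i\in\unlearnset$ its output is determined purely by the noise patches, and adversarial perturbations in $\mathrm{span}(\mathbf v)$ have no effect.

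\textbf{Phase 1: signal learning.} For learnable samples, $\Gamma\ge\tilde\Omega(d)$ makes $\sigma(y_if_{\mathbf W_{\mathrm T}})=1-e^{-\tilde\Omega(d)}$. Early in training, therefore, $\ell'_i$ is essentially the hard-label AT derivative. I will reuse the tensor-power-method argument from the AT analysis, with the adversarial shrinkage of the signal to $(\alpha-\epsilon)$, and this step is shared by both teachers. It shows that $\max_r w_{r,1}$ grows from $\tilde O(\sigma_0)$ to $\tilde\Omega(\alpha^{-1})$ within $\tilde O\bigl((\eta\sigma_0\alpha^3(\alpha-\epsilon)^{\ldots})^{-1}\bigr)$ iterations. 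During this time the noise coordinates grow only by a factor $1+o(1)$, because $\alpha\ge\tilde\Omega(\sigma_n\sqrt d/N^{1/3})$ gives the signal a faster cubic rate. After this phase the learnable margins are $\tilde\Omega(1)$ and their residual derivatives decay like $1/t$. This yields the first claim of the theorem.

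\textbf{Phase 2, Good Teacher.} For $i\in\unlearnset$ we have $\sigma(y_if_{\mathbf W_{\mathrm G}})=1/2$. So
\[
\ell'_i = \tfrac12-\sigma\bigl(y_if_{\mathbf W}(\tilde{\mathbf X}_i)\bigr),
\]
which vanishes as long as the student output on sample $i$ is $0$. Its size is $\Theta(|f_{\mathbf W}(\mathbf X_i)|)=\tilde O(m P(\sigma_0\sigma_n\sqrt d)^3)$. It is also restoring: it pushes $y_i\langle\mathbf w_r,\mathbf x_{i,j}\rangle$ toward the value that makes the output zero.

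I will prove by induction over $t\le T_{\max}$ the hypothesis
\[
\max_{i,j,r}|\langle\mathbf w_r^{(t)},\mathbf x_{i,j}\rangle|\le 2\,\tilde O(\sigma_0\sigma_n\sqrt d).
\]
The inductive step bounds the cumulative noise update by summing three contributions:

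\begin{itemize}[leftmargin=*]
\item the tiny, self-correcting unlearnable coefficients;
\item the learnable coefficients, which are summable as $\sum_t \eta/t\cdot(\text{cubic small})$;
\item the cross terms, each $\tilde O(\sigma_n^2\sqrt d)$ times coefficients, absorbed using $d\ge\tilde\Omega(N^2)$.
\end{itemize}

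The sum stays below $\tilde O(\sigma_0\sigma_n\sqrt d)$ up to $T_{\max}$; this is where the definition \eqref{eq:good_teacher_horizon} enters. On $\mathcal D_L$, the worst-case $\ell_\infty$ perturbation of size $\epsilon<\alpha$ reduces the signal contribution to $\tilde\Omega(((\alpha-\epsilon)/\alpha)^3)$. Fresh noise contributes $\tilde O(mP(\sigma_0\sigma_n\sqrt d)^3)=o(1)$, and the perturbation acting on noise patches is bounded similarly. Hence $\mathcal L^{\mathrm{rob}}_{\mathcal D_L}\le o(1)$.

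\textbf{Phase 2, Bad Teacher.} Now $\sigma(y_if_{\mathbf W_{\mathrm B}})\ge 1-e^{-\Gamma}$ on $\unlearnset$. Since the student cannot use $\mathbf v$, $\ell'_i\ge 1/2-o(1)$ persists for these samples, exactly as in the sparse-unlearnable AT case. I will argue by contradiction. Suppose all noise responses stay $o(1)$ up to $T$. Then for some fixed $i\in\unlearnset$, the coordinate $\max_{r,j} y_i\langle\mathbf w_r,\mathbf x_{i,j}\rangle$ obeys
\[
a_{t+1}\ge a_t+\tfrac{\eta}{N}\,c\,\sigma_n^2 d\,a_t^2 .
\]
The initial value $a_0\ge\tilde\Omega(\sigma_0\sigma_n\sqrt d)$ holds by anti-concentration on $\mathcal E$. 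Cross terms from the other $\le \tilde O(\log d)$ unlearnable samples and the decaying learnable samples are lower order. This discrete cubic-growth recursion, handled with the standard Tensor Power Method lemma, blows up to $\tilde\Omega(1)$ within $\tilde O(N/(\eta\sigma_0\sigma_n^3d^{3/2}))\le T$ iterations. This contradicts the assumption and gives the memorization claim.

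For the test-error lower bound, I will follow \Cref{thm:at_dichotomy}. Memorization requires $\|\mathbf w_r\|$ to have a component of norm $\tilde\Omega(1/(\sigma_n\sqrt d))$ along a noise direction. A test point's noise patch then has an inner product with $\mathbf w_r$ that is symmetric Gaussian with standard deviation $\tilde\Omega(1)$. Cubed and summed, this dominates the $\epsilon$-perturbed signal, so the sign of $f$ is essentially random, giving error $\ge\frac12-o(1)$.

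\textbf{Main obstacle.} I expect the Good Teacher horizon bound to be the hardest step. The unlearnable coefficient does not vanish exactly, and the learnable coefficients decay only like $1/t$. Controlling their accumulated effect on the noise through the cross terms requires a careful bootstrap in which both summations are kept explicit. I would first try to show that the unlearnable samples' outputs form a contracting (restoring) dynamics, so that their contribution remains $O(\sigma_0^3)$ uniformly in $t$.
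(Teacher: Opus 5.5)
\paragraph{Gap: the Bad-Teacher robust test error.}
Your signal phase, your Good-Teacher argument, and your Bad-Teacher cubic recursion all follow the paper's route. On the Good-Teacher side the paper needs no contraction property: with $|\tfrac12-\sigma(z)|\le|z|/4$, the per-step unlearnable increment is $\tilde O(\eta mP\sigma_0^5\sigma_n^5d^{5/2}/N)$, and $T_{\max}$ is defined precisely so that its sum stays $\tilde O(\sigma_0/(\sigma_n\sqrt d))$.

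The genuine gap is the Bad-Teacher bound $\mathcal L^{\mathrm{rob}}_{\mathcal D_L}\ge\frac12-o(1)$. You claim that a fresh test noise patch has an inner product of standard deviation $\tilde\Omega(1)$ with the memorized weight, so that $\mathrm{sign}(f)$ becomes random. This is off by a factor $\sqrt d$. Take a component of norm $\tilde\Theta(1/(\sigma_n\sqrt d))$ along $\mathbf x_{i,j}/\|\mathbf x_{i,j}\|_2$ and test it against an independent $\mathbf x\sim\mathcal N(\mathbf 0,\sigma_n^2(\mathbf I_d-\Pi_{\mathcal F}))$. The result is a Gaussian of standard deviation $\tilde\Theta(1/\sqrt d)$, because fresh noise is nearly orthogonal to training noise. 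More generally, with $|\unlearnset|\le\tilde O(\log d)$ and the coefficient bounds, the whole noise part of $\mathbf w_r^{(T)}$ contributes only $o(1)$ on an unperturbed test point. The learned signal therefore still fixes the sign there: memorization is invisible to random noise and can only be exploited adversarially.

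\paragraph{The missing attack.}
The missing idea is an explicit attack along the memorized training patch. Take $(i,j,r)$ from the memorization step and add $\boldsymbol\delta^*=-y\,y_i\,\epsilon\,\mathbf x_{i,j}/\|\mathbf x_{i,j}\|_\infty$ to every non-signal patch of the test point $(\mathbf X,y)$. This perturbation is admissible, and
$y\langle\mathbf w_r^{(T)},\boldsymbol\delta^*\rangle^3=-(\epsilon/\|\mathbf x_{i,j}\|_\infty)^3\bigl(y_i\langle\mathbf w_r^{(T)},\mathbf x_{i,j}\rangle\bigr)^3$,
with $\|\mathbf x_{i,j}\|_\infty\le\sigma_n\sqrt{2\log(16dNP/\delta)}$ on $\mathcal E$. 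Once you check that the other filters' responses to $\mathbf x_{i,j}$ stay near initialization, this becomes a deterministic negative term of order $(\epsilon/\sigma_n)^3/m$ up to logarithms.

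The lower bound $\epsilon\ge C\sigma_n mP^{1/2}\log(\cdot)$ in \Cref{lemcond:hyper} is what makes this term beat the largest possible signal contribution $\alpha^3\sum_r(w^{(T)}_{r,1})^3\le 27m\log T$. Your plan never uses that lower bound. The signal bound in turn needs the upper estimate $w^{(t)}_{r,1}\le3\log^{1/3}T/\alpha$, which you also omit. The leftover random terms are handled in two parts: the sign symmetry of the fresh noise for the odd part, and a sub-Weibull tail for the part quadratic in fresh noise.

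By contrast, on the Good-Teacher side the noise-patch perturbation is controlled through $\epsilon\|\mathbf z_r\|_1\le3\epsilon\sigma_0 d$ and the upper bound $\epsilon<C^{-1}/(m\sigma_0d)$, not by a Gaussian estimate.

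\paragraph{Smaller repairs in the memorization step.}
First, your contradiction gives a large response only at some $t\le T$, whereas the claim concerns $\mathbf w^{(T)}$. Persistence follows from two facts. The increments $\tfrac{3\eta}{N}\bigl(\sigma(y_if_{\mathbf W_{\mathrm B}}(\mathbf X_i))-\sigma(y_if_{\mathbf W}(\tilde{\mathbf X}_i))\bigr)\langle\mathbf w_r,\mathbf x_{i,j}\rangle^2$ are nonnegative, since $|f_{\mathbf W}|\ll\Gamma$. A uniform coefficient bound then keeps the cross terms at $\tilde O(NP/\sqrt d)$.

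Second, run the recursion for the maximum over all of $\unlearnset$ of the initialization-shifted coefficient, not for a fixed $i$. This keeps cross terms from other unlearnable samples controlled relative to the current maximum.
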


In summary, effective distillation requires the teacher to be uncertain where the student is representationally limited. Otherwise, the student is compelled to match the teacher's confidence by memorizing sample-specific noise. Thus, although the robust feature \(\mathbf u\) is successfully learned, confident supervision on such samples can still induce robust overfitting.

\subsection{Proof Sketch: Learning Dynamics}
\label{sec:main_analysis_learning}
We outline the key proof steps behind the two dichotomy theorems, deferring full details to Appendix~\ref{sec:signal_learn}--\ref{sec:theory_ad}; see Appendix~\ref{sec:proof_overview} for the overall proof organization.
\paragraph{Alignment with Learnable Feature.}
Across all training regimes, the gradient component along the learnable feature $\mathbf{u}$ dominates the early optimization phase. We first establish that the student consistently learns this feature regardless of the method used.

\begin{lemma}[Signal Weight Growth]
\label{lem:signal_growth_main}
With high probability, for all $t \ge T_0 = \tilde{\Theta}((\eta \alpha^3 \sigma_0)^{-1})$, at least one signal weight is amplified:
\begin{equation}
\max_{r \in [m]} w^{(t)}_{r,1} \ge \tilde{\Omega}(\alpha^{-1}).
\end{equation}
\end{lemma}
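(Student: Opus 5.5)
We track the scalar signal coordinate $w^{(t)}_{r,1}=\langle \mathbf w_r^{(t)},\mathbf u\rangle$ for each filter and show it follows a tensor-power-type recursion, treating AT and AD (either teacher) at once. Since $\mathbf u\perp\mathbf v$ and the noise lies in $\mathcal F^\perp$, only learnable samples contribute to the $\mathbf u$-direction of the gradient. For $i\in\learnset$, the adversarial input $\tilde{\mathbf X}_i$ changes only the signal patch, and along $\mathrm{span}(\mathbf x_{i,s})$. On the event $\mathcal E$ of \Cref{lem:event_E}, the worst case is a shrinkage of the signal to $(\alpha-\epsilon)y_i\mathbf u$ with $\epsilon<\alpha$; we will absorb $(\alpha-\epsilon)=\Theta(\alpha)$ into constants. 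Taking the $\mathbf u$-component of the update then gives
\begin{equation*}
w^{(t+1)}_{r,1}=w^{(t)}_{r,1}+\frac{\eta}{N}\sum_{i\in\learnset} g_i^{(t)}\,3\tilde\alpha^3\bigl(w^{(t)}_{r,1}\bigr)^2,
\end{equation*}
with $\tilde\alpha=\Theta(\alpha)$. Here $g_i^{(t)}$ is the effective loss derivative: $-\ell'(y_if(\tilde{\mathbf X}_i))$ for AT, and for AD it is $\sigma(y_if_T)-\sigma(y_if(\tilde{\mathbf X}_i))$. Because the teacher margin satisfies $\Gamma\ge\tilde\Omega(d)$ on $\learnset$, we have $\sigma(y_if_T)\ge 1-e^{-\Gamma}$, so $g_i$ agrees with the hard-label logistic derivative up to a negligible error. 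The odd activation $\phi(z)-\phi(-z)$ makes both signs symmetric, so we take $r^\star=\arg\max_r w^{(0)}_{r,1}$. By Gaussian anti-concentration over $m$ filters this satisfies $w^{(0)}_{r^\star,1}\ge\tilde\Omega(\sigma_0)$ on $\mathcal E$.

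\textbf{Phase 1 (growth).} While $\max_r w_{r,1}\le\tilde O(\alpha^{-1})$, we show $g_i^{(t)}\ge\Omega(1)$ for all $i\in\learnset$. This needs $|f(\tilde{\mathbf X}_i)|\le O(1)$. The signal part is bounded by $m\tilde\alpha^3\max_r w_{r,1}^3=O(1)$. The noise part is controlled by an inductive hypothesis that all noise responses stay at $\tilde O(\sigma_0\sigma_n\sqrt d)$ up to time $T_0$. We prove this hypothesis by bounding noise-coordinate growth: each step it grows by at most $\tfrac{\eta}{N}\tilde O(\sigma_n^2 d)(\cdot)^2$ plus cross terms of size $\tilde O(\sigma_n^2\sqrt d)$, and the signal-dominance condition $\alpha\ge\tilde\Omega(\sigma_n\sqrt d/N^{1/3})$ makes this slower than signal growth. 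Since $|\learnset|=(1-p_{\mathrm{un}})N\ge N/2$, the recursion becomes $x_{t+1}\ge x_t+c\eta\alpha^3x_t^2$. The standard tensor-power comparison (as in Allen-Zhu–Li and \citet{li2025on}) then shows $x_t$ reaches $\tilde\Omega(\alpha^{-1})$ within $\tilde\Theta((\eta\alpha^3 x_0)^{-1})=\tilde\Theta((\eta\alpha^3\sigma_0)^{-1})=T_0$ steps.

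\textbf{Phase 2 (persistence).} For $t\ge T_0$ we show $\max_r w_{r,1}$ never drops below the threshold. Since $\mathbf u$ never appears in the unlearnable samples' gradients, and the AT/AD weights $g_i$ are nonnegative for learnable samples, the increment of $w_{r^\star,1}$ is nonnegative. The only place this can fail is the AD soft-label correction term, when the student is more confident than the teacher; this is negligible because the student output stays $\le\tilde O(\Gamma)$. So $w_{r^\star,1}$ is monotone nondecreasing from $T_0$ onward.

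The main obstacle is the Phase 1 induction: we must show noise responses stay small and $g_i=\Omega(1)$ throughout $[0,T_0]$, uniformly over AT and both AD teachers. The delicate case is the Bad Teacher, where unlearnable samples inject persistent noise gradients. We will first try bounding their contribution using sparsity, since $p_{\mathrm{un}}N\le\tilde O(1)$, together with the $1/N$ factor, so that noise memorization occurs only on a timescale $\gg T_0$, consistent with the horizon in the Remark.
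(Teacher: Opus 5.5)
Your Phase 1, and Phase 2 in the AT case, follow the paper's proof essentially step for step. The shared steps are: the exact $\mathbf u$-update over $\learnset$ (\Cref{lem:signal_update_bounds}); an $\Omega(1)$ lower bound on the loss derivative below the threshold (\Cref{lem:learn_sigmoid_bound}); early-phase noise control from the signal-dominance condition (\Cref{lem:noise_stability_phase1}); the tensor-power comparison (\Cref{lem:tpm-1}); and monotonicity from $\psi>0$.

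The step that does not go through as written is Phase 2 for AD. For $i\in\learnset$, $g_i^{(t)}=\sigma(y_if_{\mathbf W_{\mathrm T}}(\mathbf X_i))-\sigma(y_if_{\mathbf W^{(t)}}(\tilde{\mathbf X}_i^{(t)}))$ is negative exactly when the student's margin exceeds the teacher's. Monotonicity therefore needs $y_if_{\mathbf W^{(t)}}(\tilde{\mathbf X}_i^{(t)})<\Gamma$ for all $t\ge T_0$. A bound of the form $\tilde O(\Gamma)$ does not give this, and in any case nothing in your proposal controls the student's output after $T_0$.

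The missing ingredient is an upper cap on the signal weights, $w_{r,1}^{(t)}\le 3\log^{1/3}T/\alpha$ for all $t\le T$. The paper's analysis supplies this in \Cref{lem:signal_weight_bound}, by showing the learnable loss derivative falls below $T^{-3}$ once $w_{r,1}$ passes half the cap. That argument uses noise stability on $\learnset$ over the whole horizon (\Cref{lem:proof-of-ih_noise_stability}), not just up to $T_0$. With the cap, the learnable margin is $O(m\log T)\ll Cd\le\Gamma$, so $g_i^{(t)}>0$ and monotonicity holds exactly. Arguing instead that the negative part is at most $e^{-\Gamma}$ per step also requires such a cap, since the decrement scales with $(w_{r,1}^{(t)})^2$.

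Three smaller points.
\begin{itemize}
\item Your Bad-Teacher concern in Phase 1 is unnecessary. The paper's early-phase bound controls every noise coefficient, learnable or unlearnable, up to the deterministic time $\hat T_0\ge T_0$. It uses only that the loss derivative is at most $1$ in absolute value, which holds for AT and for AD with either teacher, together with \cref{cond:alpha} for the self term and \cref{cond:dimension} for the cross terms. Sparsity of $\unlearnset$ plays no role, and working up to $\hat T_0$ rather than the hitting time decouples the noise bound from the signal analysis.
\item The Phase 1 threshold must be $C_0/(\alpha m^{1/3})$, not a generic $\tilde O(\alpha^{-1})$. Otherwise the signal part of the margin is polylogarithmic and the loss derivative is no longer $\Omega(1)$.
\item The matching lower bound $T_0\ge\tilde\Omega((\eta\alpha^3\sigma_0)^{-1})$, which you did not state, comes from the upper recursion $w_{r,1}^{(t+1)}\le w_{r,1}^{(t)}+3\eta\alpha^3(w_{r,1}^{(t)})^2$ via a doubling-time argument.
\end{itemize}
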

As a direct consequence of this signal alignment, the margins of learnable samples grow and their loss gradients decay. In particular, after \(T_0\), the cumulative gradient contribution from each learnable sample is bounded, so the learnable set no longer drives substantial updates. Thus, the learnable feature is captured early, and the remaining dynamics are governed by whether unlearnable samples induce persistent noise updates.

\paragraph{AT without Unlearnable Samples.}
We first consider the learnable-only regime. Since there are no unlearnable samples, the decay of learnable-sample gradients leaves no residual source of updates after \(\mathbf u\) is learned. As a result, the noise components receive no substantial further updates and remain at their initialization scale.

\begin{lemma}[Noise Stability without Unlearnable Samples]
\label{lem:noise_stability_phase1_main}
Suppose the dataset contains only learnable samples. With high probability, for all iterations \(t \le T\), all noise responses remain bounded:
\begin{equation}
\max_{i\in[N],\,j\neq s(\mathbf X_i),\,r\in[m]}
\left|\langle \mathbf{w}_r^{(t)}, \mathbf{x}_{i,j}\rangle\right|
\le
\tilde{O}(\sigma_0 \sigma_n \sqrt{d}).
\end{equation}
\end{lemma}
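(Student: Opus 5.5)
We argue by induction over iterations, tracking $\Psi^{(t)}_{r,i,j}\coloneqq\langle \mathbf w_r^{(t)},\mathbf x_{i,j}\rangle$ for noise patches $j\neq s(\mathbf X_i)$. The goal is to show $\max|\Psi^{(t)}_{r,i,j}|\le \tilde O(\sigma_0\sigma_n\sqrt d)$ for all $t\le T$.

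\textbf{Initialization and update decomposition.} On the event $\mathcal E$ (\Cref{lem:event_E}), the initial values satisfy $|\Psi^{(0)}_{r,i,j}|\le\tilde O(\sigma_0\sigma_n\sqrt d)$. Moreover, $\|\mathbf x_{i,j}\|^2=\tilde\Theta(\sigma_n^2 d)$ and $|\langle\mathbf x_{i,j},\mathbf x_{i',j'}\rangle|\le\tilde O(\sigma_n^2\sqrt d)$ for distinct noise patches. Since the adversarial perturbation lives in the span of the signal patch, which is orthogonal to all noise, the gradient of the AT loss with respect to $\mathbf w_r$ is
\[
-\frac{\eta}{N}\sum_{i'} y_{i'}\,\ell'_{i'}{}^{(t)}\Bigl[\phi'(\langle\mathbf w_r,\tilde{\mathbf x}_{i',s}\rangle)\tilde{\mathbf x}_{i',s}+\sum_{j'}\phi'(\Psi^{(t)}_{r,i',j'})\mathbf x_{i',j'}\Bigr],
\]
where $\phi'$ denotes the derivative of the odd combination $\phi(z)-\phi(-z)$, so $\phi'(z)=3z^2$. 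Projecting this update onto $\mathbf x_{i,j}$ kills the signal term entirely. What remains is a diagonal term $\tfrac{\eta}{N}|\ell'_i|\,3(\Psi^{(t)}_{r,i,j})^2\,\tilde\Theta(\sigma_n^2 d)$ plus cross terms, each carrying an extra factor $\tilde O(d^{-1/2})$ relative to the diagonal term.

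\textbf{Phase 1 ($t\le T_0$).} Under the inductive hypothesis $|\Psi|\le\tilde O(\sigma_0\sigma_n\sqrt d)$, we bound $|\ell'|\le1$. The per-step growth is then at most $\tfrac{\eta}{N}\tilde O(\sigma_0^2\sigma_n^4 d^2)$. Summed over $T_0=\tilde\Theta((\eta\alpha^3\sigma_0)^{-1})$ steps, the total change is $\tilde O(\sigma_0\sigma_n^4 d^2/(N\alpha^3))$. The condition $\alpha\ge\tilde\Omega(\sigma_n\sqrt d/N^{1/3})$ makes this $\tilde O(\sigma_0\sigma_n\sqrt d)$, so the bound survives Phase 1.

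\textbf{Phase 2 ($T_0\le t\le T$).} Here we invoke \Cref{lem:signal_growth_main}. Once some filter has $w_{r,1}\ge\tilde\Omega(\alpha^{-1})$, every learnable adversarial margin is $\tilde\Omega(1)$ and keeps growing, because $\epsilon<\alpha$ leaves signal $(\alpha-\epsilon)$ intact. The key quantity is the cumulative loss-derivative mass $\sum_{t=T_0}^{T}|\ell'^{(t)}_i|$. We bound it by a standard logistic-loss argument: the margin increments are driven by the same $|\ell'_i|$, so $\sum_t|\ell'^{(t)}_i|\le\tilde O(N/(\eta\alpha^{2}\cdot\text{poly}))$, i.e.\ the usual $\tilde O(1/\eta)$-type bound up to polylogs. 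Plugging this mass into the noise recursion gives a total Phase-2 noise increment of $\tilde O(\sigma_0^2\sigma_n^4 d^2/N)\cdot\tilde O(N/\eta)\cdot\eta$. After normalization this is again dominated by $\sigma_0\sigma_n\sqrt d$, given $\sigma_0$ small as required by \Cref{lemcond:hyper}. Since there is no unlearnable sample, there is no residual persistent gradient to exploit, and the induction closes.

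\textbf{Main obstacle.} I expect the hardest step to be the uniform-in-$T$ bound on cumulative $|\ell'|$ after $T_0$. $T$ can be very large, and in the neighbourhood of a margin only the signal drives its growth, so one must show the margin grows at least logarithmically in $t$, e.g.\ via a potential argument on $\log(1/|\ell'|)$. One must also handle the cubic self-reinforcement of the noise, where the diagonal term is $\propto\Psi^2$. I would treat the latter with a tensor-power/comparison lemma: show that the discrete recursion $a_{t+1}\le a_t+c_t a_t^2$ with $\sum c_t\le (2a_0)^{-1}$ keeps $a_t\le 2a_0$.
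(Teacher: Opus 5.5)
Your two-phase induction follows the paper's proof, but Phase 2 rests on a bound you assert without proving, and your final accounting does not close as written.

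\textbf{Where the structure matches.} Phase 1 is \Cref{lem:noise_stability_phase1}: the crude $\psi\le 1$ bound summed over a deterministic horizon $\hat T_0\ge T_0$, closed by the condition on $\alpha$. Phase 2 is the first-violation argument of \Cref{lem:noise_stability_pure_learnable}, where the post-$T_0$ increment of each noise coefficient is $\tfrac{3\eta}{N}\,\tilde O(\sigma_0^2\sigma_n^2 d)\sum_{\tau\ge T_0}\psi_i^{(\tau)}$.

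\textbf{The unproved step.} The only nontrivial input to Phase 2 is a per-sample bound on $\sum_{\tau\ge T_0}\psi_i^{(\tau)}$. Your justification, that margin increments are driven by the same $\psi_i$, does not give this by itself. Telescoping a quantity whose increments dominate $\psi$ bounds $\sum\psi$ only if that quantity is capped over the whole horizon. The paper supplies the pieces as follows.
\begin{enumerate}
\item The cap is \Cref{lem:signal_weight_bound}. Once $w_{r,1}$ exceeds $\tfrac32\log^{1/3}T/\alpha$, every learnable margin is at least $3\log T$, so $\psi\le T^{-3}$ and later growth is negligible; hence $w_{r,1}^{(t)}\le 3\log^{1/3}T/\alpha$.
\item \Cref{lem:conv_learn} telescopes the monotone $\max_r w_{r,1}$, whose increment after $T_0$ is at least $3\eta(\alpha-\epsilon)^3C_0^2\alpha^{-2}m^{-2/3}\cdot\frac1N\sum_k\psi_k^{(\tau)}$.
\item Under the inductive noise bound, all learnable margins agree up to $2C^{-1}$. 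So the average transfers to each sample without losing a factor $N$, giving $\sum_{\tau\ge T_0}\psi_i^{(\tau)}\le\tilde O(m^{2/3}/(\eta\alpha^2))$.
\end{enumerate}
The exponential potential you mention also works and needs no cap. Set $\bar M_t\coloneqq(\alpha-\epsilon)^3\sum_r(w^{(t)}_{r,1})^3$. The top filter gives $\bar M_{t+1}-\bar M_t\gtrsim\eta\alpha^2m^{-4/3}\psi(\bar M_t)$. Since $e^{x}\psi(x)\ge\tfrac12$ for $x\ge 0$, $e^{\bar M_t}$ grows at least linearly. Hence $\psi_i^{(t)}\lesssim m^{4/3}/(\eta\alpha^2(t-T_0))$, and the sum is $\tilde O(m^{4/3}\log T/(\eta\alpha^2))$. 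You need to actually carry out one of these two arguments; it is the heart of Phase 2.

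\textbf{The accounting error.} You drop the $\alpha^{-2}$ and conclude that the Phase-2 increment $\tilde O(\sigma_0^2\sigma_n^4d^2)$ is below $\sigma_0\sigma_n\sqrt d$ because $\sigma_0$ is small. That would require $\sigma_0\sigma_n^3d^{3/2}\ll 1$, which \Cref{lemcond:hyper} does not imply. For example, $\sigma_n=1$ with $\sigma_0$ of order $1/(m^2d)$, up to constants and logarithms, is admissible and violates it. Keeping the $\alpha^{-2}$, the inner-product increment is $\tilde O(\sigma_0^2\sigma_n^4d^2/(N\alpha^2))$. This is roughly $\alpha\sigma_0m^{2/3}$ times the Phase-1 increment $\tilde O(\sigma_0\sigma_n^4d^2/(N\alpha^3))$. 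It is therefore controlled by $\sigma_0\le 1/(\alpha m^{2/3})$ together with the $\alpha$-condition, which is how the paper closes it. Even your version with an extra factor $N$ closes, using $\alpha\gtrsim\sigma_n\sqrt d/N^{1/3}$, $\sigma_0\lesssim 1/(\sigma_n m^2P^{1/2}d)$ and $d\gtrsim N^2$, but only if the $\alpha^{-2}$ is kept.

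\textbf{The comparison lemma.} Your lemma for the cubic self-reinforcement is correct, since $1/a_{t+1}\ge 1/a_t-c_t$, and it is equivalent to the paper's threshold argument. But with $a_t$ the maximum over all noise patches, it needs a bound on $\sum_t\max_i\psi_i^{(t)}$. That again relies on the uniformity of learnable margins.
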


Together with the signal alignment from \Cref{lem:signal_growth_main}, this uniform noise stability proves the learnable-only case of \Cref{thm:at_dichotomy}.

\paragraph{AT with Unlearnable Samples.}
We next consider the sparse-unlearnable regime. Unlike the learnable-only regime, even after the learnable feature \(\mathbf u\) is learned, unlearnable samples still provide a residual source of updates because the student cannot represent \(\mathbf v\). These residual gradients drive the model to fit the remaining error through noise components.

\begin{lemma}[Noise Memorization with Unlearnable Samples]
\label{lem:noise_growth_time_main}
In the presence of unlearnable samples, with high probability at iteration \(T_1 = \tilde{\Theta}(N(\eta \sigma_0 \sigma_n^{3} d^{3/2})^{-1})\), the model memorizes spurious noise on at least one unlearnable sample:
\begin{equation}
\max_{i \in \unlearnset,\, j \neq s(\mathbf{X}_i),\, r}
y_i\langle \mathbf{w}_r^{(T_1)}, \mathbf{x}_{i,j}\rangle
\ge
\tilde{\Omega}(1).
\end{equation}
\end{lemma}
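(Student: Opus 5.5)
We track the noise coefficients of an unlearnable sample through a signal--noise decomposition and run a tensor-power-type comparison for the cubic activation. Concretely, we expand each filter as
\[
\mathbf w_r^{(t)}=\mathbf w_r^{(0)}+\gamma_r^{(t)}\mathbf u+\sum_{i\in[N]}\sum_{j\neq s(\mathbf X_i)}\rho_{r,i,j}^{(t)}\,\frac{\mathbf x_{i,j}}{\|\mathbf x_{i,j}\|_2^2}.
\]
No $\mathbf v$-component appears, because the student is constrained to be orthogonal to $\mathbf v$. On the event $\mathcal E$ (with $d\ge\tilde\Omega(N^2)$), noise patches are nearly orthogonal: $\|\mathbf x_{i,j}\|_2^2=\Theta(\sigma_n^2 d)$ and $|\langle\mathbf x_{i,j},\mathbf x_{i',j'}\rangle|\le\tilde O(\sigma_n^2\sqrt d)$. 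Hence $\langle\mathbf w_r^{(t)},\mathbf x_{i,j}\rangle=\langle\mathbf w_r^{(0)},\mathbf x_{i,j}\rangle+\rho^{(t)}_{r,i,j}\pm\tilde O(N\max|\rho|/\sqrt d)$, and the cross terms are negligible.

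\textbf{Step 1: the unlearnable loss derivative stays bounded below.} Fix $i\in\unlearnset$. Its signal patch is $\alpha y_i\mathbf v$, and $\langle\mathbf w_r,\mathbf v\rangle=0$, so the signal patch contributes nothing to $f_{\mathbf W}(\mathbf X_i)$. The adversarial perturbation lies in $\mathrm{span}(\mathbf v)$, so it is also invisible to the student, and $\tilde{\mathbf X}_i$ gives the same output as $\mathbf X_i$. Therefore $y_if_{\mathbf W^{(t)}}(\tilde{\mathbf X}_i)$ is a sum of cubic noise responses only. As long as all noise responses are $o(1)$, this margin is $o(1)$ and $-\ell'(\cdot)\ge 1/2-o(1)$. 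The growth of $\mathbf u$ from \Cref{lem:signal_growth_main} does not help this sample.

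\textbf{Step 2: noise coefficient dynamics.} The gradient update gives
\[
\rho^{(t+1)}_{r,i,j}=\rho^{(t)}_{r,i,j}+\frac{\eta}{N}\,\bigl(-\ell'_i{}^{(t)}\bigr)\,y_i\,\phi'\bigl(\langle\mathbf w_r^{(t)},\mathbf x_{i,j}\rangle\bigr)\,\|\mathbf x_{i,j}\|_2^2,
\]
using the odd symmetrization $\phi'(z)+\phi'(-z)=3z^2$. We pick $r^\ast$ and $j^\ast$ with $y_i\langle\mathbf w_{r^\ast}^{(0)},\mathbf x_{i,j^\ast}\rangle\ge\tilde\Omega(\sigma_0\sigma_n\sqrt d)$; by Gaussian anti-concentration over the $m(P-1)$ pairs this holds on $\mathcal E$. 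With $z_t:=y_i\langle\mathbf w_{r^\ast}^{(t)},\mathbf x_{i,j^\ast}\rangle$, Steps 1 and 2 yield
\[
z_{t+1}\ge z_t+c\,\frac{\eta\sigma_n^2 d}{N}\,z_t^2-\text{(cross-term error)}.
\]

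\textbf{Step 3: time to reach $\tilde\Omega(1)$.} We apply the standard tensor-power lemma from the feature-learning literature (Allen-Zhu--Li style) to $z_{t+1}\ge z_t+A z_t^2$ with $A=\Theta(\eta\sigma_n^2d/N)$. It gives $z_t\ge\tilde\Omega(1)$ by
\[
t\approx\frac{1}{A z_0}=\tilde\Theta\!\left(\frac{N}{\eta\sigma_0\sigma_n^3d^{3/2}}\right)=T_1.
\]
For the matching upper bound, the same recursion is run with the maximal initial response over all triples; this shows no response exceeds $\tilde O(1)$ much earlier, so the lemma's ``at iteration $T_1$'' holds up to polylog factors.

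\textbf{Main obstacle.} The hard part is controlling the cross terms and the learnable samples' noise updates, so that they neither cancel $z_t$'s growth nor break the assumption $-\ell'_i\ge\Omega(1)$ before $z_t$ reaches $\tilde\Omega(1)$. The learnable side is handled by the bounded cumulative gradient after $T_0$ noted after \Cref{lem:signal_growth_main}, together with the noise-stability argument of \Cref{lem:noise_stability_phase1_main}. Their total contribution to $\langle\mathbf w_{r^\ast},\mathbf x_{i,j^\ast}\rangle$ is then $\tilde O(\sigma_0\sigma_n\sqrt d)\cdot o(1)$ via near-orthogonality. Other unlearnable samples contribute at most $|\unlearnset|\le\tilde O(\log d)$ cross terms, each suppressed by $\sqrt d$. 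The margin stays $o(1)$ until some response reaches constant order, since all responses start $o(1)$. If the cross-term bound is too loose, my first fallback is to use the condition $\alpha\ge\tilde\Omega(\sigma_n\sqrt d/N^{1/3})$, which makes $T_0\ll T_1$ and separates the phases.
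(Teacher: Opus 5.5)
Your outline agrees with the paper on the easy parts. The unlearnable margin sees only noise patches, so the loss derivative stays bounded below until some response reaches a constant threshold. In response units the recursion has rate $\Theta(\eta\sigma_n^2 d/N)$, the tensor-power lemma gives the hitting time, and running the upper recursion from the largest initial value bounds that time from below.

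The gap is in the cross-term control, and it comes from tracking one triple $(i,j^\ast,r^\ast)$ fixed at initialization. Another unlearnable patch $(k,q)$ contributes $\rho^{(t)}_{k,q,r^\ast}|\langle\mathbf x_{k,q},\mathbf x_{i,j^\ast}\rangle|$ to $\langle\mathbf w^{(t)}_{r^\ast},\mathbf x_{i,j^\ast}\rangle$. That is roughly $\tilde O(d^{-1/2})$ times the response of $(k,q)$, not of $(i,j^\ast)$. So ``suppressed by $\sqrt d$'' only helps while every other response stays within a factor $\sqrt d$ of $z_t$.

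That fails here. You have $z_0=\tilde\Theta(\sigma_0\sigma_n\sqrt d)$, and condition~\ref{cond:sigma_0} gives $\sigma_0\sigma_n d\le (Cm^2\sqrt P)^{-1}$, so $z_0$ lies far below $d^{-1/2}$ up to logarithms. A competing unlearnable triple with a slightly larger initialization or a larger $\psi$ can reach response $\approx\sigma_0\sigma_n d=o(1)$ while yours has grown by at most a constant factor; tensor-power dynamics amplify exactly such constant-factor differences. From then on its cross term is as large as $z_t$ and has uncontrolled sign. Your stopping condition (some response of constant order) has not fired, and $z_{t+1}\ge z_t+cAz_t^2-\text{error}$ yields nothing.

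The learnable side has the same defect. \Cref{lem:noise_stability_phase1_main} is stated for $p_{\mathrm{un}}=0$. With unlearnable samples present, learnable coefficients are controlled only up to a feedback term of order $p_{\mathrm{un}}N\sigma_n^2\sqrt d\,(\rho^{\unlearnset}_{\max})^2$ (\Cref{lem:proof-of-ih_noise_stability}). That term is small relative to the largest unlearnable coefficient, but not relative to your $z_t$. Also, the post-$T_0$ gradient-decay bound you cite presupposes noise stability on learnable samples. The paper closes that loop with the a priori bound of \Cref{lem:uniform_noise_bound}.

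The missing idea is the one in \Cref{lem:unified_noise_dynamics}. Run the recursion on the running maximum $\hat\rho^{(t)}_{\max}$ of the initialization-shifted coefficients $\rho^{(t)}_{k,q,r}+y_k\langle\mathbf w^{(0)}_r,\mathbf x_{k,q}\rangle/\|\mathbf x_{k,q}\|_2^2$ over $k\in\unlearnset$, $q\neq s(\mathbf X_k)$, $r\in[m]$. Lower-bound each increment through the current maximizer. Since $\hat\rho^{(t)}_{\max}\ge\hat\rho^{(0)}_{\max}\ge\sigma_0/(6\sigma_n\sqrt d)$ dominates every initialization shift, all unlearnable coefficients are $O(\sqrt{\log(16NmP/\delta)})\,\hat\rho^{(t)}_{\max}$, and the learnable ones are bounded in terms of $\hat\rho^{(t)}_{\max}$. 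Every cross term is then a small fraction of the leading term $\hat\rho^{(t)}_{\max}\sigma_n^2 d$. This gives $\hat\rho^{(t+1)}_{\max}\ge\hat\rho^{(t)}_{\max}+B(\hat\rho^{(t)}_{\max})^2$, after which your Step~3 goes through. The maximizing triple may change over time; whichever one crosses the threshold is the witness for the conclusion.
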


This memorized direction is used to construct an admissible adversarial perturbation that can overpower the learned signal on fresh test samples. Consequently, the classifier no longer generalizes robustly through \(\mathbf u\), yielding
\(\mathcal{L}^{\mathrm{rob}}_{\mathcal{D}_L}\ge \frac{1}{2} - o(1)\),
which establishes the second part of \Cref{thm:at_dichotomy}.

\paragraph{Dynamics of Adversarial Distillation.}
In Adversarial Distillation, the learnable-sample dynamics remain essentially the same as above: by the teacher margin condition, both teachers are confident on \(\learnset\), so the AD gradient on learnable samples agrees with the AT gradient up to negligible terms. Hence, the student still learns the robust feature \(\mathbf u\). The teacher-dependent behavior appears on unlearnable samples. For a Good Teacher, high predictive uncertainty on \(\unlearnset\) prevents the residual updates from producing persistent one-sided noise growth.

\begin{lemma}[Noise Suppression with Good Teacher]
\label{lem:good_teacher_suppression_main}
Under a Good Teacher, the residual gradients on unlearnable samples do not create persistent one-sided noise growth. Hence, over the full training horizon \(T\le T_{\max}=
\tilde{\Theta}\left(N(\eta\,mP\,\sigma_0^4\sigma_n^6d^3)^{-1}
\right),\) the noise responses remain at their initialization scale:
\begin{equation}
\max_{i\in[N],\,j\neq s(\mathbf X_i),\,r\in[m]}
\left|\langle \mathbf{w}_r^{(T)}, \mathbf{x}_{i,j}\rangle\right|
\le
\tilde{O}(\sigma_0 \sigma_n \sqrt{d}).
\end{equation}
\end{lemma}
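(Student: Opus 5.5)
The proof will be an induction over $t\le T_{\max}$ on the hypothesis $\Phi^{(t)}\coloneqq \max_{i,\,j\neq s(\mathbf X_i),\,r}|\langle \mathbf w_r^{(t)},\mathbf x_{i,j}\rangle|\le 2\Phi_0$, where $\Phi_0=\tilde O(\sigma_0\sigma_n\sqrt d)$ is the initialization scale guaranteed on the event $\mathcal E$. At each step I decompose the update of a noise response $\langle \mathbf w_r^{(t)},\mathbf x_{i,j}\rangle$ into three contributions: the diagonal term coming from sample $i$ itself, the cross terms coming from other samples $i'\neq i$, and the contribution of the learnable samples.

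\textbf{Learnable samples.} Since $\Gamma\ge\tilde\Omega(d)$, the teacher weight satisfies $\sigma(y_i f_{\mathbf W_{\mathrm G}}(\mathbf X_i))=1-e^{-\tilde\Omega(d)}$ on $\learnset$. Hence the AD gradient of a learnable sample equals the AT gradient up to an exponentially small error, and I reuse the argument of \Cref{lem:noise_stability_phase1_main} (together with the bounded cumulative learnable gradient after $T_0$ from \Cref{lem:signal_growth_main}). This controls their total effect on every noise response by $o(\Phi_0)$.

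\textbf{Unlearnable samples: the key structural fact.} For $i\in\unlearnset$ the Good Teacher gives soft target $1/2$. The per-sample loss is then $\tfrac12[\ell(z_i)+\ell(-z_i)]$ with $z_i=y_if_{\mathbf W}(\tilde{\mathbf X}_i)$, and its derivative in $z_i$ is exactly $\sigma(z_i)-\tfrac12$. Because the student is orthogonal to $\mathbf v$ and the training perturbation lives in $\mathrm{span}(\mathbf v)$, the adversarial step has no effect, the signal patch contributes nothing, and
\begin{equation*}
z_i=y_i\sum_{r}\sum_{p\neq s(\mathbf X_i)}\bigl[\phi(\langle\mathbf w_r,\mathbf x_{i,p}\rangle)-\phi(-\langle\mathbf w_r,\mathbf x_{i,p}\rangle)\bigr].
\end{equation*}
Under the induction hypothesis this gives $|z_i|\le mP(2\Phi_0)^3$, and therefore $|\sigma(z_i)-\tfrac12|\le |z_i|/4$. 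So the gradient is a \emph{restoring} force that vanishes at $z_i=0$, rather than the persistent $\Theta(1)$ residual that appears in AT or with a Bad Teacher.

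\textbf{Diagonal term and the time scale.} The diagonal contribution to $\langle\mathbf w_r,\mathbf x_{i,j}\rangle$ is at most
\begin{equation*}
\frac{\eta}{N}\,|\sigma(z_i)-\tfrac12|\cdot 3\langle\mathbf w_r,\mathbf x_{i,j}\rangle^2\|\mathbf x_{i,j}\|^2\lesssim \frac{\eta}{N}\,mP\,\sigma_n^2 d\,\Phi_0^5 .
\end{equation*}
Summing over $t\le T_{\max}$, the total drift is $\tilde O(\Phi_0)$ exactly when $T_{\max}\lesssim N\bigl(\eta\, mP\,\sigma_n^2 d\,\Phi_0^4\bigr)^{-1}=N\bigl(\eta\, mP\,\sigma_0^4\sigma_n^6d^3\bigr)^{-1}$, which is the stated horizon. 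Choosing constants and polylog factors appropriately closes the induction with $\Phi^{(t)}\le 2\Phi_0$.

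\textbf{Cross terms (main obstacle).} Contributions from $\mathbf x_{i',p}$ with $i'\neq i$ enter through $|\langle\mathbf x_{i',p},\mathbf x_{i,j}\rangle|=\tilde O(\sigma_n^2\sqrt d)$, a factor $\sqrt d$ smaller than the diagonal. Summed over at most $N$ samples, they are dominated by the diagonal because $d\ge\tilde\Omega(N^2)$. The delicate point is that this bookkeeping must hold uniformly over $i,j,r$ and $t\le T_{\max}$ without losing polylog factors that would shrink the horizon. If the crude sum is too lossy, I would first try bounding cross terms by the same $\Phi_0^5$ quantity times $N\sigma_n^2\sqrt d/(\sigma_n^2 d)=N/\sqrt d=o(1)$, so that they are absorbed into the diagonal bound.
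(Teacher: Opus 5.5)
Your proposal is correct and follows essentially the same route as the paper: a joint induction in which the learnable-sample noise is controlled by inheriting the AT estimates (the teacher discrepancy $e^{-\Gamma}$ is negligible, the phase up to $T_0$ has bounded length, and afterwards the cumulative $\psi$ bound applies), while the unlearnable-sample drift is bounded via $|\sigma(z_i)-\tfrac12|\le |z_i|/4$ with $|z_i|=\tilde O(mP\Phi_0^3)$, giving a per-step drift $\propto \Phi_0^5$ and exactly the horizon $T_{\max}$. The only difference is bookkeeping: the paper inducts on the coefficients $|\rho_{i,j,r}^{(t)}|$, with separate thresholds for $\learnset$ and $\unlearnset$, and converts them to inner products via \Cref{lem:noise_update_rule}, whereas you induct on the inner products directly; also, the post-$T_0$ cumulative gradient bound you need is \Cref{lem:conv_learn}, not \Cref{lem:signal_growth_main}.
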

This bound shows that the student can rely on the learnable feature \(\mathbf u\) without entering the noise-memorization regime. Importantly, the horizon \(T_{\max}\) is not merely an early stopping window: under the parameter regime in \Cref{lemcond:hyper}, it exceeds the noise-memorization time \(T_1\) of standard AT and AD under a Bad Teacher. Thus, over a time scale long enough for noise memorization to occur in the bad cases, the Good Teacher still keeps the noise responses controlled. This establishes the first part of \Cref{thm:ad_dichotomy}.

Conversely, a Bad Teacher provides high-confidence predictions on unlearnable samples, effectively mimicking the hard-label gradients of standard AT.

\begin{lemma}[Asymptotic Equivalence to Standard AT]
\label{lem:bad_teacher_equivalence_main}
Under a Bad Teacher, the AD gradient on unlearnable samples agrees with the corresponding AT gradient up to an exponentially small term:
\begin{equation}
    \left|
    \frac{\partial \mathcal{L}_{\mathrm{AD}}}{\partial f_{\mathbf{W}}}
    -
    \frac{\partial \mathcal{L}_{\mathrm{AT}}}{\partial f_{\mathbf{W}}}
    \right|
    \le e^{-\Omega(d)}.
\end{equation}
\end{lemma}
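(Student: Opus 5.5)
The argument is a direct comparison of the two derivatives with respect to the student output $f_{\mathbf W}$. Write $z \coloneqq y f_{\mathbf W}(\tilde{\mathbf X})$ and $\tau \coloneqq y f_{\mathbf W_{\mathrm B}}(\mathbf X)$. Using $\ell'(z) = -\sigma(-z)$, we get
\begin{equation*}
\frac{\partial \mathcal L_{\mathrm{AT}}}{\partial f_{\mathbf W}} = -y\,\sigma(-z).
\end{equation*}
Differentiating the AD objective gives
\begin{equation*}
\frac{\partial \mathcal L_{\mathrm{AD}}}{\partial f_{\mathbf W}}
= -y\,\sigma(\tau)\sigma(-z) + y\,\sigma(-\tau)\sigma(z).
\end{equation*}
Combining these with $\sigma(\tau) = 1 - \sigma(-\tau)$ and $\sigma(-z) + \sigma(z) = 1$ gives the exact identity
\begin{equation*}
\frac{\partial \mathcal L_{\mathrm{AD}}}{\partial f_{\mathbf W}} - \frac{\partial \mathcal L_{\mathrm{AT}}}{\partial f_{\mathbf W}} = y\,\sigma(-\tau).
\end{equation*}
This is the standard fact that the soft-label cross-entropy gradient is the student probability minus the teacher probability. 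Since $|y| = 1$, the difference has absolute value exactly $\sigma(-\tau) \le e^{-\tau}$, for every student weight configuration and every adversarial example $\tilde{\mathbf X}$.

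Next we use the Bad Teacher condition in \Cref{def:teacher_types}. For $i \in \unlearnset$ we have $\tau_i = y_i f_{\mathbf W_{\mathrm B}}(\mathbf X_i) \ge \Gamma$. The regime requirement summarized in the remark before \Cref{sec:main_results} (formally \Cref{lemcond:hyper}) gives $\Gamma \ge \tilde{\Omega}(d)$, in particular $\Gamma \ge c\,d$ for a constant $c > 0$. Hence $\sigma(-\tau_i) \le e^{-\Gamma} \le e^{-\Omega(d)}$, which is the claimed bound.

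Two points of bookkeeping remain. First, the teacher is evaluated on the clean input $\mathbf X$, not on $\tilde{\mathbf X}$, so the adversarial perturbation, which depends on the current student $\mathbf W^{(t)}$, never enters the teacher term. The bound therefore holds uniformly over all iterations $t$ and does not need the event $\mathcal E$. Second, the gradient with respect to $\mathbf W$ factors through $\partial f_{\mathbf W}(\tilde{\mathbf X}) / \partial \mathbf W$ by the chain rule, so the equivalence transfers to weight gradients at the cost of multiplying by a polynomially bounded Jacobian norm. For later use, that polynomial factor times $e^{-\Omega(d)}$ is still negligible, so the noise-memorization argument of \Cref{lem:noise_growth_time_main} carries over to AD under a Bad Teacher.

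The only step that needs care is checking that the hidden polylog factors in $\Gamma \ge \tilde{\Omega}(d)$ do not weaken the exponent below linear in $d$. Reading $\tilde{\Omega}(d)$ as at least $d$ up to polylogarithmic factors that grow, rather than shrink, resolves this. The same argument applied to learnable samples, using $y_i f_{\mathbf W_{\mathrm T}}(\mathbf X_i) \ge \Gamma$, gives the analogous equivalence there that is asserted informally in the text.
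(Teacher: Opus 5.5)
Your proof is correct and follows the paper's argument: the same exact identity $\partial\mathcal L_{\mathrm{AD}}/\partial f_{\mathbf W}-\partial\mathcal L_{\mathrm{AT}}/\partial f_{\mathbf W}=y_i\,\sigma\!\left(-y_i f_{\mathbf W_{\mathrm B}}(\mathbf X_i)\right)$, then $\sigma(-z)\le e^{-z}$ combined with the Bad Teacher margin $\Gamma$. Your concern about hidden polylogarithmic factors is unnecessary, because the formal requirement \ref{cond:teacher_margin} states $\Gamma\ge Cd$ outright, which gives $e^{-\Gamma}\le e^{-Cd}$ directly.
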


This equivalence makes AD under a Bad Teacher follow the same unlearnable-sample noise-memorization mechanism as standard AT, establishing the second part of \Cref{thm:ad_dichotomy}.

\begin{figure*}[t]
\centering

\begin{subfigure}[t]{0.33\textwidth}
    \centering
    \includegraphics[width=\linewidth]{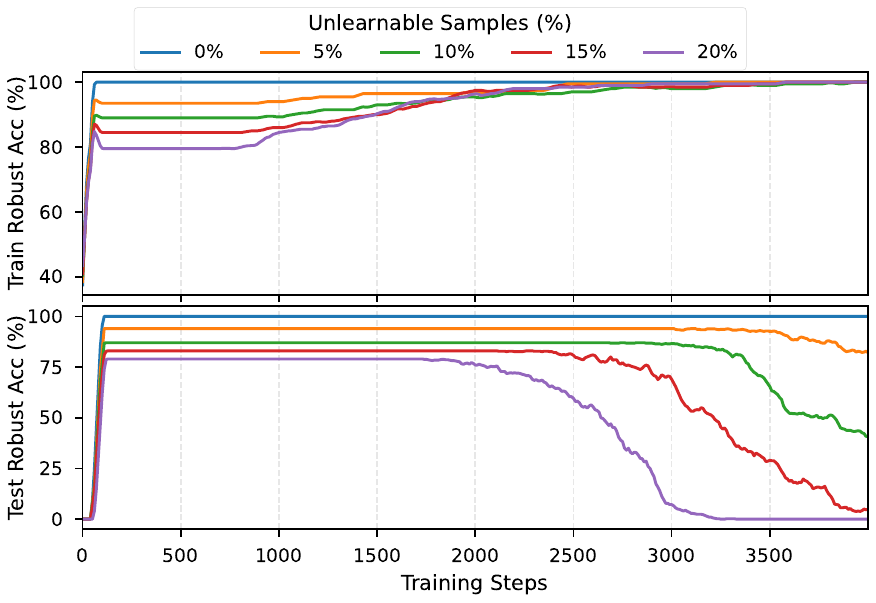}
    \caption{Synthetic: AT}
    \label{fig:split_synth_at}
\end{subfigure}
\hfill
\begin{subfigure}[t]{0.33\textwidth}
    \centering
    \includegraphics[width=\linewidth]{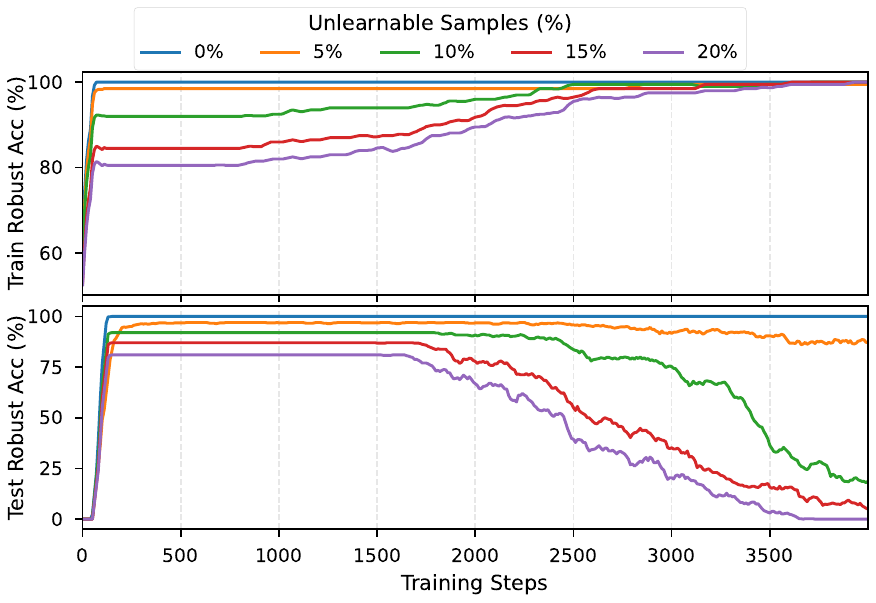}
    \caption{Synthetic: Bad Teacher}
    \label{fig:split_synth_bad}
\end{subfigure}
\hfill
\begin{subfigure}[t]{0.33\textwidth}
    \centering
    \includegraphics[width=\linewidth]{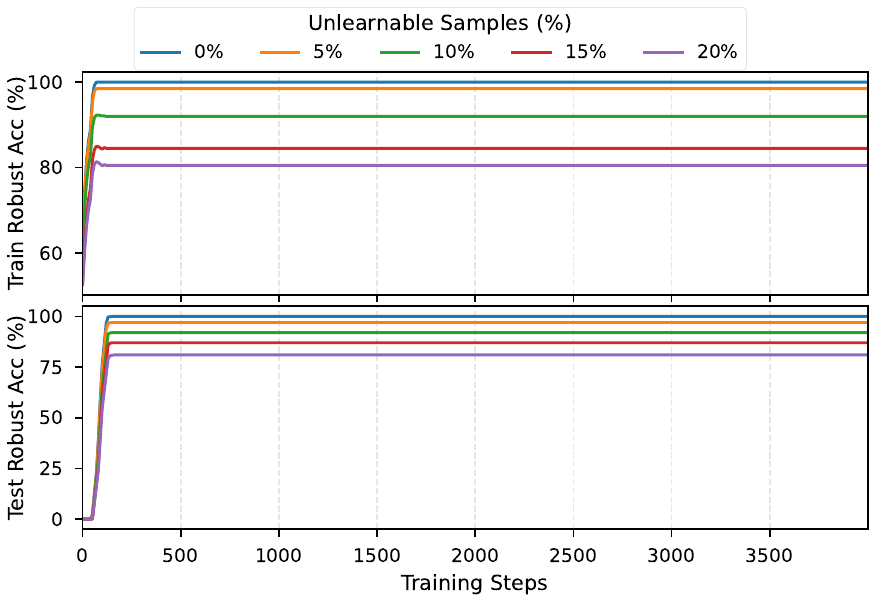}
    \caption{Synthetic: Good Teacher}
    \label{fig:split_synth_good}
\end{subfigure}

\vspace{0.5em} 

\begin{subfigure}[t]{0.33\textwidth}
    \centering
    \includegraphics[width=\linewidth]{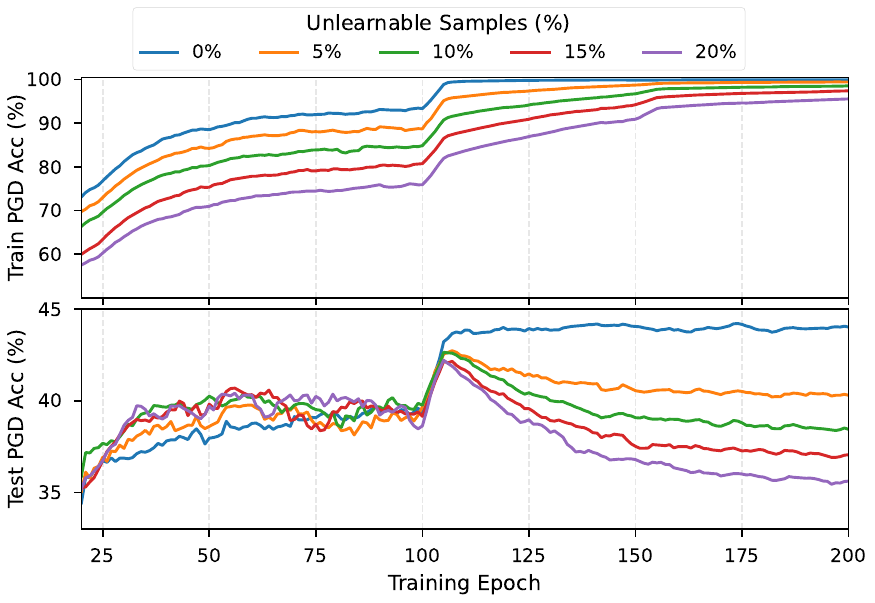}
    \caption{CIFAR-10: PGD-AT}
    \label{fig:split_cifar10_at}
\end{subfigure}
\hfill
\begin{subfigure}[t]{0.33\textwidth}
    \centering
    \includegraphics[width=\linewidth]{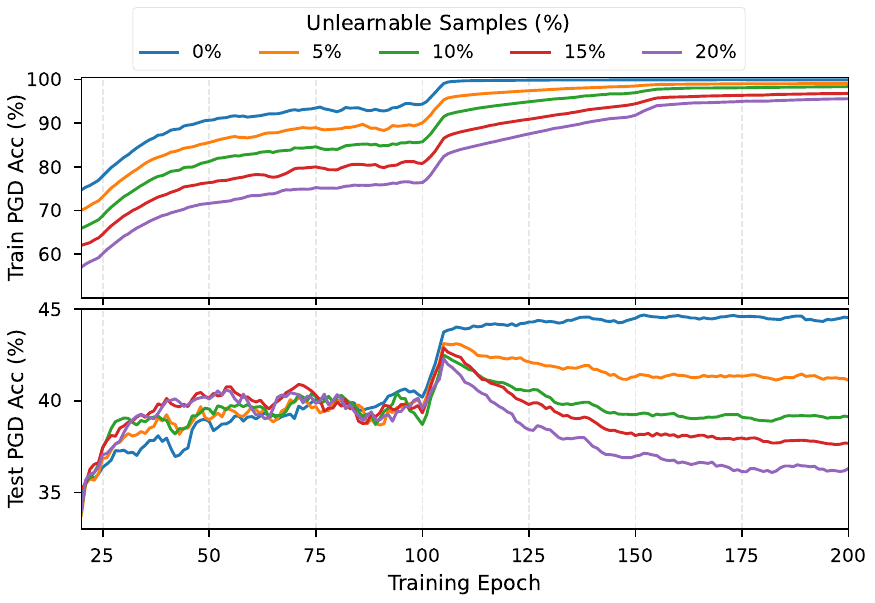}
    \caption{CIFAR-10: Bad Teacher}
    \label{fig:split_cifar10_bad}
\end{subfigure}
\hfill
\begin{subfigure}[t]{0.33\textwidth}
    \centering
    \includegraphics[width=\linewidth]{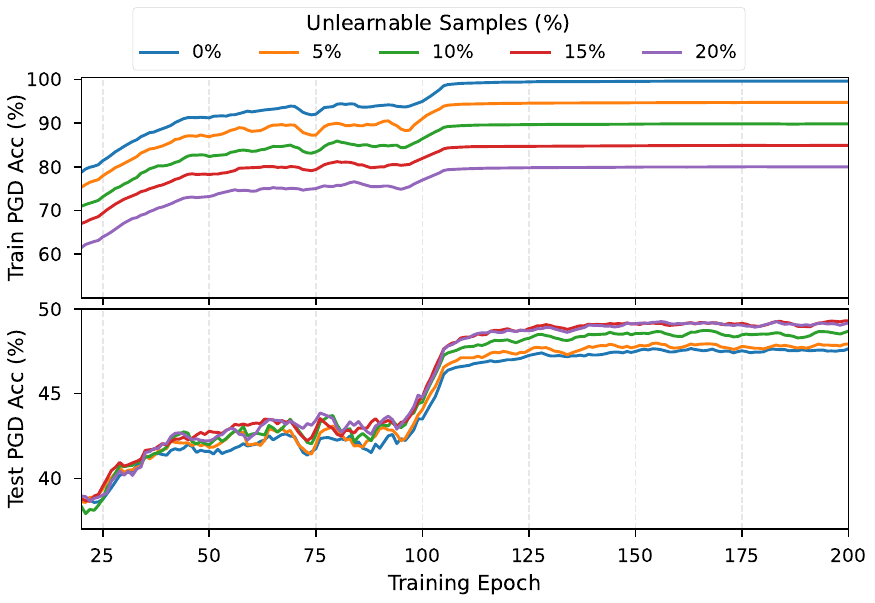}
    \caption{CIFAR-10: Good Teacher}
    \label{fig:split_cifar10_good}
\end{subfigure}

\caption{\textbf{Consistency between synthetic and real-world dynamics.} 
We compare the learning dynamics on synthetic data (top row) and CIFAR-10 (bottom row) under varying ratios of unlearnable samples. 
In both settings, Standard AT and distillation from a Bad Teacher suffer from severe robust overfitting as the unlearnable fraction increases. 
In contrast, the Good Teacher effectively suppresses noise memorization, maintaining high test robustness.}
\label{fig:comparison_synthetic_real}
\end{figure*}

\section{Empirical Validation}
\label{sec:main_experiments}

\subsection{Simulations on Synthetic Data}
\label{sec:main_synthetic_exp}
To validate our theoretical analysis, we conduct simulations on synthetic data generated by the model in \Cref{sec:main_problem_setup}. We train a two-layer student network under the exact conditions of our theory, constraining its weights to be orthogonal to $\mathbf{v}$. This enforces a strict representational mismatch, making the unlearnable feature structurally inaccessible to the student. We compare three settings: standard AT, AD under a Bad Teacher (high confidence on $\mathbf{v}$), and AD under a Good Teacher (high uncertainty on $\mathbf{v}$). Full experimental details are provided in \Cref{sec:supp_exp_synthetic}.

The results, shown in the top row of \Cref{fig:comparison_synthetic_real}, support our theoretical framework. As the fraction of unlearnable samples increases, both AT and AD under a Bad Teacher display clear signatures of noise memorization: robust training accuracy saturates at 100\%, while robust test accuracy collapses. This confirms that unlearnable samples drive robust overfitting, consistent with the mechanisms predicted by \Cref{thm:at_dichotomy} and \Cref{thm:ad_dichotomy}. In contrast, AD under a Good Teacher maintains high robust test accuracy across all unlearnable ratios, in line with \Cref{thm:ad_dichotomy}. This is explained by the teacher's uncertainty on unlearnable samples, which suppresses the corresponding gradient signal and prevents the student from entering the noise-memorization regime.

\subsection{Validation on Real-World Datasets}
\label{sec:main_real_exp}
We extend our validation to real-world data by constructing controlled subsets of CIFAR-10. Using the consistent learnable ($\learnset$) and unlearnable ($\unlearnset$) sets identified in \Cref{tab:split_learn_unlearn}, we form training sets in which a fraction $p_{\mathrm{un}} \in \{0\%, \dots, 20\%\}$  of learnable samples is replaced by unlearnable ones. We train ResNet-18 models using PGD-AT and AD. Following \citet{saad}, we select representative teacher configurations listed in \Cref{tab:teacher_specs}, using ``Gowal''~\citep{gowal2021improving} as the bad teacher and ``Chen''~\citep{LTD} as the good teacher.

As shown in the bottom row of \Cref{fig:comparison_synthetic_real}, the real-data behavior closely mirrors the synthetic results. Both PGD-AT and the student distilled from the bad teacher exhibit pronounced performance degradation as the unlearnable fraction $p_{\mathrm{un}}$ increases, with a growing gap between robust training and test accuracy, confirming that $\unlearnset$ is the primary driver of robust overfitting. In contrast, the student distilled from the good teacher effectively suppresses this effect, maintaining stable robust test accuracy even at $p_{\mathrm{un}}=20\%$. Notably, its robust training accuracy saturates below 100\%, indicating that the student learns to ignore unlearnable samples, consistent with the noise-suppression mechanism predicted by our theory.

\subsection{Selecting Robust Teachers via Unlearnable-Entropy Criterion}
\label{sec:main_teacher_selection}

\begin{figure}[t]
    \centering
    \begin{subfigure}[b]{\linewidth}
        \centering
        \includegraphics[width=\linewidth]{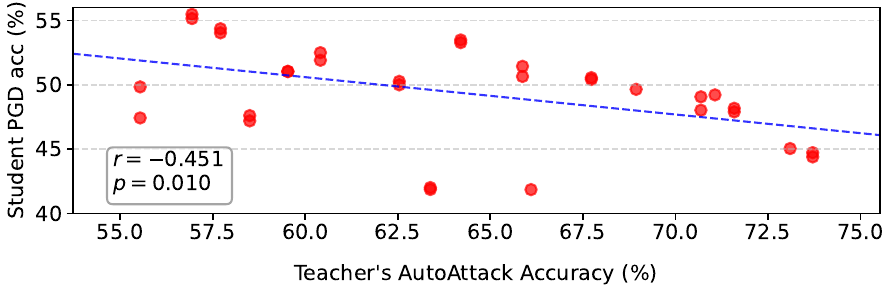} 
        \caption{Baseline: Teacher Robust Accuracy (AA).}
        \label{fig:corr_aa}
    \end{subfigure}

    \begin{subfigure}[b]{\linewidth}
        \centering
        \includegraphics[width=\linewidth]{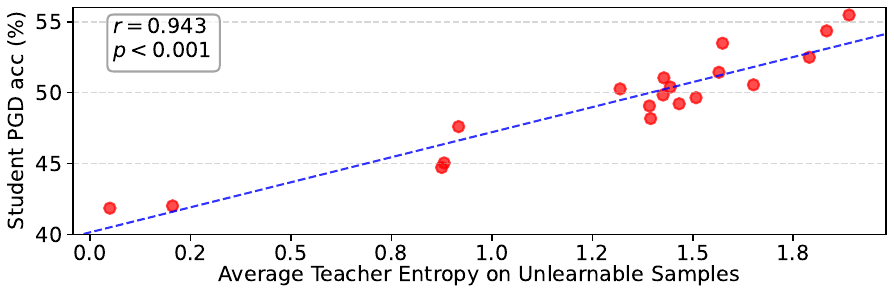} 
        \caption{\textbf{Ours}: Unlearnable Entropy (PGD-AT).}
        \label{fig:corr_pgd}
    \end{subfigure}

    \caption{\textbf{Failure of conventional heuristics and success of our criterion.} 
    (a) Teacher robust accuracy (AutoAttack) correlates negatively with student performance, failing as a selection metric. 
    (b) In contrast, our Unlearnable-Entropy Criterion exhibits a strong positive correlation, effectively identifying high-quality teachers. 
    The reported \(r\) and \(p\) values denote the Pearson correlation coefficient and its two-sided significance value, respectively.
    (See \Cref{fig:supp_correlation_comparison} for additional baselines.)}
    \label{fig:correlation_contrast}
    \vspace{-0.3cm}
\end{figure}

Our theoretical and empirical results show that the effectiveness of AD hinges on the teacher's predictive confidence on unlearnable samples. A teacher that remains uncertain on these samples prevents the student from memorizing spurious noise features. Motivated by this insight, we propose the \textit{Unlearnable-Entropy Criterion}: an effective teacher should exhibit high predictive entropy on the unlearnable set.

While the formal definition of $\unlearnset$ in \Cref{sec:main_emp_mot} requires intersecting unlearnable sets across multiple models, directly constructing such a set can be computationally expensive. Instead, exploiting the strong overlap of unlearnable sets across methods observed in \Cref{tab:split_learn_unlearn}, we approximate $\unlearnset$ using a single reference model. Concretely, we define a proxy unlearnable set from a standard PGD-AT model at its peak robust accuracy, and evaluate each candidate teacher's average predictive entropy on this subset under PGD-10 attacks.

We apply this criterion to a wide collection of publicly available robust models (listed in \Cref{tab:teacher_specs_additional}). As shown in \Cref{fig:correlation_contrast}, the teacher's entropy on the proxy unlearnable set strongly correlates with the student's final robust accuracy after distillation. This demonstrates that a proxy $\unlearnset$ derived from a single reference model suffices to distinguish good and bad teachers, making our criterion both theoretically principled and practically efficient.

Finally, we verify the consistency of this criterion across diverse distillation objectives in \Cref{tab:entropy_grouped}. Comparing a Low-Entropy teacher against a High-Entropy teacher reveals a striking contrast in student's learning dynamics. Regardless of the choice of distillation methods, the High-Entropy teacher consistently prevents robust overfitting, suppressing the `Degrad.'—the performance drop from peak to final test accuracy—to negligible levels (avg. 0.53\%). In contrast, the Low-Entropy teacher fails to prevent this collapse, inducing significant degradation (avg. 7.23\%). This confirms that the teacher's predictive confidence on unlearnable samples is a primary factor governing robust generalization, outweighing the specific choice of distillation loss.

\begin{table}[t]
\centering
\caption{\textbf{Effect of teacher entropy on student robustness across five adversarial distillation methods.} We report the final Clean, PGD-20, and AutoAttack (AA) accuracies. \textbf{GenGap} denotes the generalization gap (Final Robust Train - Final Robust Test), and \textbf{Degrad.} measures the robust overfitting (Peak Robust Test - Final Robust Test). Values in parentheses denote the teacher's predictive entropy on unlearnable samples.}

\label{tab:entropy_grouped}
\small
\setlength{\tabcolsep}{5.8pt}
\renewcommand{\arraystretch}{0.95}
\begin{tabular}{lccccc}
\toprule
\textbf{Method} & \textbf{Clean} & \textbf{PGD-20} & \textbf{AA} & \textbf{GenGap} & \textbf{Degrad.} \\ \midrule
\multicolumn{6}{c}{AD from Low Entropy Teacher\textsuperscript{\textdagger} (0.059)} \\
\midrule
ARD   & 84.26 & 41.51 & 40.07 & 53.38 & 8.04 \\
IAD   & 84.41 & 42.01 & 40.49 & 53.17 & 7.62 \\
RSLAD & 84.37 & 40.26 & 38.40 & 57.09 & 7.42 \\
AdaAD & 84.10 & 42.20 & 40.51 & 53.04 & 8.03 \\
IGDM  & 84.50 & 46.33 & 43.91 & 49.32 & 5.06 \\
\textbf{Average} & \textbf{84.33} & \textbf{42.46} & \textbf{40.68} & \textbf{53.20} & \textbf{7.23} \\
\midrule
\multicolumn{6}{c}{AD from High Entropy Teacher\textsuperscript{\textdaggerdbl} (1.882)} \\
\midrule
ARD   & 85.26 & 54.14 & 49.62 & 22.26 & 0.40 \\
IAD   & 84.83 & 52.90 & 48.66 & 29.14 & 0.71 \\
RSLAD & 84.44 & 54.30 & 49.86 & 28.40 & 0.31 \\
AdaAD & 85.69 & 56.41 & 52.08 & 30.56 & 0.50 \\
IGDM  & 85.26 & 57.09 & 52.73 & 30.28 & 0.71 \\
\textbf{Average} & \textbf{85.10} & \textbf{54.97} & \textbf{50.59} & \textbf{28.13} & \textbf{0.53} \\
\bottomrule
\addlinespace[0.1ex]
\multicolumn{6}{l}{\footnotesize \textsuperscript{\textdagger} Model ID: \texttt{Gowal2021Improving\_70\_16\_ddpm\_100m}} \\[2pt]
\multicolumn{6}{l}{\footnotesize \textsuperscript{\textdaggerdbl} Model ID: \texttt{Chen2021LTD\_WRN34\_20} from \Cref{tab:teacher_specs}}
\end{tabular}
 \vspace{-0.5cm}
\end{table}

\section{Conclusion}
\label{sec:conclusion}

This work resolves the paradox in adversarial distillation whereby highly robust teachers can degrade student performance. Through a feature-learning analysis, we identify the root cause as a fundamental representational mismatch on the unlearnable set: when a teacher provides confident supervision on features the student cannot represent, it forces the student to memorize spurious noise, thereby destroying robust generalization. In contrast, a teacher that remains uncertain on this set acts as an implicit regularizer and prevents such overfitting. Translating this insight into practice, we propose the Unlearnable-Entropy Criterion, which efficiently identifies effective teachers using a single proxy model. Overall, our results establish a theoretically grounded principle for teacher selection in AD, replacing ad hoc heuristics with a principled metric that ensures stable robust generalization.

\newpage

\section*{Impact Statement}
This work advances the understanding of adversarial distillation by explaining when teacher supervision improves student robustness and when it instead induces robust overfitting. We identify a consistent subset of training samples that are robustly unlearnable for capacity-limited students and show that these samples drive the teacher-dependent outcomes of distillation. Our theory and experiments indicate that overly confident supervision on this subset can lead the student to fit spurious patterns that are exploitable under adversarial attacks, degrading robust generalization, while higher teacher uncertainty on this subset mitigates the effect. These insights can support more reliable robust training by informing teacher selection and diagnostics, while also highlighting that robustness transfer is not automatic and must be carefully evaluated before deployment. We do not introduce new attack methods; however, understanding these failure modes could be misused to create brittle training setups, so we emphasize using our results to strengthen robustness through principled teacher selection and evaluation.

\section*{Acknowledgements}
This work was supported by the National Research Foundation of Korea (NRF) grant funded by the Korea government (MSIT) (No. RS-2024-00408003 and RS-2025-00516153) and the Institute for Information \& communications Technology Planning \& Evaluation (IITP) grant funded by the Korea government (MSIT) (No. RS-2024-00444862 and RS-2026-25522672).

\bibliography{main}
\bibliographystyle{icml2026}

\newpage

\onecolumn
\appendix

\section{Experimental Analysis Details}
\label{sec:supp_exp_analysis_detail}

\subsection{General Experimental Settings (\Cref{sec:main_emp_mot}, \ref{sec:main_experiments} and \ref{sec:supp_additional_exp})}
\label{subsec:real_data_setup}

We conduct our experimental analysis on standard benchmarks including CIFAR-10, CIFAR-100~\citep{krizhevsky2009learning}, and Tiny-ImageNet~\citep{le2015tiny}. We employ typical data augmentations such as random cropping with 4-pixel padding and random horizontal flipping. All student models are trained using SGD with a momentum of 0.9, weight decay of $2 \times 10^{-4}$, and a batch size of 128 for 200 epochs. The learning rate is initialized at 0.1 and decays by a factor of 10 at epochs 100 and 150.
To analyze the mechanisms of robust overfitting, we employ PGD-AT~\citep{PGD} and TRADES~\citep{TRADES} as standard adversarial training baselines. For adversarial distillation, we adopt ARD~\citep{ard} as the default method to investigate the unlearnable sample dynamics, while extending our evaluation to IAD~\citep{iad}, RSLAD~\citep{rslad}, AdaAD~\citep{adaad}, and IGDM~\citep{IGDM} for universality verification.
For method-specific hyperparameters, we adhere to the original configurations reported in their respective papers, with an adversarial budget of \(\epsilon = 8/255\).
For reproducibility, our code is available at \url{https://github.com/HongsinLee/why-robust-teachers-fail}.

We evaluate the robustness of trained models using three key metrics: Clean, PGD-20, and AutoAttack (AA) accuracy. Clean accuracy measures performance on the original test set, while PGD-20 and AA assess robustness against 20-step projected gradient descent~\citep{PGD} and the AutoAttack ensemble~\citep{AutoAttack}, respectively, under an adversarial budget of $\epsilon=8/255$.

For the teacher models, we utilize pre-trained weights from the RobustBench model zoo~\citep{croce2021robustbench}, strictly selecting models whose robustness exceeds the performance achievable by the student architecture via standard PGD-AT or TRADES. This ensures that the teacher remains sufficiently more robust than the student throughout our analysis.
First, for the CIFAR-10 experiments, as detailed in \Cref{tab:teacher_specs}, we select four representative teacher models with diverse architectures and robustness levels. These models are primarily used to investigate the detailed dynamics of unlearnable samples. Notably, this selection includes `Chen'~\citep{LTD}, a standard teacher widely adopted in recent AD literature, and `Bartoldson'~\citep{bartoldson2024adversarial}, representing the current state-of-the-art in robustness.
Second, to validate the generalizability of our proposed criterion within CIFAR-10, we extend our evaluation to a broader array of teacher models in \Cref{sec:main_teacher_selection}, whose comprehensive specifications are provided in \Cref{tab:teacher_specs_additional}. This expanded set is utilized to validate the correlation between our criterion and the resulting student robustness (\Cref{fig:correlation_contrast}) and to demonstrate consistent performance trends across various distillation methods (\Cref{tab:entropy_grouped}).
Finally, to verify the scalability of our approach on larger datasets, we employ additional robust teachers for CIFAR-100 and Tiny-ImageNet. The detailed specifications for these models are listed in \Cref{tab:teacher_specs_large_dataset}.

\begin{table*}[h]
\centering
\caption{\textbf{Specifications of representative robust teachers trained on CIFAR-10.} We categorize the teachers into `Good' and `Bad' based on the classification established in \citet{saad}. `AA' denotes robust accuracy against AutoAttack.}
\label{tab:teacher_specs}
\small
\renewcommand{\arraystretch}{1.1} 
\begin{tabular}{ll cl c c } 
\toprule
\textbf{Shorthand} & \textbf{Reference}  &\textbf{Teacher Type from \citet{saad}} 
& \textbf{Architecture} & \textbf{Clean} & \textbf{AA} \\ 
\midrule
Bartoldson& \citet{bartoldson2024adversarial}  &Bad Teacher
& WRN-94-16 & 93.68 & 73.71 \\
Rebuffi & \citet{rebuffi2021fixing}  &Good Teacher
& WRN-70-16 & 88.54 & 64.20 \\
Gowal & \citet{gowal2021improving}  &Bad Teacher
& WRN-28-10 & 87.50 & 63.38 \\
Chen & \citet{LTD}  &Good Teacher & WRN-34-10 & 85.21 & 56.94 \\
\bottomrule
\end{tabular}
\end{table*}

\begin{table*}[h]
\centering
\caption{\textbf{Comprehensive list of robust teachers trained on CIFAR-10 used in \Cref{sec:main_teacher_selection}.} This extended set is used to analyze the correlation between our criterion and the resulting student robustness.}
\label{tab:teacher_specs_additional}
\small
\renewcommand{\arraystretch}{1}
\begin{tabular}{l l l c c}
\toprule
 \textbf{Reference} & \textbf{RobustBench Model ID} & \textbf{Architecture}  & \textbf{Clean} & \textbf{AA}  \\
 \midrule
\citet{bartoldson2024adversarial} & \texttt{Bartoldson2024Adversarial\_WRN-94-16} &  WRN-94-16 & 93.68 & 73.71 \\
\citet{amini2024meansparse} & \texttt{Amini2024MeanSparse\_S-WRN-94-16} &  WRN-94-16 & 93.60 & 73.10 \\
\citet{bartoldson2024adversarial} & \texttt{Bartoldson2024Adversarial\_WRN-82-8} &  WRN-82-8 & 93.11 & 71.59 \\
\citet{peng2023robust} & \texttt{Peng2023Robust} &  WRN-82-8 & 93.27 & 71.07 \\
\citet{wang2023better} & \texttt{Wang2023Better\_WRN-70-16} &  WRN-70-16 & 93.25 & 70.69 \\
\citet{amini2024meansparse} & \texttt{Amini2024MeanSparse\_Ra\_WRN\_70\_16} &  WRN-70-16 & 93.24 & 68.94 \\
\citet{cui2024decoupled} & \texttt{Cui2023Decoupled\_WRN-28-10} &  WRN-28-10 & 92.16 & 67.73 
\\
\citet{gowal2021improving} & \texttt{Gowal2021Improving\_70\_16\_ddpm\_100m} &  WRN-70-16 & 88.74 & 66.10 \\
\citet{gowal2020uncovering} & \texttt{Gowal2020Uncovering\_70\_16\_extra} &  WRN-70-16 & 91.10 & 65.87 
\\
\citet{rebuffi2021fixing} & \texttt{Rebuffi2021Fixing\_70\_16\_cutmix\_ddpm} &  WRN-70-16 & 88.54 & 64.20 
\\
\citet{gowal2021improving} & \texttt{Gowal2021Improving\_28\_10\_ddpm\_100m} &  WRN-28-10 & 87.50 & 63.38 
\\
\citet{huang2021exploring} & \texttt{Huang2021Exploring\_ema} &  WRN-34-R & 91.23 & 62.54 
\\
\citet{dai2022parameterizing} & \texttt{Dai2021Parameterizing} &  WRN-28-10 & 87.02 & 61.55 
\\
\citet{sridhar2022improving} & \texttt{Sridhar2021Robust\_34\_15} &  WRN-34-15 & 86.53 & 60.41 
\\
\citet{carmon2019unlabeled} & \texttt{Carmon2019Unlabeled} &  WRN-28-10 & 89.69 & 59.53 
\\
\citet{gowal2021improving} & \texttt{Gowal2021Improving\_R18\_ddpm\_100m} &  PreActRN-18 & 87.35 & 58.50 
\\
\citet{LTD} & \texttt{Chen2021LTD\_WRN34\_20} &  WRN-34-20 & 86.03 & 57.71 
\\
\citet{LTD} & \texttt{Chen2021LTD\_WRN34\_10} &  WRN-34-10 & 85.21 & 56.94 
\\
\citet{sehwag2022robust} & \texttt{Sehwag2021Proxy\_R18} &  RN-18 & 84.59 & 55.54 \\
\bottomrule
\end{tabular}
\end{table*}

\begin{table*}[t]
\centering
\caption{\textbf{Specifications of robust teachers trained on CIFAR-100 and Tiny-ImageNet.} These models are employed to verify the scalability of our approach across larger datasets. `AA' denotes robust accuracy against AutoAttack.}
\label{tab:teacher_specs_large_dataset}
\small
\renewcommand{\arraystretch}{1.}
\begin{tabular}{ll l l c c}
\toprule
\textbf{Shorthand} & \textbf{Reference} & \textbf{RobustBench Model ID} & \textbf{Architecture} & \textbf{Clean} & \textbf{AA} \\
\midrule
\multicolumn{6}{c}{\textbf{CIFAR-100}} \\
\midrule
\textbf{Chen}& \citet{LTD}& \texttt{Chen2021LTD\_WRN34\_10}& WRN-34-10 & 64.07 & 30.59 \\
\textbf{Wang28}& \citet{wang2023better} & \texttt{Wang2023Better\_WRN-28-10} & WRN-28-10 & 72.58 & 38.77 \\
\textbf{Wang70}& \citet{wang2023better} & \texttt{Wang2023Better\_WRN-70-16} & WRN-70-16 & 75.22 & 42.66 \\
\textbf{Gowal} & \citet{gowal2020uncovering} & \texttt{Gowal2020Uncovering\_extra} & WRN-70-16 & 69.15 & 36.88 \\
\midrule
\multicolumn{6}{c}{\textbf{Tiny-ImageNet}} \\
\midrule
\textbf{Wang}& \citet{wang2023better}& Official Repository\textsuperscript{\textdagger}& WRN-28-10 & 65.19& 31.30\\
\bottomrule
\multicolumn{6}{l}{\footnotesize \textsuperscript{\textdagger} Pre-trained weights obtained from \url{https://github.com/wzekai99/DM-Improves-AT}}
\end{tabular}
\end{table*}

\subsection{Self-Distillation Protocol (\Cref{sec:main_puzzle})}
\label{sec:supp_self_distillation}

To investigate the impact of teacher overfitting in a controlled setting (\Cref{fig:mot-ad}), we employed a self-distillation method designed to isolate the teacher's quality as the sole variable. We first trained a standard ResNet-18 model using PGD-AT for 200 epochs. From this single training trajectory, we extracted two distinct checkpoints to serve as teachers: the \textit{Best Self-Teacher}, selected at the epoch of highest robust test accuracy (typically epoch 100-120) to represent a model with generalizable robust features, and the \textit{Last Self-Teacher}, taken from the final epoch to represent a model that has overfitted to the training set. We then trained new student models of the same architecture using these respective checkpoints via self-distillation (ARD with self-teacher).

\subsection{Identification of Learnable and Unlearnable Sets (\Cref{sec:main_cause})}
\label{sec:supp_subset_id}

To construct the consistent learnable ($\learnset$) and unlearnable ($\unlearnset$) subsets, we strictly followed the intersection protocol outlined in \Cref{alg:subset_identification}.

We compiled an ensemble of models trained via diverse methodologies ($\mathcal{P}$), comprising standard adversarial training baselines (PGD-AT and TRADES) and ARD.
For CIFAR-10 and CIFAR-100, we utilized ARD configured with the four distinct robust teachers detailed in \Cref{tab:teacher_specs} and \Cref{tab:teacher_specs_large_dataset}, resulting in six paradigms ($|\mathcal{P}|=6$) and a total ensemble of 60 candidate models ($|\mathcal{M}|=60$) across $K=10$ random seeds.
For Tiny-ImageNet, we adopted a representative subset comprising PGD-AT, TRADES, and ARD with the ``Wang'' teacher, yielding three paradigms ($|\mathcal{P}|=3$) and a total of 30 candidate models ($|\mathcal{M}|=30$).

The identification process was conducted on the full training dataset $\mathcal{D}$ for each benchmark ($N=50,000$ for CIFAR-10/100; $N=100,000$ for Tiny-ImageNet).
To isolate intrinsic data hardness from the transient degradation of late-stage overfitting, every model in $\mathcal{M}$ was evaluated specifically at its epoch of peak robust test accuracy.

Based on the predictions at these peak-performance checkpoints, we categorized the samples using a strict intersection criterion.
A sample is defined as learnable ($\learnset$) if and only if it is correctly classified by all models in the ensemble (i.e., all 60 models for CIFAR or 30 for Tiny-ImageNet).
Conversely, a sample is designated as unlearnable ($\unlearnset$) if it is consistently misclassified by every model.

This rigorous selection ensures that $\unlearnset$ captures samples that are universally difficult regardless of the loss function. As shown in \Cref{tab:split_learn_unlearn,tab:split_learn_unlearn_cifar100}, the size of $\unlearnset$ varies inversely with model capacity. This empirical evidence supports our theoretical premise that unlearnability is a relative property arising from the representational limits of the student architecture.

\begin{algorithm}[t]
   \caption{Identification of Learnable ($\learnset$) and Unlearnable ($\unlearnset$) Sets}
   \label{alg:subset_identification}
\begin{algorithmic}[1]
   \STATE {\bfseries Input:} Training dataset $\mathcal{D} = \{(x_i, y_i)\}_{i=1}^N$
   \STATE {\bfseries Input:} Set of training paradigms $\mathcal{P}$
   \STATE {\bfseries Input:} Number of random seeds $K$ per paradigm
   \STATE {\bfseries Output:} Learnable set $\learnset$, Unlearnable set $\unlearnset$
   
   \STATE Initialize $\learnset \gets \emptyset$, $\unlearnset \gets \emptyset$
   \STATE $\mathcal{M} \gets \emptyset$ \hfill \# Collection of all trained models
   
   \FOR{each paradigm $p \in \mathcal{P}$}
       \FOR{seed $k = 1$ to $K$}
           \STATE Train model $\theta_{p,k}$ using method $p$
           \STATE Identify epoch $t^*$ achieving peak robust test accuracy
           \STATE Save model snapshot $f_{p,k} \gets \theta_{p,k}^{(t^*)}$
           \STATE $\mathcal{M} \gets \mathcal{M} \cup \{f_{p,k}\}$
       \ENDFOR
   \ENDFOR
   
   \FOR{each sample $(x_i, y_i) \in \mathcal{D}$}
       \STATE $C_i \gets 0$ \hfill \# Count of correct robust predictions
       \FOR{each model $f \in \mathcal{M}$}
           \IF{$f(x_i + \delta) = y_i$ under PGD-10 attack}
               \STATE $C_i \gets C_i + 1$
           \ENDIF
       \ENDFOR
       
       \IF{$C_i = |\mathcal{M}|$}
           \STATE $\learnset \gets \learnset \cup \{i\}$ \hfill \# Consistent Learnable
       \ELSIF{$C_i = 0$}
           \STATE $\unlearnset \gets \unlearnset \cup \{i\}$ \hfill \# Consistent Unlearnable
       \ENDIF
   \ENDFOR
   \STATE {\bfseries return} $\learnset$, $\unlearnset$
\end{algorithmic}
\end{algorithm}

\subsection{Feature Reconstruction Methodology (\Cref{sec:main_cause})}
\label{sec:supp_feature_recon}

To visualize the internal representations learned by the student model, we employ the robust feature inversion technique~\citep{engstrom2019adversarial}. The objective is to synthesize an input image $\mathbf{x}_{\mathrm{recon}}$ that elicits the same feature activations in the model's penultimate layer as the target image $\mathbf{x}_{\mathrm{target}}$. Formally, letting $h(\cdot)$ denote the feature extractor (i.e., the network up to the penultimate layer), we solve the following optimization problem:
\begin{equation}
\mathbf{x}_{\mathrm{recon}} = \operatorname*{argmin}_{\mathbf{x}' \in [0,1]^d} \left\| h(\mathbf{x}') - h(\mathbf{x}_{\mathrm{target}}) \right\|_2^2.
\end{equation}
The optimization initializes $\mathbf{x}'$ as random noise sampled from a uniform distribution $\mathcal{U}(0, 1)$ and iteratively updates it to minimize the Mean Squared Error in the feature space.

This reconstruction process serves as a diagnostic tool to analyze the semantic quality of learned features. As illustrated in \Cref{fig:learnable_unlearnable_examples}, the results exhibit a sharp contrast correlated with the learnability of the samples. For samples in the learnable set (top rows), the reconstructions retain high visual fidelity, preserving clear object shapes and semantic structures. This indicates that the model relies on robust, human-aligned features. Conversely, for samples in the unlearnable set (bottom rows), the reconstructions degrade into unstructured, high-frequency noise patterns or distorted artifacts. This visual evidence confirms that the student model fails to extract coherent features from these samples.

As described in \Cref{alg:feature_reconstruction}, we perform the inversion using the Adam optimizer. For our experiments, we set the number of steps $T=5,000$ and the learning rate $\eta=0.01$. At each step, pixel values are clipped to the valid range $[0, 1]$. We apply this reconstruction to PGD-perturbed images to analyze the stability of the learned representations under adversarial perturbations.

\begin{algorithm}[t]
   \caption{Robust Feature Reconstruction}
   \label{alg:feature_reconstruction}
\begin{algorithmic}[1]
   \STATE {\bfseries Input:} Target image $\mathbf{x}_{\mathrm{target}}$, Feature extractor $h$, Number of steps $T$, Learning rate $\eta$
   \STATE {\bfseries Output:} Reconstructed image $\mathbf{x}_{\mathrm{recon}}$

   \STATE $\mathbf{z}_{\mathrm{target}} \gets h(\mathbf{x}_{\mathrm{target}})$.detach() \hfill \# Extract target features

   \STATE $\mathbf{x}_{\mathrm{recon}} \gets \mathbf{n}$, where $\mathbf{n} \sim \mathcal{U}(0, 1)$ \hfill \# Initialize with random noise
   \STATE Initialize Adam optimizer with parameters $\mathbf{x}_{\mathrm{recon}}$ and lr $\eta$

   \FOR{$t = 1$ to $T$}
       \STATE $\mathbf{z}_{\mathrm{current}} \gets h(\mathbf{x}_{\mathrm{recon}})$
       \STATE $\mathcal{L} \gets \| \mathbf{z}_{\mathrm{current}} - \mathbf{z}_{\mathrm{target}} \|_2^2$ \hfill \# MSE Loss in feature space
       \STATE Update $\mathbf{x}_{\mathrm{recon}}$ using $\nabla_{\mathbf{x}_{\mathrm{recon}}} \mathcal{L}$
       \STATE $\mathbf{x}_{\mathrm{recon}} \gets \mathrm{clip}(\mathbf{x}_{\mathrm{recon}}, 0, 1)$ \hfill \# Enforce valid pixel range
   \ENDFOR

   \STATE {\bfseries return} $\mathbf{x}_{\mathrm{recon}}$
\end{algorithmic}
\end{algorithm}

\subsection{Simulations on Synthetic Data (\Cref{sec:main_synthetic_exp})}
\label{sec:supp_exp_synthetic}

To corroborate our theoretical analysis, we perform a controlled simulation mirroring the setting in \Cref{def:data_model}. We construct a synthetic dataset with dimension $d=100$, sample size $N=200$, and $P=4$ patches.
The signal strength is set to \(\alpha=5\), and the noise component is sampled with \(\sigma_n=0.4\). The teacher margin is set to \(\Gamma=10\), and the student model is a two-layer neural network with \(m=80\) and \(\sigma_0 = 0.01\).
To strictly enforce the representational mismatch, we constrain the student's weights to be orthogonal to $\mathbf{v}$, thereby rendering the unlearnable feature structurally invisible to the model. 
We train the network for $T=4,000$ steps with a learning rate of $\eta=0.01$ and an adversarial perturbation budget of $\epsilon=0.5$. For robust evaluation, we report robust test accuracy against a 20-steps PGD attack on $100$ unseen samples sampled from the same distribution as the training set.

\subsection{Calculation of Unlearnable Entropy (\Cref{sec:main_teacher_selection})}
\label{sec:supp_entropy_calc}
We quantify a teacher's predictive uncertainty using the Shannon entropy of its softmax output. 
Given a teacher $f_{\mathbf{W}_T}$ and an unlearnable set $\unlearnset$ identified for a student model $f_{\mathbf{W}}$, we compute the teacher's average entropy on adversarial examples of samples in $\unlearnset$. 
Specifically, for each training sample $(\mathbf{X}_i, y_i) \in \unlearnset$, we generate a PGD-10 adversarial example $\tilde{\mathbf{X}}_i$ using the student model $f_{\mathbf{W}}$, and evaluate the teacher on $\tilde{\mathbf{X}}_i$. We then compute:
\begin{equation}
H(\unlearnset) = \frac{1}{|\unlearnset|} \sum_{i \in \unlearnset} \left( - \sum_{c=1}^C p_T(\tilde{\mathbf{X}}_i)_c \log p_T(\tilde{\mathbf{X}}_i)_c \right),
\end{equation}
where $p_T(\tilde{\mathbf{X}}_i)_c$ is the predicted probability for class $c$.

\section{Additional Experiments}
\label{sec:supp_additional_exp}

\subsection{Verifying the Mechanism of Robust Overfitting: A Random Label Test}
\label{sec:supp_random_label_test}

\begin{figure}[t]
\centering
\begin{subfigure}[b]{0.48\textwidth}
    \centering
    \includegraphics[width=\linewidth]{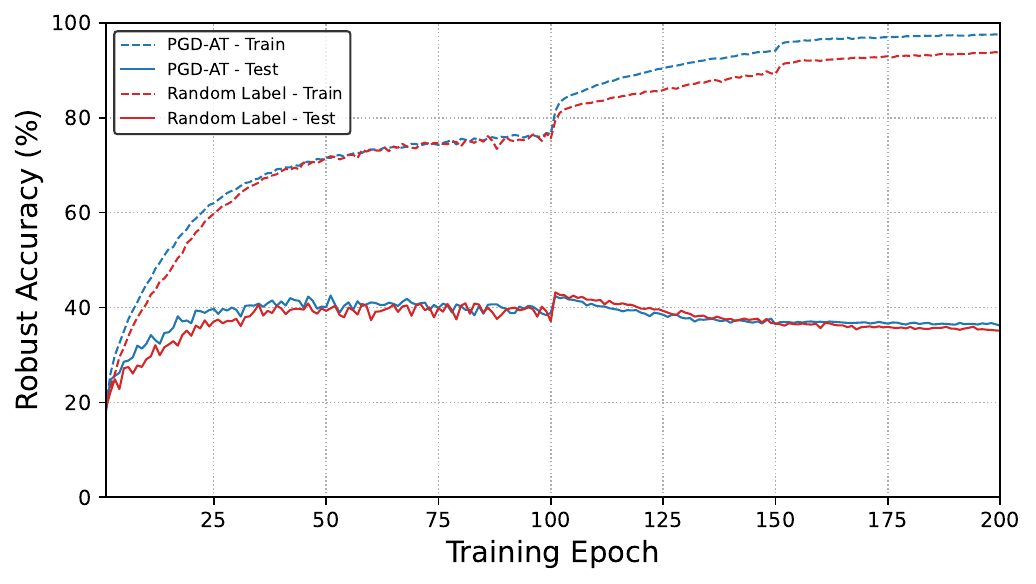}
    \caption{Random Labels on $\unlearnset$}
    \label{fig:rand_unlearn}
\end{subfigure}
\hfill
\begin{subfigure}[b]{0.48\textwidth}
    \centering
    \includegraphics[width=\linewidth]{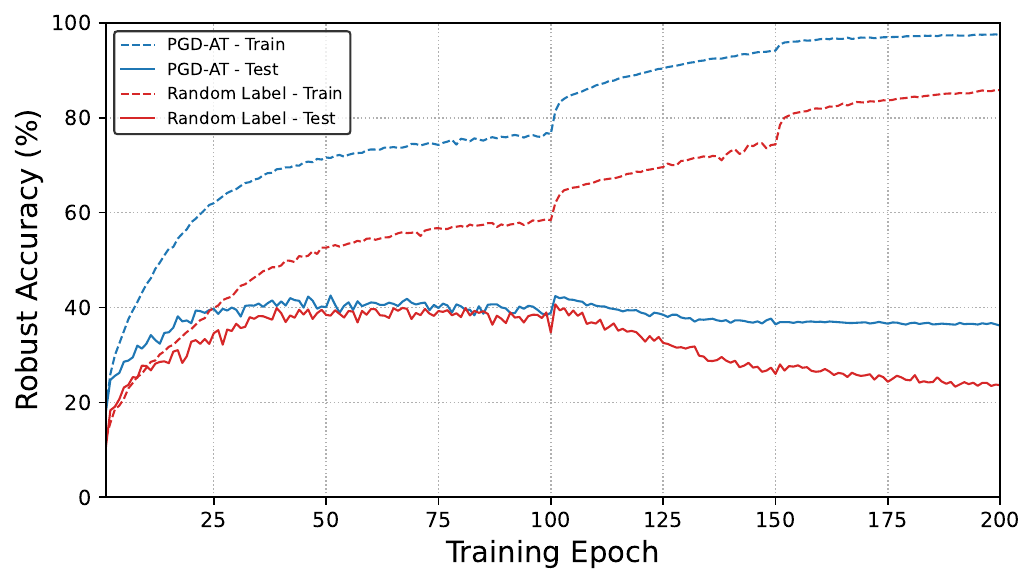}
    \caption{Random Labels on $\learnset$ (Partial)}
    \label{fig:rand_learn}
\end{subfigure}

\caption{\textbf{Validation of the robust overfitting mechanism via the Random Label Test.} (a) Randomizing labels within the unlearnable set ($\unlearnset$) yields a robust test accuracy trajectory indistinguishable from the baseline Standard AT. This implies that the model memorizes $\unlearnset$ as arbitrary noise patterns regardless of the semantic labels. (b) Conversely, randomizing an equivalent number of learnable samples ($\learnset$) severely degrades generalization, confirming that $\learnset$ contains the necessary robust features.}
\label{fig:supp_random_label_test}
\vspace{-0.2cm}
\end{figure}

We hypothesize that robust overfitting is primarily caused by the model memorizing noise patterns in the unlearnable set $\unlearnset$. To verify this, we conducted a random label test. The rationale is simple: if the model learns meaningful features from $\unlearnset$, randomizing the labels should disrupt the learning process. However, if the model minimizes the loss on $\unlearnset$ merely by memorizing noise, the overfitting pattern should remain similar even with random labels, as the model can fit noise to any label given sufficient capacity.

We constructed a training set where the labels of $\unlearnset$ were randomly shuffled, while the labels of $\learnset$ remained correct. We then trained a ResNet-18 model using Standard PGD-AT. The results are shown in \Cref{fig:supp_random_label_test}. First, looking at \Cref{fig:rand_unlearn}, the model achieves nearly 100\% robust training accuracy on the randomized $\unlearnset$. This confirms that the model memorize these samples even when the labels are meaningless. More importantly, the robust test accuracy shows the same degradation pattern as the standard training scenario where $\unlearnset$ has correct labels. This indicates that fitting $\unlearnset$ is practically equivalent to fitting random noise.

In contrast, we performed a counter-experiment on the learnable set $\learnset$. To ensure a fair comparison, we randomly shuffled the labels of a subset of $\learnset$ equal in size to $\unlearnset$, while keeping the rest unchanged.
As shown in \Cref{fig:rand_learn}, this corruption significantly degraded the model's generalization performance compared to the baseline. This sharp contrast confirms that $\learnset$ contains valid robust signals necessary for generalization, whereas $\unlearnset$ acts as noise that induces overfitting regardless of the label correctness.

\subsection{Noise Overfitting and Robustness Decay}
\label{sec:supp_noise_overfitting}

\begin{table*}[!b]
\centering
\caption{\textbf{Distribution of test sample subsets based on training dynamics (Best vs. Last)}. We report the percentage of samples in each category (mean $\pm$ std over 10 seeds) for PGD-AT and TRADES. The `Forgotten' subset represents samples that were robust at the peak epoch but lost robustness due to overfitting.}
\label{tab:subset_distribution}
\small
\renewcommand{\arraystretch}{1.2}
\begin{tabular}{l c c c c}
\toprule
\textbf{Model} & \textbf{Consistent Learnable} & \textbf{Forgotten} & \textbf{Newly Learned} & \textbf{Consistent Unlearnable} \\
\midrule
PGD-AT & $34.36 \pm 0.46$ & $10.21 \pm 0.57$ & $3.30 \pm 0.25$ & $52.13 \pm 0.39$ \\
TRADES & $40.24 \pm 0.44$ & $7.32 \pm 0.36$ & $3.72 \pm 0.18$ & $48.72 \pm 0.59$ \\
\bottomrule
\end{tabular}
\end{table*}

To understand the mechanism of robust overfitting, we analyze the robustness changes of test samples from the peak robustness checkpoint (Best) to the overfitted checkpoint (Last). Using ResNet-18 models trained via both Standard PGD-AT and TRADES, we define the Forgotten Set as the subset of test samples correctly classified under a PGD-20 attack by the Best model but misclassified by the Last model.

We first quantify the prevalence of this phenomenon. As reported in \Cref{tab:subset_distribution}, the Forgotten Set constitutes a significant portion of the test data ($10.21\%$ for PGD-AT and $7.32\%$ for TRADES). This confirms that the robustness degradation is not limited to rare outliers but affects a substantial fraction of the data distribution.
To analyze the nature of this degradation, we evaluate the Last model across perturbation budgets ranging from $0$ to $8/255$ on these subsets. The results, illustrated in \Cref{fig:noise_overfitting_decay}, show a consistent trend across both training methods regarding the Forgotten Set (Red line).
With no perturbation ($\epsilon=0$), the Last model achieves high accuracy ($>95\%$) on the Forgotten Set, comparable to the Consistent Learnable Set (Blue line). This indicates that the model retains its generalization capability for the semantic features of these samples. However, as the perturbation size increases, the accuracy of the Forgotten Set drops sharply, reaching near zero at the maximum budget.

This distinct decay profile suggests that robust overfitting reduces the robustness margin of test samples without losing their semantic features. While the model correctly classifies clean data, overfitting to training noise distorts the decision boundary, making these specific samples highly sensitive to perturbations. Consequently, they become vulnerable to adversarial attacks despite being correctly classified in clean settings.
This empirical behavior mirrors our theoretical findings in \Cref{thm:at_dichotomy}: the model successfully learns the robust feature $\mathbf{u}$ (high clean accuracy) but simultaneously memorizes spurious noise, which the adversary exploits to override the robust prediction.

\begin{figure}[t]
    \centering
    \begin{subfigure}[b]{0.48\textwidth}
        \centering
        \includegraphics[width=\linewidth]{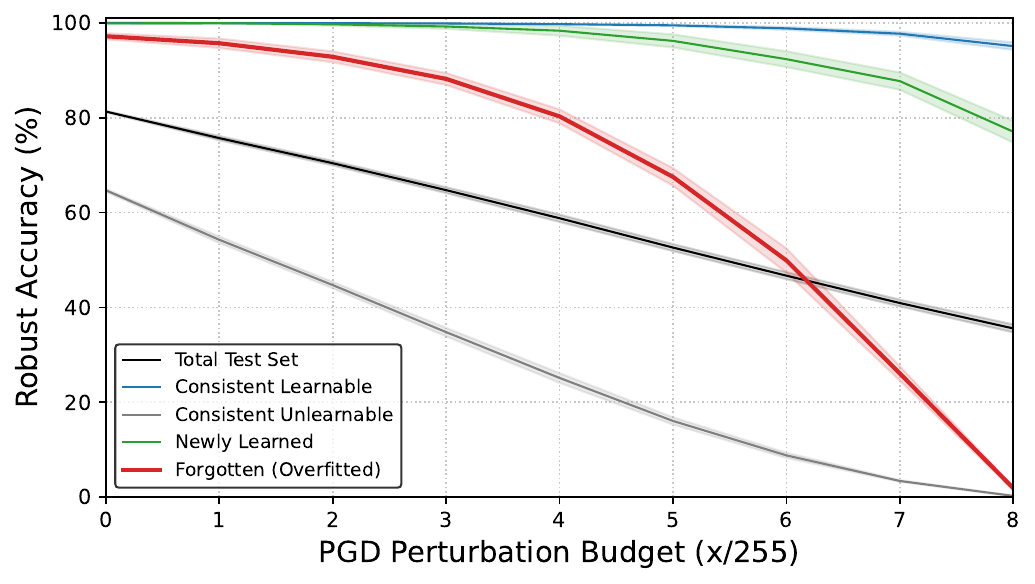}
        \caption{PGD-AT}
        \label{fig:noise_decay_pgd}
    \end{subfigure}
    \hfill
    \begin{subfigure}[b]{0.48\textwidth}
        \centering
        \includegraphics[width=\linewidth]{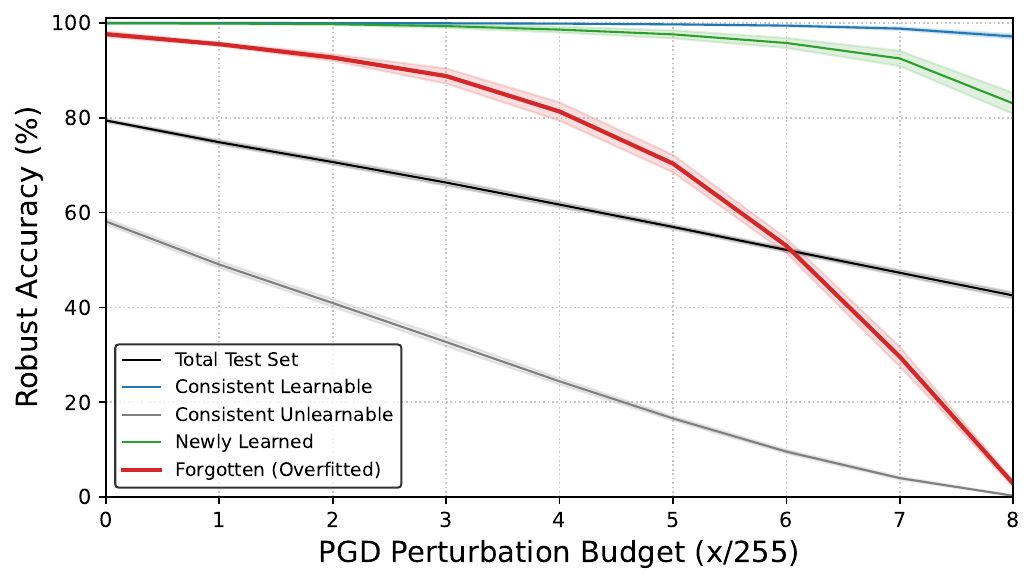}
        \caption{TRADES}
        \label{fig:noise_decay_trades}
    \end{subfigure}

    \caption{\textbf{Robustness decay profiles across perturbation budgets.} We trace the accuracy of different sample subsets as the attack strength ($\epsilon$) increases. The \textit{Consistent Learnable} set (Blue) maintains stability, indicating robust feature learning. Crucially, the \textit{Forgotten (Overfitted)} set (Red) shows high accuracy at $\epsilon=0$ but suffers a catastrophic drop as perturbation increases. This confirms that robust overfitting does not erase clean signal features but rather compromises the robust margin, likely by overfitting to non-robust noise.}
    \label{fig:noise_overfitting_decay}
    \vspace{-0.2cm}
\end{figure}

\begin{table*}[!b]
\caption{\textbf{Identification of robustly unlearnable samples on CIFAR-100.} We categorize CIFAR-100 training samples based on the rigorous intersection of predictions from 60 models (spanning 6 training paradigms $\times$ 10 random seeds, including AD from 4 teachers detailed in \Cref{tab:teacher_specs_large_dataset}), evaluated at their respective peak robust accuracy epochs (see \Cref{sec:supp_subset_id} and \Cref{alg:subset_identification}). The `Intersection' reveals a persistent core of samples that remain unlearnable regardless of the defense method employed.}

\label{tab:split_learn_unlearn_cifar100}
\begin{tabular}{ll rrrrrr|r}
\toprule

\multirow{2}{*}{\textbf{Model}}  & \multirow{2}{*}{\textbf{Category}} & \multicolumn{2}{c}{\textbf{Adversarial Training}}& \multicolumn{4}{c}{\textbf{Adversarial Distillation}}& \multirow{2}{*}{\textbf{Intersection}} \\
\cmidrule(lr){3-4} \cmidrule(lr){5-8}
 &  & PGD-AT & TRADES & Chen & Wang28 & Wang70 & \multicolumn{1}{c}{Gowal} & \\
\midrule
\multirow{2}{*}{MN-V2} 
& Unlearnable & 22,184& 17,948& 19,720& 22,115& 21,973& 21,986& 14,727\\
& Learnable   & 13,816& 14,625& 22,054& 16,988& 14,845& 14,993& 10,971\\
\cmidrule(l){2-9}
\multirow{2}{*}{ResNet-18} 
& Unlearnable & 8,942   & 10,422  & 17,241  & 12,485  & 10,727  & 12,003  & 6,867   
\\
& Learnable   & 19,178  & 19,000  & 25,172  & 26,626  & 22,258  & 23,117  & 15,251  \\
\cmidrule(l){2-9}
\multirow{2}{*}{WRN-28-10} 
& Unlearnable & 3,315& 4,468& 15,394& 8,771& 3,010& 5,383& 1,652\\
& Learnable   & 30,795& 27,328& 28,213& 33,267& 33,497& 40,178& 15,614\\
\cmidrule(l){2-9}
\multirow{2}{*}{WRN-34-10}
& Unlearnable &         2,668&         3,343&         15,119&         8,529&         2,849&         5,240&         1,559\\
& Learnable   &         33,046&         30,086&         28,448&         32,891&         40,747&         40,234&         16,397\\
\bottomrule
\end{tabular}
\end{table*}

\begin{table*}[!b]
\centering
\caption{\textbf{Identification of robustly unlearnable samples on Tiny-ImageNet.} We categorize training samples based on the intersection of predictions from PreActResNet-18 models, evaluated at their respective peak robust accuracy epochs. The `Intersection' reveals a persistent core of samples that remain unlearnable regardless of the defense method employed.}
\label{tab:split_learn_unlearn_tiny}
\begin{tabular}{l rrr|r}
\toprule
\multirow{2}{*}{\textbf{Category}} & \multicolumn{2}{c}{\textbf{Adversarial Training}} & \multicolumn{1}{c}{\textbf{AD}} & \multirow{2}{*}{\textbf{Intersection}} \\
\cmidrule(lr){2-3} \cmidrule(lr){4-4} 
 & PGD-AT & TRADES & \multicolumn{1}{c}{Wang} & \\
\midrule
Unlearnable &         42,204&         29,389&         44,372&         25,391\\
Learnable   &         23,997&         21,839&         39,936&         17,896\\
\bottomrule
\end{tabular}
\end{table*}

\subsection{Unlearnable Sample in Different Dataset}
To verify the universality of our findings, we extend the identification of robustly unlearnable samples to CIFAR-100 and Tiny-ImageNet. As reported in \Cref{tab:split_learn_unlearn_cifar100} and \Cref{tab:split_learn_unlearn_tiny}, a consistent subset of unlearnable samples persists across all datasets and architectures. These results establish that the existence of $\unlearnset$ is not an artifact of specific settings, but a fundamental and pervasive characteristic of robust learning tasks.

\subsection{Further Validation of the Unlearnable-Entropy Criterion}
We further substantiate the validity of our criterion by comparing it against conventional teacher selection heuristics in \Cref{fig:supp_correlation_comparison}. As shown in the top row, standard metrics such as Teacher Clean Accuracy and Robust Accuracy fail to predict student performance, exhibiting weak or negligible correlations. In contrast, our Unlearnable-Confidence Criterion (bottom row) demonstrates a strong positive correlation with student robustness. Crucially, this high predictive power remains consistent regardless of whether the unlearnable set is approximated via PGD-AT or TRADES, confirming that the teacher's uncertainty on unlearnable samples is the decisive factor for successful distillation.
\begin{figure}[t]
    \centering
    \begin{subfigure}[b]{0.48\linewidth}
        \centering
        \includegraphics[width=\linewidth]{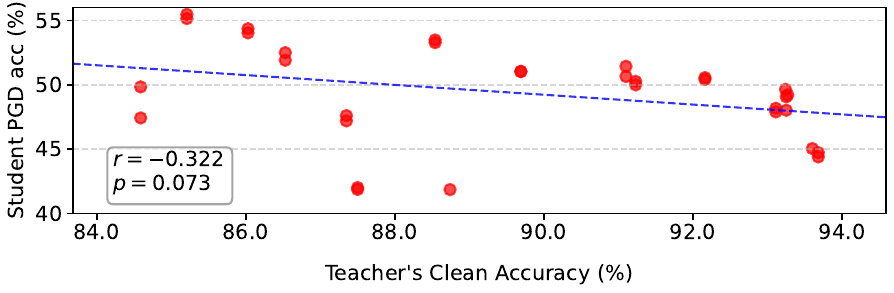} 
        \caption{Baseline: Teacher Clean Acc.}
        \label{fig:supp_corr_clean}
    \end{subfigure}
    \hfill
    \begin{subfigure}[b]{0.48\linewidth}
        \centering
        \includegraphics[width=\linewidth]{figure/correlation_Teacher_AA.pdf} 
        \caption{Baseline: Teacher Robust Acc (AA).}
        \label{fig:supp_corr_aa}
    \end{subfigure}
    \begin{subfigure}[b]{0.48\linewidth}
        \centering
        \includegraphics[width=\linewidth]{figure/correlation_Unlearnable_PGD10_Entropy_with_single_pgd.pdf} 
        \caption{\textbf{Ours}: Entropy (PGD-AT)}
        \label{fig:supp_corr_pgd}
    \end{subfigure}
    \hfill
    \begin{subfigure}[b]{0.48\linewidth}
        \centering
        \includegraphics[width=\linewidth]{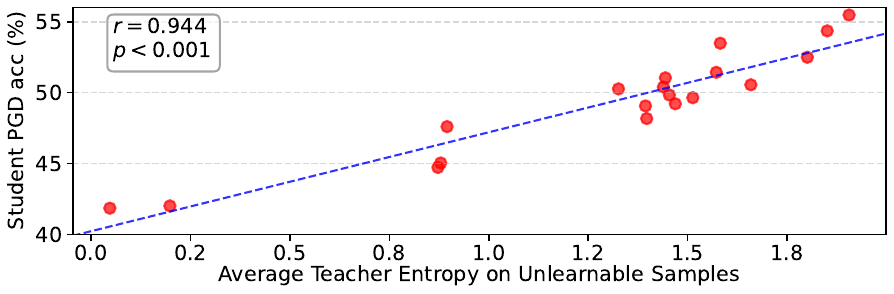} 
        \caption{\textbf{Ours}: Entropy (TRADES)}
        \label{fig:supp_corr_trades}
    \end{subfigure}
    
    \caption{\textbf{Comparison of teacher selection criteria.} 
    Top row: Existing selection heuristics based on (a) Clean Accuracy and (b) Robust Accuracy (AutoAttack) show no correlation with student robustness. 
    Bottom row: Our proposed \textit{Unlearnable-Confidence Criterion} consistently exhibits a strong correlation ($r > 0.9$) regardless of the reference model used for identification ((c) PGD-AT or (d) TRADES). This confirms that teacher uncertainty on unlearnable samples is the dominant predictor of distillation success.}
    \label{fig:supp_correlation_comparison}
    \vspace{-0.2cm}
\end{figure}

\subsection{Sensitivity to Perturbation Bounds}
\label{sec:supp_sensitivity_epsilon}

We investigate the sensitivity of the robustly unlearnable set ($\unlearnset$) to varying perturbation budgets ($\epsilon$) and evaluate the stability of our proposed teacher-selection metric under these changing constraints.
As reported in \Cref{tab:split_learn_unlearn_epsilon}, the size of the unlearnable intersection changes substantially depending on the attack budget. When the adversarial constraint is relaxed, the number of universally unlearnable samples strictly decreases. This dynamic expansion confirms that unlearnability is not an intrinsic property of the data alone, but a relative phenomenon dynamically induced by the interplay between the student architecture and the adversarial constraint.

Furthermore, \Cref{tab:epsilon_robustness} supports the generalizability of our metric: for standard bounds ($\epsilon \ge 4/255$), student robust accuracy aligns with the teacher's predictive entropy ordering. The boundary case of $\epsilon = 2/255$ is also consistent with our theoretical framework; at this minimal budget, $\unlearnset$ nearly vanishes (\Cref{tab:split_learn_unlearn_epsilon}), mitigating the representational mismatch. With the reduction of $\unlearnset$, the degradation typically caused by low-entropy teachers is alleviated, and the performance gap among distinct teachers narrows. This behavior suggests that the presence of $\unlearnset$ is a primary factor driving teacher-induced failure modes in adversarial distillation.

\begin{table*}[t]
\centering
\caption{\textbf{Sensitivity of robustly unlearnable samples to varying perturbation bounds.} Unlike the baseline subset identification performed at a fixed attack budget, this experiment evaluates how the learnable and unlearnable sets evolve under different training and test bounds ($\epsilon$). At each $\epsilon$ level, we categorize training samples based on the rigorous intersection of predictions from 60 models. The `Intersection' column reports the size of the consistent subsets: samples correctly classified by all models (Learnable) or misclassified by all models (Unlearnable). Crucially, the unlearnable intersection expands significantly as $\epsilon$ increases.}
\label{tab:split_learn_unlearn_epsilon}
\begin{tabular}{ll rrrrrr|r}
\toprule
\multirow{2}{*}{\textbf{Epsilon}} & \multirow{2}{*}{\textbf{Category}} & \multicolumn{2}{c}{\textbf{Adversarial Training}} & \multicolumn{4}{c}{\textbf{Adversarial Distillation}} & \multirow{2}{*}{\textbf{Intersection}} \\
\cmidrule(lr){3-4} \cmidrule(lr){5-8}
 &  & PGD-AT & TRADES & Chen & Rebuffi & Bartoldson & \multicolumn{1}{c}{Gowal} &  \\
\midrule
\multirow{2}{*}{2/255}& Unlearnable & 106& 2842& 3,898& 2,890& 419& 231& 38\\
& Learnable   & 44,361& 39,811& 43,274& 44,024& 46,036& 45,593& 36,654\\
\cmidrule(l){2-9}
\multirow{2}{*}{4/255}& Unlearnable & 1,130& 4,796& 5,707& 4,188& 1,472& 1,413& 642\\
& Learnable   & 38.299& 35,667& 40,576& 41,617& 39,620& 37,592& 31,025\\
\cmidrule(l){2-9}
\multirow{2}{*}{6/255}& Unlearnable & 4,215& 6,842& 7,885& 5,981& 3,395& 4,225& 2,518\\
& Learnable   & 31,395& 31,567& 37,419& 38,836& 33,066& 31,369& 26,242\\
\cmidrule(l){2-9}
\multirow{2}{*}{8/255}& Unlearnable & 8,360   & 10,217  & 10,464  & 8,627   & 7,511   & 8,047   & 5,217   
\\
& Learnable   & 25,927  & 26,903  & 34,007  & 32,714  & 27,598  & 26,323  & 21,899  \\
\cmidrule(l){2-9}
\multirow{2}{*}{10/255}& Unlearnable &         12,323&         13,910&         13,343&         10,298&         12,025&         12,442&         7,865\\
& Learnable   &         22,485&         22,096&         30,466&         32,562&         23,189&         22,466&         18,357\\
\bottomrule
\end{tabular}
\end{table*}

\begin{table}[t]
\centering
\caption{\textbf{Robust test accuracy across varying perturbation budgets ($\epsilon$).} We evaluate the adversarial distillation performance of a ResNet-18 student using four different teachers. Values in parentheses denote the teacher's predictive entropy on the unlearnable subset.}
\label{tab:epsilon_robustness}
\begin{tabular}{l cccc}
\toprule
\multirow{2}{*}{\textbf{Epsilon}} & \multicolumn{4}{c}{\textbf{Teacher (Entropy)}} \\
\cmidrule(lr){2-5}
 & \textbf{Chen} (1.906) & \textbf{Rebuffi} (1.582) & \textbf{Bartoldson} (0.872) & \textbf{Gowal} (0.198) \\
\midrule
2/255  & 80.18& 81.42& 79.89& 79.46\\
4/255  & 72.03 & 71.13 & 66.70 & 65.47 \\
6/255  & 63.37 & 62.12 & 55.05 & 52.34 \\
8/255  & 55.47 & 53.48 & 44.39 & 41.85 \\
10/255 & 48.11 & 44.46 & 37.27 & 35.31 \\
\bottomrule
\end{tabular}
\end{table}

\section{Full Related Works}
\label{sec:supp_full_related_works}

\subsection{Adversarial Vulnerability of Deep Neural Networks}

Deep neural networks are inherently vulnerable to adversarial examples—indistinguishable perturbations designed to maximize the classification error~\citep{szegedy2014intriguing, FGSM}. The generation of such examples is generally categorized based on the adversary's access to the victim model: white-box and black-box settings.
In the white-box setting, the adversary has full access to the model parameters and gradients. This allows for the use of iterative gradient-based methods, such as PGD~\citep{PGD}, or more sophisticated attacks like C\&W~\citep{CW_attack} and AutoAttack~\citep{AutoAttack}, to directly maximize the loss.
In contrast, the black-box setting assumes the adversary has no access to the internal gradients, relying solely on the model's output. Black-box attacks are further divided into two main categories: query-based and transfer-based attacks. Query-based attacks estimate gradients or search for decision boundaries by repeatedly querying the model outputs, using techniques such as score-based methods~\citep{uesato2018adversarial, andriushchenko2020square} or decision-based boundary attacks~\citep{chen2020rays, chen2020hopskipjumpattack, chen2021dair}. Transfer-based attacks exploit the phenomenon of adversarial transferability, where adversarial examples generated on a surrogate model are used to attack the target model without direct access~\citep{szegedy2014intriguing, papernot2016transferability, liu2017delving, papernot2017practical, tramer2017space, Mahmood_2021_ICCV}

While the above methodologies have been primarily developed and evaluated in the context of convolutional image classifiers, adversarial vulnerability itself is not limited to specific architectures or modalities; it appears to be a pervasive property of deep learning models across vision, language, and speech~\citep{carlini2018audio,jin2020bert, bhojanapalli2021understanding,mahmood2021robustness}.
Furthermore, this threat extends to large-scale foundation models: Vision-Language Models like CLIP can be misled by multimodal attacks~\citep{goh2021multimodal, mao2023understanding}, and Large Language Models are vulnerable to jailbreaking attacks that bypass safety alignments~\citep{wei2023jailbroken, yu2023gptfuzzer, zou2023universal,liu2024autodan, mehrotra2024tree, chao2025jailbreaking}.
This pervasive nature underscores the fundamental necessity of developing robust learning mechanisms.

\subsection{Empirical Advances in Adversarial Training}

Adversarial training (AT), typically formulated as a min–max robust optimization problem~\citep{PGD}, is widely used in practice and can withstand strong adaptive attacks~\citep{athalye2018obfuscated}.
To better balance natural and robust accuracy, TRADES~\citep{TRADES} adds a KL-divergence regularizer to smooth the decision boundary, while MART~\citep{MART} explicitly emphasizes misclassified examples via a margin-based loss.
Building on these instance-reweighting principles, subsequent work further shapes the robust objective through geometry-aware sampling and related reweighting schemes~\citep{moosavi2019robustness, bai2021improving, liu2021probabilistic, zhang2021geometryaware, jin2022enhancing}, and another line leverages an auxiliary standard model to counteract excessive margins~\citep{rade2022reducing}.
Beyond objective design, extensive research has focused on optimization dynamics and data diversity, showing that carefully tuned training hyperparameters~\citep{pang2021bag}, weight-space perturbations~\citep{2020_awp, zhang2024on}, and strong augmentations~\citep{rebuffi2021fixing, wang2023better} improve robustness.

Beyond conventional image classifiers, AT has also been extended to large-scale foundation models: adversarially fine-tuned vision–language models aim to improve robustness against multimodal perturbations and zero-shot robustness~\citep{mao2023understanding, schlarmann2024robust, wang2024pre, yu2024text, dong2025robustifying, dong2026robust, wang2026learning}, and large language models are adversarially trained to withstand jailbreak prompts, thereby strengthening safety alignment~\citep{mazeika2024harmbench, xhonneux2024efficient, casper2025defending, sheshadri2025latent}.
This cross-modal trend highlights AT as a key mechanism for achieving robustness across architectures and modalities.

\subsection{Difficulties in Adversarial Training}

Despite these advances, adversarial training remains difficult to deploy reliably in practice. A central failure mode is robust overfitting, where robust accuracy degrades sharply in the later stages of training~\citep{rice2020overfitting}, and empirical studies point to sharp loss landscapes and memorization of non-robust examples as key factors~\citep{rice2020overfitting, dong2022exploring, yu2022understanding, li2023understanding}. Mitigation strategies such as early stopping and weight averaging have been shown to partially alleviate these issues~\citep{rice2020overfitting, izmailov2018averaging, gowal2020uncovering, jia2024revisiting}.
Robust performance is also much more sensitive to model capacity and architecture than standard accuracy: larger, carefully designed networks typically achieve substantially higher robust accuracy than compact models under the same adversarial training protocol~\citep{gowal2020uncovering, huang2021exploring, rebuffi2021fixing}. Together, these issues make it difficult to obtain practical robust models under realistic resource constraints and motivate mechanisms that decouple robustness from a specific architecture and training schedule. Adversarial distillation is one such approach, transferring robustness from high-capacity teachers to smaller, more deployable students.

\subsection{Empirical Works on Adversarial Distillation}
Adversarial distillation (AD) transfers robustness from a teacher to a student by training the student to match teacher signals on adversarially perturbed inputs~\citep{ard, iad, akd, rslad, adaad, kuang2023improving, IGDM}. 
In contrast to standard knowledge distillation, which aligns clean predictions~\citep{hinton2015distilling}, AD integrates the adversarial inner loop into the distillation objective so that robustness is preserved in the student. Early work such as ARD~\citep{ard} showed that robust teachers can substantially improve student robustness when both teacher and student are trained under adversarial settings. Subsequent methods refine how teacher information is used: RSLAD~\citep{rslad} guides adversarial example generation with teacher logits and reports a robust saturation effect, where student robustness improves with teacher strength only up to a point before degrading; AKD~\citep{akd} and IAD~\citep{iad} adapt the distillation signal based on teacher confidence or agreement; AdaAD~\citep{adaad} and IGDM~\citep{IGDM} further involve the teacher in the inner maximization or gradient matching to better align student updates with robust teacher behavior. Empirically, distilling from early-stage or well-generalized teachers has been observed to mitigate robust overfitting and stabilize AD~\citep{chen2020robust, rslad}.

Beyond vanilla robustness, AD has been extended to class-imbalanced and long-tailed regimes~\citep{yue2023revisiting, zhao2024improving, cho2025longtailed}, incremental and continual learning~\citep{cho2025enhancing}, and self-distillation settings~\citep{jung2024peeraid}, indicating that robustness transfer via a teacher–student interface is a flexible tool across tasks and data regimes. A complementary line of work seeks robustness transfer without an explicit adversarial inner loop, replacing PGD-based examples with gradient or feature matching on clean inputs~\citep{chan2020thinks,Shafahi2020Adversarially, awais2021mixacm, awais2021adversarial, chen2021cartl, shao2021and, vaishnavi2022transferring}. These non–AT-based approaches primarily target computational efficiency and typically trade off some robustness, and are therefore orthogonal to our focus on understanding the limitations and failure modes of AT-based adversarial distillation.

Recently, SAAD~\citep{saad} challenged the capacity gap hypothesis~\citep{rslad}, attributing robust saturation instead to the scarcity of Transferable Adversarial Samples (TAS)--inputs where student perturbations successfully degrade the teacher's confidence. They empirically showed that robust teachers remain overconfident on non-transferable samples, inducing high adversarial variance and triggering robust overfitting. However, while SAAD identified TAS as a key empirical indicator, they did not explain the theoretical origin of this failure. 
In this work, we bridge this gap by proving that the failure arises when the robust teacher enforces supervision on unlearnable structures. We demonstrate that this guidance compels the student--who is incapable of robustly acquiring these features--to instead memorize noise to match the teacher's predictions.

\subsection{Theoretical Understanding of Adversarial Training}
Theoretical understanding of adversarial training is difficult in general due to the non-convex min--max objective, so many analyses focus on simplified settings.
A series of works studies adversarial training of linear models, where the inner maximization is tractable and one can obtain robust risk guarantees under stylized data assumptions~\citep{li2020implicit, zou2021provable, javanmard2022precise, chen2023benign}.
Another line considers neural networks in a lazy regime to make the dynamics analyzable~\citep{gao2019convergence, zhang2020over, li2023on}.
However, recent work suggests a fundamental conflict in this regime; \citet{wang2022adversarial} proved that neural networks trained in the lazy regime remain vulnerable to adversarial attacks, implying that robustness may be incompatible with lazy training in general. This limitation necessitates a shift towards understanding the optimization dynamics beyond the lazy regime.

\subsection{Feature Learning Theory of Deep Learning}
The feature learning theory~\citep{wen2021toward, allen2022feature, cao2022benign, jelassi2022towards, allen-zhu2023towards, allen2023backward, chidambaram2023provably, huang2023understanding, kou2023benign, lu2024benign, oh2024provable, oh2026from} explores how representations are dynamically acquired in deep learning tasks. By employing patch-wise structured data distributions, this theoretical paradigm establishes a versatile framework for studying optimization dynamics, successfully characterizing diverse phenomena such as benign overfitting, optimizer convergence, the impact of data augmentation, and weak-to-strong generalization.

Recent studies have applied this framework to standard adversarial training, elucidating mechanisms of robust feature learning~\citep{li2025adversarial} and the memorization-induced robust generalization gap~\citep{li2025on}. However, the theoretical mechanisms of adversarial distillation remain largely unexplored. We bridge this gap by establishing a feature learning framework specifically for AD. Distinct from prior AT-focused analyses, we investigate the failure modes of distillation. We theoretically elucidate how teacher supervision can detrimentally alter the student's feature learning dynamics, providing a rigorous explanation for the counter-intuitive phenomenon where highly robust teachers fail to improve student robustness.

\section{Formal Problem Setup}
\label{sec:supp_problem_setup}

\subsection{Notations}
\label{app:notation}

We summarize the notations used throughout this paper in \Cref{tab:notations}.

\begin{table}[h!]
\centering
\small
\caption{\textbf{Summary of notations used in the theoretical analysis.}}
\label{tab:notations}
\renewcommand{\arraystretch}{1.15}
\begin{tabular}{@{}ll@{}}
\toprule
\textbf{Notation} & \textbf{Description} \\ \midrule

\multicolumn{2}{c}{\textit{General Mathematical Notations}} \\ \midrule
$[N]$ & The set of integers $\{1,2,\dots,N\}$. \\
$\|\mathbf{x}\|_2$, $\|\mathbf{x}\|_\infty$ & Euclidean norm and infinity norm. \\
$\mathbb{E}[\cdot]$, $\Pr[\cdot]$ & Expectation and probability operators. \\
$\tilde{O}(\cdot), \tilde{\Omega}(\cdot), \tilde{\Theta}(\cdot)$ & Asymptotic notations hiding polylogarithmic factors. \\
$\delta$, $\mathcal E$ & Failure probability and global concentration event from \Cref{lem:event_E}. \\
\midrule

\multicolumn{2}{c}{\textit{Data Model \& Features}} \\ \midrule
$N$, $P$, $d$ & Number of training samples, number of patches, and dimension of each patch. \\
$\mathbf{X}\in\mathbb{R}^{P\times d}$, $\mathbf{x}_{i,p}$ & Input with \(P\) patches, and the \(p\)-th patch of sample \(i\). \\
$y\in\{+1,-1\}$, $s(\mathbf X)\in[P]$ & Binary label and signal-patch index. \\
$\learnset,\unlearnset$ & Training index sets of learnable and unlearnable samples. \\
$p_{\mathrm{un}}$ & Ratio of unlearnable samples in the training dataset, $|\unlearnset|/N$. \\
$\mathbf{u},\mathbf{v}$ & Learnable robust feature $(\mathbf e_1)$ and unlearnable robust feature $(\mathbf e_d)$. \\
$\alpha$, $\sigma_n$ & Robust signal strength and noise standard deviation. \\
$\mathcal D_L$, $\mathcal{L}^{\mathrm{rob}}_{\mathcal D_L}$ & Learnable-sample test distribution and its robust test error. \\
\midrule

\multicolumn{2}{c}{\textit{Neural Networks \& Training}} \\ \midrule
$f_{\mathbf W}$, $f_{\mathbf W_{\mathrm T}}$ & Student model and generic teacher model. \\
$f_{\mathbf W_{\mathrm G}}$, $f_{\mathbf W_{\mathrm B}}$ & Good Teacher and Bad Teacher. \\
$\mathbf W$, $\mathbf w_r$ & Student parameters and the weight vector of the \(r\)-th filter. \\
$m$, $\sigma_0$, $\eta$, $\epsilon$ & Number of filters, initialization scale, learning rate, and adversarial perturbation budget. \\
$\tilde{\mathbf X}^{(t)}$, $\tilde{\mathbf X}_i^{(t)}$ & Adversarial example generated at iteration \(t\). \\
$\rho_{i,j,r}^{(t)}$ & Noise coefficient for sample \(i\), patch \(j\), and filter \(r\) at iteration \(t\). \\
$T$, $T_0$, $T_1$, $T_{\max}$ & Training horizon, signal learning time, noise memorization time, and Good Teacher horizon. \\
$\Gamma$ & Teacher margin parameter. \\
$\phi(z)$, $\ell(z)$ & Activation function $\phi(z)=(\max\{0,z\})^3$ and logistic loss $\ell(z)=\log(1+e^{-z})$. \\
$\sigma(z)$, $\psi(z)$ & Logistic sigmoid $\sigma(z)=(1+e^{-z})^{-1}$ and negative sigmoid $\psi(z)=\sigma(-z)$. \\
$\mathcal L_{\mathrm{AT}}$, $\mathcal L_{\mathrm{AD}}$ & Adversarial training loss and adversarial distillation loss. \\
\bottomrule
\end{tabular}
\end{table}

\subsection{Data and Feature Model}
\label{subsec:data_gen}
We study a patch-structured binary classification model in which each sample contains one label-dependent signal patch and \(P-1\) label-independent noise patches.

\begin{definition}[Data Generating Process]
This definition formalizes the data model introduced in \Cref{def:data_model}.
We define the training samples \(\{(\mathbf X_i,y_i)\}_{i=1}^N\) with \(\mathbf X_i=[\mathbf x_{i,1};\dots;\mathbf x_{i,P}]\in\mathbb R^{P\times d}\) and \(y_i\in\{+1,-1\}\) as follows.
\begin{enumerate}[leftmargin=*,label=\arabic*.]
    \item Fix two orthogonal robust features: the learnable feature \(\mathbf u\coloneqq \mathbf e_1\) and the unlearnable feature \(\mathbf v\coloneqq \mathbf e_d\). Let \(\mathcal F\coloneqq \mathrm{span}\{\mathbf u,\mathbf v\}\), and let \(\Pi_{\mathcal F}\) be the projection matrix onto this subspace. Also fix a partition of the training indices \([N]\) into \(\learnset\) and \(\unlearnset\), with \(|\learnset|=(1-p_{\mathrm{un}})N\) and \(|\unlearnset|=p_{\mathrm{un}}N\).

    \item For each \(i\in[N]\), draw the label \(y_i\) uniformly from \(\{+1,-1\}\), and draw a signal patch index \(s(\mathbf X_i)\) uniformly from \([P]\). The signal patch is generated by
    \begin{equation}
    \mathbf{x}_{i,s(\mathbf X_i)}
    =
    \begin{cases}
    \alpha y_i\mathbf u, & \text{if } i\in\learnset \quad (\text{learnable}),\\
    \alpha y_i\mathbf v, & \text{if } i\in\unlearnset \quad (\text{unlearnable}).
    \end{cases}
    \end{equation}

    \item For each non-signal patch \(p\neq s(\mathbf X_i)\), draw independent Gaussian noise orthogonal to the feature subspace:
    \begin{equation}
    \mathbf x_{i,p}
    \sim
    \mathcal N\left(
    \mathbf 0,
    \sigma_n^2(\mathbf I_d-\Pi_{\mathcal F})
    \right).
    \end{equation}
\end{enumerate}
\end{definition}

We analyze two regimes of the unlearnable ratio \(p_{\mathrm{un}}\): (i) the learnable-only regime \(p_{\mathrm{un}}=0\) (so \(\unlearnset=\emptyset\)), and (ii) the sparse-unlearnable regime \(C N^{-1}\le p_{\mathrm{un}}\le C^{-1}N^{-1}\log d\) (so \(C\le |\unlearnset|\le C^{-1}\log d\)). These two regimes lead to different adversarial training dynamics, which are treated separately in our theorems.

For test-time evaluation, we consider the learnable-sample distribution \(\mathcal D_L\), defined by the same generation rule with the learnable signal patch \(\alpha y\mathbf u\) and independent Gaussian noise patches. We focus on this distribution because unlearnable test samples contain label information only in \(\mathbf v\), which is orthogonal to all student weights, making their robust error trivially high.

\subsection{Network Architecture}

We next specify the student architecture and the structural constraint that makes the unlearnable feature inaccessible to the student.

\begin{definition}[Student Model]
This definition formalizes the student model introduced in \Cref{def:student_model}.
We define the student model \(f_{\mathbf W}\) as a two-layer neural network with \(m\) filters. To facilitate theoretical analysis, the network is modeled with anti-symmetric weights, utilizing a cubic activation function \(\phi(z) \coloneqq (\max\{0,z\})^3\). The final network output \(f_{\mathbf W}(\mathbf X)\) aggregates the response across all patches:
\begin{equation}
\label{eq:model}
f_{\mathbf{W}}(\mathbf{X}) = \sum_{r=1}^m \sum_{p=1}^P \left[ \phi(\langle \mathbf{w}_r, \mathbf{x}_p \rangle) - \phi(-\langle \mathbf{w}_r, \mathbf{x}_p \rangle) \right].
\end{equation}
Here, \(\mathbf W=\{\mathbf w_r\}_{r=1}^m\) denotes the set of student weight vectors.

To formalize the limited capacity of the student, we enforce the weight space to be orthogonal to the unlearnable feature \(\mathbf v\) at initialization and throughout training:
\begin{equation}
\langle \mathbf{w}_r, \mathbf{v} \rangle = 0, \qquad \forall r \in [m].
\end{equation}
Given our data model where \(\mathbf v=\mathbf e_d\), this constraint is equivalent to fixing the \(d\)-th coordinate of all weights to zero, i.e., \(w_{r,d}=0\). The remaining coordinates are initialized i.i.d. from a Gaussian distribution, \(w_{r,j}\sim\mathcal N(0,\sigma_0^2)\) for \(j\in[d-1]\).
\end{definition}

\subsection{Teacher Configurations and Properties}
\label{sec:supp_teacher_assumptions}

For the analysis of adversarial distillation, we consider two fixed robust teacher models: a Good Teacher \(f_{\mathbf W_{\mathrm G}}\) and a Bad Teacher \(f_{\mathbf W_{\mathrm B}}\). When a statement applies to either teacher, we write \(f_{\mathbf W_{\mathrm T}}\) for a generic teacher.

\begin{definition}[Teacher Configurations]
This definition formalizes the teacher configurations introduced in \Cref{def:teacher_types}.
We define the teacher configurations as follows:
\begin{enumerate}[leftmargin=*,label=\arabic*.]
    \item Both teachers generalize well on learnable samples. In particular, for any \(i\in\learnset\), the generic teacher produces a large logit in the direction of the target label:
    \begin{equation}
    y_i f_{\mathbf W_{\mathrm T}}(\mathbf X_i) \ge \Gamma.
    \end{equation}
    Additionally, both teachers have weights orthogonal to noise patches. Their distinction lies only in their behavior on the unlearnable feature \(\mathbf v\).

    \item \textbf{Good Teacher \((f_{\mathbf W_{\mathrm G}})\).} This teacher relies only on the learnable feature \(\mathbf u\) and is orthogonal to the unlearnable feature \(\mathbf v\). Consequently, for any unlearnable training sample indexed by \(i\in\unlearnset\),
    \begin{equation}
    y_i f_{\mathbf W_{\mathrm G}}(\mathbf X_i) = 0.
    \end{equation}

    \item \textbf{Bad Teacher \((f_{\mathbf W_{\mathrm B}})\).} This teacher is aligned with the unlearnable feature \(\mathbf v\). Consequently, for any unlearnable training sample indexed by \(i\in\unlearnset\),
    \begin{equation}
    y_i f_{\mathbf W_{\mathrm B}}(\mathbf X_i) \ge \Gamma.
    \end{equation}
\end{enumerate}
\end{definition}

The terms Good Teacher and Bad Teacher refer to their compatibility with the capacity-constrained student. Both teachers are robust models. The Bad Teacher is called ``bad'' only because its confident supervision on the unlearnable feature \(\mathbf v\) cannot be represented by the student and therefore drives the student toward noise memorization.

\subsection{Training Objectives and Optimization}
\label{subsec:training_objectives}

We consider a binary classifier \(f_{\mathbf W}(\mathbf X)\) trained using the logistic loss
\(\ell(z)\coloneqq \log(1+e^{-z})\). We analyze two training paradigms: standard adversarial training (AT) and adversarial distillation (AD).

\begin{definition}[Training Objectives and Optimization]
This definition formalizes the adversarial data generation in \Cref{def:train_adv_generation} and the training objectives and optimization dynamics in \Cref{def:objectives_optimization}.
We define the adversarial examples, training objectives, and gradient-descent dynamics as follows.
\begin{enumerate}[leftmargin=*,label=\arabic*.]
    \item Following \citet{li2025on}, for a sample \((\mathbf X,y)\), we define the training adversarial example \(\tilde{\mathbf X}\) as the solution to
    \begin{equation}
    \tilde{\mathbf{X}}
    =
    \operatorname*{argmax}_{\mathbf{X}'}
    \ \ell\big(y\,f_{\mathbf{W}}(\mathbf{X}')\big)
    \quad\text{s.t.}\quad
    \|\mathbf{X}'-\mathbf{X}\|_\infty \le \epsilon,
    \ \ 
    \mathbf{X}'-\mathbf{X} \in \mathrm{span} \big(\mathbf{x}_{s(\mathbf{X})}\big).
    \end{equation}
    Thus, the adversary perturbs only the signal patch along its original direction. In particular, for a learnable sample with signal patch \(\mathbf{x}_{s(\mathbf{X})}=y\alpha\mathbf{u}\), the adversarially perturbed signal patch takes the form
    \begin{equation}
    \tilde{\mathbf{x}}_{s(\mathbf{X})} = y(\alpha-\epsilon)\mathbf{u}.
    \end{equation}

    \item Standard AT minimizes the logistic loss on the generated adversarial example:
    \begin{equation}
    \mathcal{L}_{\mathrm{AT}}(\mathbf{W}; \mathbf{X}, y)
    =
    \ell\big(y\,f_{\mathbf{W}}(\tilde{\mathbf{X}})\big).
    \end{equation}

\item In AD, we leverage a fixed robust teacher network \(f_{\mathbf W_{\mathrm T}}\) to provide soft targets for the student. Following the standard AD setup (e.g., RSLAD~\citep{rslad}), the student matches the teacher's output distribution. In the binary margin-based formulation, this yields the weighted logistic loss:
    \begin{equation}
    \label{eq:loss_ad}
    \mathcal{L}_{\mathrm{AD}}(\mathbf{W}; \mathbf{X}, y)
    =
    \sigma\big(y f_{\mathbf W_{\mathrm T}}(\mathbf{X})\big)\,
    \ell\big(y f_{\mathbf{W}}(\tilde{\mathbf{X}})\big)
    +
    \sigma\big(-y f_{\mathbf W_{\mathrm T}}(\mathbf{X})\big)\,
    \ell\big(-y f_{\mathbf{W}}(\tilde{\mathbf{X}})\big).
    \end{equation}

\item We optimize the empirical risk by gradient descent. At each iteration \(t\), the adversarial examples \(\{\tilde{\mathbf X}_i^{(t)}\}_{i=1}^N\) are generated using the current model \(\mathbf W^{(t)}\), and the parameters are updated by
    \begin{equation}
    \mathbf{W}^{(t+1)}
    =
    \mathbf{W}^{(t)}
    -
    \frac{\eta}{N}
    \sum_{i \in [N]}
    \nabla_{\mathbf{W}}
    \mathcal{L}\big(\mathbf{W}^{(t)}; \mathbf{X}_i, y_i\big),
    \end{equation}
    where \(\mathcal L\in\{\mathcal L_{\mathrm{AT}},\mathcal L_{\mathrm{AD}}\}\) and \(\eta>0\) is the learning rate. We analyze the student after \(T\) iterations.
\end{enumerate}
\end{definition}

\subsection{Parameter and Regime Conditions}

\begin{condition}
\label{lemcond:hyper}
There exists a sufficiently large universal constant $C>0$ such that the following parameter and regime conditions hold:
\begin{conditems}

    \item \label{cond:horizon} Training horizon $T$ is sufficiently large to cover the signal-learning and noise-memorization time scales:    
    \[ T
    \ge
    \frac{CN}{\eta \sigma_0 \sigma_n^3 d^{3/2}}.
    \]

    \item \label{cond:dimension} Data dimension is sufficiently large:
    \[
    d
    \ge
    C m^2 P^2 N^2
    \log^4\left(TNmP\delta^{-1}\right).
    \]

    \item \label{cond:alpha} Signal strength is sufficiently large:
    \[
    \alpha
    \ge
    \frac{C\sigma_n\sqrt d\,
    \log\left(TNmP\delta^{-1}\right)
    }{
    N^{1/3}(1-p_{\mathrm{un}})^{1/3}
    }.
    \]

    \item \label{cond:sigma_0} Initialization scale is sufficiently small:
    \[
    \sigma_0
    \le
    C^{-1}
    \min\left\{
    \frac{1}{m^{2/3}P^{2/3}\sigma_n\sqrt d},
    \;
    \frac{1}{\alpha m^{2/3}},
    \;
    \frac{1}{\sigma_n m^2 P^{1/2} d}
    \right\}
    \frac{1}{\log\left(TdNmP\delta^{-1}\right)}.
    \]

    \item \label{cond:eta} Learning rate is sufficiently small:
    \[
    \eta
    \le
    C^{-1}
    \min\left\{
    \frac{1}{\alpha^3\sigma_0},
    \;
    \frac{1}{\sigma_n^2 d},
    \;
    \frac{1}{\alpha^2}
    \right\}
    \frac{1}{\log\left(TNmP\delta^{-1}\right)}.
    \]

    \item \label{cond:epsilon} Adversarial budget lies in the required regime:
    \[
    C\,\sigma_n m P^{1/2}
    \log\left(TdNP\delta^{-1}\right)
    \le
    \epsilon
    <
    C^{-1}\min\left\{\alpha,\;\frac{1}{m\sigma_0d}
    \right\}. 
    \]

    \item \label{cond:pun} In the sparse-unlearnable regime, unlearnable samples are sufficiently sparse but nontrivial:
    \[
    \frac{C}{N}\le p_{\mathrm{un}}
    \le
    \frac{C^{-1}\log d}{N} ,
    \qquad
    N
    \ge
    C\,mP
    \log\left(TdNmP\delta^{-1}\right).
    \]

    \item \label{cond:m_and_P} The model width is controlled at a polylogarithmic scale, and the number of patches is bounded:
    \[
    C\log \left(NP\delta^{-1}\right)\le m \le C\log^C d,
    \qquad
    2\le P\le C.
    \]

    \item \label{cond:teacher_margin} Teacher margin is sufficiently large to be in the saturated regime:
    \[
    \Gamma
    \ge
    Cd.
    \]
    
\end{conditems}
\end{condition}

\begin{remark}[Interpretation of the parameter conditions]
The conditions in \Cref{lemcond:hyper} are sufficient conditions under which the training dynamics can be cleanly separated into signal learning and noise memorization. The logarithmic terms are technical factors needed to ensure that all high-probability bounds hold uniformly over \(N\) samples, \(P\) patches, \(m\) filters, and \(T\) iterations. They arise from applying tail bounds and taking union bounds over these indices. Thus, the main constraints should be read from the polynomial dependences on \(N,d,m,P,\alpha,\sigma_0,\sigma_n,\epsilon\), and \(\eta\), rather than from the exact logarithmic powers.

Conditions \ref{cond:horizon} and \ref{cond:eta} control the time scale of gradient descent. The lower bound on \(T\) ensures that the training horizon is long enough to contain both the signal-learning phase and the noise-memorization phase. The upper bound on \(\eta\) prevents one-step updates from overshooting the signal and noise scales tracked in the proof. Thus, each gradient step remains small relative to the quantities being controlled, which allows the discrete-time induction arguments to close.

Conditions \ref{cond:dimension} and \ref{cond:m_and_P} allow us to apply concentration inequalities uniformly over samples, patches, and filters. They ensure that the training data and the initial model parameters satisfy the desirable high-probability properties, such as controlled noise norms, small noise correlations, and bounded initialization-dependent quantities.

Conditions \ref{cond:alpha}, \ref{cond:sigma_0}, and \ref{cond:epsilon} fix the separation between signal, noise, initialization, and adversarial perturbation. The lower bound on \(\alpha\) makes the robust signal strength large enough compared to the random noise scale, so that signal-driven updates can be separated from the contributions of noise patches. The upper bound on \(\sigma_0\) ensures that the dynamics are not already determined by initialization-scale correlations. The condition on \(\epsilon\) places adversarial perturbations in a nontrivial regime: they are large enough relative to the noise scale, but not large enough to erase the signal. Informally, this corresponds to the hierarchy \(\sigma_n < \epsilon < \alpha\).

Condition \ref{cond:pun} formalizes the sparse-unlearnable regime in \Cref{subsec:data_gen}; in the learnable-only regime it is replaced by \(p_{\mathrm{un}}=0\). 
Since \(p_{\mathrm{un}}\) scales inversely with \(N\), the accompanying lower bound on \(N\) ensures that the sample size is large enough relative to the model and patch complexity.
Finally, condition \ref{cond:teacher_margin} places the teacher in the saturated regime on the samples where it is confident. This makes the teacher-induced gradient match the corresponding hard-label gradient up to exponentially small terms, allowing us to compare adversarial distillation with adversarial training.
\end{remark}

\section{Proof Preliminaries}
\label{sec:proof_preliminaries}
In this section, we establish the probabilistic preliminaries used throughout the theoretical analysis and outline the organization of the subsequent proofs.

\subsection{Properties of Data Sampling}
To decouple the randomness of data generation and network initialization from the deterministic optimization dynamics, we define a global high-probability event $\mathcal{E}$ where all necessary concentration bounds hold simultaneously.

\begin{lemma}[Definition and Probability of Concentration Event $\mathcal{E}$]
\label{lem:event_E}

Suppose $\delta \in (0,1)$. Under \cref{cond:dimension} and \cref{cond:m_and_P}, which ensure the sufficient conditions $d \ge C \log(NmP/\delta)$ and $m \ge C \log(NP/\delta)$ are satisfied for a large constant $C>0$, let $\mathcal{E}$ denote the event that, for all $r\in[m]$, $i,k\in[N]$, and noise patches $j\neq s(\mathbf X_i)$ and $q\neq s(\mathbf X_k)$ with $(i,j)\neq(k,q)$, the following hold:

\begin{enumerate}[leftmargin=*,label=\arabic*.]
    \item Uniform bounds on noise patches:
    \begin{equation}
    \label{eq:eventE-noise-norm} \tag{P1}
    \frac12 \sigma_n^2 d \le \|\mathbf x_{i,j}\|_2^2 \le \frac32 \sigma_n^2 d.
    \end{equation}
    \begin{equation}
    \label{eq:eventE-noise-inner} \tag{P2}
    \left|\langle \mathbf x_{i,j},\mathbf x_{k,q}\rangle\right|
    \le
    2\sigma_n^2\sqrt{d\log\left(\frac{16N^2P^2}{\delta}\right)}.
    \end{equation}
    \begin{equation}
    \label{eq:eventE-noise-linf} \tag{P3}
    \|\mathbf x_{i,j}\|_\infty
    \le
    \sigma_n\sqrt{2\log\left(\frac{16dNP}{\delta}\right)}.
    \end{equation}

    \item Uniform bounds on initialization:
    \begin{equation}
    \label{eq:eventE-init-l2} \tag{P4}
    \|\mathbf w_r^{(0)}\|_2 \le 2\sigma_0\sqrt d.
    \end{equation}
    \begin{equation}
    \label{eq:eventE-init-e1} \tag{P5}
    \left|\langle \mathbf w_r^{(0)},\mathbf e_1\rangle\right|
    \le
    \sigma_0\sqrt{2\log\left(\frac{16m}{\delta}\right)}.
    \end{equation}
    \begin{equation}
    \label{eq:eventE-init-noise-ip} \tag{P6}
    \left|\langle \mathbf w_r^{(0)},\mathbf x_{i,j}\rangle\right|
    \le
    2\sigma_0\sigma_n\sqrt{d\log\left(\frac{16NmP}{\delta}\right)}.
    \end{equation}

    \item Maximum initialization margins:
    \begin{equation}
    \label{eq:eventE-init-signal-max} \tag{P7}
    \frac12 \sigma_0
    \le
    \max_{r\in[m]} \langle \mathbf w_r^{(0)}, \mathbf e_1\rangle
    \le
    \sigma_0\sqrt{2\log\left(\frac{16m}{\delta}\right)}.
    \end{equation}
    \begin{equation}
    \label{eq:eventE-init-noise-max}  \tag{P8}
    \frac14 \sigma_0\sigma_n\sqrt d
    \le
    \max_{r\in[m]} y_i\langle \mathbf w_r^{(0)}, \mathbf x_{i,j}\rangle
    \le
    2\sigma_0\sigma_n\sqrt{d\log\left(\frac{16NmP}{\delta}\right)}.
    \end{equation}
    
\end{enumerate}
Then, the event $\mathcal{E}$ occurs with probability at least $1 - \delta$.
\end{lemma}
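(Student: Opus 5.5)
The plan is a union bound over eight families of Gaussian concentration statements, allocating failure probability $\delta/8$ (or finer) to each of (P1)--(P8). The basic setup is as follows. Each noise patch $\mathbf x_{i,j}$ is distributed as $\sigma_n \Pi^\perp \mathbf g$ with $\Pi^\perp=\mathbf I_d-\Pi_{\mathcal F}$ a rank-$(d-2)$ projection. Each initialization $\mathbf w_r^{(0)}$ is $\sigma_0$ times a standard Gaussian on the first $d-1$ coordinates. The weights and the noise are independent, and the noise patches are mutually independent.

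For (P1), $\|\mathbf x_{i,j}\|_2^2/\sigma_n^2\sim\chi^2_{d-2}$. I would apply the Laurent--Massart bound: the deviation is at most $2\sqrt{(d-2)\log(\cdot)}+2\log(\cdot)$. This is below $d/2-2$ once $d\ge C\log(NP/\delta)$, and I would union bound over the at most $NP$ patches. For (P2), I would condition on $\mathbf x_{k,q}$. The inner product is then $\mathcal N(0,\sigma_n^2\|\Pi^\perp\mathbf x_{k,q}\|^2)$, with variance at most $\tfrac32\sigma_n^4 d$ on (P1). The Gaussian tail, union bounded over at most $N^2P^2$ pairs, gives the stated $2\sigma_n^2\sqrt{d\log(16N^2P^2/\delta)}$. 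For (P3), each coordinate is Gaussian with variance at most $\sigma_n^2$, and a union bound over $dNP$ coordinates suffices.

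For (P4), I would use $\chi^2_{d-1}$ concentration, giving $\|\mathbf w_r^{(0)}\|^2\le 4\sigma_0^2 d$, with a union bound over $m$ filters. For (P5), $\langle\mathbf w_r^{(0)},\mathbf e_1\rangle\sim\mathcal N(0,\sigma_0^2)$, and a union bound over $m$ filters gives the result. For (P6), I would condition on the noise and use $\|\mathbf x_{i,j}\|^2\le\tfrac32\sigma_n^2d$; the inner product is then Gaussian with variance at most $\tfrac32\sigma_0^2\sigma_n^2 d$. A union bound over $NmP$ triples gives the stated bound. The upper halves of (P7) and (P8) follow immediately from (P5) and (P6).

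The only non-tail step, and the main point of care, is the lower bounds in (P7)--(P8). Here I would use independence across filters. For (P7), $\Pr[\langle\mathbf w_r^{(0)},\mathbf e_1\rangle<\sigma_0/2]=\Phi(1/2)\le 0.7$, so all $m$ filters fail with probability at most $0.7^m\le\delta/16$, since $m\ge C\log(1/\delta)$. For (P8), conditional on $\mathbf x_{i,j}$, the variables $y_i\langle\mathbf w_r^{(0)},\mathbf x_{i,j}\rangle$ are i.i.d. $\mathcal N(0,\sigma_0^2\|\mathbf x_{i,j}\|^2)$ across $r$. The variance is at least $\tfrac12\sigma_0^2\sigma_n^2 d$, so the threshold $\tfrac14\sigma_0\sigma_n\sqrt d$ sits at most $1/(2\sqrt2)$ standard deviations above zero. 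Each filter therefore fails with probability at most $0.7$. Taking a union bound over $NP$ patches requires $0.7^m NP\le\delta/16$, which is exactly $m\ge C\log(NP/\delta)$.

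To finish, I would sum the failure probabilities of all eight statements, intersecting with (P1) where it was used for conditioning. The total is at most $\delta$, and the constants absorbed into $16$ inside the logs are chosen to make this sum work.
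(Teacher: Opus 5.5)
Your proposal is correct and follows essentially the same route as the paper: a union bound over the eight properties, $\chi^2$ concentration for (P1) and (P4), Gaussian tails for (P3) and (P5), and the conditional Gaussian-maximum argument (conditioning on (P1)) for the lower bounds in (P7)--(P8). The only difference is in (P2) and (P6): you condition on one vector and apply a Gaussian tail, with the variance controlled by (P1), whereas the paper uses an unconditional Bernstein bound for $\sum_\ell g_\ell z_\ell$ (\Cref{lem:bernstein_gaussian_product}); both yield the stated thresholds with the stated constants.
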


\begin{proof}[Proof of \Cref{lem:event_E}]
We prove that \eqref{eq:eventE-noise-norm}--\eqref{eq:eventE-init-noise-ip} and the lower bounds in
\eqref{eq:eventE-init-signal-max} and \eqref{eq:eventE-init-noise-max} each fail with probability at
most $\delta/8$. The upper bounds in \eqref{eq:eventE-init-signal-max} and
\eqref{eq:eventE-init-noise-max} are exactly \eqref{eq:eventE-init-e1} and
\eqref{eq:eventE-init-noise-ip}, respectively, so the claim then follows by a union bound. We prove the
statements one by one, marking the completion of each proof with a $\blacksquare$.

For \eqref{eq:eventE-noise-norm}, fix $i\in[N]$ and $j\neq s(\mathbf X_i)$. Since $\mathbf x_{i,j}$
is Gaussian in the $(d-2)$-dimensional subspace orthogonal to
$\mathrm{span}\{\mathbf e_1,\mathbf e_d\}$, there exist i.i.d. standard Gaussian random variables
$g_1,\dots,g_{d-2}$ such that
\begin{equation}
\|\mathbf x_{i,j}\|_2^2
=
\sigma_n^2 \sum_{\ell=1}^{d-2} g_\ell^2.
\end{equation}
Applying \Cref{lem:bernstein_gaussian_square} with $n=d-2$ and $\gamma=d/4$, we obtain
\begin{equation}
\label{eq:noise_norm_fixed_tail}
\Pr\left(
\left|
\sum_{\ell=1}^{d-2} g_\ell^2-(d-2)
\right|
\ge \frac{d}{4}
\right)
\le
2\exp(-cd)
\le
\frac{\delta}{16NP},
\end{equation}
where the last inequality holds under $d \ge C\log(NmP/\delta)$. Hence, with probability at least
$1-\delta/(8NP)$,
\begin{equation}
\left|
\|\mathbf x_{i,j}\|_2^2-\sigma_n^2(d-2)
\right|
\le
\frac{1}{4}\sigma_n^2 d.
\end{equation}
Since \(d\) is sufficiently large, this implies \eqref{eq:eventE-noise-norm} for the fixed pair \((i,j)\). Taking a union bound over at most \(NP\) noise patches, we conclude that \eqref{eq:eventE-noise-norm} holds for all \(i\in[N]\) and \(j\neq s(\mathbf X_i)\) with probability at least \(1-\delta/8\). \hfill \(\blacksquare\)

For \eqref{eq:eventE-noise-inner}, fix distinct pairs $(i,j)\neq (k,q)$ with $j\neq s(\mathbf X_i)$
and $q\neq s(\mathbf X_k)$. By independence of noise patches, there exist two independent
collections of i.i.d. standard Gaussian random variables $g_1,\dots,g_{d-2}$ and
$z_1,\dots,z_{d-2}$ such that
\begin{equation}
\langle \mathbf x_{i,j},\mathbf x_{k,q}\rangle
=
\sigma_n^2 \sum_{\ell=1}^{d-2} g_\ell z_\ell.
\end{equation}
Applying \Cref{lem:bernstein_gaussian_product} with $n=d-2$ and
$\gamma=2\sqrt{d\log\left(\frac{16N^2P^2}{\delta}\right)}$,
we obtain
\begin{equation}
\begin{aligned}
\Pr\left(
\left|
\langle \mathbf x_{i,j},\mathbf x_{k,q}\rangle
\right|
\ge
2\sigma_n^2\sqrt{d\log\left(\frac{16N^2P^2}{\delta}\right)}
\right)
&\le
2\exp\left(
-\frac{1}{4}
\min\left\{
\frac{4d\log\left(\frac{16N^2P^2}{\delta}\right)}{d-2},
\;
2\sqrt{d\log\left(\frac{16N^2P^2}{\delta}\right)}
\right\}
\right) \\
&\le
2\exp\left(
-\log\left(\frac{16N^2P^2}{\delta}\right)
\right) \\
&=
\frac{\delta}{8N^2P^2},
\end{aligned}
\end{equation}
where the second inequality holds under $d \ge C\log(NmP/\delta)$ for a sufficiently large
constant $C$. 
Thus, \eqref{eq:eventE-noise-inner} holds for the fixed pair
\((i,j)\neq(k,q)\) with probability at least \(1-\delta/(8N^2P^2)\). Taking a union bound over at most \(N^2P^2\) ordered pairs of noise patches, we conclude that \eqref{eq:eventE-noise-inner} holds uniformly for all distinct pairs \((i,j)\neq(k,q)\) with \(j\neq s(\mathbf X_i)\) and \(q\neq s(\mathbf X_k)\) with probability at least \(1-\delta/8\). \hfill \(\blacksquare\)

For \eqref{eq:eventE-noise-linf}, fix \(i\in[N]\), \(j\neq s(\mathbf X_i)\), and \(\ell\in[d]\). Each coordinate of each noise patch is a centered Gaussian random variable with variance at most \(\sigma_n^2\). Hence, by \Cref{lem:gaussian_tail_basic},
\begin{equation}
\Pr\left(
|(\mathbf x_{i,j})_\ell|
\ge
\sigma_n\sqrt{2\log\left(\frac{16dNP}{\delta}\right)}
\right)
\le
\frac{\delta}{8dNP}
\end{equation}
Thus, the desired coordinate bound holds for this fixed triple \((i,j,\ell)\) with probability at least \(1-\delta/(8dNP)\). Taking a union bound over at most \(dNP\) such triples, we conclude that \eqref{eq:eventE-noise-linf} holds with probability at least \(1-\delta/8\). \hfill \(\blacksquare\)

For \eqref{eq:eventE-init-l2}, fix \(r\in[m]\). Since \(\mathbf w_r^{(0)}\) is Gaussian in the
\((d-1)\)-dimensional subspace orthogonal to \(\mathbf e_d\), there exist i.i.d. standard Gaussian
random variables \(g_1,\dots,g_{d-1}\) such that
\begin{equation}
\|\mathbf w_r^{(0)}\|_2^2
=
\sigma_0^2 \sum_{\ell=1}^{d-1} g_\ell^2.
\end{equation}
Applying \Cref{lem:bernstein_gaussian_square} with \(n=d-1\) and \(\gamma=d/2\), we obtain
\begin{equation}
\Pr\left(
\left|
\sum_{\ell=1}^{d-1} g_\ell^2-(d-1)
\right|
\ge \frac{d}{2}
\right)
\le
2\exp(-cd)
\le
\frac{\delta}{8m},
\end{equation}
where the last inequality holds under \(d \ge C\log(NmP/\delta)\). Hence, with probability at least
\(1-\delta/(8m)\),
\begin{equation}
\left|
\|\mathbf w_r^{(0)}\|_2^2-\sigma_0^2(d-1)
\right|
\le
\frac{1}{2}\sigma_0^2 d.
\end{equation}
Since \(d\) is sufficiently large, this implies \eqref{eq:eventE-init-l2} for the fixed filter \(r\). Taking a union bound over \(r\in[m]\), we conclude that \eqref{eq:eventE-init-l2} holds for all \(r\in[m]\) with probability at least \(1-\delta/8\). \hfill \(\blacksquare\)

For \eqref{eq:eventE-init-e1}, fix \(r\in[m]\). Since
\(\langle \mathbf w_r^{(0)},\mathbf e_1\rangle\sim \mathcal N(0,\sigma_0^2)\), \Cref{lem:gaussian_tail_basic} gives
\begin{equation}
\Pr\left(
\left|
\langle \mathbf w_r^{(0)},\mathbf e_1\rangle
\right|
\ge
\sigma_0\sqrt{2\log\left(\frac{16m}{\delta}\right)}
\right)
\le
\frac{\delta}{8m}.
\end{equation}
Thus, \eqref{eq:eventE-init-e1} holds for the fixed filter \(r\) with probability at least \(1-\delta/(8m)\). Taking a union bound over \(r\in[m]\), we conclude that \eqref{eq:eventE-init-e1} holds for all \(r\in[m]\) with probability at least \(1-\delta/8\). \hfill \(\blacksquare\)

For \eqref{eq:eventE-init-noise-ip}, fix \(r\in[m]\), \(i\in[N]\), and \(j\neq s(\mathbf X_i)\). Since
\(\mathbf x_{i,j}\) lies in the \((d-2)\)-dimensional subspace orthogonal to
\(\mathrm{span}\{\mathbf e_1,\mathbf e_d\}\), only the corresponding \((d-2)\) coordinates of
\(\mathbf w_r^{(0)}\) contribute to the inner product. Hence there exist two independent collections
of i.i.d. standard Gaussian random variables \(g_1,\dots,g_{d-2}\) and \(z_1,\dots,z_{d-2}\) such that
\begin{equation}
\langle \mathbf w_r^{(0)},\mathbf x_{i,j}\rangle
=
\sigma_0\sigma_n \sum_{\ell=1}^{d-2} g_\ell z_\ell.
\end{equation}
Applying \Cref{lem:bernstein_gaussian_product} with \(n=d-2\) and
\(\gamma=2\sqrt{d\log\left(\frac{16NmP}{\delta}\right)}\),
we obtain
\begin{equation}
\begin{aligned}
\Pr\left(
\left|
\langle \mathbf w_r^{(0)},\mathbf x_{i,j}\rangle
\right|
\ge
2\sigma_0\sigma_n\sqrt{d\log\left(\frac{16NmP}{\delta}\right)}
\right)
&\le
2\exp\left(
-\frac{1}{4}
\min\left\{
\frac{4d\log\left(\frac{16NmP}{\delta}\right)}{d-2},
\;
2\sqrt{d\log\left(\frac{16NmP}{\delta}\right)}
\right\}
\right) \\
&\le
2\exp\left(
-\log\left(\frac{16NmP}{\delta}\right)
\right) \\
&=
\frac{\delta}{8NmP},
\end{aligned}
\end{equation}
where the second inequality holds under \(d \ge C\log(NmP/\delta)\) for a sufficiently large
constant \(C\). Thus, \eqref{eq:eventE-init-noise-ip} holds for the fixed triple \((r,i,j)\) with probability at least \(1-\delta/(8NmP)\). Taking a union bound over at most \(NmP\) such triples, we conclude that \eqref{eq:eventE-init-noise-ip} holds uniformly for all \(r\in[m]\), \(i\in[N]\), and \(j\neq s(\mathbf X_i)\) with probability at least \(1-\delta/8\). \hfill \(\blacksquare\)

For \eqref{eq:eventE-init-signal-max}, the random variables
\begin{equation}
\langle \mathbf w_1^{(0)},\mathbf e_1\rangle,\dots,
\langle \mathbf w_m^{(0)},\mathbf e_1\rangle
\end{equation}
are i.i.d. \(\mathcal N(0,\sigma_0^2)\). Therefore, applying
\Cref{lem:gaussian_max_lower_tail} with \(n=m\) and \(\sigma=\sigma_0\), we obtain
\begin{equation}
\Pr\left(
\max_{r\in[m]} \langle \mathbf w_r^{(0)},\mathbf e_1\rangle
<
\frac{1}{2}\sigma_0
\right)
\le
e^{-cm}
\le
\frac{\delta}{8},
\end{equation}
where the last inequality holds under \(m \ge C\log(NP/\delta)\) for a sufficiently large constant \(C\).
Thus, the lower bound in \eqref{eq:eventE-init-signal-max} holds with probability at least \(1-\delta/8\).
The upper bound in \eqref{eq:eventE-init-signal-max} is already guaranteed by \eqref{eq:eventE-init-e1}, so no additional probability loss is needed. \hfill \(\blacksquare\)

For \eqref{eq:eventE-init-noise-max}, fix \(i\in[N]\) and \(j\neq s(\mathbf X_i)\). Conditional on
\(\mathbf x_{i,j}\), the random variables
\begin{equation}
y_i\langle \mathbf w_1^{(0)},\mathbf x_{i,j}\rangle,\dots,
y_i\langle \mathbf w_m^{(0)},\mathbf x_{i,j}\rangle
\end{equation}
are i.i.d. \(\mathcal N(0,\sigma_0^2\|\mathbf x_{i,j}\|_2^2)\). On the event that
\eqref{eq:eventE-noise-norm} holds,
\begin{equation}
\|\mathbf x_{i,j}\|_2
\ge
\sigma_n\sqrt{\frac{d}{2}},
\end{equation}
and therefore
\begin{equation}
\frac{1}{4}\sigma_0\sigma_n\sqrt d
\le
\frac{1}{2}\sigma_0\|\mathbf x_{i,j}\|_2.
\end{equation}
Hence, still conditioning on \(\mathbf x_{i,j}\) and applying
\Cref{lem:gaussian_max_lower_tail} with \(n=m\) and \(\sigma=\sigma_0\|\mathbf x_{i,j}\|_2\), we obtain
\begin{equation}
\Pr\left(
\left.
\max_{r\in[m]} y_i\langle \mathbf w_r^{(0)},\mathbf x_{i,j}\rangle
<
\frac{1}{4}\sigma_0\sigma_n\sqrt d
\,\right|\, \mathbf x_{i,j}
\right)
\le
e^{-cm}
\end{equation}
whenever \eqref{eq:eventE-noise-norm} holds. Therefore,
\begin{equation}
\begin{aligned}
\Pr\left(
\max_{r\in[m]} y_i\langle \mathbf w_r^{(0)},\mathbf x_{i,j}\rangle
<
\frac{1}{4}\sigma_0\sigma_n\sqrt d
\right)
&\le
e^{-cm}
+
\Pr\left(
\|\mathbf x_{i,j}\|_2
<
\sigma_n\sqrt{\frac{d}{2}}
\right) \\
&\le
e^{-cm}
+
\frac{\delta}{16NP}
\le
\frac{\delta}{8NP},
\end{aligned}
\end{equation}
where the second inequality follows from \eqref{eq:noise_norm_fixed_tail}, and the last inequality holds under \(m\ge C\log(NP/\delta)\) for a sufficiently large constant \(C\).
Thus, the lower bound in \eqref{eq:eventE-init-noise-max} holds for the fixed pair \((i,j)\) with probability at least \(1-\delta/(8NP)\). Taking a union bound over at most \(NP\) such pairs, we conclude that the lower bound in \eqref{eq:eventE-init-noise-max} holds uniformly with probability at least \(1-\delta/8\). The upper bound in \eqref{eq:eventE-init-noise-max} is already guaranteed by \eqref{eq:eventE-init-noise-ip}, so no additional probability loss is needed. \hfill \(\blacksquare\)

Each of the eight required bounds has failure probability at most \(\delta/8\). Applying a union bound over \eqref{eq:eventE-noise-norm}--\eqref{eq:eventE-init-noise-ip} and the lower bounds in \eqref{eq:eventE-init-signal-max} and \eqref{eq:eventE-init-noise-max} shows that \(\mathcal E\) holds with probability at least \(1-\delta\), completing the proof.
\end{proof}

\begin{table}[t]
\centering
\caption{\textbf{Hierarchy of lemmas and dependencies.} All results are stated on the event \(\mathcal E\) from \Cref{lem:event_E} and under the parameter and regime conditions in \Cref{lemcond:hyper}; these common assumptions are omitted from the dependencies column.}
\label{tab:lemma_hierarchy}
\begin{tabular}{@{}l l l@{}}
\toprule
\textbf{Lemma/Theorem} & \textbf{Description} & \textbf{Dependencies} \\ \midrule

\multicolumn{3}{c}{\textit{\textbf{\Cref{sec:signal_learn}: Learning Dynamics of Adversarial Training}}}\\
\Cref{lem:signal_update_bounds} & Signal weight update bounds & -\\
\Cref{lem:noise_update_rule} & Noise coefficient update rule & -\\
\Cref{ih_noise_stability} & Noise stability on learnable samples & -\\
\Cref{lem:learn_sigmoid_bound} & Gradient bound on learnable samples & Hypothesis~\ref{ih_noise_stability}\\
\Cref{lem:signal_time} & Signal learning time & Hypothesis~\ref{ih_noise_stability} and Lemmas~\ref{lem:signal_update_bounds} and~\ref{lem:learn_sigmoid_bound}\\
\Cref{lem:noise_stability_phase1} & Early-phase noise coefficient bound & Lemmas~\ref{lem:noise_update_rule} and~\ref{lem:signal_time}\\
\Cref{lem:signal_weight_bound} & Signal weight bound & Hypothesis~\ref{ih_noise_stability} and Lemma~\ref{lem:signal_update_bounds}\\
\Cref{lem:conv_learn} & Gradient decay on learnable samples & Hypothesis~\ref{ih_noise_stability} and Lemmas~\ref{lem:signal_update_bounds}, \ref{lem:signal_time} and~\ref{lem:signal_weight_bound}\\
\Cref{lem:uniform_noise_bound} & Uniform noise coefficient bound & Lemma~\ref{lem:noise_update_rule}\\
\Cref{lem:proof-of-ih_noise_stability} & Verification of noise stability on learnable samples & Lemmas~\ref{lem:noise_update_rule}, \ref{lem:noise_stability_phase1}, \ref{lem:conv_learn} and~\ref{lem:uniform_noise_bound}\\

\midrule

\multicolumn{3}{c}{\textit{\textbf{\Cref{sec:dichotomy_at}: The Dichotomy of Adversarial Training}}}\\
\Cref{lem:noise_stability_pure_learnable} & Noise stability without unlearnable samples & Lemmas~\ref{lem:noise_update_rule}, \ref{lem:noise_stability_phase1} and~\ref{lem:conv_learn}\\
\Cref{thm:at_learn} & AT without unlearnable samples & Lemmas~\ref{lem:signal_update_bounds}, \ref{lem:signal_time}, \ref{lem:conv_learn} and~\ref{lem:noise_stability_pure_learnable}\\
\Cref{lem:unlearn_sigmoid_bound} & Gradient bound on unlearnable samples & Lemmas~\ref{lem:noise_update_rule} and~\ref{lem:proof-of-ih_noise_stability}\\
\Cref{lem:unified_noise_dynamics} & Dynamics of the maximum shifted noise coefficient & Lemmas~\ref{lem:noise_update_rule}, \ref{lem:proof-of-ih_noise_stability} and~\ref{lem:unlearn_sigmoid_bound}\\
\Cref{lem:noise_time} & Noise memorization time & Lemma~\ref{lem:unified_noise_dynamics}\\
\Cref{thm:at_unlearn} & AT with unlearnable samples & Lemmas~\ref{lem:signal_time}, \ref{lem:signal_weight_bound}, \ref{lem:conv_learn}, \ref{lem:uniform_noise_bound} and~\ref{lem:noise_time}\\

\midrule

\multicolumn{3}{c}{\textit{\textbf{\Cref{sec:theory_ad}: The Dichotomy of Adversarial Distillation}}}\\
\Cref{lem:ad_learnable_equivalence} & Gradient approximation on learnable samples & -\\
\Cref{lem:bad_teacher_equivalence} & Gradient approximation under a bad teacher & -\\
\Cref{thm:ad_bad_formal} & AD under a bad teacher & Theorem~\ref{thm:at_unlearn} and Lemmas~\ref{lem:ad_learnable_equivalence} and~\ref{lem:bad_teacher_equivalence}\\
\Cref{ih_noise_stability_good_teacher} & Noise stability under a good teacher & -\\
\Cref{lem:good_teacher_implies_original_ih} & Inner product bound under a good teacher & Lemma~\ref{lem:noise_update_rule} and Hypothesis~\ref{ih_noise_stability_good_teacher}\\
\Cref{lem:good_teacher_inherited_estimates} & Inherited AT estimates under a good teacher & Lemmas~\ref{lem:learn_sigmoid_bound}, \ref{lem:signal_time}, \ref{lem:noise_stability_phase1}, \ref{lem:signal_weight_bound}, \ref{lem:conv_learn}, \ref{lem:ad_learnable_equivalence} and~\ref{lem:good_teacher_implies_original_ih}\\
\Cref{lem:proof-of-ih_noise_stability_good_teacher} & Verification of noise stability under a good teacher & Lemmas~\ref{lem:noise_update_rule}, \ref{lem:good_teacher_implies_original_ih} and~\ref{lem:good_teacher_inherited_estimates}\\
\Cref{thm:ad_good_formal} & AD under a good teacher & Lemmas~\ref{lem:good_teacher_implies_original_ih}, \ref{lem:good_teacher_inherited_estimates} and~\ref{lem:proof-of-ih_noise_stability_good_teacher}\\

\bottomrule
\end{tabular}
\end{table}

\subsection{Proof Overview}
\label{sec:proof_overview}
Before entering the detailed dynamics, we summarize the mechanism behind the proof. 
Under \Cref{lemcond:hyper}, we first condition on the event \(\mathcal E\), which holds with probability at least \(1-\delta\) by \Cref{lem:event_E}. 
Here \(\delta\) appears in the conditions only through logarithmic factors, so it can be chosen polynomially small in \(d\); this is the high-probability sense used in the main text.
On this event, the random noise patches have controlled norms and small pairwise correlations, while the initialized weights have uniformly bounded correlations and favorable maximum alignments with the signal and noise directions. 
From this point on, the proof is essentially deterministic: we compare the signal and noise updates over time. 
The proof first analyzes how AT learns the signal and noise components, and then uses this decomposition to study how teacher supervision changes the same dynamics in AD. \Cref{tab:lemma_hierarchy} summarizes the formal dependency structure of the lemmas.

\paragraph{Why the learnable signal grows first.}
The first key step is to show that, in the early stage of training, the learnable feature \(\mathbf u\) is amplified before the noise components become large. For learnable samples, the adversarial perturbation changes the signal patch from \(\alpha y\mathbf u\) to \((\alpha-\epsilon)y\mathbf u\), so a positive learnable signal remains. In the early stage, the corresponding margin is still small, and hence the loss gradient is not yet saturated; this yields a persistent positive update along \(\mathbf u\) (\Cref{lem:signal_update_bounds,lem:learn_sigmoid_bound}). At the same time, on the event \(\mathcal E\), the noise patches are nearly orthogonal to one another, so their contributions to the learnable-sample logits remain controlled (\Cref{ih_noise_stability,lem:proof-of-ih_noise_stability}). This yields the signal-learning time \(T_0\), at which at least one filter satisfies \(w_{r,1}^{(T_0)} \ge \tilde{\Omega}(\alpha^{-1})\) (\Cref{lem:signal_time}). Once the signal coordinate reaches this scale, the margins of learnable samples increase, so their loss gradients become small. More precisely, over the remaining training horizon, the cumulative gradient contribution of each learnable sample is controlled (\Cref{lem:conv_learn}). Consequently, after the signal is learned, learnable samples no longer drive substantial updates, and the remaining dynamics are determined by how the training procedure handles unlearnable samples and noise components.

\paragraph{Why standard adversarial training overfits only with unlearnable samples.}
The dichotomy for adversarial training is determined by whether any substantial gradient remains after the learnable signal has been acquired. If \(p_{\mathrm{un}}=0\), every training sample is learnable. Hence, once \(\mathbf u\) is learned, the learnable-sample gradients become small, and the noise responses stay near their initialization scale (\Cref{lem:noise_stability_pure_learnable}). This yields robust generalization (\Cref{thm:at_learn}).
In contrast, when sparse unlearnable samples are present, the student cannot represent their robust feature \(\mathbf v\). Thus, even after \(\mathbf u\) is learned, these samples continue to produce non-negligible loss gradients (\Cref{lem:unlearn_sigmoid_bound}). Under standard adversarial training, the resulting updates have a persistent one-sided effect on the sample-specific noise coefficients (\Cref{lem:unified_noise_dynamics}). Over the longer timescale \(T_1\), this one-sided growth produces a memorized noise response of constant order on at least one unlearnable sample (\Cref{lem:noise_time}). This memorized noise direction can then be exploited by a test-time adversary, leading to robust overfitting (\Cref{thm:at_unlearn}).

\paragraph{How the teacher determines the distillation dynamics.}
The adversarial distillation dynamics are governed by how the teacher's soft targets affect the student gradients on learnable and unlearnable samples. On learnable samples, both teachers are saturated in the correct direction, so the AD gradient differs from the AT gradient only by an exponentially small term. Thus, the same signal-learning mechanism remains valid (\Cref{lem:ad_learnable_equivalence,lem:good_teacher_inherited_estimates}).
The key distinction appears on unlearnable samples. A Bad Teacher is confident on the unlearnable feature \(\mathbf v\), so its soft target is effectively the hard label. Consequently, the AD update behaves like the AT update and produces the same one-sided noise growth (\Cref{lem:bad_teacher_equivalence,thm:ad_bad_formal}). A Good Teacher, in contrast, has zero margin on unlearnable samples. Its soft target is therefore exactly balanced between the two labels, so the unlearnable-sample update no longer creates a persistent one-sided drift. The resulting noise responses remain controlled over the horizon \(T\le T_{\max}\) (\Cref{ih_noise_stability_good_teacher,lem:proof-of-ih_noise_stability_good_teacher}). Thus, the Good Teacher keeps the signal-learning part of AD intact while preventing unlearnable samples from driving noise memorization, yielding low robust training and test error (\Cref{thm:ad_good_formal}).

\section{Learning Dynamics of Adversarial Training}
\label{sec:signal_learn}
In this section, we analyze the dynamics of adversarial training by tracking the evolution of the signal and noise coordinates. We first derive the local update rules for the signal weights and the noise coefficients. We then show that the learnable signal is amplified in the early stage of training, while the noise coefficients remain controlled. After the signal has reached the target scale, we prove that the signal weights stay bounded and that the gradients on learnable samples decay over time. The time at which the signal reaches this scale is denoted by \(T_0\). These estimates are used at the end of the section to verify the noise stability hypothesis on learnable samples.

\subsection{Local Dynamics}
We begin by deriving the one-step update formulas that will be used throughout the analysis.

\begin{lemma}[Signal Weight Update Bounds]
\label{lem:signal_update_bounds}
For each $r \in [m]$ and any $t \ge 0$, the signal weights $w_{r,1}^{(t)}$ are monotonically non-decreasing and satisfy
\begin{align}
w_{r,1}^{(t+1)}-w_{r,1}^{(t)} &\ge \frac{3\eta  ( \alpha - \epsilon)^3}{N} \left( w^{(t)}_{r,1}\right)^2\sum_{i \in\learnset} \psi \left(y_i f_{\mathbf{W}^{(t)}}(\tilde{\mathbf{X}}_i^{(t)})\right), \label{eq:signal_lower_bound} \\
w_{r,1}^{(t+1)}-w_{r,1}^{(t)} &\le \frac{3\eta   \alpha^3}{N} \left( w^{(t)}_{r,1}\right)^2\sum_{i \in\learnset} \psi \left(y_i f_{\mathbf{W}^{(t)}}(\tilde{\mathbf{X}}_i^{(t)})\right). 
\end{align} 
\end{lemma}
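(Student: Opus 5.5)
We compute the gradient of the AT objective with respect to the first coordinate \(w_{r,1}\) directly and control each term's sign and size. Write \(\psi(z)=\sigma(-z)=-\ell'(z)\). For any sample, \(\partial \ell(y_i f_{\mathbf W}(\tilde{\mathbf X}_i))/\partial \mathbf w_r = -\psi(y_i f_{\mathbf W}(\tilde{\mathbf X}_i))\, y_i \sum_{p} \phi'_{\pm}(\langle \mathbf w_r,\tilde{\mathbf x}_{i,p}\rangle)\tilde{\mathbf x}_{i,p}\). Here \(\phi'_{\pm}(z)\coloneqq \frac{d}{dz}[\phi(z)-\phi(-z)]=3z^2\), since \(\phi(z)-\phi(-z)=z|z|^2\cdot\mathrm{sign}\)-free, i.e. equals \(z^3\). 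The antisymmetric cubic is therefore just \(z\mapsto z^3\), and its derivative \(3z^2\) is nonnegative.

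We project onto \(\mathbf e_1\). Noise patches lie in the orthogonal complement of \(\mathcal F\), so \(\langle \tilde{\mathbf x}_{i,p},\mathbf e_1\rangle=0\) for \(p\neq s(\mathbf X_i)\); the adversary only perturbs the signal patch. For \(i\in\unlearnset\), the perturbed signal patch stays in \(\mathrm{span}(\mathbf v)\perp\mathbf e_1\). Only learnable samples contribute, each through its signal patch. For these, the adversarial patch is \(\tilde{\mathbf x}_{i,s}=y_i\beta_i\mathbf u\) with \(\beta_i\in[\alpha-\epsilon,\alpha]\) and \(\alpha-\epsilon>0\) by \ref{cond:epsilon}. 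The constraint \(\|\cdot\|_\infty\le\epsilon\) along \(\mathbf u\) gives \(\beta_i\in[\alpha-\epsilon,\alpha+\epsilon]\). Because the loss is monotone in the margin and the margin is monotone in \(\beta_i\) along \(\mathbf u\), the maximizer is \(\beta_i=\alpha-\epsilon\) as stated in the formal definition. I will use the inequality \(\alpha-\epsilon\le\beta_i\le\alpha\), which avoids depending on exactly which endpoint is chosen.

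The gradient-descent update then reads
\[
w_{r,1}^{(t+1)}-w_{r,1}^{(t)}=\frac{\eta}{N}\sum_{i\in\learnset}\psi\big(y_if_{\mathbf W^{(t)}}(\tilde{\mathbf X}_i^{(t)})\big)\,3\big(\langle\mathbf w_r^{(t)},y_i\beta_i\mathbf u\rangle\big)^2\,y_i\cdot y_i\beta_i .
\]
This simplifies to \(\frac{3\eta}{N}(w_{r,1}^{(t)})^2\sum_{i\in\learnset}\beta_i^3\,\psi(\cdot)\), using \(y_i^2=1\). Every factor is nonnegative, which gives monotonicity. Bounding \(\beta_i^3\) between \((\alpha-\epsilon)^3\) and \(\alpha^3\) gives both inequalities.

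The main subtle point is that the update depends only on \(w_{r,1}\) and not on the other coordinates. This follows because \(\tilde{\mathbf x}_{i,s}\) is exactly parallel to \(\mathbf u\), so \(\langle\mathbf w_r,\tilde{\mathbf x}_{i,s}\rangle=y_i\beta_i w_{r,1}\). Another point to check is that the projection of \(\mathbf w_r\) orthogonal to \(\mathbf v\) does not affect the \(\mathbf e_1\) coordinate, which holds since \(\mathbf e_1\perp\mathbf e_d\). Finally, the gradient is taken at fixed \(\tilde{\mathbf X}\), not through the argmax, as the training procedure specifies.
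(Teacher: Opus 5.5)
Your proposal is correct and follows the paper's proof essentially step for step. You differentiate using $\phi(z)-\phi(-z)=z^3$, discard noise patches and unlearnable signal patches by orthogonality to $\mathbf e_1$, use that the learnable adversarial signal patch is parallel to $\mathbf e_1$ to factor out $(w_{r,1}^{(t)})^2$, and then sandwich $\beta_i^3$ between $(\alpha-\epsilon)^3$ and $\alpha^3$.

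One correction: the bound $\beta_i\le\alpha$ does \emph{not} avoid dependence on which endpoint is chosen. The constraint alone only gives $\beta_i\le\alpha+\epsilon$. Your monotonicity argument yields the argmax $\alpha-\epsilon$ only when $\sum_r (w_{r,1}^{(t)})^3>0$, and this can fail at initialization. So the $\alpha^3$ upper bound rests on the paper's stated convention $\tilde{\mathbf x}_{i,s(\mathbf X_i)}=y_i(\alpha-\epsilon)\mathbf u$, just as the paper's own proof does. Only monotonicity and the $(\alpha-\epsilon)^3$ lower bound are endpoint-free.
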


\begin{proof}[Proof of \Cref{lem:signal_update_bounds}]
For sample $i$ at iteration $t$, the student output satisfies
\begin{equation}
f_{\mathbf{W}^{(t)}}(\tilde{\mathbf{X}}_i^{(t)})
=
\sum_{r=1}^m \sum_{j=1}^P
\left(
\phi\bigl(\langle \mathbf w_r^{(t)},\tilde{\mathbf x}_{i,j}^{(t)}\rangle\bigr)
-
\phi\bigl(-\langle \mathbf w_r^{(t)},\tilde{\mathbf x}_{i,j}^{(t)}\rangle\bigr)
\right).
\end{equation}
Since $\phi(z)-\phi(-z)=z|z|^2$, differentiating the loss
$\ell(z)=\log(1+e^{-z})$ with respect to $w_{r,1}^{(t)}$ gives
\begin{equation}
\begin{aligned}
\frac{\partial}{\partial w_{r,1}^{(t)}}
\ell\left(y_i f_{\mathbf{W}^{(t)}}(\tilde{\mathbf{X}}_i^{(t)})\right)
&=
-y_i \psi\left(y_i f_{\mathbf{W}^{(t)}}(\tilde{\mathbf{X}}_i^{(t)})\right)
\frac{\partial}{\partial w_{r,1}^{(t)}}
f_{\mathbf{W}^{(t)}}(\tilde{\mathbf{X}}_i^{(t)}) \\
&=
-y_i \psi\left(y_i f_{\mathbf{W}^{(t)}}(\tilde{\mathbf{X}}_i^{(t)})\right)
\sum_{j=1}^P
3\left\langle \mathbf w_r^{(t)},\tilde{\mathbf x}_{i,j}^{(t)}\right\rangle^2
\left\langle \mathbf e_1,\tilde{\mathbf x}_{i,j}^{(t)}\right\rangle.
\end{aligned}
\end{equation}
Therefore, the one-step update satisfies
\begin{equation}
\begin{aligned}
w_{r,1}^{(t+1)}-w_{r,1}^{(t)}
&=
-\frac{\eta}{N}\sum_{i=1}^N
\frac{\partial}{\partial w_{r,1}^{(t)}}
\ell\left(y_i f_{\mathbf{W}^{(t)}}(\tilde{\mathbf{X}}_i^{(t)})\right) \\
&=
\frac{3\eta}{N}\sum_{i=1}^N
\psi\left(y_i f_{\mathbf{W}^{(t)}}(\tilde{\mathbf{X}}_i^{(t)})\right)
\sum_{j=1}^P
y_i
\left\langle \mathbf w_r^{(t)},\tilde{\mathbf x}_{i,j}^{(t)}\right\rangle^2
\left\langle \mathbf e_1,\tilde{\mathbf x}_{i,j}^{(t)}\right\rangle.
\end{aligned}
\end{equation}

For every noise patch $j\neq s(\mathbf X_i)$, we have
\begin{equation}
\left\langle \mathbf e_1,\tilde{\mathbf x}_{i,j}^{(t)}\right\rangle = 0.
\end{equation}
Moreover, if $i\in\unlearnset$, then the signal patch is aligned with
$\mathbf v=\mathbf e_d$, and hence
\begin{equation}
\left\langle \mathbf e_1,\tilde{\mathbf x}_{i,s(\mathbf X_i)}^{(t)}\right\rangle = 0.
\end{equation}
Therefore,
\begin{equation}
\begin{aligned}
w_{r,1}^{(t+1)}-w_{r,1}^{(t)}
&=
\frac{3\eta}{N}
\sum_{i\in\learnset}
\psi\left(y_i f_{\mathbf{W}^{(t)}}(\tilde{\mathbf{X}}_i^{(t)})\right)
y_i
\left\langle \mathbf w_r^{(t)},\tilde{\mathbf x}_{i,s(\mathbf X_i)}^{(t)}\right\rangle^2
\left\langle \mathbf e_1,\tilde{\mathbf x}_{i,s(\mathbf X_i)}^{(t)}\right\rangle.
\end{aligned}
\end{equation}
Since \(\psi(\cdot)>0\) and \(y_i\langle \mathbf e_1,\tilde{\mathbf x}_{i,s(\mathbf X_i)}^{(t)}\rangle\geq 0\), the signal weight \(w_{r,1}^{(t)}\) is non-decreasing.

For the signal patch of a learnable sample, the perturbation constraint gives
\begin{equation}
\alpha-\epsilon
\le
y_i\left\langle \mathbf e_1,\tilde{\mathbf x}_{i,s(\mathbf X_i)}^{(t)}\right\rangle
\le
\alpha.
\end{equation}
Since $\tilde{\mathbf x}_{i,s(\mathbf X_i)}^{(t)}$ is a scalar multiple of $\mathbf e_1$, we also have
\begin{equation}
\left|
\left\langle \mathbf w_r^{(t)},\tilde{\mathbf x}_{i,s(\mathbf X_i)}^{(t)}\right\rangle
\right|
=
|w_{r,1}^{(t)}|
\left|
\left\langle \mathbf e_1,\tilde{\mathbf x}_{i,s(\mathbf X_i)}^{(t)}\right\rangle
\right|.
\end{equation}
Hence,
\begin{equation}
(\alpha-\epsilon)^3 \left(w_{r,1}^{(t)}\right)^2
\le
y_i
\left\langle \mathbf w_r^{(t)},\tilde{\mathbf x}_{i,s(\mathbf X_i)}^{(t)}\right\rangle^2
\left\langle \mathbf e_1,\tilde{\mathbf x}_{i,s(\mathbf X_i)}^{(t)}\right\rangle
\le
\alpha^3 \left(w_{r,1}^{(t)}\right)^2.
\end{equation}
Substituting these bounds into the update gives the two claimed inequalities. 
\end{proof}

\begin{lemma}[Noise Coefficient Update Rule]
\label{lem:noise_update_rule}
For each filter $r \in [m]$ and each noise patch $(i,j)$ with $i\in[N]$ and $j\neq s(\mathbf X_i)$,
define the coefficients recursively by
\begin{equation}
\rho_{i,j,r}^{(0)}=0,
\end{equation} and
\begin{equation}
\rho_{i,j,r}^{(t+1)} - \rho_{i,j,r}^{(t)}
=
\frac{3\eta}{N}
\psi\left(y_i f_{\mathbf{W}^{(t)}}(\tilde{\mathbf{X}}_i^{(t)})\right)
\langle \mathbf{w}_r^{(t)}, \mathbf{x}_{i,j} \rangle^2
\end{equation}
for every iteration $t\ge 0$. Then the coefficients $\rho_{i,j,r}^{(t)}$ are monotonically
non-decreasing. Moreover, for every $t\ge 0$,
\begin{equation}
\langle \mathbf{w}_r^{(t)}, \mathbf{x}_{i,j} \rangle
=
\langle \mathbf{w}_r^{(0)}, \mathbf{x}_{i,j} \rangle
+
y_i \rho_{i,j,r}^{(t)} \|\mathbf{x}_{i,j}\|_2^2
+
\sum_{\substack{k,q \\ (k,q) \neq (i,j)}}
y_k \rho_{k,q,r}^{(t)} \langle \mathbf{x}_{k,q}, \mathbf{x}_{i,j} \rangle
\end{equation}
for every $i\in[N]$ and $j\neq s(\mathbf X_i)$.
\end{lemma}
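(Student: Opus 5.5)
The plan is to prove a slightly stronger structural statement by induction on \(t\). For every \(t\ge 0\) and \(r\in[m]\) we can write
\[
\mathbf w_r^{(t)} = \mathbf w_r^{(0)} + c_r^{(t)}\mathbf e_1 + \sum_{k\in[N]}\sum_{q\neq s(\mathbf X_k)} y_k\rho_{k,q,r}^{(t)}\mathbf x_{k,q},
\]
with \(c_r^{(t)}\in\mathbb R\) and with \(\rho\) given by the stated recursion. Once this decomposition holds, the claimed identity is immediate. We take the inner product with a fixed noise patch \(\mathbf x_{i,j}\) and use \(\langle \mathbf e_1,\mathbf x_{i,j}\rangle=0\), since noise lies in the orthogonal complement of \(\mathcal F\). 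Then we split the double sum into the diagonal term \((k,q)=(i,j)\), which gives \(y_i\rho_{i,j,r}^{(t)}\|\mathbf x_{i,j}\|_2^2\), and the off-diagonal terms.

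The base case is trivial: take \(c_r^{(0)}=0\) and \(\rho^{(0)}=0\). For the inductive step, compute the gradient of the AT loss with respect to \(\mathbf w_r\), as in the proof of \Cref{lem:signal_update_bounds}. Using \(\phi(z)-\phi(-z)=z|z|^2\), whose derivative is \(3z^2\), we get
\[
-\frac{\eta}{N}\nabla_{\mathbf w_r}\ell\bigl(y_if_{\mathbf W^{(t)}}(\tilde{\mathbf X}_i^{(t)})\bigr)
=\frac{3\eta}{N}\psi\bigl(y_if_{\mathbf W^{(t)}}(\tilde{\mathbf X}_i^{(t)})\bigr)\,y_i\sum_{p=1}^P\langle\mathbf w_r^{(t)},\tilde{\mathbf x}_{i,p}^{(t)}\rangle^2\,\tilde{\mathbf x}_{i,p}^{(t)}.
\]
The adversary only perturbs the signal patch, so \(\tilde{\mathbf x}_{i,p}^{(t)}=\mathbf x_{i,p}\) for every \(p\neq s(\mathbf X_i)\). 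The signal patch is a multiple of \(\mathbf e_1\) when \(i\in\learnset\), and a multiple of \(\mathbf e_d\) when \(i\in\unlearnset\).

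The one point needing care is the \(\mathbf e_d\) direction for unlearnable samples. The student is constrained to \(\langle\mathbf w_r,\mathbf v\rangle=0\), so the update is understood as projected onto \(\mathbf v^\perp\); equivalently, \(w_{r,d}\) is frozen at zero. Either way, the \(\mathbf e_d\) component is removed. In fact \(\langle \mathbf w_r^{(t)},\tilde{\mathbf x}^{(t)}_{i,s}\rangle=0\) already, so this term vanishes outright. The \(\mathbf e_1\) contributions are absorbed into \(c_r^{(t+1)}\). Each noise-patch contribution is exactly \(y_i\cdot\frac{3\eta}{N}\psi(\cdot)\langle\mathbf w_r^{(t)},\mathbf x_{i,j}\rangle^2\,\mathbf x_{i,j}\), which matches the recursion for \(\rho_{i,j,r}^{(t+1)}-\rho_{i,j,r}^{(t)}\). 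This closes the induction.

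Monotonicity follows because \(\psi>0\) and the squared inner product is nonnegative. There is no real obstacle. The only care needed is to make sure the signal-patch term never leaks into the noise span: it is either parallel to \(\mathbf e_1\) or killed by the \(\mathbf v\)-orthogonality constraint. The \(\mathbf e_1\) part disappears on pairing with noise patches by orthogonality.
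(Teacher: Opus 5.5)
Your proof is correct and follows essentially the same route as the paper. You compute the AT gradient, note that the perturbed signal patch lies in $\mathrm{span}\{\mathbf e_1,\mathbf e_d\}$ (its $\mathbf e_d$ contribution already vanishes because $w_{r,d}=0$) and is therefore orthogonal to every noise patch, and read off the increment of $\rho_{i,j,r}$ as the coefficient of $y_i\mathbf x_{i,j}$. The only difference is cosmetic: you induct on the vector decomposition $\mathbf w_r^{(t)}=\mathbf w_r^{(0)}+c_r^{(t)}\mathbf e_1+\sum_{k,q}y_k\rho_{k,q,r}^{(t)}\mathbf x_{k,q}$, whereas the paper telescopes the one-step inner-product identity; your form is slightly stronger and is the one the paper later uses without separate proof when bounding $\|\mathbf z_r^{(T)}\|_2$ in the test-error arguments.
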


\begin{proof}[Proof of \Cref{lem:noise_update_rule}]
For a sample $k$ at iteration $t$, the gradient with respect to $\mathbf{w}_r^{(t)}$ is
\begin{equation}
\nabla_{\mathbf{w}_r^{(t)}} \ell\left(y_k f_{\mathbf{W}^{(t)}}(\tilde{\mathbf{X}}_k^{(t)})\right)
=
-y_k \psi\left(y_k f_{\mathbf{W}^{(t)}}(\tilde{\mathbf{X}}_k^{(t)})\right)
\sum_{q=1}^P
3 \left\langle\mathbf{w}_r^{(t)},\tilde{\mathbf{x}}_{k,q}^{(t)}\right\rangle^2
\tilde{\mathbf{x}}_{k,q}^{(t)}.
\end{equation}
Hence,
\begin{equation}
\begin{aligned}
\mathbf{w}_r^{(t+1)} - \mathbf{w}_r^{(t)}
&=
\frac{3\eta}{N}
\sum_{k=1}^N
\psi\left(y_k f_{\mathbf{W}^{(t)}}(\tilde{\mathbf{X}}_k^{(t)})\right)
y_k
\left\langle \mathbf{w}_r^{(t)}, \tilde{\mathbf{x}}_{k,s(\mathbf{X}_k)}^{(t)} \right\rangle^2
\tilde{\mathbf{x}}_{k,s(\mathbf{X}_k)}^{(t)} \\
&\quad+
\frac{3\eta}{N}
\sum_{k=1}^N \sum_{q \neq s(\mathbf{X}_k)}
\psi\left(y_k f_{\mathbf{W}^{(t)}}(\tilde{\mathbf{X}}_k^{(t)})\right)
\langle \mathbf{w}_r^{(t)}, \mathbf{x}_{k,q} \rangle^2
(y_k \mathbf{x}_{k,q}),
\end{aligned}
\end{equation}
where we used $\tilde{\mathbf x}_{k,q}^{(t)}=\mathbf x_{k,q}$ for all $q\neq s(\mathbf X_k)$.
By definition, the increment of $\rho_{i,j,r}^{(t)}$ is exactly the coefficient of
$y_i\mathbf x_{i,j}$ in the second summation. Since $\psi(\cdot)>0$ and the inner product is squared,
\begin{equation}
\rho_{i,j,r}^{(t+1)}-\rho_{i,j,r}^{(t)}\ge 0,
\end{equation}
so $\rho_{i,j,r}^{(t)}$ is monotonically non-decreasing.

Now fix $i\in[N]$ and $j\neq s(\mathbf X_i)$. Since $\mathbf x_{i,j}$ is orthogonal to
$\mathrm{span}\{\mathbf e_1,\mathbf e_d\}$, while each signal patch
$\tilde{\mathbf x}_{k,s(\mathbf X_k)}^{(t)}$ lies in $\mathrm{span}\{\mathbf e_1,\mathbf e_d\}$,
the signal-patch term is orthogonal to $\mathbf x_{i,j}$. Taking inner product with
$\mathbf x_{i,j}$ therefore yields
\begin{equation}
\begin{aligned}
\langle \mathbf{w}_r^{(t+1)}, \mathbf{x}_{i,j} \rangle
=
\langle \mathbf{w}_r^{(t)}, \mathbf{x}_{i,j} \rangle +
\frac{3\eta}{N}
\sum_{k=1}^N \sum_{q \neq s(\mathbf{X}_k)}
\psi\left(y_k f_{\mathbf{W}^{(t)}}(\tilde{\mathbf{X}}_k^{(t)})\right)
\langle \mathbf{w}_r^{(t)}, \mathbf{x}_{k,q} \rangle^2
\bigl(y_k \langle \mathbf{x}_{k,q}, \mathbf{x}_{i,j} \rangle\bigr).
\end{aligned}
\end{equation}
Using the definition of the coefficient increments, this becomes
\begin{equation}
\begin{aligned}
\langle \mathbf{w}_r^{(t+1)}, \mathbf{x}_{i,j} \rangle
=
\langle \mathbf{w}_r^{(t)}, \mathbf{x}_{i,j} \rangle
+
y_i\bigl(\rho_{i,j,r}^{(t+1)}-\rho_{i,j,r}^{(t)}\bigr)\|\mathbf{x}_{i,j}\|_2^2 +
\sum_{\substack{k,q \\ (k,q)\neq(i,j)}}
y_k\bigl(\rho_{k,q,r}^{(t+1)}-\rho_{k,q,r}^{(t)}\bigr)
\langle \mathbf{x}_{k,q}, \mathbf{x}_{i,j} \rangle.
\end{aligned}
\end{equation}
Summing this identity from $0$ to $t-1$ and using $\rho_{i,j,r}^{(0)}=0$ gives
\begin{equation}
\langle \mathbf{w}_r^{(t)}, \mathbf{x}_{i,j} \rangle
=
\langle \mathbf{w}_r^{(0)}, \mathbf{x}_{i,j} \rangle
+
y_i \rho_{i,j,r}^{(t)} \|\mathbf{x}_{i,j}\|_2^2
+
\sum_{\substack{k,q \\ (k,q) \neq (i,j)}}
y_k \rho_{k,q,r}^{(t)} \langle \mathbf{x}_{k,q}, \mathbf{x}_{i,j} \rangle.
\end{equation}
This proves the coefficient update rule, the monotonicity of the coefficients, and the stated inner-product decomposition.
\end{proof}

\subsection{Noise Stability Hypothesis}
\label{sec:ih_tools}
We state a noise stability hypothesis for the learnable samples. It will be used in the subsequent analysis and verified at the end of the section.

\begin{ihypothesis}[Noise Stability on Learnable Samples]
\label{ih_noise_stability}
On the event $\mathcal E$ defined in \Cref{lem:event_E}, for every filter $r\in[m]$ and every noise patch $(i,j)$ with $i\in\learnset$ and $j\neq s(\mathbf X_i)$, the following bound holds for all $t\le T$:
\begin{equation}
\left|\langle \mathbf{w}_r^{(t)}, \mathbf{x}_{i,j} \rangle \right|
\le
3\sigma_0\sigma_n\sqrt{d\log\left(\frac{16NmP}{\delta}\right)}
+
\frac{21 p_{\mathrm{un}} N P \log^{1/3}T}{\sqrt d}
\sqrt{\log\left(\frac{16N^2P^2}{\delta}\right)}.
\end{equation}
\end{ihypothesis}

See \Cref{lem:proof-of-ih_noise_stability} for the verification.

\subsection{Early-Phase Signal Learning}
We analyze the phase before the signal learning time \(T_0\). During this phase, the gradients on learnable samples remain non-negligible, allowing the signal coordinate to grow while the noise coefficients stay controlled.

\begin{lemma}[Gradient Bound on Learnable Samples]
\label{lem:learn_sigmoid_bound}
On the event $\mathcal E$ defined in \Cref{lem:event_E}, suppose that the conclusion of \Cref{ih_noise_stability} holds at iteration $t$. Then, for any learnable sample $i \in \learnset$, provided that the signal weights satisfy $w_{r,1}^{(t)} \le \frac{C_0}{\alpha m^{1/3}}$ for all filters $r \in [m]$ and some absolute constant $C_0 > 0$, the negative sigmoid function satisfies:
\begin{equation}
\frac{1}{1 + \exp(2C_0^3)} \le \psi\left(y_i f_{\mathbf{W}^{(t)}}(\tilde{\mathbf{X}}_i^{(t)})\right) \le 1.
\end{equation}
\end{lemma}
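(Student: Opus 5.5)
The upper bound is immediate, since $\psi(z)=\sigma(-z)=(1+e^{z})^{-1}\le 1$ for every $z$. For the lower bound, $\psi$ is decreasing. It therefore suffices to show that the adversarial margin satisfies $y_i f_{\mathbf W^{(t)}}(\tilde{\mathbf X}_i^{(t)})\le 2C_0^3$, because then $\psi(\cdot)\ge (1+e^{2C_0^3})^{-1}$. Using $\phi(z)-\phi(-z)=z|z|^2$, we split the margin into the signal-patch contribution and the noise-patch contributions:
\begin{equation}
y_i f_{\mathbf W^{(t)}}(\tilde{\mathbf X}_i^{(t)})
=
\sum_{r=1}^m y_i\bigl\langle \mathbf w_r^{(t)},\tilde{\mathbf x}_{i,s(\mathbf X_i)}^{(t)}\bigr\rangle^3
+
\sum_{r=1}^m\sum_{j\neq s(\mathbf X_i)} y_i\bigl\langle \mathbf w_r^{(t)},\mathbf x_{i,j}\bigr\rangle\bigl|\langle \mathbf w_r^{(t)},\mathbf x_{i,j}\rangle\bigr|^2 .
\end{equation}
Here we use that the noise patches are unperturbed.

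\textbf{Signal term.} For $i\in\learnset$ the perturbed signal patch is $y_i c\,\mathbf u$ with $c\in[\alpha-\epsilon,\alpha]$, so each filter contributes $c^3 (w_{r,1}^{(t)})^3$. If $w_{r,1}^{(t)}<0$, the contribution is negative and can be dropped, since we only need an upper bound. Otherwise the hypothesis $w_{r,1}^{(t)}\le C_0/(\alpha m^{1/3})$ gives a contribution of at most $\alpha^3\cdot C_0^3/(\alpha^3 m)=C_0^3/m$. Summing over the $m$ filters bounds the signal term by $C_0^3$.

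\textbf{Noise term.} Each summand is bounded in absolute value by $|\langle \mathbf w_r^{(t)},\mathbf x_{i,j}\rangle|^3$. Hypothesis~\ref{ih_noise_stability} bounds this inner product by $B\coloneqq 3\sigma_0\sigma_n\sqrt{d\log(16NmP/\delta)}+21p_{\mathrm{un}}NP\log^{1/3}T\,\sqrt{\log(16N^2P^2/\delta)}/\sqrt d$. The noise term is therefore at most $mPB^3$, and we must show $mPB^3\le C_0^3$.

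\textbf{Bounding $B$.} This is the main, though routine, obstacle: both pieces of $B$ must be checked against \Cref{lemcond:hyper}.

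For the first piece, \cref{cond:sigma_0} gives $\sigma_0\le C^{-1}/(m^{2/3}P^{2/3}\sigma_n\sqrt d\,\log(\cdot))$. Hence $3\sigma_0\sigma_n\sqrt{d\log(\cdot)}\le 3C^{-1}(mP)^{-1/3}$, which makes its cube times $mP$ of order $C^{-3}$.

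For the second piece, \cref{cond:pun} gives $p_{\mathrm{un}}N\le C^{-1}\log d$. The term $\log^{1/3}T$ and the remaining log factors are polylogarithmic, while \cref{cond:dimension} makes $\sqrt d\ge \sqrt C\,mPN\log^2(TNmP/\delta)$. This piece is then at most a small multiple of $(mP)^{-1}\le (mP)^{-1/3}$, up to log factors. One has to verify that the absorbed logarithms ($\log d$, $\log^{1/3}T$) are dominated by the $\log^2(\cdot)$ in the dimension condition; if they are not, we use $m,P$ polylog and the large constant $C$.

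Combining the two pieces, $mPB^3\le C_0^3$ once $C$ is large relative to $C_0^{-1}$. The margin is then at most $2C_0^3$, which yields the stated lower bound on $\psi$.
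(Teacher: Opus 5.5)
Your proposal is correct and follows the paper's proof. You split the margin into the signal-patch term, bounded by $C_0^3$ via $w_{r,1}^{(t)}\le C_0/(\alpha m^{1/3})$, and the noise residual, bounded by $mP$ times the cube of the bound in \Cref{ih_noise_stability} and made at most $C_0^3$ via \cref{cond:sigma_0,cond:dimension,cond:pun}; the paper differs only in using $(a+b)^3\le 4(a^3+b^3)$ rather than bounding the two pieces of $B$ separately, and the stray $\log d$ you flagged is absorbed by the lower bound $N\ge C\,mP\log(TdNmP\delta^{-1})$ in \cref{cond:pun}, which already contains $\log d$, so you do not need to appeal to $m,P$ being polylogarithmic.
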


\begin{proof}[Proof of \Cref{lem:learn_sigmoid_bound}]
Fix any learnable sample $i\in\learnset$. We decompose the margin into the signal-patch contribution and the noise-patch residual:
\begin{equation}
\label{eq:learn_sigmoid_bound_default}
y_i f_{\mathbf{W}^{(t)}}(\tilde{\mathbf{X}}_i^{(t)})
=
\left(y_i\langle \mathbf e_1,\tilde{\mathbf x}_{i,s(\mathbf X_i)}^{(t)}\rangle\right)^3
\sum_{r\in[m]}(w_{r,1}^{(t)})^3
+
\sum_{r\in[m]}\sum_{j\neq s(\mathbf{X}_i)}
y_i\langle \mathbf{w}_r^{(t)}, \mathbf{x}_{i,j} \rangle^3.
\end{equation}
By the perturbation constraint,
\(
0\le
y_i\langle \mathbf e_1,\tilde{\mathbf x}_{i,s(\mathbf X_i)}^{(t)}\rangle
\le
\alpha.
\)
Using the assumption
\(w_{r,1}^{(t)}\le C_0/(\alpha m^{1/3})\) for all \(r\in[m]\), we obtain
\begin{equation}
\left(y_i\langle \mathbf e_1,\tilde{\mathbf x}_{i,s(\mathbf X_i)}^{(t)}\rangle\right)^3
\sum_{r\in[m]}(w_{r,1}^{(t)})^3
\le
\alpha^3 \cdot m \cdot \left(\frac{C_0}{\alpha m^{1/3}}\right)^3
=
C_0^3.
\end{equation}

For the noise residual, \Cref{ih_noise_stability} guarantees that
\begin{equation}
\left|\langle \mathbf{w}_r^{(t)}, \mathbf{x}_{i,j} \rangle\right|
\le
3\sigma_0\sigma_n\sqrt{d\log\left(\frac{16NmP}{\delta}\right)}
+
\frac{21 p_{\mathrm{un}} N P \log^{1/3}T}{\sqrt d}
\sqrt{\log\left(\frac{16N^2P^2}{\delta}\right)}
\end{equation}
for every $r\in[m]$ and $j\neq s(\mathbf X_i)$. Therefore,
\begin{equation}
\begin{aligned}
\left|
\sum_{r\in[m]}\sum_{j\neq s(\mathbf{X}_i)}
y_i\langle \mathbf{w}_r^{(t)}, \mathbf{x}_{i,j} \rangle^3
\right|
&\le
mP
\left(
3\sigma_0\sigma_n\sqrt{d\log\left(\frac{16NmP}{\delta}\right)}
+
\frac{21 p_{\mathrm{un}} N P \log^{1/3}T}{\sqrt d}
\sqrt{\log\left(\frac{16N^2P^2}{\delta}\right)}
\right)^3.
\end{aligned}
\end{equation}
Using $(a+b)^3 \le 4(a^3+b^3)$, we obtain
\begin{equation}
\begin{aligned}
\left| \sum_{r\in[m]}\sum_{j\neq s(\mathbf{X}_i)} y_i\langle \mathbf{w}_r^{(t)}, \mathbf{x}_{i,j} \rangle^3 \right|
&\le 108\,mP\,\sigma_0^3\sigma_n^3 d^{3/2} \log^{3/2}\left(\frac{16NmP}{\delta}\right) \\
&\quad+ 4\cdot21^3\,mP\, \frac{p_{\mathrm{un}}^3 N^3 P^3 \log T}{d^{3/2}} \log^{3/2}\left(\frac{16N^2P^2}{\delta}\right) \\
&\le C^{-1} + C^{-1} mP \, p_{\mathrm{un}}^3 \log T \\
&\le 2C^{-1} \le C_0^3,
\end{aligned}
\end{equation}
where the second inequality applies \cref{cond:sigma_0} to the first term and \cref{cond:dimension} to the second, and the third inequality follows from \cref{cond:pun}. The final step holds because $C$ is chosen to be a sufficiently large constant in \Cref{lemcond:hyper}.
Hence, the total margin in \eqref{eq:learn_sigmoid_bound_default} is bounded from above by
\begin{equation}
y_i f_{\mathbf{W}^{(t)}}(\tilde{\mathbf{X}}_i^{(t)})
\le
C_0^3 + C_0^3 = 2C_0^3.
\end{equation}

Since the negative sigmoid function is monotonically decreasing, this upper bound on the margin directly yields
\begin{equation}
\psi\left(y_i f_{\mathbf{W}^{(t)}}(\tilde{\mathbf{X}}_i^{(t)})\right)
\ge
\frac{1}{1+\exp(2C_0^3)}.
\end{equation}
Together with the trivial upper bound \(\psi(\cdot)\le 1\), this proves the claimed gradient bound on learnable samples.
\end{proof}

\begin{lemma}[Signal Learning Time]
\label{lem:signal_time}
We formalize the signal growth statement in \Cref{lem:signal_growth_main}. Let \(T_0\) be the first hitting time at which the maximum signal component reaches the threshold \(C_0/(\alpha m^{1/3})\):
\begin{equation}
T_0 \coloneqq \min \left\{ t \ge 0 \;\middle|\; \max_{r \in [m]} w_{r,1}^{(t)} > \frac{C_0}{\alpha m^{1/3}} \right\}.
\end{equation}
On the event $\mathcal E$ defined in \Cref{lem:event_E}, suppose that \Cref{ih_noise_stability} holds for all iterations $\tau < T_0$. Then, this time satisfies:
\begin{equation}
\frac{1}{12 \sqrt{2\log\left(\frac{16m}{\delta}\right)}  \eta \alpha^3 \sigma_0 } \le T_0 \le \frac{5\exp(2C_0^3)}{\eta (\alpha-\epsilon )^3 (1-p_{\mathrm{un}}) \sigma_0}.
\end{equation}
\end{lemma}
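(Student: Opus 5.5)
The plan is to analyze the scalar recursion for the largest signal weight and compare it with the discrete analogue of the ODE \(\dot x = c x^2\). Fix \(r^\star\in\arg\max_r w^{(0)}_{r,1}\); by \eqref{eq:eventE-init-signal-max}, \(w^{(0)}_{r^\star,1}\ge \sigma_0/2\). By \Cref{lem:signal_update_bounds}, all signal weights are non-decreasing, so \(w^{(t)}_{r^\star,1}\ge\sigma_0/2>0\) for every \(t\). For the upper bound on \(T_0\), note that for \(t<T_0\) every filter satisfies \(w^{(t)}_{r,1}\le C_0/(\alpha m^{1/3})\), and \Cref{ih_noise_stability} holds at \(t\). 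Hence \Cref{lem:learn_sigmoid_bound} gives \(\psi\ge (1+e^{2C_0^3})^{-1}\ge e^{-2C_0^3}/2\) on every learnable sample. Since \(|\learnset|=(1-p_{\mathrm{un}})N\), \eqref{eq:signal_lower_bound} becomes
\[
x_{t+1}\ge x_t + A x_t^2,\qquad x_t\coloneqq w^{(t)}_{r^\star,1},\quad A\coloneqq \tfrac{3}{2}\eta(\alpha-\epsilon)^3(1-p_{\mathrm{un}})e^{-2C_0^3}.
\]

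I will analyze this recursion by a doubling argument. Starting from \(x_0\ge \sigma_0/2\), let \(\tau_k\) be the number of steps needed to go from level \(2^{k}x_0\) to \(2^{k+1}x_0\). While \(x_t\ge 2^kx_0\), each step increases \(x\) by at least \(A4^kx_0^2\), so \(\tau_k\le \lceil 2^kx_0/(A4^kx_0^2)\rceil\le 1/(A2^kx_0)+1\). Summing the geometric series over the \(K=O(\log(1/(\alpha\sigma_0)))\) levels needed to exceed \(C_0/(\alpha m^{1/3})\) gives \(T_0\le 2/(Ax_0)+K\). The additive \(K\) is absorbed because \cref{cond:eta} makes \(A x_t\le \eta\alpha^3\cdot\) (threshold) tiny, so \(1/(Ax_0)\gg K\). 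With \(x_0\ge\sigma_0/2\), this yields \(T_0\le 4e^{2C_0^3}\cdot\frac{2}{3}/(\eta(\alpha-\epsilon)^3(1-p_{\mathrm{un}})\sigma_0)+K\). This fits under the stated constant \(5e^{2C_0^3}\), and if the bookkeeping is tight I can spend slack from the \(+K\) term. Finiteness of \(T_0\) follows because otherwise the same bound would be violated.

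For the lower bound, I use the upper update bound with \(\psi\le 1\) and \(|\learnset|\le N\): for each \(r\), \(w^{(t+1)}_{r,1}\le w^{(t)}_{r,1}+3\eta\alpha^3 (w^{(t)}_{r,1})^2\). Let \(M_t=\max_r w^{(t)}_{r,1}\); then \(M_{t+1}\le M_t+3\eta\alpha^3M_t^2\), with \(M_0\le \sigma_0\sqrt{2\log(16m/\delta)}\eqqcolon b\) by \eqref{eq:eventE-init-signal-max}. I will show by induction that \(M_t\le 2b\) for \(t\le 1/(12\eta\alpha^3 b)\). While \(M_t\le 2b\), each increment is at most \(12\eta\alpha^3b^2\), so after \(t\) steps \(M_t\le b+12\eta\alpha^3b^2t\le 2b\). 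Since \(2b\le C_0/(\alpha m^{1/3})\) by \cref{cond:sigma_0} (\(\sigma_0\le 1/(C\alpha m^{2/3}\log)\)), the threshold is not crossed before \(1/(12\eta\alpha^3 b)\), which is exactly the claimed lower bound.

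The main obstacle is the upper bound. The issue is that it must use the hypothesis only for \(\tau<T_0\), as stated, and it must handle the discrete quadratic recursion with clean constants. The doubling argument avoids any comparison with a continuous-time ODE. The only delicate point is the \(+K\) rounding term, which I will control using the small-step-size condition \cref{cond:eta}.
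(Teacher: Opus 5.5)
Your proposal is correct and follows essentially the paper's route. It uses the same quadratic recursions from \Cref{lem:signal_update_bounds} and \Cref{lem:learn_sigmoid_bound}, a doubling argument for the upper bound, and a doubling-time bound for the lower bound. For the upper bound, the paper cites the tensor-power \Cref{lem:tpm-1} and absorbs its additive $\lceil\log_2(\cdot)\rceil$ term via \cref{cond:eta}, just as you absorb $K$; your absorption works since $K\le\log_2(v/x_0)+1\le v/x_0=(Av)\,(Ax_0)^{-1}$ with $v\coloneqq C_0/(\alpha m^{1/3})$ and $Av\ll 1$. Running the lower bound on $M_t=\max_r w^{(t)}_{r,1}$, as you do, is slightly cleaner than the paper's single-filter phrasing, since $T_0$ is defined through the maximum over filters.
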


\begin{proof}[Proof of \Cref{lem:signal_time}]
Invoking \Cref{lem:signal_update_bounds} and observing that the sigmoid function is strictly upper-bounded by $1$, we obtain the upper bound on the update:
\begin{equation} \begin{aligned}
w_{r,1}^{(t+1)}-w_{r,1}^{(t)} &\le  \frac{3\eta\alpha^3}{N} \left( w^{(t)}_{r,1}\right)^2\sum_{i \in\learnset} \psi \left(y_i f_{\mathbf{W}^{(t)}}(\tilde{\mathbf{X}}_i^{(t)})\right)\le 3\eta\alpha^3 \big(w_{r,1}^{(t)}\big)^2.
\end{aligned} \end{equation}
For the lower bound, we combine \Cref{lem:signal_update_bounds} with \Cref{lem:learn_sigmoid_bound}:
\begin{equation} \begin{aligned}
w_{r,1}^{(t+1)}-w_{r,1}^{(t)} &\ge \frac{3\eta(\alpha-\epsilon )^3}{N} \left( w^{(t)}_{r,1}\right)^2\sum_{i \in\learnset} \psi \left(y_i f_{\mathbf{W}^{(t)}}(\tilde{\mathbf{X}}_i^{(t)})\right)\\
&\ge \frac{3\eta(\alpha-\epsilon )^3}{N}  \big(w_{r,1}^{(t)}\big)^2 \frac{|\learnset|}{1 + \exp(2C_0^3)}\\
&= \frac{3\eta(\alpha-\epsilon )^3 (1-p_{\mathrm{un}})}{1 + \exp(2C_0^3)} \big(w_{r,1}^{(t)}\big)^2.
\end{aligned} \end{equation}
Let $A$ and $B$ be defined by
\begin{equation}
A \coloneqq 3\eta\alpha^3,\qquad
B \coloneqq \frac{3\eta(\alpha-\epsilon )^3 (1-p_{\mathrm{un}})}{1 + \exp(2C_0^3)}.
\end{equation}
This leads to the following recursive bounds:
\begin{equation}
w_{r,1}^{(t+1)} \le w_{r,1}^{(t)} + A (w_{r,1}^{(t)})^2,\qquad
w_{r,1}^{(t+1)} \ge w_{r,1}^{(t)} + B (w_{r,1}^{(t)})^2.
\end{equation}

On the event $\mathcal E$, the upper bound in \eqref{eq:eventE-init-signal-max} with \cref{cond:sigma_0} guarantees that the initial point is small enough compared to the target threshold , $\max_{r\in[m]} w_{r,1}^{(0)} \le \sigma_0\sqrt{2\log\left(\frac{16m}{\delta}\right)} \le \frac{1}{2}\frac{C_0}{\alpha m^{1/3}}$, and the lower bound in \eqref{eq:eventE-init-signal-max} guarantees the existence of a filter $r \in [m]$ with $w_{r,1}^{(0)} \ge \frac{1}{2}\sigma_0$. We track the evolution of this filter.

Applying \Cref{lem:tpm-1} with the target threshold $\frac{C_0}{\alpha m^{1/3}}$, we obtain
\begin{equation}
T_0 \le \frac{3}{B w_{r,1}^{(0)}} + \frac{8A}{B} \left\lceil \frac{\log \big(\frac{C_0}{\alpha m^{1/3}}/w_{r,1}^{(0)}\big)}{\log 2}\right\rceil.
\end{equation}

Substituting $w_{r,1}^{(0)} \ge \frac{1}{2} \sigma_0$ and the definitions of $A$ and $B$, we expand the bound:
\begin{equation}
\begin{aligned}
T_0 &\le \frac{2(1 + \exp(2C_0^3))}{\eta (\alpha-\epsilon )^3  (1-p_{\mathrm{un}}) \sigma_0} + \frac{8(1 + \exp(2C_0^3)) \alpha^3}{(\alpha-\epsilon )^3 (1-p_{\mathrm{un}})} \left\lceil \log_2 \left( \frac{2 C_0}{\alpha \sigma_0 m^{1/3}} \right) \right\rceil \\
&= \frac{2(1 + \exp(2C_0^3))}{(\alpha-\epsilon )^3 (1-p_{\mathrm{un}})} \left(\frac{1}{\eta \sigma_0} + 4\alpha^3  \left\lceil \log_2 \left( \frac{2 C_0}{\alpha \sigma_0 m^{1/3}} \right) \right\rceil\right)\\
&\le \frac{2(1 + \exp(2C_0^3))}{(\alpha-\epsilon )^3 (1-p_{\mathrm{un}})} \left(\frac{2}{\eta \sigma_0}\right)\\
&\le \frac{5\exp(2C_0^3)}{\eta (\alpha-\epsilon )^3 (1-p_{\mathrm{un}}) \sigma_0},
\end{aligned}
\end{equation}
where the second inequality follows from \cref{cond:eta}, and the final inequality holds for a sufficiently large absolute constant $C_0$.

For the lower bound on the hitting time, consider the minimum time $T_{\text{dbl}}$ required simply to double the initial weight $w_{r,1}^{(0)}$.
For any step $t$ before this doubling occurs, the weight satisfies $w_{r,1}^{(t)} \le 2w_{r,1}^{(0)}$. Consequently, the update magnitude is strictly upper-bounded by:
\begin{equation}
w_{r,1}^{(t+1)} - w_{r,1}^{(t)} \le A (2w_{r,1}^{(0)})^2 = 4A (w_{r,1}^{(0)})^2.
\end{equation}
To traverse the distance of $w_{r,1}^{(0)}$ (from $w_{r,1}^{(0)}$ to $2w_{r,1}^{(0)}$) subject to this maximum growth rate, the required steps must satisfy:
\begin{equation}
T_0 \ge T_{\text{dbl}} \ge \frac{w_{r,1}^{(0)}}{4A (w_{r,1}^{(0)})^2} = \frac{1}{4A w_{r,1}^{(0)}}.
\end{equation}
On the event $\mathcal E$, the upper bound in \eqref{eq:eventE-init-signal-max} gives
$w_{r,1}^{(0)} \le \sigma_0 \sqrt{2\log\left(\frac{16m}{\delta}\right)}$. Substituting this along with the definition $A = 3\eta\alpha^3$, we obtain:
\begin{equation}
T_0 \ge \frac{1}{12\sqrt{2} \eta \alpha^3 \sigma_0 \sqrt{\log\left(\frac{16m}{\delta}\right)}}.
\end{equation}
Combining the upper and lower bounds proves the claimed estimates for \(T_0\).
\end{proof}

\begin{lemma}[Early-Phase Noise Coefficient Bound]
\label{lem:noise_stability_phase1}
Let
\begin{equation}
\hat T_0
\coloneqq
\frac{5\exp(2C_0^3)}{\eta (\alpha-\epsilon)^3 (1-p_{\mathrm{un}})\sigma_0}.
\end{equation}
On the event $\mathcal E$ defined in \Cref{lem:event_E}, for any iteration $t \le \hat T_0$, the noise coefficient satisfies
\begin{equation}
\max_{i,j,r}\rho_{i,j,r}^{(t)}
\le
100 \exp(2C_0^3) \log\left(\frac{16NmP}{\delta}\right)
\cdot
\frac{\sigma_0\sigma_n^2 d}{N(\alpha-\epsilon)^3(1-p_{\mathrm{un}})}.
\end{equation}
In particular, if \(T_0\) denotes the hitting time from \Cref{lem:signal_time}, then \(T_0\le \hat T_0\), so the same bound holds for all \(t\le T_0\).
\end{lemma}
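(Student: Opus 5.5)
We bound the per-step increment of each noise coefficient and multiply by the horizon $\hat T_0$. By \Cref{lem:noise_update_rule},
\begin{equation}
\rho_{i,j,r}^{(t+1)}-\rho_{i,j,r}^{(t)}
=
\frac{3\eta}{N}\psi(\cdot)\langle \mathbf w_r^{(t)},\mathbf x_{i,j}\rangle^2
\le
\frac{3\eta}{N}\langle \mathbf w_r^{(t)},\mathbf x_{i,j}\rangle^2 ,
\end{equation}
since $\psi\le 1$. So it suffices to show that, for all $t\le \hat T_0$, every noise inner product satisfies $|\langle \mathbf w_r^{(t)},\mathbf x_{i,j}\rangle|\le 3\sigma_0\sigma_n\sqrt{d\log(16NmP/\delta)}$. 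Then the increment is at most $\frac{27\eta}{N}\sigma_0^2\sigma_n^2 d\log(16NmP/\delta)$. Summing over $t\le \hat T_0=\frac{5\exp(2C_0^3)}{\eta(\alpha-\epsilon)^3(1-p_{\mathrm{un}})\sigma_0}$ gives a bound of order
\begin{equation}
\frac{135\,\exp(2C_0^3)\,\sigma_0\sigma_n^2 d\log(16NmP/\delta)}{N(\alpha-\epsilon)^3(1-p_{\mathrm{un}})}.
\end{equation}
This has the claimed form up to the constant $100$, which we would adjust. If needed, we would use a slightly tighter inner-product bound, e.g.\ $2.5\sigma_0\sigma_n\sqrt{\cdots}$, or simply absorb the constant, noting that the statement's constant can be matched by tightening the factor $3$ to something like $2+o(1)$.

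The inner-product bound would be proved by induction on $t$, using the decomposition in \Cref{lem:noise_update_rule}. Define the candidate bound $R:=100\exp(2C_0^3)\log(16NmP/\delta)\,\sigma_0\sigma_n^2 d/(N(\alpha-\epsilon)^3(1-p_{\mathrm{un}}))$, and suppose $\max\rho^{(\tau)}\le R$ for all $\tau\le t$. Then
\begin{equation}
|\langle \mathbf w_r^{(t)},\mathbf x_{i,j}\rangle|
\le
|\langle \mathbf w_r^{(0)},\mathbf x_{i,j}\rangle|
+ R\|\mathbf x_{i,j}\|_2^2
+ NPR\max|\langle\mathbf x_{k,q},\mathbf x_{i,j}\rangle| .
\end{equation}
By \eqref{eq:eventE-init-noise-ip}, \eqref{eq:eventE-noise-norm} and \eqref{eq:eventE-noise-inner}, this is at most $2\sigma_0\sigma_n\sqrt{d\log(\cdot)}+\tfrac32 R\sigma_n^2 d+2NPR\sigma_n^2\sqrt{d\log(\cdot)}$. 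It remains to check that the last two terms are at most $\sigma_0\sigma_n\sqrt{d\log(\cdot)}$. This reduces to $R\sigma_n^2 d\ll \sigma_0\sigma_n\sqrt d$, i.e.\ $\sigma_n^3 d^{3/2}\log(\cdot)\ll N(\alpha-\epsilon)^3(1-p_{\mathrm{un}})$. This is exactly \cref{cond:alpha}, using $\alpha-\epsilon\ge\alpha/2$ from \cref{cond:epsilon}. The cross term is smaller since $NP\sqrt{d}\ll d$ by \cref{cond:dimension}. Because the increment bound at step $t$ only uses the inner-product bound at step $t$, the induction closes with $\rho^{(t+1)}\le R$.

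The main obstacle is bookkeeping the constants so that the induction hypothesis $\rho\le R$ is self-consistent with the per-step increment bound. We handle this by inducting jointly on the pair of statements $\rho\le R$ and the inner-product bound, with $C$ in \Cref{lemcond:hyper} large enough that the cubic-order gap from \cref{cond:alpha} absorbs all absolute constants. Finally, $T_0\le\hat T_0$ is just the upper bound in \Cref{lem:signal_time}. Its hypothesis (\Cref{ih_noise_stability} before $T_0$) is implied on learnable samples by the inner-product bound proved here, since that bound is smaller than the hypothesized one. Hence the estimate transfers to all $t\le T_0$.
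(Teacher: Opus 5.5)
Your proposal is correct and is essentially the paper's proof. The paper runs the same argument as a first-violation contradiction instead of an explicit induction: it bounds $|\langle \mathbf w_r^{(\tau)},\mathbf x_{i,j}\rangle|$ through the decomposition of \Cref{lem:noise_update_rule} and the event bounds, absorbs the self and cross terms via \cref{cond:alpha,cond:epsilon,cond:dimension}, and sums the increments over at most $\hat T_0$ steps. The constant issue you flag is resolved exactly as you suggest: the paper uses the tightened bound $\sqrt{20/3}\,\sigma_0\sigma_n\sqrt{d\log(16NmP/\delta)}$, so that $3\cdot\tfrac{20}{3}\cdot 5=100$. For $T_0\le\hat T_0$, note that your inner-product bound is only established up to $\hat T_0$, so you cannot invoke the hypothesis on all of $[0,T_0)$ directly; argue by contradiction instead (if $T_0>\hat T_0$, the signal recursion holds on all of $[0,\hat T_0]$, and \Cref{lem:tpm-1} forces the hit by $\hat T_0$).
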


\begin{proof}[Proof of \Cref{lem:noise_stability_phase1}]
Define the threshold
\begin{equation}
\rho_{\mathrm{th}} \coloneqq 100 \exp(2C_0^3) \log\left(\frac{16NmP}{\delta}\right) \cdot \frac{\sigma_0\sigma_n^2 d}{N(\alpha-\epsilon )^3(1-p_{\mathrm{un}})}.
\end{equation}
Assume for contradiction that there exists an iteration $t \le \hat T_0$ such that $t$ is the first iteration where
\begin{equation}
\max_{i,j,r} \rho_{i,j,r}^{(t)} > \rho_{\mathrm{th}}.
\end{equation}
Then, by the minimality of $t$, we have $\rho_{i,j,r}^{(\tau)} \le \rho_{\mathrm{th}}$ for all $\tau < t$ and all $i, j, r$.

Fix any iteration \(\tau<t\). By the decomposition in \Cref{lem:noise_update_rule}, the triangle inequality and the nonnegativity of the noise coefficients give
\begin{equation}
\left| \langle \mathbf{w}_r^{(\tau)}, \mathbf{x}_{i,j} \rangle \right|
\le
\left| \langle \mathbf{w}_r^{(0)}, \mathbf{x}_{i,j} \rangle \right|
+
\rho_{i,j,r}^{(\tau)} \|\mathbf{x}_{i,j}\|_2^2
+
\sum_{\substack{k,q \\ (k,q)\neq(i,j)}} \rho_{k,q,r}^{(\tau)}
\left| \langle \mathbf{x}_{k,q}, \mathbf{x}_{i,j} \rangle \right|.
\end{equation}

On the event $\mathcal E$, the bounds \eqref{eq:eventE-noise-norm}, \eqref{eq:eventE-noise-inner}, and \eqref{eq:eventE-init-noise-ip} imply
\begin{equation}
\|\mathbf{x}_{i,j}\|_2^2 \le \frac{3}{2}\sigma_n^2 d,
\quad
\left| \langle \mathbf{x}_{k,q}, \mathbf{x}_{i,j} \rangle \right|
\le
2\sigma_n^2\sqrt{d\log\left(\frac{16N^2P^2}{\delta}\right)},
\quad\text{and}\quad
\left| \langle \mathbf{w}_r^{(0)}, \mathbf{x}_{i,j} \rangle \right|
\le
2\sigma_0\sigma_n\sqrt{d\log\left(\frac{16NmP}{\delta}\right)}.
\end{equation}
Substituting these bounds together with the assumed upper bound $\rho_{i,j,r}^{(\tau)} \le \rho_{\mathrm{th}}$, we obtain
\begin{equation}
\begin{aligned}
\left| \langle \mathbf{w}_r^{(\tau)}, \mathbf{x}_{i,j} \rangle \right|
&\le
2\sigma_0\sigma_n\sqrt{d\log\left(\frac{16NmP}{\delta}\right)}
+
\rho_{\mathrm{th}} \cdot \frac{3}{2}\sigma_n^2 d
+
2NP\,\rho_{\mathrm{th}}\,\sigma_n^2\sqrt{d\log\left(\frac{16N^2P^2}{\delta}\right)} \\
&=
2\sigma_0\sigma_n\sqrt{d\log\left(\frac{16NmP}{\delta}\right)}
+
\rho_{\mathrm{th}} \sigma_n^2 d
\left(
\frac{3}{2}
+
\frac{2NP}{\sqrt d}\sqrt{\log\left(\frac{16N^2P^2}{\delta}\right)}
\right) \\
&\le
2\sigma_0\sigma_n\sqrt{d\log\left(\frac{16NmP}{\delta}\right)}
+
\left(
100 \exp(2C_0^3)\log\left(\frac{16NmP}{\delta}\right)\cdot \frac{\sigma_0\sigma_n^2 d}{N(\alpha-\epsilon )^3(1-p_{\mathrm{un}})}
\right)\cdot 2\sigma_n^2 d \\
&\le
\sqrt{\frac{20}{3}}\sigma_0\sigma_n\sqrt{d\log\left(\frac{16NmP}{\delta}\right)},
\end{aligned}
\end{equation}
where the second inequality follows from \cref{cond:dimension}, and the final inequality holds under \cref{cond:alpha,cond:epsilon}.

Summing the update rule from \Cref{lem:noise_update_rule} over $\tau = 0, \dots,  t-1$ and using $\psi\left(y_i f_{\mathbf{W}^{(\tau)}}(\tilde{\mathbf{X}}_i^{(\tau)})\right)\le 1$, we obtain
\begin{equation}
\begin{aligned}
\rho_{i,j,r}^{(t)}
=
\sum_{\tau=0}^{t-1}
\frac{3\eta}{N}
\psi\left(y_i f_{\mathbf{W}^{(\tau)}}(\tilde{\mathbf{X}}_i^{(\tau)})\right)
\langle \mathbf{w}_r^{(\tau)}, \mathbf{x}_{i,j} \rangle^2 \le
t \cdot \frac{20\eta}{N}\sigma_0^2\sigma_n^2 d \log\left(\frac{16NmP}{\delta}\right).
\end{aligned}
\end{equation}

Since $t \le \hat T_0$, we have
\begin{equation}
\begin{aligned}
\rho_{i,j,r}^{(t)}
\le
\hat T_0 \cdot \frac{20\eta}{N}\sigma_0^2\sigma_n^2 d \log\left(\frac{16NmP}{\delta}\right) =
\frac{5\exp(2C_0^3)}{\eta (\alpha-\epsilon)^3 (1-p_{\mathrm{un}})\sigma_0}
\cdot
\frac{20\eta}{N}\sigma_0^2\sigma_n^2 d \log\left(\frac{16NmP}{\delta}\right) =
\rho_{\mathrm{th}}.
\end{aligned}
\end{equation}
This contradicts the choice of $t$. Therefore, for all $i,j,r$ and all $t \le \hat T_0$,
\begin{equation}
\rho_{i,j,r}^{(t)}
\le
100 \exp(2C_0^3)\log\left(\frac{16NmP}{\delta}\right)
\cdot
\frac{\sigma_0\sigma_n^2 d}{N(\alpha-\epsilon)^3(1-p_{\mathrm{un}})}.
\end{equation}
\end{proof}

\subsection{Post-$T_0$ Control on Learnable Samples}
Once the signal reaches its target scale at time \(T_0\), learnable samples no longer drive large updates. We show that the signal weights remain bounded and that the cumulative gradient contribution from learnable samples is controlled.

\begin{lemma}[Signal Weight Bound]
\label{lem:signal_weight_bound}
On the event $\mathcal E$ defined in \Cref{lem:event_E}, suppose that \Cref{ih_noise_stability} holds for all iterations $\tau \le t$. Then, for any iteration $t \le T$ and all filters $r \in [m]$, the signal component is uniformly bounded by
\begin{equation}
w_{r,1}^{(t)} \le \frac{3 \log^{1/3} T}{\alpha}.
\end{equation}
\end{lemma}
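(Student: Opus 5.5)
We will prove the bound by combining the monotone signal recursion of \Cref{lem:signal_update_bounds} with a self-limiting effect: once $w_{r,1}^{(t)}$ is of order $\alpha^{-1}$, the learnable-sample margins become large, so the gradient weights $\psi$ collapse exponentially. Since every $w_{r,1}^{(t)}$ is non-decreasing, it suffices to control the first time any filter crosses a threshold. The argument is a first-exit contradiction. Suppose some filter exceeds $3\log^{1/3}T/\alpha$ at a first time $t^\star\le T$, and consider the last time $t_1<t^\star$ at which $\max_r w_{r,1}^{(t_1)}\le 2\log^{1/3}T/\alpha$.

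The first step is a lower bound on the margin of every learnable sample at any $\tau\in[t_1,t^\star)$. As in the proof of \Cref{lem:learn_sigmoid_bound}, we decompose
\[
y_if_{\mathbf W^{(\tau)}}(\tilde{\mathbf X}_i^{(\tau)})=\bigl(y_i\langle\mathbf e_1,\tilde{\mathbf x}_{i,s(\mathbf X_i)}^{(\tau)}\rangle\bigr)^3\sum_r (w_{r,1}^{(\tau)})^3+\sum_{r}\sum_{j\neq s(\mathbf X_i)} y_i\langle\mathbf w_r^{(\tau)},\mathbf x_{i,j}\rangle^3 .
\]
The signal factor is at least $(\alpha-\epsilon)^3\ge\alpha^3/2$ by \cref{cond:epsilon}. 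Signal weights of negative filters only contribute nonpositive cubes, and $w_{r,1}^{(\tau)}\ge w_{r,1}^{(0)}\ge-\tilde O(\sigma_0)$ by \eqref{eq:eventE-init-e1}, so those contributions are negligible. Hence the signal term is at least $\tfrac12(2\log^{1/3}T)^3-o(1)\ge 3\log T$. By \Cref{ih_noise_stability}, the noise term is bounded by $2C^{-1}$, exactly as in the proof of \Cref{lem:learn_sigmoid_bound}. It follows that $\psi\le e^{-3\log T+1}\le eT^{-3}$ on every learnable sample.

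The second step integrates the update. By the upper bound in \Cref{lem:signal_update_bounds}, each step satisfies
\[
w_{r,1}^{(\tau+1)}-w_{r,1}^{(\tau)}\le 3\eta\alpha^3 (w_{r,1}^{(\tau)})^2\cdot eT^{-3}.
\]
While $w_{r,1}^{(\tau)}\le 3\log^{1/3}T/\alpha$, this increment is at most $O(\eta\alpha\log^{2/3}T\cdot T^{-3})$. Summing over at most $T$ steps gives a total increase of at most $O(\eta\alpha T^{-2}\log^{2/3}T)$. By \cref{cond:eta} we have $\eta\le 1/\alpha^2$, so this total is far below $\log^{1/3}T/\alpha$.

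There is one technical point to handle first: a single step could overshoot the level $2\log^{1/3}T/\alpha$ at time $t_1$. Using $\psi\le1$ together with \cref{cond:eta}, one step raises $w$ from at most $2\log^{1/3}T/\alpha$ by at most $3\eta\alpha^3(2\log^{1/3}T/\alpha)^2\le 12\eta\alpha\log^{2/3}T\ll\log^{1/3}T/\alpha$. So at time $t_1+1$ the weight is still at most roughly $2.1\log^{1/3}T/\alpha$. Combining this with the integrated bound, the weight cannot reach $3\log^{1/3}T/\alpha$ by time $t^\star$, which is a contradiction.

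The main obstacle is the margin lower bound in the first step: it must hold for every learnable sample and must not depend on which filter attains the maximum. I would handle this by tracking the filter attaining the maximum and noting that the remaining filters contribute either nonnegatively or at most $m\,\tilde O(\sigma_0^3\alpha^3)$, which is $o(1)$ by \cref{cond:sigma_0}.
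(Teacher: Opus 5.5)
Your proposal is correct and follows essentially the same route as the paper: a first-crossing contradiction, a one-step overshoot bound via \cref{cond:eta}, a margin of at least $3\log T$ (so $\psi\le O(T^{-3})$) while the signal sits above an intermediate level, and a negligible cumulative increase. The only differences are cosmetic. You anchor at level $2\log^{1/3}T/\alpha$ using $\max_r w_{r,1}$, whereas the paper uses level $\tfrac32\log^{1/3}T/\alpha$ for the crossing filter. Also, your margin bound should be stated for $\tau\in(t_1,t^\star)$ rather than $[t_1,t^\star)$; your separate $\psi\le 1$ treatment of the step at $t_1$ already covers that step.
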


\begin{proof}[Proof of \Cref{lem:signal_weight_bound}]
Fix any iteration $t \le T$ and assume that \Cref{ih_noise_stability} holds for all iterations up to $t$.
Define the threshold 
\begin{equation}
W_{\mathrm{th}} \coloneqq \frac{3 \log^{1/3} T}{\alpha}.
\end{equation} 
Assume for contradiction that there exists an iteration $t \le T$ such that $t$ is the first iteration where
\begin{equation}
w_{r,1}^{(t)} > W_{\mathrm{th}}
\end{equation}
for some filter $r \in [m]$. Then, by the minimality of $t$, we have $w_{r',1}^{(\tau)} \le W_{\mathrm{th}}$ for all $\tau < t$ and all $r' \in [m]$.

On the event $\mathcal E$, the upper bound in \eqref{eq:eventE-init-signal-max} gives
$
w_{r,1}^{(0)} \le \sigma_0 \sqrt{2\log\left(\frac{16m}{\delta}\right)}
$.
By \cref{cond:sigma_0}, this is strictly less than $0.5 W_{\mathrm{th}}$. Thus, there exists a last iteration $t_0 < t$ such that
$
w_{r,1}^{(t_0)} \le 0.5 W_{\mathrm{th}}
$.
Consequently, for all $\tau \in [t_0+1, t-1]$,
\begin{equation}
w_{r,1}^{(\tau)} > 0.5 W_{\mathrm{th}}.
\end{equation}

At iteration $t_0$, \Cref{lem:signal_update_bounds} yields
\begin{equation}
w_{r,1}^{(t_0+1)} - w_{r,1}^{(t_0)} \le 3\eta\alpha^3 \left( w_{r,1}^{(t_0)} \right)^2.
\end{equation}
Using $w_{r,1}^{(t_0)} \le 0.5 W_{\mathrm{th}}$, we obtain
\begin{equation}
w_{r,1}^{(t_0+1)} - w_{r,1}^{(t_0)}
\le 3\eta\alpha^3 \left( 0.5 W_{\mathrm{th}} \right)^2
= \frac{27}{4} \eta \alpha \log^{2/3} T.
\end{equation}
By \cref{cond:eta}, the right-hand side is strictly bounded by $0.1 W_{\mathrm{th}}$. Therefore,
\begin{equation}
w_{r,1}^{(t_0+1)} \le 0.5 W_{\mathrm{th}} + 0.1 W_{\mathrm{th}} = 0.6 W_{\mathrm{th}}.
\end{equation}

Now fix any $\tau \in [t_0+1, t-1]$ and any learnable sample $i \in \learnset$. Since $w_{r,1}^{(\tau)} > 0.5 W_{\mathrm{th}}$, we lower-bound the margin $y_i f_{\mathbf{W}^{(\tau)}}(\tilde{\mathbf{X}}_i^{(\tau)})$. By \Cref{lem:signal_update_bounds}, the signal coordinates are monotonically non-decreasing, so
$
w_{r',1}^{(\tau)} \ge w_{r',1}^{(0)}
$
for all $r' \in [m]$. On the event $\mathcal E$, \eqref{eq:eventE-init-e1} implies
\begin{equation}
w_{r',1}^{(0)} \ge -\sigma_0 \sqrt{2\log\left(\frac{16m}{\delta}\right)}
\end{equation}
for all $r' \in [m]$.

Moreover, by \Cref{ih_noise_stability}, we have
\begin{equation}
\left|\langle \mathbf{w}_{r'}^{(\tau)}, \mathbf{x}_{i,j} \rangle \right|
\le
3\sigma_0\sigma_n\sqrt{d\log\left(\frac{16NmP}{\delta}\right)}
+
\frac{21 p_{\mathrm{un}} N P \log^{1/3}T}{\sqrt d}
\sqrt{\log\left(\frac{16N^2P^2}{\delta}\right)}
\end{equation}
for every $r' \in [m]$ and every $j\neq s(\mathbf X_i)$. Therefore,
\begin{equation}
\begin{aligned}
\left|
\sum_{r'\in[m]}\sum_{j\neq s(\mathbf{X}_i)}
y_i\langle \mathbf{w}_{r'}^{(\tau)}, \mathbf{x}_{i,j} \rangle^3
\right|
&\le
mP
\left(
3\sigma_0\sigma_n\sqrt{d\log\left(\frac{16NmP}{\delta}\right)}
+
\frac{21 p_{\mathrm{un}} N P \log^{1/3}T}{\sqrt d}
\sqrt{\log\left(\frac{16N^2P^2}{\delta}\right)}
\right)^3.
\end{aligned}
\end{equation}
Using $(a+b)^3 \le 4(a^3+b^3)$, we obtain
\begin{equation}
\begin{aligned}
\left|
\sum_{r'\in[m]}\sum_{j\neq s(\mathbf{X}_i)}
y_i\langle \mathbf{w}_{r'}^{(\tau)}, \mathbf{x}_{i,j} \rangle^3
\right|
&\le 108\,mP\,\sigma_0^3\sigma_n^3 d^{3/2} \log^{3/2}\left(\frac{16NmP}{\delta}\right) \\
&\quad+ 4\cdot21^3\,mP\, \frac{p_{\mathrm{un}}^3 N^3 P^3 \log T}{d^{3/2}} \log^{3/2}\left(\frac{16N^2P^2}{\delta}\right) \\
&\le C^{-1} + C^{-1} mP \, p_{\mathrm{un}}^3 \log T \\
&\le 2C^{-1},
\end{aligned}
\end{equation}
where the second inequality applies \cref{cond:sigma_0} to the first term and \cref{cond:dimension} to the second, and the third inequality follows from \cref{cond:pun}.

Therefore, the total margin is bounded from below by
\begin{equation}
\begin{aligned}
y_i f_{\mathbf{W}^{(\tau)}}(\tilde{\mathbf{X}}_i^{(\tau)})
&\ge
(\alpha-\epsilon)^3 (w_{r,1}^{(\tau)})^3
+
(\alpha-\epsilon)^3 \sum_{r' \neq r} (w_{r',1}^{(\tau)})^3
+
\sum_{r' \in [m]} \sum_{j \neq s(\mathbf{X}_i)}
y_i \left\langle \mathbf{w}_{r'}^{(\tau)}, \mathbf{x}_{i,j} \right\rangle^3 \\
&\ge
(\alpha-\epsilon)^3 (w_{r,1}^{(\tau)})^3
-
(\alpha-\epsilon)^3 m \sigma_0^3 \left(2\log\left(\frac{16m}{\delta}\right)\right)^{3/2}
-
2C^{-1}.
\end{aligned}
\end{equation}

Using $w_{r,1}^{(\tau)} > 0.5 W_{\mathrm{th}}$, we further obtain
\begin{equation}
y_i f_{\mathbf{W}^{(\tau)}}(\tilde{\mathbf{X}}_i^{(\tau)})
> \frac{27}{8} \left(1-\frac{\epsilon}{\alpha}\right)^3 \log T
- (\alpha-\epsilon)^3 m \sigma_0^3 \left(2\log\left(\frac{16m}{\delta}\right)\right)^{3/2}
- 2C^{-1}.
\end{equation}
Under \cref{cond:epsilon}, the factor $\left(1-\frac{\epsilon}{\alpha}\right)^3$ can be strictly lower-bounded by $0.9$. Substituting this yields:
\begin{equation}
y_i f_{\mathbf{W}^{(\tau)}}(\tilde{\mathbf{X}}_i^{(\tau)})
> 3.0375 \log T
- \alpha^3 m \sigma_0^3 \left(2\log\left(\frac{16m}{\delta}\right)\right)^{3/2}
- 2C^{-1}.
\end{equation}
By \cref{cond:sigma_0}, the negative initialization term is negligibly small, and the noise term $2C^{-1}$ is small. Since the excess margin of $0.0375 \log T$ strictly dominates these terms, we can absorb them to establish a lower bound:
\begin{equation}
y_i f_{\mathbf{W}^{(\tau)}}(\tilde{\mathbf{X}}_i^{(\tau)}) \ge 3 \log T.
\end{equation}
Using \(\psi(z) \le \exp(-z)\), it follows that
\begin{equation}
\psi \left( y_i f_{\mathbf{W}^{(\tau)}}(\tilde{\mathbf{X}}_i^{(\tau)}) \right) \le \exp(-3 \log T) = T^{-3}.
\end{equation}
Moreover, because \(\tau < t\), the choice of \(t\) implies
\begin{equation}
w_{r',1}^{(\tau)} \le W_{\mathrm{th}}
\end{equation}
for all $r' \in [m]$. Summing the update rule over $\tau \in [t_0+1, t-1]$, \Cref{lem:signal_update_bounds} gives
\begin{equation}
\sum_{\tau=t_0+1}^{t-1} \left( w_{r,1}^{(\tau+1)} - w_{r,1}^{(\tau)} \right)
\le T \cdot 3\eta\alpha^3 (W_{\mathrm{th}})^2 \cdot T^{-3}.
\end{equation}
Substituting the definition of $W_{\mathrm{th}}$,
\begin{equation}
\sum_{\tau=t_0+1}^{t-1} \left( w_{r,1}^{(\tau+1)} - w_{r,1}^{(\tau)} \right)
\le \frac{27 \eta \alpha \log^{2/3} T}{T^2}.
\end{equation}
Under \cref{cond:eta}, the right-hand side is strictly bounded by $0.1 W_{\mathrm{th}}$. Hence,
\begin{equation}
w_{r,1}^{(t)}
= w_{r,1}^{(t_0+1)} + \sum_{\tau=t_0+1}^{t-1} \left( w_{r,1}^{(\tau+1)} - w_{r,1}^{(\tau)} \right)
\le 0.6 W_{\mathrm{th}} + 0.1 W_{\mathrm{th}}
= 0.7 W_{\mathrm{th}}.
\end{equation}
This contradicts the choice of $t$.
Therefore, for all $r \in [m]$ and all $t \le T$,
\begin{equation}
w_{r,1}^{(t)} \le \frac{3 \log^{1/3} T}{\alpha}.
\end{equation}
\end{proof}

\begin{lemma}[Gradient Decay on Learnable Samples]
\label{lem:conv_learn}
On the event $\mathcal E$ defined in \Cref{lem:event_E}, suppose that \Cref{ih_noise_stability} holds for all iterations $\tau \le t$. Then, for any iteration $t \in [T_0+1, T]$, where $T_0$ is defined in \Cref{lem:signal_time}, and for any learnable sample $i\in\learnset$, we have
\begin{equation}
\sum_{\tau=T_0}^{t-1}
\psi\left(y_i f_{\mathbf{W}^{(\tau)}}(\tilde{\mathbf{X}}_i^{(\tau)})\right)
\le
\frac{2m^{2/3}\log^{1/3}T}{C_0^2\left(1-\frac{\epsilon}{\alpha}\right)^3\eta\alpha^2}.
\end{equation}
Furthermore, this implies a sublinear decay at the current iteration $t$:
\begin{equation}
\psi\left(y_i f_{\mathbf{W}^{(t)}}(\tilde{\mathbf{X}}_i^{(t)})\right)
\le
\frac{4m^{2/3}\log^{1/3}T}{C_0^2\left(1-\frac{\epsilon}{\alpha}\right)^3\eta\alpha^2(t-T_0)}.
\end{equation}
\end{lemma}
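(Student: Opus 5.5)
We will use the signal-weight update itself as a potential: the cumulative learnable-sample gradients are charged against the total growth of the signal coordinate, which \Cref{lem:signal_weight_bound} caps. Let \(r^\ast\) be a filter attaining the maximum in the definition of \(T_0\), so \(w_{r^\ast,1}^{(T_0)}>C_0/(\alpha m^{1/3})\). By the monotonicity in \Cref{lem:signal_update_bounds}, \(w_{r^\ast,1}^{(\tau)}\ge C_0/(\alpha m^{1/3})\) for all \(\tau\ge T_0\).

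Summing the lower bound of \Cref{lem:signal_update_bounds} over \(\tau=T_0,\dots,t-1\) gives
\[
w_{r^\ast,1}^{(t)}-w_{r^\ast,1}^{(T_0)}\ \ge\ \frac{3\eta(\alpha-\epsilon)^3}{N}\cdot\frac{C_0^2}{\alpha^2m^{2/3}}\sum_{\tau=T_0}^{t-1}\sum_{k\in\learnset}\psi\bigl(y_kf_{\mathbf W^{(\tau)}}(\tilde{\mathbf X}_k^{(\tau)})\bigr).
\]
The left side is at most \(3\log^{1/3}T/\alpha\) by \Cref{lem:signal_weight_bound}, which applies under the standing hypothesis.

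The stated bound is per-sample, without the \(1/N\) averaging, so summing over \(\learnset\) alone is not enough. The key step is to show that the \(\psi\) values of different learnable samples are uniformly comparable. In the margin decomposition used in \Cref{lem:learn_sigmoid_bound}, every learnable sample has the same signal contribution: the adversarial signal patch is exactly \(y_k(\alpha-\epsilon)\mathbf u\), since the adversary minimizes an odd increasing cubic of the signal coordinate whenever \(\sum_r (w_{r,1})^3>0\). The remaining noise residual is at most \(2C^{-1}\) in absolute value by \Cref{ih_noise_stability}, with the same computation as in \Cref{lem:signal_weight_bound}. Since \(\psi(z+a)\ge e^{-|a|}\psi(z)\), any two learnable samples have \(\psi\) values within a factor \(e^{4C^{-1}}\le 2\) of each other. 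Hence
\[
\sum_{k\in\learnset}\psi_k^{(\tau)}\ \ge\ \tfrac12|\learnset|\,\psi_i^{(\tau)},\qquad |\learnset|=(1-p_{\mathrm{un}})N .
\]
Substituting and rearranging gives
\[
\sum_{\tau=T_0}^{t-1}\psi_i^{(\tau)}\ \le\ \frac{2m^{2/3}\log^{1/3}T}{C_0^2(1-\epsilon/\alpha)^3\eta\alpha^2(1-p_{\mathrm{un}})}.
\]
The factor \((1-p_{\mathrm{un}})^{-1}\le 1+o(1)\) by \cref{cond:pun}. We will tighten the comparison constant so that this factor is absorbed, or else accept a harmless constant change.

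For the pointwise decay, we show that \(\psi_i^{(\tau)}\) is essentially non-increasing in \(\tau\) for \(\tau\ge T_0\). The signal part of the margin, \((\alpha-\epsilon)^3\sum_r (w_{r,1}^{(\tau)})^3\), is non-decreasing by monotonicity of every \(w_{r,1}\), and the noise residual varies only within \(\pm2C^{-1}\). Therefore \(\psi_i^{(t)}\le 2\psi_i^{(\tau)}\) for every \(\tau\in[T_0,t]\). Averaging over these \(t-T_0\) indices gives
\[
\psi_i^{(t)}\ \le\ \frac{2}{t-T_0}\sum_{\tau=T_0}^{t-1}\psi_i^{(\tau)},
\]
which is the second claim with the constant \(4\).

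The main obstacle is the comparison step, both across samples and across time. It relies on the noise residual being uniformly \(O(C^{-1})\) under \Cref{ih_noise_stability}. It also requires the sign condition \(\sum_r (w_{r,1})^3>0\) after \(T_0\), so that the adversarial signal coordinate is exactly \(\alpha-\epsilon\). This sign condition follows because \(r^\ast\) contributes at least \(C_0^3/(\alpha^3 m)\), which dominates the \(O(m\sigma_0^3\log^{3/2})\) negative initialization mass by \cref{cond:sigma_0}. Tracking constants carefully enough to keep the factor \(2\) in both comparisons is the only delicate bookkeeping.
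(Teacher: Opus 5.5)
Your proposal is correct and follows the paper's proof. Both charge the cumulative learnable-sample $\psi$'s to the signal growth, which \Cref{lem:signal_weight_bound} caps. Both then pass to a single sample and to the current iteration using the $\pm 2C^{-1}$ noise residual together with monotonicity of the $w_{r,1}$. The paper also absorbs the $(1-p_{\mathrm{un}})^{-1}$ factor exactly as you suggest, via $e^{4C^{-1}}/(1-p_{\mathrm{un}})\le 2$. The only cosmetic differences are these: you track a fixed filter $r^\ast$ where the paper telescopes $\max_r w_{r,1}$, and you justify that the adversarial signal coordinate is exactly $\alpha-\epsilon$, which the paper takes directly from its setup.
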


\begin{proof}[Proof of \Cref{lem:conv_learn}]
By \Cref{lem:signal_time}, for every $\tau\ge T_0$ we have
\begin{equation}
\max_{r\in[m]} w_{r,1}^{(\tau)} \ge \frac{C_0}{\alpha m^{1/3}}.
\end{equation}
Applying \Cref{lem:signal_update_bounds}, we obtain
\begin{equation}
\begin{aligned}
\max_{r\in[m]} w_{r,1}^{(\tau+1)}-\max_{r\in[m]} w_{r,1}^{(\tau)}
&\ge
\frac{3\eta(\alpha-\epsilon)^3}{N}
\left(\max_{r\in[m]} w_{r,1}^{(\tau)}\right)^2
\sum_{i\in\learnset}
\psi\left(y_i f_{\mathbf{W}^{(\tau)}}(\tilde{\mathbf{X}}_i^{(\tau)})\right)\\
&\ge
\frac{3\eta\alpha C_0^2}{m^{2/3}}
\left(1-\frac{\epsilon}{\alpha}\right)^3
\frac{1}{N}
\sum_{i\in\learnset}
\psi\left(y_i f_{\mathbf{W}^{(\tau)}}(\tilde{\mathbf{X}}_i^{(\tau)})\right).
\end{aligned}
\end{equation}

Rearranging gives
\begin{equation}
\frac{1}{N}
\sum_{i\in\learnset}
\psi\left(y_i f_{\mathbf{W}^{(\tau)}}(\tilde{\mathbf{X}}_i^{(\tau)})\right)
\le
\frac{m^{2/3}}{3C_0^2\left(1-\frac{\epsilon}{\alpha}\right)^3\eta\alpha}
\left(\max_{r\in[m]} w_{r,1}^{(\tau+1)}-\max_{r\in[m]} w_{r,1}^{(\tau)}\right).
\end{equation}
Summing from $\tau=T_0$ to $t-1$ yields
\begin{equation}
\begin{aligned}
\sum_{\tau=T_0}^{t-1}
\frac{1}{N}\sum_{i\in\learnset}
\psi\left(y_i f_{\mathbf{W}^{(\tau)}}(\tilde{\mathbf{X}}_i^{(\tau)})\right)
&\le
\frac{m^{2/3}}{3C_0^2\left(1-\frac{\epsilon}{\alpha}\right)^3\eta\alpha}
\sum_{\tau=T_0}^{t-1}
\left(\max_{r\in[m]} w_{r,1}^{(\tau+1)}-\max_{r\in[m]} w_{r,1}^{(\tau)}\right)\\
&=
\frac{m^{2/3}}{3C_0^2\left(1-\frac{\epsilon}{\alpha}\right)^3\eta\alpha}
\left(\max_{r\in[m]} w_{r,1}^{(t)}-\max_{r\in[m]} w_{r,1}^{(T_0)}\right).
\end{aligned}
\end{equation}

Now invoke \Cref{lem:signal_weight_bound} to obtain
\begin{equation}
\max_{r\in[m]} w_{r,1}^{(t)} \le \frac{3\log^{1/3}T}{\alpha}.
\end{equation}
Since $\max_{r\in[m]} w_{r,1}^{(T_0)}\ge 0$, it follows that
\begin{equation}
\sum_{\tau=T_0}^{t-1}
\frac{1}{N}\sum_{i\in\learnset}
\psi\left(y_i f_{\mathbf{W}^{(\tau)}}(\tilde{\mathbf{X}}_i^{(\tau)})\right)
\le
\frac{m^{2/3}\log^{1/3}T}{C_0^2\left(1-\frac{\epsilon}{\alpha}\right)^3\eta\alpha^2}.
\end{equation}

Next, we convert the cumulative average bound into an individual-sample bound. By \Cref{ih_noise_stability}, for every $r \in [m]$, every $j\neq s(\mathbf X_i)$, and every $\tau\in[T_0,t]$, we have
\begin{equation}
\left|\langle \mathbf{w}_{r}^{(\tau)}, \mathbf{x}_{i,j} \rangle \right|
\le
3\sigma_0\sigma_n\sqrt{d\log\left(\frac{16NmP}{\delta}\right)}
+
\frac{21 p_{\mathrm{un}} N P \log^{1/3}T}{\sqrt d}
\sqrt{\log\left(\frac{16N^2P^2}{\delta}\right)}.
\end{equation}
Therefore,
\begin{equation}
\begin{aligned}
\left|
\sum_{r\in[m]}\sum_{j\neq s(\mathbf{X}_i)}
y_i\langle \mathbf{w}_r^{(\tau)}, \mathbf{x}_{i,j} \rangle^3
\right|
&\le
mP
\left(
3\sigma_0\sigma_n\sqrt{d\log\left(\frac{16NmP}{\delta}\right)}
+
\frac{21 p_{\mathrm{un}} N P \log^{1/3}T}{\sqrt d}
\sqrt{\log\left(\frac{16N^2P^2}{\delta}\right)}
\right)^3.
\end{aligned}
\end{equation}
Using $(a+b)^3 \le 4(a^3+b^3)$, we obtain
\begin{equation}
\begin{aligned}
\left| \sum_{r\in[m]}\sum_{j\neq s(\mathbf{X}_i)} y_i\langle \mathbf{w}_r^{(\tau)}, \mathbf{x}_{i,j} \rangle^3 \right|
&\le 108\,mP\,\sigma_0^3\sigma_n^3 d^{3/2} \log^{3/2}\left(\frac{16NmP}{\delta}\right) \\
&\quad+ 4\cdot21^3\,mP\, \frac{p_{\mathrm{un}}^3 N^3 P^3 \log T}{d^{3/2}} \log^{3/2}\left(\frac{16N^2P^2}{\delta}\right) \\
&\le C^{-1} + C^{-1} mP \, p_{\mathrm{un}}^3 \log T \\
&\le 2C^{-1},
\end{aligned}
\end{equation}
where the second inequality follows from \cref{cond:sigma_0,cond:dimension}, and the third inequality follows from \cref{cond:pun}.

Hence, for every $\tau\in[T_0,t]$ and every learnable sample $i\in\learnset$,
\begin{equation}
\label{eq:conv_learn_margin}
(\alpha-\epsilon)^3 \sum_{r\in[m]} (w_{r,1}^{(\tau)})^3 - 2C^{-1}
\le
y_i f_{\mathbf{W}^{(\tau)}}(\tilde{\mathbf{X}}_i^{(\tau)})
\le
(\alpha-\epsilon)^3 \sum_{r\in[m]} (w_{r,1}^{(\tau)})^3 + 2C^{-1}.
\end{equation}
Therefore, for any two learnable samples $i,k\in\learnset$,
\begin{equation}
\frac{
\psi\left(y_i f_{\mathbf{W}^{(\tau)}}(\tilde{\mathbf{X}}_i^{(\tau)})\right)
}{
\psi\left(y_k f_{\mathbf{W}^{(\tau)}}(\tilde{\mathbf{X}}_k^{(\tau)})\right)
}
\le
\exp(4C^{-1}).
\end{equation}
Averaging over $k\in\learnset$ gives
\begin{equation}
\begin{aligned}
\psi\left(y_i f_{\mathbf{W}^{(\tau)}}(\tilde{\mathbf{X}}_i^{(\tau)})\right)
&\le
\frac{\exp(4C^{-1})}{1-p_{\mathrm{un}}}
\left(
\frac{1}{N}\sum_{k\in\learnset}
\psi\left(y_k f_{\mathbf{W}^{(\tau)}}(\tilde{\mathbf{X}}_k^{(\tau)})\right)
\right)\\
&\le2
\left(
\frac{1}{N}\sum_{k\in\learnset}
\psi\left(y_k f_{\mathbf{W}^{(\tau)}}(\tilde{\mathbf{X}}_k^{(\tau)})\right)
\right),
\end{aligned}
\end{equation}
where the second inequality follows from \cref{cond:pun} by taking $C$ sufficiently large.

Summing this inequality over $\tau=T_0,\dots,t-1$ and using the cumulative average bound established above, we conclude that
\begin{equation}
\sum_{\tau=T_0}^{t-1}
\psi\left(y_i f_{\mathbf{W}^{(\tau)}}(\tilde{\mathbf{X}}_i^{(\tau)})\right)
\le
2\left(\sum_{\tau=T_0}^{t-1}
\frac{1}{N}\sum_{i\in\learnset}
\psi\left(y_i f_{\mathbf{W}^{(\tau)}}(\tilde{\mathbf{X}}_i^{(\tau)})\right)\right) \le
\frac{2m^{2/3}\log^{1/3}T}{C_0^2\left(1-\frac{\epsilon}{\alpha}\right)^3\eta\alpha^2}.
\end{equation}

To extract the bound at iteration $t$, we use the monotonicity of the signal weights. By \Cref{lem:signal_update_bounds}, $w_{r,1}^{(\tau)} \le w_{r,1}^{(t)}$ for all $\tau \le t$. Thus, the upper bound of the margin on \eqref{eq:conv_learn_margin} for any $\tau \in [T_0, t]$ satisfies
\begin{equation}
y_i f_{\mathbf{W}^{(\tau)}}(\tilde{\mathbf{X}}_i^{(\tau)}) \le (\alpha-\epsilon)^3 \sum_{r\in[m]} (w_{r,1}^{(t)})^3 + 2C^{-1}.
\end{equation}
Since $\psi$ is monotonically decreasing, this provides a uniform lower bound for the past gradients:
\begin{equation}
\psi\left(y_i f_{\mathbf{W}^{(\tau)}}(\tilde{\mathbf{X}}_i^{(\tau)})\right) \ge \psi\left((\alpha-\epsilon)^3 \sum_{r\in[m]} (w_{r,1}^{(t)})^3 + 2C^{-1}\right).
\end{equation}
Summing this uniform lower bound over $\tau \in [T_0, t-1]$ and applying the cumulative bound derived above, we obtain
\begin{equation}
(t-T_0) \psi\left((\alpha-\epsilon)^3 \sum_{r\in[m]} (w_{r,1}^{(t)})^3 + 2C^{-1}\right)
\le
\frac{2m^{2/3}\log^{1/3}T}{C_0^2\left(1-\frac{\epsilon}{\alpha}\right)^3\eta\alpha^2}.
\end{equation}

At the current iteration $t$, the margin on \eqref{eq:conv_learn_margin} is lower-bounded by
\begin{equation}
y_i f_{\mathbf{W}^{(t)}}(\tilde{\mathbf{X}}_i^{(t)}) \ge (\alpha-\epsilon)^3 \sum_{r\in[m]} (w_{r,1}^{(t)})^3 - 2C^{-1}.
\end{equation}
Using the property $\psi(z_1) \le \exp(z_2 - z_1)\psi(z_2)$, we connect the gradient at $t$ to the uniform lower bound:
\begin{equation}
\psi\left(y_i f_{\mathbf{W}^{(t)}}(\tilde{\mathbf{X}}_i^{(t)})\right) 
\le 
\exp(4C^{-1}) \psi\left((\alpha-\epsilon)^3 \sum_{r\in[m]} (w_{r,1}^{(t)})^3 + 2C^{-1}\right).
\end{equation}
For a sufficiently large constant $C$, we have $\exp(4C^{-1}) \le 2$. Combining these inequalities yields the bound:
\begin{equation}
\psi\left(y_i f_{\mathbf{W}^{(t)}}(\tilde{\mathbf{X}}_i^{(t)})\right) 
\le 
\frac{4m^{2/3}\log^{1/3}T}{C_0^2\left(1-\frac{\epsilon}{\alpha}\right)^3\eta\alpha^2(t-T_0)}.
\end{equation}
\end{proof}

\subsection{Global Noise Control}
We prove a uniform bound on the noise coefficients and then use it to verify the noise stability hypothesis on learnable samples.

\begin{lemma}[Uniform Noise Coefficient Bound]
\label{lem:uniform_noise_bound}
On the event $\mathcal E$ defined in \Cref{lem:event_E}, for any iteration $t \le T$, all filters $r \in [m]$, and all noise patches $(i,j)$, the noise coefficient is uniformly bounded by
\begin{equation}
\rho_{i,j,r}^{(t)} \le \frac{10 \log^{1/3} T}{\sigma_n^2 d} .
\end{equation}
\end{lemma}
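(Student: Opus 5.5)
This bound will follow from a first-exit (threshold) argument, in the same style as \Cref{lem:signal_weight_bound}. The basic mechanism is that once $y_i\langle \mathbf w_r,\mathbf x_{i,j}\rangle$ becomes of order $\log^{1/3}T$, the margin of sample $i$ becomes large. The factor $\psi$ in the update of \Cref{lem:noise_update_rule} then shuts off further growth of $\rho_{i,j,r}$. Set $\rho_{\mathrm{th}}\coloneqq 10\log^{1/3}T/(\sigma_n^2 d)$. Suppose, for contradiction, that $t\le T$ is the first iteration at which $\rho_{i,j,r}^{(t)}>\rho_{\mathrm{th}}$ for some triple $(i,j,r)$. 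By minimality, every coefficient is at most $\rho_{\mathrm{th}}$ at all $\tau<t$.

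\textbf{Step 1 (translate $\rho$ into inner products).} Use the decomposition of \Cref{lem:noise_update_rule} together with \eqref{eq:eventE-noise-norm}, \eqref{eq:eventE-noise-inner} and \eqref{eq:eventE-init-noise-ip}. For $\tau<t$ this gives a two-sided bound
\[
\Bigl|\langle \mathbf w_r^{(\tau)},\mathbf x_{i,j}\rangle - y_i\rho_{i,j,r}^{(\tau)}\|\mathbf x_{i,j}\|_2^2\Bigr| \le \tilde O(\sigma_0\sigma_n\sqrt d)+\rho_{\mathrm{th}}\cdot 2NP\sigma_n^2\sqrt{d\log(16N^2P^2/\delta)}.
\]
By \cref{cond:dimension}, the cross term is at most $\tilde O(NP/\sqrt d)\log^{1/3}T$, which is tiny. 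Two consequences follow. First, $|\langle \mathbf w_r^{(\tau)},\mathbf x_{i,j}\rangle|\le 15\log^{1/3}T+o(1)$, so a single update of $\rho$ is at most $\frac{3\eta}{N}\cdot O(\log^{2/3}T)$, which is $\ll \rho_{\mathrm{th}}$ by \cref{cond:eta}, since $\eta\le 1/(\sigma_n^2 d)$. Second, whenever $\rho_{i,j,r}^{(\tau)}\ge \rho_{\mathrm{th}}/2$, we get $y_i\langle \mathbf w_r^{(\tau)},\mathbf x_{i,j}\rangle\ge \tfrac12\rho_{\mathrm{th}}\sigma_n^2 d - o(1)\ge 2\log^{1/3}T$.

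\textbf{Step 2 (margin is large once half the threshold is crossed).} Let $t_0<t$ be the last iteration with $\rho_{i,j,r}^{(t_0)}\le\rho_{\mathrm{th}}/2$. By the one-step bound, $\rho^{(t_0+1)}\le 0.6\rho_{\mathrm{th}}$. For $\tau\in[t_0+1,t-1]$, the filter $r$ contributes at least $(2\log^{1/3}T)^3=8\log T$ to $y_if_{\mathbf W^{(\tau)}}(\tilde{\mathbf X}_i^{(\tau)})$ through patch $j$.

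The main obstacle is showing that all other contributions cannot cancel this. These are: the other noise patches and filters, which are bounded below by $-mP\,(15\log^{1/3}T+o(1))^3$; the signal patch, which is $\ge -m\alpha^3\sigma_0^3\,\mathrm{polylog}$ by monotonicity of $w_{r,1}$ (\Cref{lem:signal_update_bounds}), \eqref{eq:eventE-init-e1} and \cref{cond:sigma_0} for learnable samples, and zero for unlearnable samples. The crude bound on negative noise terms can exceed $8\log T$. I would handle this in one of two ways. The first is to use the sign structure: other patches' coefficients enter with sign $y_k$, and self-interaction is the dominant term, so negative contributions $y_i\langle\mathbf w_{r'},\mathbf x_{i,j'}\rangle<0$ can only come from initialization plus cross terms, i.e.\ $O(\sigma_0\sigma_n\sqrt d)+o(1)$. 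Indeed, from Step 1 each inner product is $y_i\rho\|\mathbf x\|^2$ plus small error with $\rho\ge0$, so every noise term for sample $i$ is $\ge -o(1)$. The second, as a fallback, is to choose the threshold constants so that the patch-$j$ term dominates. The first route gives $y_if\ge 8\log T - o(1)\ge 3\log T$, hence $\psi\le T^{-3}$.

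\textbf{Step 3 (close).} For $\tau\in[t_0+1,t-1]$, sum the updates of \Cref{lem:noise_update_rule} using $\psi\le T^{-3}$ and $\langle\mathbf w_r,\mathbf x_{i,j}\rangle^2\le O(\log^{2/3}T)$. This gives total growth at most $T\cdot\frac{3\eta}{N}O(\log^{2/3}T)T^{-3}\le 0.1\rho_{\mathrm{th}}$. Hence $\rho^{(t)}\le 0.7\rho_{\mathrm{th}}$, a contradiction. No induction hypothesis is needed, consistent with \Cref{tab:lemma_hierarchy}, which lists only \Cref{lem:noise_update_rule} and the event $\mathcal E$.
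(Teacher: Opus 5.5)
Your proposal is correct and follows essentially the paper's argument. The steps match: the first-exit time for the threshold $10\log^{1/3}T/(\sigma_n^2 d)$, the last crossing $t_0$ of half the threshold with the one-step increment controlled by \cref{cond:eta}, and the sign-structure observation ($\rho\ge 0$, so every non-target term $y_i\langle \mathbf w_{r'}^{(\tau)},\mathbf x_{i,q}\rangle$ is at least $-o(1)$), which gives margin $\ge 3\log T$, $\psi\le T^{-3}$, and hence $\rho_{i,j,r}^{(t)}\le 0.7\rho_{\mathrm{th}}$. One small slip: since $\|\mathbf x_{i,j}\|_2^2\ge \frac12\sigma_n^2 d$, the target term is at least $\frac14\rho_{\mathrm{th}}\sigma_n^2 d - o(1)=\frac52\log^{1/3}T-o(1)$ rather than $\frac12\rho_{\mathrm{th}}\sigma_n^2 d - o(1)$, but your conclusion $\ge 2\log^{1/3}T$ is unaffected.
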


\begin{proof}[Proof of \Cref{lem:uniform_noise_bound}]
Define the threshold
\begin{equation}
\rho_{\mathrm{th}} \coloneqq \frac{10 \log^{1/3} T}{\sigma_n^2 d}.
\end{equation}
Assume for contradiction that $t \le T$ is the first iteration where the bound is violated. Then there exist a specific noise patch $(i,j)$ and a specific filter $r$ such that
\begin{equation}
\rho_{i,j,r}^{(t)} > \rho_{\mathrm{th}}.
\end{equation}
By definition of $t$ being the first such iteration, for all preceding steps $\tau < t$,
\begin{equation}
\max_{k,q,r'} \rho_{k,q,r'}^{(\tau)} \le \rho_{\mathrm{th}}.
\end{equation}

At initialization,
$
\rho_{i,j,r}^{(0)} = 0 < 0.5 \rho_{\mathrm{th}}.
$
Thus, there exists a last iteration $t_0 < t$ such that
$
\rho_{i,j,r}^{(t_0)} \le 0.5 \rho_{\mathrm{th}}.
$
Consequently, for all $\tau \in [t_0+1,t-1]$,
\begin{equation}
\rho_{i,j,r}^{(\tau)} > 0.5 \rho_{\mathrm{th}}.
\end{equation}

At iteration \(t_0\), by the decomposition in \Cref{lem:noise_update_rule}, the triangle inequality and the nonnegativity of the noise coefficients give
\begin{equation}
| \langle \mathbf{w}_r^{(t_0)}, \mathbf{x}_{i,j} \rangle |
\le
| \langle \mathbf{w}_r^{(0)}, \mathbf{x}_{i,j} \rangle |
+
\rho_{i,j,r}^{(t_0)} \|\mathbf{x}_{i,j}\|_2^2
+
\sum_{(k,q) \neq (i,j)} \rho_{k,q,r}^{(t_0)} | \langle \mathbf{x}_{k,q}, \mathbf{x}_{i,j} \rangle |.
\end{equation}

On the event \(\mathcal E\), the bounds \eqref{eq:eventE-noise-norm}, \eqref{eq:eventE-noise-inner}, and \eqref{eq:eventE-init-noise-ip} imply
\begin{equation}
\|\mathbf{x}_{i,j}\|_2^2 \le \frac{3}{2}\sigma_n^2 d,
\quad
\left| \langle \mathbf{x}_{k,q}, \mathbf{x}_{i,j} \rangle \right|
\le
2\sigma_n^2\sqrt{d\log\left(\frac{16N^2P^2}{\delta}\right)},
\quad\text{and}\quad
\left| \langle \mathbf{w}_r^{(0)}, \mathbf{x}_{i,j} \rangle \right|
\le
2\sigma_0\sigma_n\sqrt{d\log\left(\frac{16NmP}{\delta}\right)}.
\end{equation}
Substituting these bounds, together with
\(\max_{k,q,r'} \rho_{k,q,r'}^{(t_0)} \le \rho_{\mathrm{th}}\), into the preceding decomposition, we obtain
\begin{equation}
| \langle \mathbf{w}_r^{(t_0)}, \mathbf{x}_{i,j} \rangle |
\le
2\sigma_0\sigma_n\sqrt{d\log\left(\frac{16NmP}{\delta}\right)}
+
\rho_{\mathrm{th}} \sigma_n^2 d
\left(
\frac{3}{2}
+
\frac{2NP}{\sqrt{d}}\sqrt{\log\left(\frac{16N^2P^2}{\delta}\right)}
\right).
\end{equation}
By \cref{cond:dimension}, the cross-patch interference term is sufficiently small, and by \cref{cond:sigma_0}, the initialization term is strictly dominated by the threshold term. Since
\(\rho_{\mathrm{th}}\sigma_n^2 d = 10\log^{1/3}T\), the total inner product is bounded by
\begin{equation}
| \langle \mathbf{w}_r^{(t_0)}, \mathbf{x}_{i,j} \rangle |
\le
16 \log^{1/3} T.
\end{equation}

Thus, using \Cref{lem:noise_update_rule},
\begin{equation}
\rho_{i,j,r}^{(t_0+1)} - \rho_{i,j,r}^{(t_0)}
\le
\frac{3\eta}{N} \langle \mathbf{w}_r^{(t_0)}, \mathbf{x}_{i,j} \rangle^2
\le
\frac{3\eta}{N} \left( 16 \log^{1/3} T \right)^2
=
\frac{768 \eta \log^{2/3} T}{N}.
\end{equation}
By \cref{cond:eta}, this single-step increment is strictly bounded by $0.1 \rho_{\mathrm{th}}$. Hence,
\begin{equation}
\rho_{i,j,r}^{(t_0+1)} \le 0.5 \rho_{\mathrm{th}} + 0.1 \rho_{\mathrm{th}} = 0.6 \rho_{\mathrm{th}}.
\end{equation}

Now consider any subsequent iteration $\tau \in [t_0+1,t-1]$. Since
\begin{equation}
\rho_{i,j,r}^{(\tau)} > 0.5 \rho_{\mathrm{th}} = \frac{5 \log^{1/3} T}{\sigma_n^2 d},
\end{equation}
we show that the margin $y_i f_{\mathbf{W}^{(\tau)}}(\tilde{\mathbf{X}}_i^{(\tau)})$ is already large. We begin by separating the target noise term $(r,j)$ from all remaining terms. Using the signal contribution bound from monotonicity of the signal coordinates and \eqref{eq:eventE-init-e1}, we have
\begin{equation}
\begin{aligned}
y_i f_{\mathbf{W}^{(\tau)}}(\tilde{\mathbf{X}}_i^{(\tau)})
\ge\;
-(\alpha-\epsilon)^3 m \left(\sigma_0\sqrt{2\log\left(\frac{16m}{\delta}\right)}\right)^3 +
\sum_{(r', q) \neq (r, j)} y_i \langle \mathbf{w}_{r'}^{(\tau)}, \mathbf{x}_{i,q} \rangle^3
+
y_i \langle \mathbf{w}_r^{(\tau)}, \mathbf{x}_{i,j} \rangle^3.
\end{aligned}
\end{equation}

We first control the non-target terms. For each $(r',q)\neq(r,j)$, \Cref{lem:noise_update_rule} gives
\begin{equation}
y_i \langle \mathbf{w}_{r'}^{(\tau)}, \mathbf{x}_{i,q} \rangle
\ge
-
\left| \langle \mathbf{w}_{r'}^{(0)}, \mathbf{x}_{i,q} \rangle \right|
-
\sum_{(k,p)\neq(i,q)}
\rho_{k,p,r'}^{(\tau)}
\left| \langle \mathbf{x}_{k,p}, \mathbf{x}_{i,q} \rangle \right|.
\end{equation}
Since $\max_{k,p,r'}\rho_{k,p,r'}^{(\tau)} \le \rho_{\mathrm{th}}$, using
\eqref{eq:eventE-noise-inner} and \eqref{eq:eventE-init-noise-ip}, we obtain
\begin{equation}
\begin{aligned}
y_i \langle \mathbf{w}_{r'}^{(\tau)}, \mathbf{x}_{i,q} \rangle
&\ge
-
2\sigma_0\sigma_n\sqrt{d\log\left(\frac{16NmP}{\delta}\right)}
-
2NP\,\rho_{\mathrm{th}}\,\sigma_n^2\sqrt{d\log\left(\frac{16N^2P^2}{\delta}\right)} \\
&\ge
-\frac{C^{-1}\log^{1/3}T}{m^{1/3}P^{1/3}},
\end{aligned}
\end{equation}
where the last inequality follows from \cref{cond:dimension,cond:sigma_0}. This implies
\begin{equation}
y_i \langle \mathbf{w}_{r'}^{(\tau)}, \mathbf{x}_{i,q} \rangle^3
\ge
-\frac{C^{-1}\log T}{mP}
\end{equation}
for every $(r',q)\neq(r,j)$. Summing over all non-target pairs, we get
\begin{equation}
\sum_{(r', q) \neq (r, j)}
y_i \langle \mathbf{w}_{r'}^{(\tau)}, \mathbf{x}_{i,q} \rangle^3
\ge
-C^{-1}\log T.
\end{equation}

We next lower-bound the target term. By \Cref{lem:noise_update_rule},
\begin{equation}
\begin{aligned}
y_i \langle \mathbf{w}_r^{(\tau)}, \mathbf{x}_{i,j} \rangle
&\ge
\rho_{i,j,r}^{(\tau)} \|\mathbf{x}_{i,j}\|_2^2
-
\sum_{(k,q) \neq (i,j)} \rho_{k,q,r}^{(\tau)} \left| \langle \mathbf{x}_{k,q}, \mathbf{x}_{i,j} \rangle \right|
-
\left| \langle \mathbf{w}_r^{(0)}, \mathbf{x}_{i,j} \rangle \right|.
\end{aligned}
\end{equation}
Using $\max_{k,q,r'} \rho_{k,q,r'}^{(\tau)} \le \rho_{\mathrm{th}}$,
together with \eqref{eq:eventE-noise-norm}, \eqref{eq:eventE-noise-inner}, and \eqref{eq:eventE-init-noise-ip}, we obtain
\begin{equation}
\begin{aligned}
y_i \langle \mathbf{w}_r^{(\tau)}, \mathbf{x}_{i,j} \rangle
&\ge
\rho_{i,j,r}^{(\tau)} \left(\frac{1}{2}\sigma_n^2 d\right)
-
2NP\,\rho_{\mathrm{th}}\,\sigma_n^2 \sqrt{d\log\left(\frac{16N^2P^2}{\delta}\right)}
-
2\sigma_0\sigma_n\sqrt{d\log\left(\frac{16NmP}{\delta}\right)} \\
&\ge
\rho_{i,j,r}^{(\tau)} \left(\frac{1}{2}\sigma_n^2 d\right)
-
\rho_{i,j,r}^{(\tau)} \left(\frac{1}{2}\sigma_n^2 d\right)
\left(
\frac{8NP}{\sqrt d}\sqrt{\log\left(\frac{16N^2P^2}{\delta}\right)}
+
\frac{8\sigma_0\sigma_n\sqrt{d\log\left(\frac{16NmP}{\delta}\right)}}{\rho_{\mathrm{th}}\sigma_n^2 d}
\right) \\
&=
\rho_{i,j,r}^{(\tau)} \left(\frac{1}{2}\sigma_n^2 d\right)
\left(
1
-
\frac{8NP}{\sqrt d}\sqrt{\log\left(\frac{16N^2P^2}{\delta}\right)}
-
\frac{8\sigma_0\sigma_n\sqrt{d\log\left(\frac{16NmP}{\delta}\right)}}{10 \log^{1/3} T}
\right).
\end{aligned}
\end{equation}
By \cref{cond:dimension,cond:sigma_0}, the quantity in parentheses is at least $3/5$. Therefore,
\begin{equation}
y_i \langle \mathbf{w}_r^{(\tau)}, \mathbf{x}_{i,j} \rangle
\ge
\frac{3}{10}\rho_{i,j,r}^{(\tau)}\sigma_n^2 d.
\end{equation}

Combining the signal bound, the non-target bound, and the target lower bound, we obtain
\begin{equation}
\begin{aligned}
y_i f_{\mathbf{W}^{(\tau)}}(\tilde{\mathbf{X}}_i^{(\tau)})
&\ge
-(\alpha-\epsilon)^3 m \left(\sigma_0\sqrt{2\log\left(\frac{16m}{\delta}\right)}\right)^3
-C^{-1}\log T
+
\left( \frac{3}{10} \rho_{i,j,r}^{(\tau)} \sigma_n^2 d \right)^3 \\
&\ge
\frac{27}{8}\log T
-(\alpha-\epsilon)^3 m \left(\sigma_0\sqrt{2\log\left(\frac{16m}{\delta}\right)}\right)^3
-C^{-1}\log T, 
\end{aligned}
\end{equation}
where the last inequality uses
$
\rho_{i,j,r}^{(\tau)} > 0.5 \rho_{\mathrm{th}} = \frac{5 \log^{1/3} T}{\sigma_n^2 d}
$.

By \cref{cond:sigma_0} and the preceding non-target bound, the last two terms are at most
$C^{-1}\log T$. Choosing $C$ sufficiently large, we obtain
\begin{equation}
y_i f_{\mathbf{W}^{(\tau)}}(\tilde{\mathbf{X}}_i^{(\tau)})
\ge
3\log T.
\end{equation}

Since the negative sigmoid function satisfies $\psi(z) \le \exp(-z)$, it follows that
\begin{equation}
\psi \left( y_i f_{\mathbf{W}^{(\tau)}}(\tilde{\mathbf{X}}_i^{(\tau)}) \right)
\le
\exp(-3\log T)
=
T^{-3}.
\end{equation}

Since $\tau < t$, we still have
$
\max_{k,q,r'} \rho_{k,q,r'}^{(\tau)} \le \rho_{\mathrm{th}}.
$
Therefore, summing the update rule over $\tau \in [t_0+1,t-1]$ gives
\begin{equation}
\sum_{\tau=t_0+1}^{t-1} \left( \rho_{i,j,r}^{(\tau+1)} - \rho_{i,j,r}^{(\tau)} \right)
\le
T \cdot \frac{3\eta}{N} \left( 16 \log^{1/3} T \right)^2 \cdot T^{-3}
=
\frac{768 \eta \log^{2/3} T}{N T^2}.
\end{equation}
By \cref{cond:eta}, the right-hand side is bounded by $0.1 \rho_{\mathrm{th}}$. Thus,
\begin{equation}
\rho_{i,j,r}^{(t)}
=
\rho_{i,j,r}^{(t_0+1)}
+
\sum_{\tau=t_0+1}^{t-1} \left( \rho_{i,j,r}^{(\tau+1)} - \rho_{i,j,r}^{(\tau)} \right)
\le
0.6 \rho_{\mathrm{th}} + 0.1 \rho_{\mathrm{th}}
=
0.7 \rho_{\mathrm{th}}
<
\rho_{\mathrm{th}}.
\end{equation}
This contradicts the choice of \(t\). Therefore the uniform noise coefficient bound holds for all \(i,j,r\) and all \(t\le T\).
\end{proof}

\begin{lemma}[Verification of Noise Stability on Learnable Samples]
\label{lem:proof-of-ih_noise_stability}
This lemma verifies \Cref{ih_noise_stability}. Fix any iteration \(t\le T+1\), and define the unlearnable noise coefficient maximum at this iteration by
\begin{equation}
\rho_{\max}^{\unlearnset}
\coloneqq
\max_{k\in\unlearnset,\;q\neq s(\mathbf X_k),\; r\in[m]}
\rho_{k,q,r}^{(t)}.
\end{equation}
On the event \(\mathcal E\) defined in \Cref{lem:event_E}, for every learnable sample
\(i \in \learnset\), every noise patch \(j \neq s(\mathbf{X}_i)\), and every filter
\(r \in [m]\), the learnable noise coefficient satisfies
\begin{equation}
\label{eq:learn_coeff_bound_by_unlearn_max}
\rho_{i,j,r}^{(t)}
\le
\frac{\sigma_0}{\sigma_n\sqrt d}
+
\frac{
p_{\mathrm{un}}N\sigma_n^2\sqrt d
}{
100\log^{2/3}T
}
\left(\rho_{\max}^{\unlearnset}\right)^2.
\end{equation}
Consequently, by \Cref{lem:uniform_noise_bound}, for every learnable
sample $i \in \learnset$, every noise patch $j \neq s(\mathbf{X}_i)$, every filter
$r \in [m]$, and every iteration $t \le T+1$, the noise inner product satisfies
\begin{equation}
\left|\langle \mathbf{w}_r^{(t)}, \mathbf{x}_{i,j} \rangle \right|
\le
3\sigma_0\sigma_n\sqrt{d\log\left(\frac{16NmP}{\delta}\right)}
+
\frac{21 p_{\mathrm{un}} N P \log^{1/3}T}{\sqrt d}
\sqrt{\log\left(\frac{16N^2P^2}{\delta}\right)}.
\end{equation}
\end{lemma}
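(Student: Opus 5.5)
The plan is a first-violation (bootstrapping) argument on the learnable noise coefficients, run jointly with an induction in \(t\). This avoids circularity, because \Cref{lem:learn_sigmoid_bound,lem:signal_time,lem:conv_learn} were proved assuming \Cref{ih_noise_stability} only at earlier iterations. Write \(\rho_{\max}^{\learnset,(\tau)}\coloneqq\max_{i\in\learnset,j\neq s(\mathbf X_i),r}\rho_{i,j,r}^{(\tau)}\). Let \(B\) denote the right-hand side of \eqref{eq:learn_coeff_bound_by_unlearn_max}. Suppose \(t\le T+1\) is the first iteration at which \(\rho_{\max}^{\learnset,(t)}>B\). Then for all \(\tau<t\) the bound holds. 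I will first show that this already gives \Cref{ih_noise_stability} for \(\tau<t\) (the ``consequence'' step below). Hence all the AT lemmas of this section are available up to \(t-1\).

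\textbf{Splitting the sum.} Using \Cref{lem:noise_update_rule}, I split \(\rho_{i,j,r}^{(t)}=\sum_{\tau<T_0}+\sum_{T_0\le\tau<t}\) of the increments \(\frac{3\eta}{N}\psi_i^{(\tau)}\langle\mathbf w_r^{(\tau)},\mathbf x_{i,j}\rangle^2\).

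The early part is handled by \Cref{lem:noise_stability_phase1}. It is at most \(\tilde O(\sigma_0\sigma_n^2d/(N\alpha^3))\). By \cref{cond:alpha}, \(N\alpha^3\ge C^3\sigma_n^3d^{3/2}\log^3(\cdot)\), so this is at most \(\tfrac12\sigma_0/(\sigma_n\sqrt d)\).

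For the late part, I bound \(\langle\mathbf w_r^{(\tau)},\mathbf x_{i,j}\rangle\) through the decomposition of \Cref{lem:noise_update_rule} and event \(\mathcal E\). This gives
\[
|\langle\mathbf w_r^{(\tau)},\mathbf x_{i,j}\rangle|\le 2\sigma_0\sigma_n\sqrt{d\log}+\tfrac32\sigma_n^2d\,B+2NP\sigma_n^2\sqrt{d\log}\,B+2p_{\mathrm{un}}NP\sigma_n^2\sqrt{d\log}\,\rho_{\max}^{\unlearnset}.
\]
Here the \(\rho^{\unlearnset}\) are monotone, so their values at \(\tau\) are at most \(\rho_{\max}^{\unlearnset}\) at \(t\). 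I square this using \((a+b)^2\le 2a^2+2b^2\). I then multiply by \(\frac{3\eta}{N}\sum_{\tau\ge T_0}\psi_i^{(\tau)}\), which by \Cref{lem:conv_learn} is \(\le \frac{6m^{2/3}\log^{1/3}T}{NC_0^2(1-\epsilon/\alpha)^3\alpha^2}\).

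The factor \(1/(N\alpha^2)\le N^{-1/3}/(C^2\sigma_n^2 d\log^2)\) from \cref{cond:alpha} does the work. The initialization-squared term gives \(\lesssim\sigma_0^2/(N^{1/3}\cdot\mathrm{polylog})\), which is \(\ll\sigma_0/(\sigma_n\sqrt d)\) by \cref{cond:sigma_0}. The unlearnable cross term gives \(\lesssim p_{\mathrm{un}}^2N^{5/3}P^2m^{2/3}\sigma_n^2(\rho_{\max}^{\unlearnset})^2\,\mathrm{polylog}/C^2\). This is at most \(\frac{p_{\mathrm{un}}N\sigma_n^2\sqrt d}{200\log^{2/3}T}(\rho_{\max}^{\unlearnset})^2\), using \(p_{\mathrm{un}}N\le C^{-1}\log d\) (\cref{cond:pun}) and \(\sqrt d\ge CmPN\,\mathrm{polylog}\) (\cref{cond:dimension}).

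\textbf{Main obstacle.} The hardest point is the self-referential term \(\sigma_n^4d^2B^2\) times the prefactor, which scales like \(\sigma_n^2 d B^2 N^{-1/3}\). I must show it is \(\le\frac14 B\), i.e.\ that \(\sigma_n^2dB\) is small. For the \(\sigma_0\) part of \(B\) this follows from \cref{cond:sigma_0}. For the \(\rho_{\max}^{\unlearnset}\) part I will insert the uniform bound \(\rho_{\max}^{\unlearnset}\le 10\log^{1/3}T/(\sigma_n^2d)\) from \Cref{lem:uniform_noise_bound} into one factor. This makes \(\sigma_n^2dB\lesssim p_{\mathrm{un}}N\log^{2/3}T/\sqrt d\), which is small. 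Adding the pieces gives \(\rho^{(t)}\le B\), a contradiction. If the constants are tight, I would alternatively carry a slack factor \(2B\) in the induction hypothesis.

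\textbf{Consequence.} Plug the coefficient bound and \(\rho_{\max}^{\unlearnset}\le 10\log^{1/3}T/(\sigma_n^2d)\) back into the decomposition of \Cref{lem:noise_update_rule}. This yields four contributions:
\begin{itemize}
\item the initialization term gives \(2\sigma_0\sigma_n\sqrt{d\log}\);
\item the self term gives \(\tfrac32\sigma_n^2d\cdot\sigma_0/(\sigma_n\sqrt d)+\) negligible, which is \(\le\sigma_0\sigma_n\sqrt{d\log}\) after absorbing;
\item the learnable cross terms are negligible by \cref{cond:dimension};
\item the unlearnable cross terms give \(p_{\mathrm{un}}NP\cdot\frac{10\log^{1/3}T}{\sigma_n^2d}\cdot 2\sigma_n^2\sqrt{d\log(16N^2P^2/\delta)}=\frac{20p_{\mathrm{un}}NP\log^{1/3}T}{\sqrt d}\sqrt{\log}\).
\end{itemize}
The leftover pieces are absorbed into the constants \(3\) and \(21\).
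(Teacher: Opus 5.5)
Your proposal is correct and takes essentially the same route as the paper. You split the learnable-coefficient increments at \(T_0\) and control the early part by \Cref{lem:noise_stability_phase1}. You bound the late part by the squared decomposition of \Cref{lem:noise_update_rule} times the cumulative \(\psi\)-bound of \Cref{lem:conv_learn}, and you neutralize the self-referential term by inserting \(\rho_{\max}^{\unlearnset}\le 10\log^{1/3}T/(\sigma_n^2 d)\) from \Cref{lem:uniform_noise_bound} into one factor.

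The only cosmetic difference is the induction threshold. The paper inducts on an explicit \(\rho_{\mathrm{th}}\), equal to the phase-one bound plus the post-\(T_0\) accumulation with an inner-product cap independent of \(\rho_{\mathrm{th}}\), and then checks \(\rho_{\mathrm{th}}\le B\). You induct on \(B\) directly and absorb the \(B^2\) term; the two are equivalent up to constants.
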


\begin{proof}[Proof of \Cref{lem:proof-of-ih_noise_stability}]
Since the noise coefficients are non-decreasing from \Cref{lem:noise_update_rule}, for every $\tau\le t$, $k\in\unlearnset$, $q\neq s(\mathbf X_k)$, and $r\in[m]$,
\begin{equation}
\rho_{k,q,r}^{(\tau)}
\le
\rho_{\max}^{\unlearnset}.
\end{equation}
Moreover, by \Cref{lem:uniform_noise_bound},
\begin{equation}
\label{eq:rho_max_unlearn_uniform_bound}
\rho_{\max}^{\unlearnset}
\le
\frac{10\log^{1/3}T}{\sigma_n^2 d}.
\end{equation}

Define the threshold
\begin{equation}
\begin{aligned}
\rho_{\mathrm{th}}
&\coloneqq
100 \exp(2C_0^3)\log\left(\frac{16NmP}{\delta}\right)\cdot \frac{\sigma_0\sigma_n^2 d}{N(\alpha-\epsilon)^3(1-p_{\mathrm{un}})} \\
&\quad+
\frac{18m^{2/3}\log^{1/3}T}
{C_0^2\left(1-\frac{\epsilon}{\alpha}\right)^3N\alpha^2}
\left(
18\sigma_0^2\sigma_n^2 d\log\left(\frac{16NmP}{\delta}\right)
+
18p_{\mathrm{un}}^2N^2P^2
\left(\rho_{\max}^{\unlearnset}\right)^2
\sigma_n^4 d
\log\left(\frac{16N^2P^2}{\delta}\right)
\right).
\end{aligned}
\end{equation}
Using \cref{cond:epsilon,cond:pun}, for a sufficiently large constant $C$, the
threshold expansion satisfies
\begin{equation}
\begin{aligned}
\rho_{\mathrm{th}}
&\le
C\log\left(\frac{16NmP}{\delta}\right)
\frac{\sigma_0\sigma_n^2 d}{N\alpha^3} +
\frac{C
m^{2/3}\sigma_0^2\sigma_n^2 d
\log^{1/3}T
\log\left(\frac{16NmP}{\delta}\right)
}{
N\alpha^2
} \\
&\quad+
\frac{C
p_{\mathrm{un}}^2N P^2m^{2/3}
\left(\rho_{\max}^{\unlearnset}\right)^2
\sigma_n^4 d
\log^{1/3}T
\log\left(\frac{16N^2P^2}{\delta}\right)
}{
\alpha^2
}.
\end{aligned}
\end{equation}

Applying \cref{cond:sigma_0}, the second term is absorbed into the first term. Hence
\begin{equation}
\rho_{\mathrm{th}}
\le
2C
\log\left(\frac{16NmP}{\delta}\right)
\frac{\sigma_0\sigma_n^2 d}{N\alpha^3}
+
C
\frac{
p_{\mathrm{un}}^2N P^2m^{2/3}
\left(\rho_{\max}^{\unlearnset}\right)^2
\sigma_n^4 d
\log^{1/3}T
\log\left(\frac{16N^2P^2}{\delta}\right)
}{
\alpha^2
}.
\end{equation}

Next, we bound both terms. For the first term, it is enough that
$
\alpha^3
\ge
C
\frac{
\sigma_n^3 d^{3/2}
\log\left(\frac{16NmP}{\delta}\right)
}{N}
$,
which is implied by \cref{cond:alpha}. Hence
\begin{equation}
2C
\log\left(\frac{16NmP}{\delta}\right)
\frac{\sigma_0\sigma_n^2 d}{N\alpha^3}
\le
\frac{\sigma_0}{\sigma_n d^{1/2}}.
\end{equation}

For the second term, using \cref{cond:alpha} first gives
\begin{equation}
\begin{aligned}
C
\frac{
p_{\mathrm{un}}^2N P^2m^{2/3}
\left(\rho_{\max}^{\unlearnset}\right)^2
\sigma_n^4 d
\log^{1/3}T
\log\left(\frac{16N^2P^2}{\delta}\right)
}{
\alpha^2
}
&\le
C
\frac{
p_{\mathrm{un}}^2N^{5/3} P^2m^{2/3}
\left(\rho_{\max}^{\unlearnset}\right)^2
\sigma_n^2
\log^{1/3}T
\log\left(\frac{16N^2P^2}{\delta}\right)
}{
d
}.
\end{aligned}
\end{equation}
By \cref{cond:dimension}, the last term is bounded by
\begin{equation}
C
\frac{
p_{\mathrm{un}}^2N^{5/3} P^2m^{2/3}
\left(\rho_{\max}^{\unlearnset}\right)^2
\sigma_n^2
\log^{1/3}T
\log\left(\frac{16N^2P^2}{\delta}\right)
}{
d
}
\le
\frac{
p_{\mathrm{un}}N\sigma_n^2\sqrt d
}{
100\log^{2/3}T
}
\left(\rho_{\max}^{\unlearnset}\right)^2.
\end{equation}
Combining the two bounds gives
\begin{equation}
\label{eq:learn_noise_coeff_aux_bound}
\rho_{\mathrm{th}}
\le
\frac{\sigma_0}{\sigma_n \sqrt{d}}
+
\frac{
p_{\mathrm{un}}N\sigma_n^2\sqrt d
}{
100\log^{2/3}T
}
\left(\rho_{\max}^{\unlearnset}\right)^2.
\end{equation}

We now prove by induction on $t=0,1,\dots,T$ that
\begin{equation}
\rho_{i,j,r}^{(t+1)} \le \rho_{\mathrm{th}}
\qquad
\text{for all } i\in\learnset,\ j\neq s(\mathbf X_i),\ r\in[m].
\end{equation}

Fix any $t\in[0,T]$ and assume that the claim holds for all iterations up to $t$.

If $t+1\le T_0$, then \Cref{lem:noise_stability_phase1} directly gives
\begin{equation}
\rho_{i,j,r}^{(t+1)}
\le
100 \exp(2C_0^3)\log\left(\frac{16NmP}{\delta}\right)\cdot \frac{\sigma_0\sigma_n^2 d}{N(\alpha-\epsilon)^3(1-p_{\mathrm{un}})}
\le
\rho_{\mathrm{th}}
\end{equation}
for every $i\in\learnset$, every $j\neq s(\mathbf X_i)$, and every $r\in[m]$.

It therefore remains to consider the regime $t+1>T_0$. In this case, for every $\tau\in[T_0,t]$, the induction hypothesis yields
\begin{equation}
\rho_{i,j,r}^{(\tau)} \le \rho_{\mathrm{th}}
\qquad
\text{for all } i\in\learnset,\ j\neq s(\mathbf X_i),\ r\in[m].
\end{equation}

Fix any $\tau\in[T_0,t]$, any learnable sample $i\in\learnset$, any noise patch $j\neq s(\mathbf X_i)$, and any filter $r\in[m]$. By the decomposition in \Cref{lem:noise_update_rule}, the triangle inequality and the nonnegativity of the noise coefficients give
\begin{equation}
\left| \langle \mathbf{w}_r^{(\tau)}, \mathbf{x}_{i,j} \rangle \right|
\le
\left| \langle \mathbf{w}_r^{(0)}, \mathbf{x}_{i,j} \rangle \right|
+
\rho_{i,j,r}^{(\tau)} \|\mathbf{x}_{i,j}\|_2^2
+
\sum_{\substack{k \in \learnset,\;q\neq s(\mathbf X_k) \\ (k,q)\neq(i,j)}}
\rho_{k,q,r}^{(\tau)}
\left| \langle \mathbf{x}_{k,q}, \mathbf{x}_{i,j} \rangle \right|
+
\sum_{k \in \unlearnset,\;q\neq s(\mathbf X_k)}
\rho_{k,q,r}^{(\tau)}
\left| \langle \mathbf{x}_{k,q}, \mathbf{x}_{i,j} \rangle \right|.
\end{equation}

Using $\rho_{k,q,r}^{(\tau)} \le \rho_{\mathrm{th}}$ for $k\in\learnset$, $\rho_{k,q,r}^{(\tau)} \le \rho_{\max}^{\unlearnset}$ for $k\in\unlearnset$, and the event bounds \eqref{eq:eventE-noise-norm}, \eqref{eq:eventE-noise-inner}, and \eqref{eq:eventE-init-noise-ip}, we bound the inner product:
\begin{equation} \begin{aligned}
\label{eq:learn_noise_ip_expansion_with_global_unlearn}
\left| \langle \mathbf{w}_r^{(\tau)}, \mathbf{x}_{i,j} \rangle \right|
&\le
\left| \langle \mathbf{w}_r^{(0)}, \mathbf{x}_{i,j} \rangle \right|
+
\rho_{\mathrm{th}}\sigma_n^2 d
\left(
\frac{3}{2}
+
\frac{2NP}{\sqrt d}\sqrt{\log\left(\frac{16N^2P^2}{\delta}\right)}
\right)+
2p_{\mathrm{un}}NP
\rho_{\max}^{\unlearnset}
\sigma_n^2
\sqrt{
d\log\left(\frac{16N^2P^2}{\delta}\right)
}\\
&\le
2\sigma_0\sigma_n\sqrt{d\log\left(\frac{16NmP}{\delta}\right)}
+
2\rho_{\mathrm{th}}\sigma_n^2 d+
2p_{\mathrm{un}}NP
\rho_{\max}^{\unlearnset}
\sigma_n^2
\sqrt{
d\log\left(\frac{16N^2P^2}{\delta}\right)
},
\end{aligned}\end{equation}
where the second inequality holds under \cref{cond:dimension}.

Using \eqref{eq:learn_noise_coeff_aux_bound} and \eqref{eq:rho_max_unlearn_uniform_bound}, we have
\begin{equation}
\begin{aligned}
2\rho_{\mathrm{th}}\sigma_n^2 d
&\le
2\sigma_0\sigma_n\sqrt d
+
\frac{
2p_{\mathrm{un}}N\sigma_n^4d^{3/2}
}{
100\log^{2/3}T
}
\left(\rho_{\max}^{\unlearnset}\right)^2\\
&\le
2\sigma_0\sigma_n\sqrt d
+
p_{\mathrm{un}}NP
\rho_{\max}^{\unlearnset}
\sigma_n^2
\sqrt{
d\log\left(\frac{16N^2P^2}{\delta}\right)
}.
\end{aligned}
\end{equation}
Substituting this bound into the preceding display gives
\begin{equation}
\begin{aligned}
\left| \langle \mathbf{w}_r^{(\tau)}, \mathbf{x}_{i,j} \rangle \right|
&\le
2\sigma_0\sigma_n\sqrt{d\log\left(\frac{16NmP}{\delta}\right)}
+
2\sigma_0\sigma_n\sqrt d +
3p_{\mathrm{un}}NP
\rho_{\max}^{\unlearnset}
\sigma_n^2
\sqrt{
d\log\left(\frac{16N^2P^2}{\delta}\right)
}\\
&\le
3\sigma_0\sigma_n\sqrt{d\log\left(\frac{16NmP}{\delta}\right)}
+
3p_{\mathrm{un}}NP
\rho_{\max}^{\unlearnset}
\sigma_n^2
\sqrt{
d\log\left(\frac{16N^2P^2}{\delta}\right)
}.
\end{aligned}
\end{equation}
Applying $(a+b)^2 \le 2a^2+2b^2$, we square this inner product to bound the update magnitude:
\begin{equation}
\label{eq:ih_inner_sq_bound}
\langle \mathbf{w}_r^{(\tau)}, \mathbf{x}_{i,j} \rangle^2
\le
18\sigma_0^2\sigma_n^2 d\log\left(\frac{16NmP}{\delta}\right)
+
18p_{\mathrm{un}}^2N^2P^2
\left(\rho_{\max}^{\unlearnset}\right)^2
\sigma_n^4 d
\log\left(\frac{16N^2P^2}{\delta}\right)
\coloneqq W^2_{\mathrm{th}}.
\end{equation}

Summing the update rule from $\tau=T_0+1$ to $t$ and first substituting the uniform bound on
$\langle \mathbf{w}_r^{(\tau)}, \mathbf{x}_{i,j} \rangle^2$, we obtain
\begin{equation}
\begin{aligned}
\rho_{i,j,r}^{(t+1)}
&\le
\rho_{i,j,r}^{(T_0)}
+
\sum_{\tau=T_0+1}^{t}
\frac{3\eta}{N}
\psi\left(y_i f_{\mathbf{W}^{(\tau)}}(\tilde{\mathbf{X}}_i^{(\tau)})\right)
\langle \mathbf{w}_r^{(\tau)}, \mathbf{x}_{i,j} \rangle^2 \\
&\le
\rho_{i,j,r}^{(T_0)}
+
\frac{3\eta}{N}
W^2_{\mathrm{th}}
\sum_{\tau=T_0+1}^{t}
\psi\left(y_i f_{\mathbf{W}^{(\tau)}}(\tilde{\mathbf{X}}_i^{(\tau)})\right).
\end{aligned}
\end{equation}

Applying the cumulative and pointwise bounds from \Cref{lem:conv_learn}, we first control the
sum of negative sigmoid factors:
\begin{equation}
\begin{aligned}
\sum_{\tau=T_0+1}^{t}
\psi\left(y_i f_{\mathbf{W}^{(\tau)}}(\tilde{\mathbf{X}}_i^{(\tau)})\right)
&\le
\sum_{\tau=T_0}^{t-1}
\psi\left(y_i f_{\mathbf{W}^{(\tau)}}(\tilde{\mathbf{X}}_i^{(\tau)})\right)
+
\psi\left(y_i f_{\mathbf{W}^{(t)}}(\tilde{\mathbf{X}}_i^{(t)})\right) \\
&\le
\frac{2m^{2/3}\log^{1/3}T}
{C_0^2\left(1-\frac{\epsilon}{\alpha}\right)^3\eta\alpha^2}
+
\frac{4m^{2/3}\log^{1/3}T}
{C_0^2\left(1-\frac{\epsilon}{\alpha}\right)^3\eta\alpha^2(t-T_0)} \\
&\le
\frac{6m^{2/3}\log^{1/3}T}
{C_0^2\left(1-\frac{\epsilon}{\alpha}\right)^3\eta\alpha^2}.
\end{aligned}
\end{equation}
Substituting this bound into the accumulated update gives
\begin{equation}
\begin{aligned}
\rho_{i,j,r}^{(t+1)}
&\le
\rho_{i,j,r}^{(T_0)}
+
\frac{3\eta}{N}
W_{\mathrm{th}}^2
\sum_{\tau=T_0+1}^{t}
\psi\left(y_i f_{\mathbf{W}^{(\tau)}}(\tilde{\mathbf{X}}_i^{(\tau)})\right) \\
&\le
\rho_{i,j,r}^{(T_0)}
+
\frac{18m^{2/3}\log^{1/3}T}
{C_0^2\left(1-\frac{\epsilon}{\alpha}\right)^3N\alpha^2}
W_{\mathrm{th}}^2.
\end{aligned}
\end{equation}
Finally, using the phase-one bound from \Cref{lem:noise_stability_phase1} and the definition of
$W_{\mathrm{th}}^2$ in \eqref{eq:ih_inner_sq_bound}, we obtain
\begin{equation}
\begin{aligned}
\rho_{i,j,r}^{(t+1)}
&\le
100 \exp(2C_0^3)\log\left(\frac{16NmP}{\delta}\right)\cdot
\frac{\sigma_0\sigma_n^2 d}{N(\alpha-\epsilon)^3(1-p_{\mathrm{un}})} \\
&\quad+
\frac{18m^{2/3}\log^{1/3}T}
{C_0^2\left(1-\frac{\epsilon}{\alpha}\right)^3N\alpha^2}
\left(
18\sigma_0^2\sigma_n^2 d\log\left(\frac{16NmP}{\delta}\right)
+
18p_{\mathrm{un}}^2N^2P^2
\left(\rho_{\max}^{\unlearnset}\right)^2
\sigma_n^4 d
\log\left(\frac{16N^2P^2}{\delta}\right)
\right) \\
&=
\rho_{\mathrm{th}}.
\end{aligned}
\end{equation}
This proves $\rho_{i,j,r}^{(t+1)} \le \rho_{\mathrm{th}}$. Therefore, by
\eqref{eq:learn_noise_coeff_aux_bound},
\begin{equation}
\rho_{i,j,r}^{(t)}
\le
\frac{\sigma_0}{\sigma_n\sqrt d}
+
\frac{
p_{\mathrm{un}}N\sigma_n^2\sqrt d
}{
100\log^{2/3}T
}
\left(\rho_{\max}^{\unlearnset}\right)^2.
\end{equation}
This proves \eqref{eq:learn_coeff_bound_by_unlearn_max}.

By \eqref{eq:rho_max_unlearn_uniform_bound}, \eqref{eq:learn_coeff_bound_by_unlearn_max} gives
\begin{equation}
\rho_{i,j,r}^{(t)}
\le
\frac{\sigma_0}{\sigma_n \sqrt{d}}
+
\frac{p_{\mathrm{un}} N}{\sigma_n^2 d^{3/2}}.
\end{equation}
Moreover, substituting \eqref{eq:rho_max_unlearn_uniform_bound} into the unlearnable contribution in
\eqref{eq:learn_noise_ip_expansion_with_global_unlearn} gives
\begin{equation}
2p_{\mathrm{un}}NP
\rho_{\max}^{\unlearnset}
\sigma_n^2
\sqrt{
d\log\left(\frac{16N^2P^2}{\delta}\right)
}
\le
\frac{20p_{\mathrm{un}}NP\log^{1/3}T}{\sqrt d}
\sqrt{\log\left(\frac{16N^2P^2}{\delta}\right)}.
\end{equation}
Applying these bounds to the learnable and unlearnable coefficient terms in
\eqref{eq:learn_noise_ip_expansion_with_global_unlearn}, respectively, we obtain
\begin{equation}
\begin{aligned}
\left|\langle \mathbf{w}_r^{(t)}, \mathbf{x}_{i,j} \rangle \right|
&\le
2\sigma_0\sigma_n\sqrt{d\log\left(\frac{16NmP}{\delta}\right)}
+
2\sigma_0\sigma_n\sqrt d
+
\frac{2p_{\mathrm{un}}N}{\sqrt d} +
\frac{20p_{\mathrm{un}}NP\log^{1/3}T}{\sqrt d}
\sqrt{\log\left(\frac{16N^2P^2}{\delta}\right)} \\
&\le
3\sigma_0\sigma_n\sqrt{d\log\left(\frac{16NmP}{\delta}\right)}
+
\frac{21 p_{\mathrm{un}} N P \log^{1/3}T}{\sqrt d}
\sqrt{\log\left(\frac{16N^2P^2}{\delta}\right)}.
\end{aligned}
\end{equation}
This proves \Cref{ih_noise_stability}.
\end{proof}

\section{The Dichotomy of Adversarial Training}
\label{sec:dichotomy_at}

In this section, we derive the dichotomy of adversarial training by combining the results from the previous section. 
In the learnable-only regime, the noise responses remain stable after the signal is learned, so adversarial training generalizes through the learnable feature.
In contrast, in the sparse-unlearnable regime, the residual gradients on unlearnable samples drive the model to memorize sample-specific noise, which eventually undermines robust generalization. Together, these two cases yield the main dichotomy theorem for adversarial training.

\subsection{Adversarial Training without Unlearnable Samples}
We first consider the learnable-only regime. In this case, the learned signal remains stable and the noise responses stay at their initialization scale throughout training. 
As a result, adversarial training continues to classify through the learnable feature even under adversarial perturbations, yielding robust generalization on the learnable-sample distribution.

\begin{lemma}[Noise Stability without Unlearnable Samples]
\label{lem:noise_stability_pure_learnable}
We formalize the noise-stability statement in \Cref{lem:noise_stability_phase1_main}. 
On the event $\mathcal E$ defined in \Cref{lem:event_E}, assume the learnable-only regime \(p_{\mathrm{un}}=0\) (equivalently, \(\unlearnset=\emptyset\)). Then, for every iteration \(t\le T\), the noise coefficients satisfy
\begin{equation}
\max_{i\in[N],\,j \neq s(\mathbf X_i),\,r\in[m]}
\rho_{i,j,r}^{(t)}
\le
200 \exp(2C_0^3)
\log\left(\frac{16NmP}{\delta}\right)
\cdot
\frac{\sigma_0\sigma_n^2 d}{N(\alpha-\epsilon)^3}.
\end{equation}
Consequently, for every $i\in[N]$, $j \neq s(\mathbf X_i)$, $r\in[m]$, and $t\le T$,
\begin{equation}
\left|
\langle \mathbf w_r^{(t)},\mathbf x_{i,j}\rangle
\right|
\le
3\sigma_0\sigma_n\sqrt{d\log\left(\frac{16NmP}{\delta}\right)}.
\end{equation}
\end{lemma}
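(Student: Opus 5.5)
The plan is to reuse the two-phase structure already established for the learnable samples, now observing that with \(p_{\mathrm{un}}=0\) every term involving \(\rho^{\unlearnset}_{\max}\) disappears. First, since \(\unlearnset=\emptyset\), \Cref{ih_noise_stability} reduces to \(|\langle \mathbf w_r^{(t)},\mathbf x_{i,j}\rangle|\le 3\sigma_0\sigma_n\sqrt{d\log(16NmP/\delta)}\). We have to verify it, because \Cref{lem:signal_time}, \Cref{lem:signal_weight_bound} and \Cref{lem:conv_learn} all assume it. The verification will be a simultaneous induction on \(t\): the coefficient bound at step \(t\) implies the inner-product bound at step \(t\), which in turn feeds the gradient-decay estimates used for step \(t+1\).

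\textbf{Early phase.} For \(t\le T_0\le\hat T_0\), \Cref{lem:noise_stability_phase1} with \(p_{\mathrm{un}}=0\) gives
\[
\rho_{i,j,r}^{(t)}\le \rho_0\coloneqq 100\exp(2C_0^3)\log(16NmP/\delta)\,\sigma_0\sigma_n^2d/(N(\alpha-\epsilon)^3).
\]
This is half of the claimed threshold \(2\rho_0\).

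\textbf{Late phase.} For \(t>T_0\), I sum the update rule of \Cref{lem:noise_update_rule} from \(T_0\):
\[
\rho_{i,j,r}^{(t+1)}\le \rho_0+\frac{3\eta}{N}W^2\sum_{\tau=T_0}^{t}\psi\bigl(y_if_{\mathbf W^{(\tau)}}(\tilde{\mathbf X}_i^{(\tau)})\bigr),
\qquad W^2=18\sigma_0^2\sigma_n^2d\log(16NmP/\delta).
\]
Here \(W^2\) comes from the inductive inner-product bound. By \Cref{lem:conv_learn}, the cumulative sum of \(\psi\) is at most \(6m^{2/3}\log^{1/3}T/(C_0^2(1-\epsilon/\alpha)^3\eta\alpha^2)\). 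Substituting gives an increment of order
\[
\frac{m^{2/3}\log^{1/3}T\,\sigma_0^2\sigma_n^2d\log(\cdot)}{N\alpha^2}.
\]
By \cref{cond:sigma_0} (in particular \(\sigma_0\alpha m^{2/3}\) small up to logs), this increment is at most \(\rho_0\), so \(\rho^{(t+1)}\le 2\rho_0\). This is exactly the absorption step already carried out in the proof of \Cref{lem:proof-of-ih_noise_stability}, with the \(p_{\mathrm{un}}\) term set to zero.

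\textbf{Converting back to inner products.} I use the decomposition in \Cref{lem:noise_update_rule} together with \eqref{eq:eventE-noise-norm}, \eqref{eq:eventE-noise-inner} and \eqref{eq:eventE-init-noise-ip}:
\[
|\langle\mathbf w_r^{(t)},\mathbf x_{i,j}\rangle|\le 2\sigma_0\sigma_n\sqrt{d\log(\cdot)}+2\rho_0\,\sigma_n^2d\Bigl(\tfrac32+\tfrac{2NP}{\sqrt d}\sqrt{\log(\cdot)}\Bigr).
\]
By \cref{cond:dimension}, the parenthesis is at most 2. By \cref{cond:alpha}, \(\alpha^3\gtrsim\sigma_n^3d^{3/2}\log^3/N\), so \(4\rho_0\sigma_n^2d\lesssim\sigma_0\sigma_n\sqrt d/\log^2\). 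This is below \(\sigma_0\sigma_n\sqrt{d\log(\cdot)}\) for \(C\) large, giving the constant 3 and closing the induction.

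\textbf{Main obstacle.} The delicate point is the circularity: \Cref{lem:conv_learn} needs the inner-product bound on all \(\tau\le t\), while the coefficient bound at \(t+1\) needs \Cref{lem:conv_learn}. I handle this by making the induction hypothesis the inner-product bound on \([0,t]\) and invoking \Cref{lem:signal_time}, \Cref{lem:signal_weight_bound} and \Cref{lem:conv_learn} only up to time \(t\). I will also need to check that the constant bookkeeping in the absorption by \cref{cond:sigma_0} really yields a factor at most \(\rho_0\) rather than merely \(O(\rho_0)\).
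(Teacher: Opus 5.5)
Your proposal is correct and follows essentially the same route as the paper. \Cref{lem:noise_stability_phase1} handles $t\le T_0$. For $t>T_0$ you sum the update rule of \Cref{lem:noise_update_rule} from $T_0$, using a uniform inner-product bound together with the cumulative $\psi$ bound of \Cref{lem:conv_learn}. You then absorb the increment into $\rho_0$ via \cref{cond:sigma_0} to get the threshold $2\rho_0$, and convert back to inner products through the decomposition and the event $\mathcal E$.

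The differences are only cosmetic. You run an explicit induction on the inner-product bound, whereas the paper argues by a first-failure contradiction on the coefficients; both resolve the same circularity with \Cref{lem:conv_learn}. Your constants ($W^2=18\sigma_0^2\sigma_n^2 d\log(\cdot)$ and a factor $6$ in the $\psi$-sum, versus the paper's $5$ and $2$) are looser but are still absorbed.
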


\begin{proof}[Proof of \Cref{lem:noise_stability_pure_learnable}]
If $t\le T_0$, the claim follows directly from \Cref{lem:noise_stability_phase1}. Thus it remains to consider $t\in(T_0,T]$.

Define the threshold 
\begin{equation}
\rho_{\mathrm{th}} \coloneqq \max_{i,j,r}\rho_{i,j,r}^{(T_0)} + \frac{30m^{2/3}\log\left(\frac{16NmP}{\delta}\right)\log^{1/3}T\sigma_0^2\sigma_n^2 d} {C_0^2\left(1-\frac{\epsilon}{\alpha}\right)^3N\alpha^2}. 
\end{equation}
By \Cref{lem:noise_stability_phase1}, we have
\begin{equation}
\max_{i,j,r}\rho_{i,j,r}^{(T_0)}
\le
100 \exp(2C_0^3)\log\left(\frac{16NmP}{\delta}\right)\cdot
\frac{\sigma_0\sigma_n^2 d}{N(\alpha-\epsilon)^3}.
\end{equation}
Therefore,
\begin{equation}
\begin{aligned}
\rho_{\mathrm{th}}
&\le
100 \exp(2C_0^3)\log\left(\frac{16NmP}{\delta}\right)\cdot
\frac{\sigma_0\sigma_n^2 d}{N(\alpha-\epsilon)^3}+
\frac{30m^{2/3}\log\left(\frac{16NmP}{\delta}\right)\log^{1/3}T\sigma_0^2\sigma_n^2 d}
{C_0^2\left(1-\frac{\epsilon}{\alpha}\right)^3N\alpha^2}\\
&=
\log\left(\frac{16NmP}{\delta}\right)\cdot
\frac{\sigma_0\sigma_n^2 d}{N(\alpha-\epsilon)^3}
\left(
100 \exp(2C_0^3)
+
\frac{30m^{2/3}\alpha\sigma_0\log^{1/3}T}{C_0^2}
\right)\\
&\le
200 \exp(2C_0^3)\log\left(\frac{16NmP}{\delta}\right)\cdot
\frac{\sigma_0\sigma_n^2 d}{N(\alpha-\epsilon)^3},
\end{aligned}
\end{equation}
where the last inequality follows from \cref{cond:sigma_0}.

Assume for contradiction that there exists an iteration $ t \in (T_0, T]$ such that $t$ is the first iteration where
\begin{equation}
\max_{i,j,r} \rho_{i,j,r}^{(t)} > \rho_{\mathrm{th}}.
\end{equation}
Then, by the minimality of $t$, we have $\rho_{i,j,r}^{(\tau)} \le \rho_{\mathrm{th}}$ for all $\tau \in [T_0, t-1]$ and all $i, j, r$.

Fix any iteration \(\tau<t\). By the decomposition in \Cref{lem:noise_update_rule}, the triangle inequality and the nonnegativity of the noise coefficients give
\begin{equation}
\left| \langle \mathbf{w}_r^{(\tau)}, \mathbf{x}_{i,j} \rangle \right|
\le
\left| \langle \mathbf{w}_r^{(0)}, \mathbf{x}_{i,j} \rangle \right|
+
\rho_{i,j,r}^{(\tau)} \|\mathbf{x}_{i,j}\|_2^2
+
\sum_{\substack{k,q \\ (k,q)\neq(i,j)}} \rho_{k,q,r}^{(\tau)}
\left| \langle \mathbf{x}_{k,q}, \mathbf{x}_{i,j} \rangle \right|.
\end{equation}
On the event $\mathcal E$, the bounds \eqref{eq:eventE-noise-norm}, \eqref{eq:eventE-noise-inner}, and \eqref{eq:eventE-init-noise-ip} imply
\begin{equation}
\|\mathbf{x}_{i,j}\|_2^2 \le \frac{3}{2}\sigma_n^2 d,
\quad
\left| \langle \mathbf{x}_{k,q}, \mathbf{x}_{i,j} \rangle \right|
\le
2\sigma_n^2\sqrt{d\log\left(\frac{16N^2P^2}{\delta}\right)},
\quad\text{and}\quad
\left| \langle \mathbf{w}_r^{(0)}, \mathbf{x}_{i,j} \rangle \right|
\le
2\sigma_0\sigma_n\sqrt{d\log\left(\frac{16NmP}{\delta}\right)}.
\end{equation}
Substituting these bounds together with $\rho_{k,q,r}^{(\tau)}\le \rho_{\mathrm{th}}$, we obtain
\begin{equation}
\begin{aligned}
\left| \langle \mathbf{w}_r^{(\tau)}, \mathbf{x}_{i,j} \rangle \right|
&\le
2\sigma_0\sigma_n\sqrt{d\log\left(\frac{16NmP}{\delta}\right)}
+
\rho_{\mathrm{th}} \cdot \frac{3}{2}\sigma_n^2 d
+
2NP\,\rho_{\mathrm{th}}\,\sigma_n^2\sqrt{d\log\left(\frac{16N^2P^2}{\delta}\right)} \\
&=
2\sigma_0\sigma_n\sqrt{d\log\left(\frac{16NmP}{\delta}\right)}
+
\rho_{\mathrm{th}} \sigma_n^2 d
\left(
\frac{3}{2}
+
\frac{2NP}{\sqrt d}\sqrt{\log\left(\frac{16N^2P^2}{\delta}\right)}
\right) \\
&\le
2\sigma_0\sigma_n\sqrt{d\log\left(\frac{16NmP}{\delta}\right)}
+
2\rho_{\mathrm{th}} \sigma_n^2 d \\
&\le
\sqrt{5}\sigma_0\sigma_n\sqrt{d\log\left(\frac{16NmP}{\delta}\right)},
\end{aligned}
\end{equation}
where the second inequality follows from \cref{cond:dimension}, and the final inequality follows from the bound on $\rho_{\mathrm{th}}$ together with \cref{cond:alpha,cond:epsilon}.

With the inner product bounded, we next control the cumulative update after \(T_0\). Since \(p_{\mathrm{un}}=0\), every training sample is learnable, and hence \Cref{lem:conv_learn} applies to every \(i\in[N]\). Therefore,
\begin{equation}
\sum_{\tau=T_0}^{t-1}
\psi\left(y_i f_{\mathbf{W}^{(\tau)}}(\tilde{\mathbf{X}}_i^{(\tau)})\right)
\le
\frac{2m^{2/3}\log^{1/3}T}{C_0^2\left(1-\frac{\epsilon}{\alpha}\right)^3\eta\alpha^2}.
\end{equation}
Summing the noise-coefficient update in \Cref{lem:noise_update_rule} from \(\tau=T_0\) to \(t-1\), and using the preceding inner-product bound, gives
\begin{equation}
\begin{aligned}
\rho_{i,j,r}^{(t)}
&=
\rho_{i,j,r}^{(T_0)}
+
\sum_{\tau=T_0}^{t-1}
\frac{3\eta}{N}
\psi\left(y_i f_{\mathbf{W}^{(\tau)}}(\tilde{\mathbf{X}}_i^{(\tau)})\right)
\langle \mathbf{w}_r^{(\tau)}, \mathbf{x}_{i,j} \rangle^2 \\
&\le
\max_{i,j,r}\rho_{i,j,r}^{(T_0)}
+
\frac{3\eta}{N}\,
\sum_{\tau=T_0}^{t-1}
\psi\left(y_i f_{\mathbf{W}^{(\tau)}}(\tilde{\mathbf{X}}_i^{(\tau)})\right)
\cdot
5\sigma_0^2\sigma_n^2 d\log\left(\frac{16NmP}{\delta}\right) \\
&\le
\max_{i,j,r}\rho_{i,j,r}^{(T_0)}
+
\frac{30m^{2/3}\log\left(\frac{16NmP}{\delta}\right)\log^{1/3}T\sigma_0^2\sigma_n^2 d}
{C_0^2\left(1-\frac{\epsilon}{\alpha}\right)^3N\alpha^2}.
\end{aligned}
\end{equation}

This contradicts the choice of $t$.
Therefore, for all $i,j,r$ and all $t \in (T_0, T]$,
\begin{equation}
\rho_{i,j,r}^{(t)} \le \rho_{\mathrm{th}}
\le
200 \exp(2C_0^3)\log\left(\frac{16NmP}{\delta}\right)\cdot
\frac{\sigma_0\sigma_n^2 d}{N(\alpha-\epsilon)^3}.
\end{equation}
Combining this bound for $t\in(T_0,T]$ with \Cref{lem:noise_stability_phase1} for $t\le T_0$ proves the claimed coefficient bound for all $t\le T$.

It remains to record the corresponding noise-response bound. The inner-product estimate above used only the event bounds \eqref{eq:eventE-noise-norm}, \eqref{eq:eventE-noise-inner}, \eqref{eq:eventE-init-noise-ip}, and a uniform coefficient bound by \(\rho_{\mathrm{th}}\). Therefore, after the coefficient bound has been established for all \(t\le T\), the same estimate gives, for every \(i\in[N]\), \(j \neq s(\mathbf X_i)\), \(r\in[m]\), and \(t\le T\),
\begin{equation}
\left| \langle \mathbf{w}_r^{(t)}, \mathbf{x}_{i,j} \rangle \right|
\le
\sqrt{5}\sigma_0\sigma_n\sqrt{d\log\left(\frac{16NmP}{\delta}\right)}
\le
3\sigma_0\sigma_n\sqrt{d\log\left(\frac{16NmP}{\delta}\right)}.
\end{equation}
This completes the proof.
\end{proof}

\begin{theorem}[Adversarial Training without Unlearnable Samples]
\label{thm:at_learn}
On the event $\mathcal E$ defined in \Cref{lem:event_E}, assume that the requirements in \Cref{lemcond:hyper} hold with \Cref{cond:pun} replaced by \(p_{\mathrm{un}}=0\) (equivalently, \(\unlearnset=\emptyset\)), and run AT with only learnable samples for \(T\) iterations.
Then the learned weights satisfy:
\begin{enumerate}
    \item There exists at least one filter $r \in [m]$ whose first coordinate is aligned with the learnable feature: 
    \begin{equation}
    w^{(T)}_{r,1} \ge \tilde{\Omega}(\alpha^{-1}).
    \end{equation}
    \item The network barely memorizes the noise features. More precisely, for every filter \(r\in[m]\) and every noise patch \((i,j)\) with \(i\in[N]\) and \(j\neq s(\mathbf X_i)\),
    \begin{equation}\left|\langle \mathbf{w}_r^{(T)}, \mathbf{x}_{i,j} \rangle \right| \le \tilde{O}(\sigma_0\sigma_n\sqrt{d}) 
    \end{equation}
\end{enumerate}
Consequently, both the robust training error and robust test error are \(o(1)\).
\end{theorem}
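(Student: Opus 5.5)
Three pieces are needed: the two weight properties, which follow almost directly from earlier lemmas, and then the translation into robust training and test error. First, under $p_{\mathrm{un}}=0$, \Cref{ih_noise_stability} holds for all $t\le T$ by \Cref{lem:proof-of-ih_noise_stability}; here the unlearnable term vanishes, so the bound is $3\sigma_0\sigma_n\sqrt{d\log(16NmP/\delta)}$. \Cref{lem:signal_time} then gives $T_0\le \hat T_0$, and \cref{cond:horizon} together with \cref{cond:alpha,cond:eta} ensures $\hat T_0\le T$. Since the signal weights are monotone by \Cref{lem:signal_update_bounds}, we get $\max_r w^{(T)}_{r,1}\ge C_0/(\alpha m^{1/3})=\tilde\Omega(\alpha^{-1})$, using that $m$ is polylogarithmic by \cref{cond:m_and_P}. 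This is item 1. Item 2 is exactly the second conclusion of \Cref{lem:noise_stability_pure_learnable}.

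For the robust training error, fix $i\in[N]$. The worst-case perturbation on the signal patch leaves $y_i(\alpha-\epsilon)\mathbf u$, so the signal contributes $(\alpha-\epsilon)^3\sum_r (w^{(T)}_{r,1})^3$. Negative initial coordinates are bounded by \eqref{eq:eventE-init-e1}, which makes this contribution at least $(1-\epsilon/\alpha)^3C_0^3/m - o(1)$. The noise contribution is at most $mP\cdot\tilde O(\sigma_0\sigma_n\sqrt d)^3\le 2C^{-1}$ by \cref{cond:sigma_0}, exactly as in \Cref{lem:learn_sigmoid_bound}. The margin is therefore positive. A cleaner route uses \Cref{lem:conv_learn} with $t=T$: it bounds $\psi$ at $T$ by $\tilde O(1/(\eta\alpha^2 (T-T_0)))=o(1)$, since $T\gg T_0$, so the adversarial margin tends to infinity. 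Either route gives zero robust training error.

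The robust test error is the main obstacle, because a fresh $(\mathbf X,y)\sim\mathcal D_L$ has noise patches independent of $\mathbf W^{(T)}$, and the adversary is now an arbitrary $\ell_\infty$ perturbation of all patches. I will write $\mathbf w_r^{(T)}=w_{r,1}\mathbf e_1+\mathbf w_r^{\perp}$. By \Cref{lem:noise_update_rule}, $\mathbf w_r^{\perp}=\mathbf w_r^{\perp,(0)}+\sum_{k,q}y_k\rho_{k,q,r}^{(T)}\mathbf x_{k,q}$. For an independent noise patch $\mathbf x_p$, $\langle\mathbf w_r^{(T)},\mathbf x_p\rangle$ is Gaussian with standard deviation $\sigma_n\|\mathbf w_r^\perp\|_2$. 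Using \eqref{eq:eventE-init-l2} and the coefficient bound of \Cref{lem:noise_stability_pure_learnable}, $\|\mathbf w_r^\perp\|_2\le 2\sigma_0\sqrt d+NP\rho_{\max}\sigma_n\sqrt{2d}=\tilde O(\sigma_0\sqrt d)$, so with probability $1-o(1)$ every clean noise response is $\tilde O(\sigma_0\sigma_n\sqrt d)$. An adversarial $\boldsymbol\delta$ with $\|\boldsymbol\delta\|_\infty\le\epsilon$ shifts each response by at most $\epsilon\|\mathbf w_r\|_1$. Bounding $\|\mathbf w_r^\perp\|_1\le\sqrt d\|\mathbf w_r^\perp\|_2=\tilde O(\sigma_0 d)$, this shift is $\tilde O(\epsilon\sigma_0 d)$, which is $\le C^{-1}/m$ by the upper bound $\epsilon<1/(Cm\sigma_0 d)$ in \cref{cond:epsilon}. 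This is precisely why that condition is there.

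On the signal patch, the perturbation reduces the coordinate along $\mathbf e_1$ by at most $\epsilon$ and adds off-signal mass whose effect is again controlled by the same $\ell_1$ bound. After cubing, the total adversarial noise contribution is at most $mP(\tilde O(\sigma_0\sigma_n\sqrt d)+C^{-1}/m)^3=o(1)$. This is dominated by the signal margin $\gtrsim(1-\epsilon/\alpha)^3C_0^3/m$, so $\min_{\boldsymbol\delta} yf_{\mathbf W^{(T)}}(\mathbf X+\boldsymbol\delta)>0$ with probability $1-o(1)$, giving $\mathcal L^{\mathrm{rob}}_{\mathcal D_L}\le o(1)$. If the $\ell_1$ estimate proves too loose, I would instead split the perturbation into the component along $\mathbf e_1$ and its orthogonal part, using $\|\boldsymbol\delta\|_2\le\epsilon\sqrt d$ together with the $\ell_2$ bound on $\mathbf w_r^\perp$. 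Both routes yield the same $\epsilon\sigma_0 d$ scale.
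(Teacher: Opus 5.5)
Items 1 and 2 and the training-error claim match the paper's Steps 1--3, and you additionally check $T_0\le T$, which the paper leaves implicit. Of your two training-error routes, keep the second, which is the paper's Step 3. The first weighs a noise bound $2C^{-1}$ against a margin $C_0^3/m$, and that comparison fails for polylogarithmic $m$ unless you use the sharper $O(C^{-3}/(mP))$ bound that \cref{cond:sigma_0} actually gives.

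For the robust test error your route genuinely differs from the paper's. The paper expands each cube into $S_3,S_2,S_1$ plus a deterministic $\langle\mathbf z_r^{(T)},\boldsymbol\delta_j\rangle^3$ term and controls these with generalized Bernstein--Orlicz tails and the quasi-triangle inequality. You instead bound all $mP$ clean responses at once by a Gaussian tail and a union bound, and control the worst-case perturbation deterministically through an $\ell_1$ norm. Since only polylogarithmically many responses are involved, this suffices, and it is more elementary: no Orlicz-norm machinery is needed.

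The gap is your uniform shift bound. On a noise patch,
\[
\langle \mathbf w_r^{(T)},\mathbf x_p+\boldsymbol\delta_p\rangle
= w^{(T)}_{r,1}\,\delta_{p,1}+\langle \mathbf w_r^{\perp},\mathbf x_p+\boldsymbol\delta_p\rangle ,
\]
because an $\ell_\infty$ perturbation may place $\pm\epsilon$ on the $\mathbf e_1$ coordinate of every patch. You write the shift as $\epsilon\|\mathbf w_r\|_1$ but then bound only $\epsilon\|\mathbf w_r^\perp\|_1$. The missing piece $\epsilon|w^{(T)}_{r,1}|$ is not $O(C^{-1}/m)$:
\begin{itemize}
\item \Cref{lem:signal_weight_bound} only gives $\epsilon w^{(T)}_{r,1}\le 3C^{-1}\log^{1/3}T$;
\item when $\epsilon$ is comparable to $C^{-1}\alpha$ (allowed by \cref{cond:epsilon}), the learned filter alone has $\epsilon w^{(T)}_{r,1}\gtrsim C^{-1}C_0m^{-1/3}$.
\end{itemize}
With a correct uniform budget, your final estimate $mP(\cdots)^3$ no longer provably sits below the margin $\asymp C_0^3/m$.

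The repair is per-filter bookkeeping. This piece is proportional to $w^{(T)}_{r,1}$, so apply $(a+b)^3\le 4(a^3+b^3)$ filter by filter. With $\beta$ your bound on the orthogonal part, the noise-patch contribution is at most $4(P-1)m\beta^3+4(P-1)\epsilon^3\sum_r|w^{(T)}_{r,1}|^3$. The second term is at most $4C^{-2}\sum_r\alpha^3|w^{(T)}_{r,1}|^3$, using $\epsilon<C^{-1}\alpha$ and $P\le C$. It is therefore absorbed by the signal term, since negative $w^{(T)}_{r,1}$ stay at initialization scale.

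The signal patch needs the same treatment. Its off-$\mathbf e_1$ part $B_r$ must be weighed against $A_r=w^{(T)}_{r,1}(\alpha+y\delta_{s,1})$ filter by filter, e.g.\ via $(A+B)^3\ge A^3/8-2|B|^3$ for $A\ge 0$. It cannot simply be added to a uniform budget, because its cross term with the large signal coordinate is not small in absolute value.

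The paper's Step 4.2 makes the same tacit simplification: it writes $\langle\mathbf z_r^{(T)},\tilde{\mathbf x}_j\rangle$ on the noise patches and $y(\alpha-\epsilon)\mathbf u$ on the signal patch. So this fix is needed in either version.
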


\begin{proof}[Proof of \Cref{thm:at_learn}]
\noindent\textbf{Step 1: Signal Alignment.}
By \Cref{lem:signal_time}, after $T_0 \le \frac{5\exp(2C_0^3)}{\eta (\alpha-\epsilon )^3 \sigma_0}$ iterations, there exists at least one filter $r \in [m]$ such that
$
w^{(T_0)}_{r,1} \ge \frac{C_0}{\alpha m^{1/3}}.
$
Since the lower bound of the signal update in \eqref{eq:signal_lower_bound} is non-negative, $w_{r,1}^{(t)}$ is non-decreasing in $t$. Hence,
\begin{equation}
w^{(T)}_{r,1}
\ge
\frac{C_0}{\alpha m^{1/3}}
\ge
\tilde{\Omega}(\alpha^{-1}),
\end{equation}
where the last inequality follows from \cref{cond:m_and_P}.

\noindent\textbf{Step 2: Noise Stability.}
By \Cref{lem:noise_stability_pure_learnable}, for every $i\in[N]$, $j \neq s(\mathbf X_i)$, and $r\in[m]$, the noise response at time \(T\) satisfies
\begin{equation}
\left|
\langle \mathbf w_r^{(T)},\mathbf x_{i,j}\rangle
\right|
\le
3\sigma_0\sigma_n\sqrt{d\log\left(\frac{16NmP}{\delta}\right)}
\le
\widetilde O(\sigma_0\sigma_n\sqrt d).
\end{equation}

\noindent\textbf{Step 3: Training Error.} From the loss convergence guarantee in \Cref{lem:conv_learn}, the average logistic loss satisfies:
\begin{equation}
\frac{1}{N} \sum_{i \in [N]} \ell\left(y_i f_{\mathbf{W}^{(T)}}(\tilde{\mathbf{X}}_i^{(T)})\right)
\le
\frac{10}{N} \sum_{i \in [N]} \psi\left(y_i f_{\mathbf{W}^{(T)}}(\tilde{\mathbf{X}}_i^{(T)})\right)
\le
\frac{40m^{2/3}\log^{1/3}T}{C_0^2\left(1-\frac{\epsilon}{\alpha}\right)^3\eta\alpha^2(T-T_0)}
\le
o(1),
\end{equation}
where the first inequality follows from \Cref{lem:log_derivative_connection}, and the last inequality follows from \Cref{cond:horizon,cond:alpha}.

\noindent\textbf{Step 4: Test Error.}

\noindent\textbf{Step 4.1: Noise norm bound.}
We decompose the final weight into the signal and noise components:
\begin{equation}
\mathbf{w}_r^{(T)} = w_{r,1}^{(T)}\mathbf{e}_1 + \mathbf{z}_r^{(T)},
\qquad r\in[m],
\end{equation}
where $\mathbf{z}_r^{(T)} \in \mathrm{span}(\mathbf{e}_1)^\perp$. By the update rule decomposition,
\begin{equation}
\mathbf{z}_r^{(T)}
=
\mathbf{z}_r^{(0)}
+
\sum_{i=1}^N \sum_{j \neq s(\mathbf{X}_i)} \rho_{i,j,r}^{(T)} y_i \mathbf{x}_{i,j},
\end{equation}
and hence
\begin{equation}
\|\mathbf{z}_r^{(T)}\|_2
\le
\|\mathbf{z}_r^{(0)}\|_2
+
\sum_{i=1}^N \sum_{j \neq s(\mathbf{X}_i)} \rho_{i,j,r}^{(T)} \|\mathbf{x}_{i,j}\|_2.
\end{equation}
On the event $\mathcal E$, using \eqref{eq:eventE-noise-norm}, \eqref{eq:eventE-init-l2}, and \Cref{lem:noise_stability_pure_learnable}, we obtain
\begin{equation}
\begin{aligned}
\|\mathbf z_r^{(T)}\|_2
&\le
2\sigma_0\sqrt d
+
300 P \exp(2C_0^3)\log\left(\frac{16NmP}{\delta}\right)
\cdot
\frac{\sigma_0 \sigma_n^3 d^{3/2}}{(\alpha-\epsilon)^3}\\
&\le
2\sigma_0\sqrt d
+
300 NP \exp(2C_0^3)\sigma_0\\
&\le
3\sigma_0\sqrt d,
\end{aligned}
\end{equation}
where the second inequality follows from \cref{cond:alpha,cond:epsilon}, and the last inequality follows from \cref{cond:dimension}. Therefore,
\begin{equation}
\|\mathbf z_r^{(T)}\|_1 \le \sqrt d\,\|\mathbf z_r^{(T)}\|_2 \le 3\sigma_0 d
\qquad\text{for all } r\in[m].
\end{equation}

\noindent\textbf{Step 4.2: Tail reduction.}
For a test sample $(\mathbf X,y)\sim\mathcal{D}_L$, let $\tilde{\mathbf x}_j=\mathbf x_j+\boldsymbol\delta_j$ with $\|\boldsymbol\delta_j\|_\infty\le\epsilon$ for each non-signal patch $j \neq s(\mathbf X)$. By \Cref{lem:signal_update_bounds}, the signal weights are monotonically non-decreasing, so $w_{r,1}^{(T)} \ge w_{r,1}^{(0)}$. On the event $\mathcal E$, \eqref{eq:eventE-init-e1} gives
\begin{equation}
w_{r,1}^{(0)} \ge -\sigma_0\sqrt{2\log\left(\frac{16m}{\delta}\right)}
\qquad\text{for all } r\in[m].
\end{equation}

Moreover, by \Cref{lem:signal_time}, there exists at least one filter $r \in [m]$ such that
$
w^{(T)}_{r,1} \ge \frac{C_0}{\alpha m^{1/3}}.
$
Therefore,
\begin{equation}
\begin{aligned}
\alpha^3\left(1-\frac{\epsilon}{\alpha}\right)^3
\sum_{r\in[m]}(w_{r,1}^{(T)})^3
&\ge
\alpha^3\left(1-\frac{\epsilon}{\alpha}\right)^3
\left[
\left(\frac{C_0}{\alpha m^{1/3}}\right)^3
-
m\left(\sigma_0\sqrt{2\log\left(\frac{16m}{\delta}\right)}\right)^3
\right]\\
&\ge
\frac{C_0^3}{2m}\left(1-\frac{\epsilon}{\alpha}\right)^3,
\end{aligned}
\end{equation}
where the last inequality follows from \cref{cond:sigma_0}.

Using this signal margin, the robust loss is bounded by
\begin{equation}
\mathcal{L}^{\mathrm{rob}}_{\mathcal{D}_L}(f_{\mathbf W^{(T)}})
\le
\Pr_{(\mathbf X,y)\sim\mathcal{D}_L}\left[
\frac{C_0^3}{2m}\left(1-\frac{\epsilon}{\alpha}\right)^3
+
y\sum_{r\in[m]}\sum_{j \neq s(\mathbf X)}
\langle \mathbf z_r^{(T)},\tilde{\mathbf x}_j\rangle^3
<0
\right].
\end{equation}

Expanding the cubic term gives
\begin{equation}
\begin{aligned}
\langle \mathbf z_r^{(T)},\tilde{\mathbf x}_j\rangle^3
=
\langle \mathbf z_r^{(T)},\mathbf x_j\rangle^3
+
3\langle \mathbf z_r^{(T)},\mathbf x_j\rangle^2
\langle \mathbf z_r^{(T)},\boldsymbol\delta_j\rangle+
3\langle \mathbf z_r^{(T)},\mathbf x_j\rangle
\langle \mathbf z_r^{(T)},\boldsymbol\delta_j\rangle^2
+
\langle \mathbf z_r^{(T)},\boldsymbol\delta_j\rangle^3.
\end{aligned}
\end{equation}
Moreover, since $\|\boldsymbol\delta_j\|_\infty\le\epsilon$, we have
\begin{equation}
\left|\langle \mathbf z_r^{(T)},\boldsymbol\delta_j\rangle\right|
\le
\epsilon\|\mathbf z_r^{(T)}\|_1
\le
3\epsilon\sigma_0 d.
\end{equation}
Therefore,
\begin{equation}
\left|
y\sum_{r\in[m]}\sum_{j \neq s(\mathbf X)}
\langle \mathbf z_r^{(T)},\boldsymbol\delta_j\rangle^3
\right|
\le
mP\,(3\epsilon\sigma_0 d)^3.
\end{equation}
By \cref{cond:epsilon}, this deterministic term is absorbed by the signal margin:
\begin{equation}
mP\,(3\epsilon\sigma_0 d)^3
\le
\frac{C_0^3}{4m}\left(1-\frac{\epsilon}{\alpha}\right)^3.
\end{equation}
Hence,
\begin{equation}
\begin{aligned}
\mathcal{L}^{\mathrm{rob}}_{\mathcal{D}_L}(f_{\mathbf W^{(T)}})
&\le
\Pr_{(\mathbf X,y)\sim\mathcal{D}_L}\Biggl[
y\sum_{r\in[m]}\sum_{j \neq s(\mathbf X)}
\left(
\langle \mathbf z_r^{(T)},\mathbf x_j\rangle^3
+
3\langle \mathbf z_r^{(T)},\mathbf x_j\rangle^2
\langle \mathbf z_r^{(T)},\boldsymbol\delta_j\rangle\right.\\
&\hspace{5.6cm}
\left.+
3\langle \mathbf z_r^{(T)},\mathbf x_j\rangle
\langle \mathbf z_r^{(T)},\boldsymbol\delta_j\rangle^2
\right)
<
-\frac{C_0^3}{4m}\left(1-\frac{\epsilon}{\alpha}\right)^3
\Biggr].
\end{aligned}
\end{equation}

\noindent\textbf{Step 4.3: Tail bound.}
Define
\begin{equation}
S_3(\mathbf X,y)
\coloneqq
y\sum_{r\in[m]}\sum_{j \neq s(\mathbf X)}
\langle \mathbf z_r^{(T)},\mathbf x_j\rangle^3,
\end{equation}
\begin{equation}
S_2(\mathbf X,y)
\coloneqq
3y\sum_{r\in[m]}\sum_{j \neq s(\mathbf X)}
\langle \mathbf z_r^{(T)},\mathbf x_j\rangle^2
\langle \mathbf z_r^{(T)},\boldsymbol\delta_j\rangle,
\end{equation}
\begin{equation}
S_1(\mathbf X,y)
\coloneqq
3y\sum_{r\in[m]}\sum_{j \neq s(\mathbf X)}
\langle \mathbf z_r^{(T)},\mathbf x_j\rangle
\langle \mathbf z_r^{(T)},\boldsymbol\delta_j\rangle^2,
\end{equation}
and
\begin{equation}
\gamma
\coloneqq
\frac{C_0^3}{12m}\left(1-\frac{\epsilon}{\alpha}\right)^3.
\end{equation}

Then
\begin{equation}
\mathcal{L}^{\mathrm{rob}}_{\mathcal{D}_L}(f_{\mathbf W^{(T)}})
\le
\Pr_{(\mathbf X,y)\sim\mathcal{D}_L}\left[
S_3(\mathbf X,y)+S_2(\mathbf X,y)+S_1(\mathbf X,y)
<
-3\gamma
\right].
\end{equation}
Applying the union bound, we obtain
\begin{equation}
\begin{aligned}
\mathcal{L}^{\mathrm{rob}}_{\mathcal{D}_L}(f_{\mathbf W^{(T)}})
\le
\Pr_{(\mathbf X,y)\sim\mathcal{D}_L}\left[
|S_3(\mathbf X,y)|
>
\gamma
\right]+
\Pr_{(\mathbf X,y)\sim\mathcal{D}_L}\left[
|S_2(\mathbf X,y)|
>
\gamma
\right]+
\Pr_{(\mathbf X,y)\sim\mathcal{D}_L}\left[
|S_1(\mathbf X,y)|
>
\gamma
\right].
\end{aligned}
\end{equation}

We first control $S_3$. For each $(r,j)$, the inner product $\langle \mathbf z_r^{(T)},\mathbf x_j\rangle$ is Gaussian with variance $\sigma_n^2\|\mathbf z_r^{(T)}\|_2^2$, and therefore \Cref{lem:poly_gaussian_gbo} with $\alpha=2/3$ and $L=1$ gives
\begin{equation}
\left\|\langle \mathbf z_r^{(T)},\mathbf x_j\rangle^3\right\|_{\Psi_{2/3,1}}
\le
c\,\sigma_n^3\|\mathbf z_r^{(T)}\|_2^3
\le
27c\,\sigma_0^3\sigma_n^3 d^{3/2},
\end{equation}
where we used $\|\mathbf z_r^{(T)}\|_2\le 3\sigma_0\sqrt d$. Applying \Cref{lem:gbo_quasi_triangle} with $Q_{2/3}=(2e^{8/3})^{3/2}$, we obtain
\begin{equation}
\|S_3(\mathbf X,y)\|_{\Psi_{2/3,1}}
\le
27cQ_{2/3}\,mP\,\sigma_0^3\sigma_n^3 d^{3/2}
\coloneqq \kappa_3.
\end{equation}
Now apply \Cref{lem:gbo_tail} with $\alpha=2/3$, $L=1$, and the parameter $\kappa_3$:
\begin{equation}
\begin{aligned}
\Pr_{(\mathbf X,y)\sim\mathcal{D}_L}\left[
|S_3(\mathbf X,y)|>\gamma
\right]
&\le
2\exp\left(
-\min\left\{
\left(\frac{\gamma}{2\kappa_3}\right)^2,\;
\left(\frac{\gamma}{2\kappa_3}\right)^{2/3}
\right\}
\right).
\end{aligned}
\end{equation}

Next, we control $S_2$. Since $\left|\langle \mathbf z_r^{(T)},\boldsymbol\delta_j\rangle\right|\le 3\epsilon\sigma_0 d$, \Cref{lem:poly_gaussian_gbo} with $\alpha=1$ and $L=1$ yields
\begin{equation}
\left\|\langle \mathbf z_r^{(T)},\mathbf x_j\rangle^2\right\|_{\Psi_{1,1}}
\le
c\,\sigma_n^2\|\mathbf z_r^{(T)}\|_2^2
\le
9c\,\sigma_0^2\sigma_n^2 d.
\end{equation}
Therefore, for each $(r,j)$,
\begin{equation}
\begin{aligned}
\left\|
3\langle \mathbf z_r^{(T)},\mathbf x_j\rangle^2
\langle \mathbf z_r^{(T)},\boldsymbol\delta_j\rangle
\right\|_{\Psi_{1,1}}
&\le
3\left|\langle \mathbf z_r^{(T)},\boldsymbol\delta_j\rangle\right|
\left\|\langle \mathbf z_r^{(T)},\mathbf x_j\rangle^2\right\|_{\Psi_{1,1}}\\
&\le
81c\,\epsilon\,\sigma_0^3\sigma_n^2 d^2.
\end{aligned}
\end{equation}
Applying \Cref{lem:gbo_quasi_triangle} with $Q_1=1$, we obtain
\begin{equation}
\|S_2(\mathbf X,y)\|_{\Psi_{1,1}}
\le
81c\,mP\,\epsilon\,\sigma_0^3\sigma_n^2 d^2
\coloneqq \kappa_2.
\end{equation}
Now apply \Cref{lem:gbo_tail} with $\alpha=1$, $L=1$, and the parameter $\kappa_2$:
\begin{equation}
\begin{aligned}
\Pr_{(\mathbf X,y)\sim\mathcal{D}_L}\left[
|S_2(\mathbf X,y)|>\gamma
\right]
&\le
2\exp\left(
-\min\left\{
\left(\frac{\gamma}{2\kappa_2}\right)^2,\;
\frac{\gamma}{2\kappa_2}
\right\}
\right).
\end{aligned}
\end{equation}

Finally, we control $S_1$. By \Cref{lem:poly_gaussian_gbo} with $\alpha=2$ and $L=1$,
\begin{equation}
\left\|\langle \mathbf z_r^{(T)},\mathbf x_j\rangle\right\|_{\Psi_{2,1}}
\le
c\,\sigma_n\|\mathbf z_r^{(T)}\|_2
\le
3c\,\sigma_0\sigma_n\sqrt d.
\end{equation}
Since $\left|\langle \mathbf z_r^{(T)},\boldsymbol\delta_j\rangle\right| \le 3\epsilon\sigma_0 d$, for each $(r,j)$,
\begin{equation}
\begin{aligned}
\left\|
3\langle \mathbf z_r^{(T)},\mathbf x_j\rangle
\langle \mathbf z_r^{(T)},\boldsymbol\delta_j\rangle^2
\right\|_{\Psi_{2,1}}
&\le
3\left|\langle \mathbf z_r^{(T)},\boldsymbol\delta_j\rangle\right|^2
\left\|\langle \mathbf z_r^{(T)},\mathbf x_j\rangle\right\|_{\Psi_{2,1}}\\
&\le
81c\,\epsilon^2\,\sigma_0^3\sigma_n d^{5/2}.
\end{aligned}
\end{equation}
Applying \Cref{lem:gbo_quasi_triangle} with $Q_2=1$, we obtain
\begin{equation}
\|S_1(\mathbf X,y)\|_{\Psi_{2,1}}
\le
81c\,mP\,\epsilon^2\,\sigma_0^3\sigma_n d^{5/2}
\coloneqq \kappa_1.
\end{equation}
Now apply \Cref{lem:gbo_tail} with $\alpha=2$, $L=1$, and the parameter $\kappa_1$:
\begin{equation}
\begin{aligned}
\Pr_{(\mathbf X,y)\sim\mathcal{D}_L}\left[
|S_1(\mathbf X,y)|>\gamma
\right]
&\le
2\exp\left(
-\left(\frac{\gamma}{2\kappa_1}\right)^2
\right).
\end{aligned}
\end{equation}

Applying the three tail bounds above and the union bound, we obtain
\begin{equation}
\begin{aligned}
\mathcal{L}^{\mathrm{rob}}_{\mathcal{D}_L}(f_{\mathbf W^{(T)}})
&\le
\Pr_{(\mathbf X,y)\sim\mathcal{D}_L}\left[
|S_3(\mathbf X,y)|>\gamma
\right]
+
\Pr_{(\mathbf X,y)\sim\mathcal{D}_L}\left[
|S_2(\mathbf X,y)|>\gamma
\right]
+
\Pr_{(\mathbf X,y)\sim\mathcal{D}_L}\left[
|S_1(\mathbf X,y)|>\gamma
\right] \\
&\le
2\exp\left(
-\min\left\{
\left(\frac{\gamma}{2\kappa_3}\right)^2,\;
\left(\frac{\gamma}{2\kappa_3}\right)^{2/3}
\right\}
\right) +
2\exp\left(
-\min\left\{
\left(\frac{\gamma}{2\kappa_2}\right)^2,\;
\frac{\gamma}{2\kappa_2}
\right\}
\right) \\
&\quad+
2\exp\left(
-\left(\frac{\gamma}{2\kappa_1}\right)^2
\right).
\end{aligned}
\end{equation}
Under \cref{cond:sigma_0,cond:epsilon}, the threshold $\gamma$ satisfies $\gamma\gg \kappa_3$, $\gamma\gg \kappa_2$, and $\gamma\gg \kappa_1$.
Therefore,
\begin{equation}
\mathcal{L}^{\mathrm{rob}}_{\mathcal{D}_L}(f_{\mathbf W^{(T)}})\le o(1).
\end{equation}
Combining this robust test error bound with the signal alignment, the noise-response control, and the robust training error estimate established above proves all claims of the theorem.
\end{proof}

\subsection{Noise Growth on Unlearnable Samples}
We now analyze the sparse-unlearnable regime. Unlike the learnable-only case, unlearnable samples continue to induce residual gradients, which drive the growth of a shifted noise coefficient and eventually lead to noise memorization.

\begin{lemma}[Gradient Bound on Unlearnable Samples]
\label{lem:unlearn_sigmoid_bound}
Define the shifted noise coefficient and its maximum over unlearnable noise patches as:
\begin{equation}
\hat{\rho}_{i,j,r}^{(t)}
\coloneqq
\rho_{i,j,r}^{(t)}
+
y_i\frac{\langle \mathbf{w}_r^{(0)}, \mathbf{x}_{i,j} \rangle}{\|\mathbf{x}_{i,j}\|_2^2},
\qquad
\hat{\rho}_{\max}^{(t)}
\coloneqq
\max_{r\in[m],\,k\in\unlearnset,\,q \neq s(\mathbf X_k)}
\hat{\rho}_{k,q,r}^{(t)}.
\end{equation}
On the event $\mathcal E$ defined in \Cref{lem:event_E}, for any unlearnable sample $i \in \unlearnset$ at iteration $t$, provided that
$
\hat{\rho}_{\max}^{(t)}
\le
\frac{C_1}{(mP)^{1/3} \sigma_n^2 d},
$
the negative sigmoid function satisfies
\begin{equation}
\frac{1}{1 + \exp\left(4C_1^3\right)}
\le
\psi\left(y_i f_{\mathbf{W}^{(t)}}(\tilde{\mathbf{X}}_i^{(t)})\right)
\le
1.
\end{equation}
\end{lemma}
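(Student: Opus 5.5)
The plan mirrors the proof of \Cref{lem:learn_sigmoid_bound}: bound the margin of an unlearnable sample from above by an absolute constant and then use that $\psi$ is decreasing. The upper bound $\psi\le 1$ is trivial. For $i\in\unlearnset$ the signal patch is $\alpha y_i\mathbf v$, and the adversarial perturbation stays in its span. Since every filter satisfies $\langle\mathbf w_r^{(t)},\mathbf v\rangle=0$, the signal patch contributes exactly zero to $f_{\mathbf W^{(t)}}(\tilde{\mathbf X}_i^{(t)})$. The margin therefore reduces to
\begin{equation}
y_i f_{\mathbf W^{(t)}}(\tilde{\mathbf X}_i^{(t)})=\sum_{r\in[m]}\sum_{j\neq s(\mathbf X_i)} y_i\langle \mathbf w_r^{(t)},\mathbf x_{i,j}\rangle^3 ,
\end{equation}
and since $z\mapsto z^3$ is odd and increasing it suffices to bound $y_i\langle\mathbf w_r^{(t)},\mathbf x_{i,j}\rangle$ from above for each $(r,j)$.

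Using the decomposition in \Cref{lem:noise_update_rule} and the definition of $\hat\rho$,
\begin{equation}
y_i\langle \mathbf w_r^{(t)},\mathbf x_{i,j}\rangle=\hat\rho_{i,j,r}^{(t)}\|\mathbf x_{i,j}\|_2^2+\sum_{(k,q)\neq(i,j)} y_iy_k\rho_{k,q,r}^{(t)}\langle\mathbf x_{k,q},\mathbf x_{i,j}\rangle .
\end{equation}
\begin{itemize}
\item \textbf{Main term.} By \eqref{eq:eventE-noise-norm} and the hypothesis on $\hat\rho_{\max}^{(t)}$, it is at most $\tfrac32 C_1 (mP)^{-1/3}$. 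If $\hat\rho_{i,j,r}^{(t)}$ is negative, the main term only helps the upper bound.
\item \textbf{Learnable cross terms.} Bound them by $\rho_{k,q,r}^{(t)}$ times \eqref{eq:eventE-noise-inner}, using \eqref{eq:learn_coeff_bound_by_unlearn_max} of \Cref{lem:proof-of-ih_noise_stability} (or simply \Cref{lem:uniform_noise_bound}). Summed over at most $NP$ patches this gives $O(NP\log^{1/3}T\sqrt{\log}/\sqrt d)$, which is at most $C^{-1}(mP)^{-1/3}$ by \cref{cond:dimension}.
\item \textbf{Unlearnable cross terms.} Write $\rho_{k,q,r}=\hat\rho_{k,q,r}-y_k\langle\mathbf w_r^{(0)},\mathbf x_{k,q}\rangle/\|\mathbf x_{k,q}\|_2^2$. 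The first part gives at most $p_{\mathrm{un}}NP\cdot \frac{C_1}{(mP)^{1/3}\sigma_n^2 d}\cdot 2\sigma_n^2\sqrt{d\log}$, which is negligible since $p_{\mathrm{un}}N\le C^{-1}\log d$ and $d$ is large. The initialization correction is of order $\sigma_0/(\sigma_n\sqrt d)$ by \eqref{eq:eventE-init-noise-ip}, so its contribution is also negligible.
\end{itemize}

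One subtlety is that $\rho\ge 0$ while $\hat\rho$ may be smaller, so the lower-order pieces must be bounded in absolute value. Taking $C_1$ to be an absolute constant with $C$ in \Cref{lemcond:hyper} large relative to it, each inner product then satisfies $y_i\langle\mathbf w_r^{(t)},\mathbf x_{i,j}\rangle\le 2C_1(mP)^{-1/3}$ once the error terms are absorbed. (Strictly $(3/2+o(1))^3\le 4$, which leaves room.)

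Cubing and summing over the at most $mP$ pairs $(r,j)$ gives
\begin{equation}
y_i f_{\mathbf W^{(t)}}(\tilde{\mathbf X}_i^{(t)})\le mP\cdot\bigl(\tfrac{3}{2}C_1+o(1)\bigr)^3 (mP)^{-1}\le 4C_1^3 .
\end{equation}
Since $\psi$ is decreasing, this yields $\psi\ge 1/(1+e^{4C_1^3})$.

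The main obstacle is the cross-term control. The sum contains $NP$ terms, and handling it requires a uniform bound on all coefficients, both learnable and unlearnable, independent of the time $t$. I would rely on \Cref{lem:uniform_noise_bound}, which holds unconditionally on $\mathcal E$, so that \cref{cond:dimension} with $d\gtrsim m^2P^2N^2\log^4$ kills the $NP/\sqrt d$ factor even after multiplying by $\log^{1/3}T$. I would also check that the stated $\hat\rho$ assumption needs no separate lower bound, since only the upper bound on the margin matters.
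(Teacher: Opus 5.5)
Your proposal is correct and is essentially the paper's proof. The steps match: the margin reduces to the noise-patch cubes, and each response is decomposed via \Cref{lem:noise_update_rule}, with the self term controlled by the $\hat\rho^{(t)}_{\max}$ hypothesis and \eqref{eq:eventE-noise-norm} and the cross terms absorbed using \eqref{eq:eventE-noise-inner} together with \cref{cond:dimension,cond:pun}; cubing and summing then gives $y_i f_{\mathbf W^{(t)}}(\tilde{\mathbf X}_i^{(t)})\le 4C_1^3$. Your two deviations are harmless simplifications: you bound $y_i\langle\mathbf w_r^{(t)},\mathbf x_{i,j}\rangle$ one-sidedly, so no lower bound on $\hat\rho_{i,j,r}^{(t)}$ is needed, and you control the learnable cross terms with \Cref{lem:uniform_noise_bound} rather than the refined bound from \Cref{lem:proof-of-ih_noise_stability}. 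Just carry the constant $\tfrac32 C_1+o(1)$ through to the end rather than the stray $2C_1$, since $2^3>4$.
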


\begin{proof}[Proof of \Cref{lem:unlearn_sigmoid_bound}]
Consider an unlearnable sample $i \in \unlearnset$. Its signal patch is aligned with $\mathbf v=\mathbf e_d$, while the weights remain orthogonal to $\mathbf v$. Hence the signal-patch contribution vanishes, and the margin is determined entirely by the noise patches:
\begin{equation}
y_i f_{\mathbf{W}^{(t)}}(\tilde{\mathbf{X}}_i^{(t)})
=
\sum_{r\in[m]}\sum_{j \neq s(\mathbf X_i)}
y_i\langle \mathbf{w}_r^{(t)}, \mathbf{x}_{i,j} \rangle^3.
\end{equation}

On the event $\mathcal E$, by \eqref{eq:eventE-noise-norm} and \eqref{eq:eventE-init-noise-ip},
\begin{equation}
\left|
\frac{\langle \mathbf{w}_r^{(0)}, \mathbf{x}_{k,q} \rangle}{\|\mathbf{x}_{k,q}\|_2^2}
\right|
\le
\frac{4\sigma_0\sqrt{\log\left(\frac{16NmP}{\delta}\right)}}{\sigma_n\sqrt d}.
\end{equation}
By \cref{cond:sigma_0}, this implies
\begin{equation}
\left|
\frac{\langle \mathbf{w}_r^{(0)}, \mathbf{x}_{k,q} \rangle}{\|\mathbf{x}_{k,q}\|_2^2}
\right|
\le
\frac{C_1}{(mP)^{1/3}\sigma_n^2 d}
\end{equation}
for all $r\in[m]$, $k\in[N]$, and $q \neq s(\mathbf X_k)$.

Now fix $k\in\unlearnset$, $q \neq s(\mathbf X_k)$, and $r\in[m]$. Since $\rho_{k,q,r}^{(t)}\ge 0$, we have
\begin{equation}
\hat{\rho}_{k,q,r}^{(t)}
=
\rho_{k,q,r}^{(t)}
+
y_k\frac{\langle \mathbf{w}_r^{(0)}, \mathbf{x}_{k,q} \rangle}{\|\mathbf{x}_{k,q}\|_2^2}
\ge
-
\left|
\frac{\langle \mathbf{w}_r^{(0)}, \mathbf{x}_{k,q} \rangle}{\|\mathbf{x}_{k,q}\|_2^2}
\right|
\ge
-
\frac{C_1}{(mP)^{1/3}\sigma_n^2 d}.
\end{equation}
Together with the assumed upper bound $\hat{\rho}_{k,q,r}^{(t)} \le \hat{\rho}_{\max}^{(t)} \le \frac{C_1}{(mP)^{1/3}\sigma_n^2 d}$, this yields
\begin{equation}
\left|\hat{\rho}_{k,q,r}^{(t)}\right|
\le
\frac{C_1}{(mP)^{1/3}\sigma_n^2 d}
\end{equation}
and
\begin{equation}
\rho_{k,q,r}^{(t)}
\le
\frac{2C_1}{(mP)^{1/3}\sigma_n^2 d}
\qquad
\text{for all } k\in\unlearnset,\ q \neq s(\mathbf X_k),\ r\in[m].
\end{equation}

For each $r\in[m]$ and $j \neq s(\mathbf X_i)$, by the decomposition in \Cref{lem:noise_update_rule}, the triangle inequality and the nonnegativity of the noise coefficients give
\begin{equation}
\begin{aligned}
\left| \langle \mathbf{w}_r^{(t)}, \mathbf{x}_{i,j} \rangle \right|
\le
\left| \hat{\rho}_{i,j,r}^{(t)} \right| \|\mathbf{x}_{i,j}\|_2^2
+
\sum_{\substack{k\in\unlearnset,\ q\neq s(\mathbf X_k) \\ (k,q)\neq(i,j)}}
\rho_{k,q,r}^{(t)}
\left| \langle \mathbf{x}_{k,q}, \mathbf{x}_{i,j} \rangle \right| +
\sum_{k\in\learnset,\ q\neq s(\mathbf X_k)}
\rho_{k,q,r}^{(t)}
\left| \langle \mathbf{x}_{k,q}, \mathbf{x}_{i,j} \rangle \right|.
\end{aligned}
\end{equation}

On the event $\mathcal E$, the bounds \eqref{eq:eventE-noise-norm} and
\eqref{eq:eventE-noise-inner} imply
\begin{equation}
\|\mathbf{x}_{i,j}\|_2^2 \le \frac32 \sigma_n^2 d,
\qquad
\left| \langle \mathbf{x}_{k,q}, \mathbf{x}_{i,j} \rangle \right|
\le
2\sigma_n^2\sqrt{d\log\left(\frac{16N^2P^2}{\delta}\right)}.
\end{equation}
Moreover, by \cref{cond:horizon},
$
\frac{2C_1}{(mP)^{1/3}\sigma_n^2d}
\le
\frac{10\log^{1/3}T}{\sigma_n^2d}.
$
Hence \Cref{lem:proof-of-ih_noise_stability}, applied with
$
\rho_{\max}^{\unlearnset}
\le
\frac{2C_1}{(mP)^{1/3}\sigma_n^2d},
$
gives
\begin{equation}
\rho_{k,q,r}^{(t)}
\le
\frac{\sigma_0}{\sigma_n\sqrt d}
+
\frac{
C_1^2p_{\mathrm{un}}N
}{
25(mP)^{2/3}\sigma_n^2d^{3/2}\log^{2/3}T
}
\qquad
\text{for } k\in\learnset.
\end{equation}

Substituting these estimates into the previous decomposition gives
\begin{equation}
\begin{aligned}
\left| \langle \mathbf{w}_r^{(t)}, \mathbf{x}_{i,j} \rangle \right|
&\le
\left( \frac{C_1}{(mP)^{1/3} \sigma_n^2 d} \right)\cdot \frac32 \sigma_n^2 d +
p_{\mathrm{un}}NP
\left( \frac{2C_1}{(mP)^{1/3} \sigma_n^2 d} \right)
\cdot
2\sigma_n^2\sqrt{d\log\left(\frac{16N^2P^2}{\delta}\right)} \\
&\quad+
NP
\left(
\frac{\sigma_0}{\sigma_n \sqrt d}
+
\frac{
C_1^2p_{\mathrm{un}}N
}{
25(mP)^{2/3}\sigma_n^2d^{3/2}\log^{2/3}T
}
\right)
\cdot
2\sigma_n^2\sqrt{d\log\left(\frac{16N^2P^2}{\delta}\right)} \\
&=
\frac{3C_1}{2(mP)^{1/3}}
+
\frac{4C_1 p_{\mathrm{un}} NP}{(mP)^{1/3}\sqrt d}
\sqrt{\log\left(\frac{16N^2P^2}{\delta}\right)} +
2NP\sigma_0\sigma_n
\sqrt{\log\left(\frac{16N^2P^2}{\delta}\right)} \\
&\quad+
\frac{
2C_1^2p_{\mathrm{un}}N^2P
}{
25(mP)^{2/3}d\log^{2/3}T
}
\sqrt{\log\left(\frac{16N^2P^2}{\delta}\right)}.
\end{aligned}
\end{equation}
We now bound the last three terms separately. First, by \cref{cond:dimension} and
\cref{cond:pun},
\begin{equation}
\frac{4C_1 p_{\mathrm{un}} NP}{(mP)^{1/3}\sqrt d}
\sqrt{\log\left(\frac{16N^2P^2}{\delta}\right)}
\le
\frac{C_1}{100(mP)^{1/3}}.
\end{equation}
Second, by \cref{cond:sigma_0} and \cref{cond:dimension},
\begin{equation}
\begin{aligned}
2NP\sigma_0\sigma_n
\sqrt{\log\left(\frac{16N^2P^2}{\delta}\right)}
\le
\frac{C^{-1}NP
\sqrt{\log\left(\frac{16N^2P^2}{\delta}\right)}
}{
(mP)^{1/3}\sqrt d
\sqrt{\log\left(\frac{16NmP}{\delta}\right)}
} \le
\frac{C_1}{100(mP)^{1/3}}.
\end{aligned}
\end{equation}
Finally, for the last term, using \cref{cond:pun} first gives
\begin{equation}
\begin{aligned}
\frac{
2C_1^2p_{\mathrm{un}}N^2P
}{
25(mP)^{2/3}d\log^{2/3}T
}
\sqrt{\log\left(\frac{16N^2P^2}{\delta}\right)}\le
C
\frac{
C_1^2NP
}{
(mP)^{2/3}d\log^{2/3}T
}
\sqrt{\log\left(\frac{16N^2P^2}{\delta}\right)} 
\le
\frac{C_1}{100(mP)^{1/3}},
\end{aligned}
\end{equation}
where the last inequality follows from \cref{cond:dimension}.
Therefore,
\begin{equation}
\left| \langle \mathbf{w}_r^{(t)}, \mathbf{x}_{i,j} \rangle \right|
\le
\frac{3C_1}{2(mP)^{1/3}}
+
\frac{3C_1}{100(mP)^{1/3}}
\le
\frac{4^{1/3}C_1}{(mP)^{1/3}}.
\end{equation}

Consequently,
\begin{equation}
\sum_{r\in[m]}\sum_{j \neq s(\mathbf X_i)}
\left| \langle \mathbf{w}_r^{(t)}, \mathbf{x}_{i,j} \rangle \right|^3
\le
mP \left( \frac{4^{1/3}C_1}{(mP)^{1/3}} \right)^3
=
4C_1^3.
\end{equation}
Since the signal-patch contribution vanishes for unlearnable samples, this implies
\begin{equation}
y_i f_{\mathbf{W}^{(t)}}(\tilde{\mathbf{X}}_i^{(t)})
\le
4C_1^3.
\end{equation}
Finally, since $\psi(\cdot)$ is monotonically decreasing,
\begin{equation}
\psi\left(y_i f_{\mathbf{W}^{(t)}}(\tilde{\mathbf{X}}_i^{(t)})\right)
\ge
\frac{1}{1+\exp(4C_1^3)}.
\end{equation}
Together with the trivial upper bound \(\psi(\cdot)\le 1\), this proves the claimed gradient bound on unlearnable samples.
\end{proof}

\begin{lemma}[Dynamics of the Maximum Shifted Noise Coefficient]
\label{lem:unified_noise_dynamics}
On the event $\mathcal E$ defined in \Cref{lem:event_E}, suppose that the maximum shifted noise coefficient $\hat{\rho}_{\max}^{(t)}$ defined in \Cref{lem:unlearn_sigmoid_bound} satisfies $\hat{\rho}_{\max}^{(t)} \le \frac{C_1}{(mP)^{1/3}\sigma_n^2 d}$. Then the discrete dynamics satisfy
\begin{equation}
\hat{\rho}_{\max}^{(t+1)}
\ge
\hat{\rho}_{\max}^{(t)}
+
\frac{\eta \sigma_n^4 d^2}{3N\left(1+\exp(4C_1^3)\right)}
\left(\hat{\rho}_{\max}^{(t)}\right)^2,
\end{equation}
and
\begin{equation}
\hat{\rho}_{\max}^{(t+1)}
\le
\hat{\rho}_{\max}^{(t)}
+
\frac{4800\eta\sigma_n^4 d^2\log\left(\frac{16NmP}{\delta}\right)}{N}
\left(\hat{\rho}_{\max}^{(t)}\right)^2.
\end{equation}
\end{lemma}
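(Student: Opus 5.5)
The plan is to track a single maximizing triple for the lower bound and to bound every triple for the upper bound. Both rest on one identity. By \Cref{lem:noise_update_rule}, the initialization shift in $\hat\rho$ does not depend on $t$, so
\[
\hat\rho_{k,q,r}^{(t+1)}-\hat\rho_{k,q,r}^{(t)}=\frac{3\eta}{N}\,\psi\bigl(y_k f_{\mathbf W^{(t)}}(\tilde{\mathbf X}_k^{(t)})\bigr)\,\langle\mathbf w_r^{(t)},\mathbf x_{k,q}\rangle^2 .
\]
The same lemma also lets us rewrite the response in terms of the shifted coefficient:
\[
y_k\langle\mathbf w_r^{(t)},\mathbf x_{k,q}\rangle=\hat\rho_{k,q,r}^{(t)}\|\mathbf x_{k,q}\|_2^2+y_k\sum_{(k',q')\neq(k,q)}y_{k'}\rho_{k',q',r}^{(t)}\langle\mathbf x_{k',q'},\mathbf x_{k,q}\rangle .
\]
The whole proof then reduces to showing that the cross sum is at most a small fraction of $\hat\rho_{\max}^{(t)}\sigma_n^2 d$, up to a logarithmic factor. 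Once that holds, $\langle\mathbf w_r^{(t)},\mathbf x_{k,q}\rangle^2$ is comparable to $(\hat\rho_{\max}^{(t)})^2\sigma_n^4d^2$.

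The first step is a positive floor for $\hat\rho_{\max}^{(t)}$. Each $\rho$ is nondecreasing, so $\hat\rho_{\max}^{(t)}\ge\hat\rho_{\max}^{(0)}$. By \eqref{eq:eventE-init-noise-max} and \eqref{eq:eventE-noise-norm}, $\hat\rho_{\max}^{(0)}\ge \frac{\sigma_0\sigma_n\sqrt d/4}{(3/2)\sigma_n^2 d}=\frac{\sigma_0}{6\sigma_n\sqrt d}$. Every initialization shift is at most $4\sigma_0\sqrt{\log(16NmP/\delta)}/(\sigma_n\sqrt d)$ in absolute value. Combining the two gives $|\hat\rho_{k,q,r}^{(t)}|\le 24\sqrt{\log}\,\hat\rho_{\max}^{(t)}$ and $\rho_{k,q,r}^{(t)}\le 25\sqrt{\log}\,\hat\rho_{\max}^{(t)}$ for all unlearnable $(k,q,r)$.

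The cross sum splits into unlearnable and learnable contributions. For the unlearnable part, there are at most $p_{\mathrm{un}}NP$ terms, each bounded via \eqref{eq:eventE-noise-inner}. This gives roughly $p_{\mathrm{un}}NP\cdot 25\sqrt{\log}\,\hat\rho_{\max}^{(t)}\cdot 2\sigma_n^2\sqrt{d\log}$, which is a tiny fraction of $\hat\rho_{\max}^{(t)}\sigma_n^2 d$ by \cref{cond:dimension,cond:pun}. For the learnable part, I would use \Cref{lem:proof-of-ih_noise_stability} together with the hypothesis $\hat\rho_{\max}^{(t)}\le C_1/((mP)^{1/3}\sigma_n^2 d)$, exactly as in the proof of \Cref{lem:unlearn_sigmoid_bound}. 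This bounds each learnable coefficient by $\frac{\sigma_0}{\sigma_n\sqrt d}+\frac{C_1^2p_{\mathrm{un}}N}{25(mP)^{2/3}\sigma_n^2d^{3/2}\log^{2/3}T}$, so the total learnable cross term is about $2NP\sigma_0\sigma_n\sqrt{\log}$ plus a smaller term. By the floor, $\sigma_0\sigma_n\le 6\hat\rho_{\max}^{(t)}\sigma_n^2\sqrt d$, so this term is $O(NP\sqrt{\log/d})\cdot\hat\rho_{\max}^{(t)}\sigma_n^2 d$, again negligible by \cref{cond:dimension}. The second learnable term should be handled by comparing it to the floor in the same way, using \cref{cond:sigma_0} and \cref{cond:pun}; this is the step I expect to be most delicate.

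With the cross terms controlled, the two inequalities follow. For the lower bound, fix a maximizing $(k,q,r)$ at time $t$. Then $y_k\langle\mathbf w_r^{(t)},\mathbf x_{k,q}\rangle\ge \hat\rho_{\max}^{(t)}\bigl(\tfrac12\sigma_n^2 d-o(\sigma_n^2 d)\bigr)\ge \tfrac13\sigma_n^2 d\,\hat\rho_{\max}^{(t)}$, which squares to at least $\tfrac19\sigma_n^4d^2(\hat\rho_{\max}^{(t)})^2$. \Cref{lem:unlearn_sigmoid_bound} gives $\psi\ge (1+e^{4C_1^3})^{-1}$, and $\hat\rho_{\max}^{(t+1)}\ge\hat\rho_{k,q,r}^{(t+1)}$, which yields the stated factor $\frac{\eta\sigma_n^4d^2}{3N(1+\exp(4C_1^3))}$. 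For the upper bound, take any $(k,q,r)$ and use $\psi\le1$. Then $|\langle\mathbf w_r^{(t)},\mathbf x_{k,q}\rangle|\le 24\sqrt{\log}\,\hat\rho_{\max}^{(t)}\cdot\tfrac32\sigma_n^2 d+o(\cdot)\le 40\sqrt{\log}\,\sigma_n^2 d\,\hat\rho_{\max}^{(t)}$. Since $3\cdot 40^2=4800$, this gives $\hat\rho_{k,q,r}^{(t+1)}\le\hat\rho_{\max}^{(t)}+\frac{4800\eta\sigma_n^4d^2\log}{N}(\hat\rho_{\max}^{(t)})^2$, and taking the maximum over $(k,q,r)$ finishes the argument.
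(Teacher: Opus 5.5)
Your skeleton is the paper's. The floor $\hat\rho_{\max}^{(t)}\ge\sigma_0/(6\sigma_n\sqrt d)$, the bounds $|\hat\rho^{(t)}_{k,q,r}|\le 24\sqrt{\log(16NmP/\delta)}\,\hat\rho_{\max}^{(t)}$ and $\rho^{(t)}_{k,q,r}\le 25\sqrt{\log(16NmP/\delta)}\,\hat\rho_{\max}^{(t)}$ on $\unlearnset$, the unlearnable cross term, and the constants $40$, $4800$, $\tfrac13$ all match.

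The gap is exactly the step you flagged. You feed \Cref{lem:proof-of-ih_noise_stability} the threshold bound $\rho_{\max}^{\unlearnset}\le 2C_1/((mP)^{1/3}\sigma_n^2 d)$ borrowed from \Cref{lem:unlearn_sigmoid_bound}. As a result, your learnable remainder $\frac{C_1^2p_{\mathrm{un}}N}{25(mP)^{2/3}\sigma_n^2d^{3/2}\log^{2/3}T}$ no longer depends on $\hat\rho_{\max}^{(t)}$. It therefore adds a constant of order $\frac{p_{\mathrm{un}}N^2P}{(mP)^{2/3}d}$ (up to logarithms) to the cross sum.

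Absorbing that constant into a small multiple of $\hat\rho_{\max}^{(t)}\sigma_n^2 d$ through the floor would require $\sigma_0\sigma_n\sqrt d\gtrsim\frac{p_{\mathrm{un}}N^2P}{(mP)^{2/3}d}$, which is a \emph{lower} bound on $\sigma_0$. \Cref{lemcond:hyper} gives no such bound. \cref{cond:sigma_0}, and likewise \cref{cond:eta,cond:epsilon}, only bound $\sigma_0$ from above; for example, $\sigma_0$ polynomially small in $d$ is admissible. So invoking \cref{cond:sigma_0} here points the wrong way, and your claim that this is ``a smaller term'' than $2NP\sigma_0\sigma_n\sqrt{\log(16N^2P^2/\delta)}$ is unjustified. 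Without it you only get $|\langle\mathbf w_r^{(t)},\mathbf x_{k,q}\rangle|\le a\,\hat\rho_{\max}^{(t)}+b$, with $b$ not controlled by $\hat\rho_{\max}^{(t)}$. That yields neither the purely quadratic upper recursion nor the factor $\tfrac13$ in the lower one.

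The fix, which is what the paper does, is to keep the learnable remainder proportional to the \emph{current} $\hat\rho_{\max}^{(t)}$. Apply \Cref{lem:proof-of-ih_noise_stability} with $\rho_{\max}^{\unlearnset}\le 25\sqrt{\log(16NmP/\delta)}\,\hat\rho_{\max}^{(t)}$, which you already derived in your first step. This gives, for learnable $(k,q)$,
\[
\rho^{(t)}_{k,q,r}\le\frac{\sigma_0}{\sigma_n\sqrt d}+\frac{25\,p_{\mathrm{un}}N\sigma_n^2\sqrt d\,\log(16NmP/\delta)}{4\log^{2/3}T}\bigl(\hat\rho_{\max}^{(t)}\bigr)^2 .
\]
The first term is absorbed by the floor exactly as you do. The second term is quadratic. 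Only at this point should you use the hypothesis $\hat\rho_{\max}^{(t)}\le C_1/((mP)^{1/3}\sigma_n^2d)$, to replace one factor of $\hat\rho_{\max}^{(t)}$. Its cross-sum contribution then becomes $O\bigl(\frac{p_{\mathrm{un}}N^2P}{(mP)^{1/3}d}\bigr)\,\hat\rho_{\max}^{(t)}\sigma_n^2d$ up to logarithms, a negligible fraction by \cref{cond:dimension,cond:pun}. With this change the rest of your argument goes through and coincides with the paper's proof.
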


\begin{proof}[Proof of \Cref{lem:unified_noise_dynamics}]
Let
$
(r_t,i_t,j_t)
=
\arg\max_{r\in[m],\,i\in\unlearnset,\,j \neq s(\mathbf X_k)}
\hat{\rho}_{r,i,j}^{(t)}.
$
Since the initialization shift is constant over iterations, for any $(r,i,j)$ we have
\begin{equation}
\hat{\rho}_{i,j,r}^{(t+1)}-\hat{\rho}_{i,j,r}^{(t)}
=
\rho_{i,j,r}^{(t+1)}-\rho_{i,j,r}^{(t)}.
\end{equation}
Therefore,
\begin{equation}
\rho_{i_t,j_t,r_t}^{(t+1)}-\rho_{i_t,j_t,r_t}^{(t)}
\le
\hat{\rho}_{\max}^{(t+1)}-\hat{\rho}_{\max}^{(t)}
\le
\max_{r\in[m],\,i\in\unlearnset,\,j \neq s(\mathbf X_i)}
\left(
\rho_{i,j,r}^{(t+1)}-\rho_{i,j,r}^{(t)}
\right).
\end{equation}

From \Cref{lem:noise_update_rule}, the update rule is
\begin{equation}
\rho_{i,j,r}^{(t+1)}-\rho_{i,j,r}^{(t)}
=
\frac{3\eta}{N}
\psi\left(y_i f_{\mathbf W^{(t)}}(\tilde{\mathbf X}_i^{(t)})\right)
\langle \mathbf w_r^{(t)},\mathbf x_{i,j}\rangle^2,
\end{equation}
and $\rho_{i,j,r}^{(t)}$ is monotonically non-decreasing from zero.
Therefore, the maximum shifted coefficient satisfies
\begin{equation}
\hat{\rho}_{\max}^{(t)} \ge \hat{\rho}_{\max}^{(0)}
=
\max_{r\in[m],\,k\in\unlearnset,\,q \neq s(\mathbf X_k)} y_k
\frac{\langle \mathbf w_{r}^{(0)},\mathbf x_{k,q}\rangle}{\|\mathbf x_{k,q}\|_2^2}.
\end{equation}

On the event $\mathcal E$, using \eqref{eq:eventE-noise-norm} and \eqref{eq:eventE-init-noise-max}, we obtain
\begin{equation}
\hat{\rho}_{\max}^{(t)}
\ge
\hat{\rho}_{\max}^{(0)}
\ge
\frac{\frac14 \sigma_0\sigma_n\sqrt d}{\frac32 \sigma_n^2 d}
=
\frac{1}{6}\frac{\sigma_0}{\sigma_n\sqrt d}.
\end{equation}
Moreover, using \eqref{eq:eventE-noise-norm} and \eqref{eq:eventE-init-noise-ip}, for every $r\in[m]$, $k\in\unlearnset$, and $q \neq s(\mathbf X_k)$,
\begin{equation}
\left|
\frac{\langle \mathbf w_r^{(0)},\mathbf x_{k,q}\rangle}{\|\mathbf x_{k,q}\|_2^2}
\right|
\le
\frac{2\sigma_0\sigma_n\sqrt{d\log\left(\frac{16NmP}{\delta}\right)}}{\frac12 \sigma_n^2 d}
=
\frac{4\sigma_0}{\sigma_n\sqrt d}\sqrt{\log\left(\frac{16NmP}{\delta}\right)}.
\end{equation}
Therefore,
\begin{equation}
\left|
\frac{\langle \mathbf w_r^{(0)},\mathbf x_{k,q}\rangle}{\|\mathbf x_{k,q}\|_2^2}
\right|
\le
24\sqrt{\log\left(\frac{16NmP}{\delta}\right)}\,\hat{\rho}_{\max}^{(t)}.
\end{equation}
Hence,
\begin{equation}
\hat{\rho}_{k,q,r}^{(t)}
=
\rho_{k,q,r}^{(t)}
+
y_k\frac{\langle \mathbf w_r^{(0)},\mathbf x_{k,q}\rangle}{\|\mathbf x_{k,q}\|_2^2}
\ge
-\left|
\frac{\langle \mathbf w_r^{(0)},\mathbf x_{k,q}\rangle}{\|\mathbf x_{k,q}\|_2^2}
\right|
\ge
-24\sqrt{\log\left(\frac{16NmP}{\delta}\right)}\,\hat{\rho}_{\max}^{(t)}
\end{equation}
On the other hand, by definition we have $\hat{\rho}_{k,q,r}^{(t)}\le\hat{\rho}_{\max}^{(t)}$, therefore
\begin{equation}
\left|\hat{\rho}_{k,q,r}^{(t)}\right|
\le
24\sqrt{\log\left(\frac{16NmP}{\delta}\right)}\,\hat{\rho}_{\max}^{(t)}
\end{equation}
Furthermore,
\begin{equation}
\rho_{k,q,r}^{(t)}
=
\hat{\rho}_{k,q,r}^{(t)}
-
y_k\frac{\langle \mathbf w_r^{(0)},\mathbf x_{k,q}\rangle}{\|\mathbf x_{k,q}\|_2^2}
\le
\hat{\rho}_{\max}^{(t)}
+
\left|
\frac{\langle \mathbf w_r^{(0)},\mathbf x_{k,q}\rangle}{\|\mathbf x_{k,q}\|_2^2}
\right|
\le
25\sqrt{\log\left(\frac{16NmP}{\delta}\right)}\,\hat{\rho}_{\max}^{(t)}
\end{equation}
for all $r\in[m]$, $k\in\unlearnset$, and $q \neq s(\mathbf X_k)$.

For the upper dynamics bound, fix any $i\in\unlearnset$, $j \neq s(\mathbf X_i)$, and $r\in[m]$. Using the decomposition in \Cref{lem:noise_update_rule},
\begin{equation}
\begin{aligned}
\left|\langle \mathbf w_r^{(t)},\mathbf x_{i,j}\rangle\right|
\le
\left|\hat{\rho}_{i,j,r}^{(t)}\right|\|\mathbf x_{i,j}\|_2^2 +
\sum_{\substack{k\in\unlearnset,\ q\neq s(\mathbf X_k)\\(k,q)\neq(i,j)}}
\rho_{k,q,r}^{(t)}
\left|
\langle \mathbf x_{k,q},\mathbf x_{i,j}\rangle
\right| +
\sum_{k\in\learnset,\ q\neq s(\mathbf X_k)}
\rho_{k,q,r}^{(t)}
\left|
\langle \mathbf x_{k,q},\mathbf x_{i,j}\rangle
\right|.
\end{aligned}
\end{equation}
Using the monotonicity of the noise coefficients and the preceding bound, for every
\(\tau\le t\), \(k\in\unlearnset\), \(q\neq s(\mathbf X_k)\), and \(r\in[m]\), we have
\begin{equation}
\begin{aligned}\label{eqn:unlearanable_rho}
\rho_{k,q,r}^{(\tau)}
\le
\rho_{k,q,r}^{(t)}
\le
25\sqrt{\log\left(\frac{16NmP}{\delta}\right)}\,\hat{\rho}_{\max}^{(t)}.
\end{aligned}
\end{equation}
Applying \Cref{lem:proof-of-ih_noise_stability} with
$
\rho_{\max}^{\unlearnset}
\le
25\sqrt{\log\left(\frac{16NmP}{\delta}\right)}\,\hat{\rho}_{\max}^{(t)}
$
gives, for every \(i\in\learnset\), \(q\neq s(\mathbf X_i)\), and \(r\in[m]\),
\begin{equation}\label{eqn:learanable_rho}
\rho_{i,q,r}^{(t)}
\le
\frac{\sigma_0}{\sigma_n\sqrt d}
+
\frac{
25p_{\mathrm{un}}N\sigma_n^2\sqrt d
\log\left(\frac{16NmP}{\delta}\right)
}{
4\log^{2/3}T
}
\left(\hat{\rho}_{\max}^{(t)}\right)^2.
\end{equation}

Combining these coefficient bounds with \eqref{eq:eventE-noise-norm} and \eqref{eq:eventE-noise-inner}, we obtain
\begin{equation}
\begin{aligned}
\left|\langle \mathbf w_r^{(t)},\mathbf x_{i,j}\rangle\right|
&\le
24\sqrt{\log\left(\frac{16NmP}{\delta}\right)}\,\hat{\rho}_{\max}^{(t)}
\cdot \frac{3}{2}\sigma_n^2 d +
p_{\mathrm{un}}NP
\left(
25\sqrt{\log\left(\frac{16NmP}{\delta}\right)}
\,\hat{\rho}_{\max}^{(t)}
\right)
\cdot
2\sigma_n^2\sqrt{d\log\left(\frac{16N^2P^2}{\delta}\right)} \\
&\quad+
NP\left(
\frac{\sigma_0}{\sigma_n \sqrt d}
+
\frac{
25p_{\mathrm{un}}N\sigma_n^2\sqrt d
\log\left(\frac{16NmP}{\delta}\right)
}{
4\log^{2/3}T
}
\left(\hat{\rho}_{\max}^{(t)}\right)^2
\right)
\cdot
2\sigma_n^2\sqrt{d\log\left(\frac{16N^2P^2}{\delta}\right)}.
\end{aligned}
\end{equation}
We now show that the last three terms are negligible compared with the first leading term. We bound them one by one at the scale
\(
\sqrt{\log\left(\frac{16NmP}{\delta}\right)}
\hat{\rho}_{\max}^{(t)}\sigma_n^2d.
\)
First, by \cref{cond:dimension} and
\cref{cond:pun},
\begin{equation}
50p_{\mathrm{un}}NP
\sqrt{\log\left(\frac{16NmP}{\delta}\right)}
\hat{\rho}_{\max}^{(t)}\sigma_n^2\sqrt {d \log\left(\frac{16N^2P^2}{\delta}\right)}
\le
\frac{1}{100}
\sqrt{\log\left(\frac{16NmP}{\delta}\right)}
\hat{\rho}_{\max}^{(t)}\sigma_n^2d.
\end{equation}
Second, since
$
\hat{\rho}_{\max}^{(t)}
\ge
\frac{1}{6}\frac{\sigma_0}{\sigma_n\sqrt d},
$
we have
\begin{equation}
\begin{aligned}
2NP\sigma_0\sigma_n
\sqrt{\log\left(\frac{16N^2P^2}{\delta}\right)}
\le
12
\frac{
NP\sqrt{\log\left(\frac{16N^2P^2}{\delta}\right)}
}{
\sqrt d
}
\hat{\rho}_{\max}^{(t)}\sigma_n^2d \le
\frac{1}{100}
\sqrt{\log\left(\frac{16NmP}{\delta}\right)}
\hat{\rho}_{\max}^{(t)}\sigma_n^2d,
\end{aligned}
\end{equation}
where the last inequality follows from \cref{cond:dimension}. Finally, for the last term, using \cref{cond:pun} and the assumed upper bound
$
\hat{\rho}_{\max}^{(t)}
\le
\frac{C_1}{(mP)^{1/3}\sigma_n^2d},
$
we get
\begin{equation}
\begin{aligned}
\frac{
25p_{\mathrm{un}}N^2P\sigma_n^4d
\log\left(\frac{16NmP}{\delta}\right)
}{
2\log^{2/3}T
}
\sqrt{\log\left(\frac{16N^2P^2}{\delta}\right)}
\left(\hat{\rho}_{\max}^{(t)}\right)^2 &\le
C
\frac{
NP
\log\left(\frac{16NmP}{\delta}\right)
\sqrt{\log\left(\frac{16N^2P^2}{\delta}\right)}
}{
(mP)^{1/3}d\log^{2/3}T
}
\hat{\rho}_{\max}^{(t)}\sigma_n^2d \\
&\le
\frac{1}{100}
\sqrt{\log\left(\frac{16NmP}{\delta}\right)}
\hat{\rho}_{\max}^{(t)}\sigma_n^2d,
\end{aligned}
\end{equation}
where the last inequality follows from \cref{cond:dimension}. Therefore,
\begin{equation}
\begin{aligned}
\left|\langle \mathbf w_r^{(t)},\mathbf x_{i,j}\rangle\right|
\le
36\sqrt{\log\left(\frac{16NmP}{\delta}\right)}
\,\hat{\rho}_{\max}^{(t)}\sigma_n^2 d
+
\frac{3}{100}
\sqrt{\log\left(\frac{16NmP}{\delta}\right)}
\,\hat{\rho}_{\max}^{(t)}\sigma_n^2 d \le
40\sqrt{\log\left(\frac{16NmP}{\delta}\right)}
\,\hat{\rho}_{\max}^{(t)}\sigma_n^2 d.
\end{aligned}
\end{equation}
Substituting this bound and $\psi\left(y_i f_{\mathbf W^{(t)}}(\tilde{\mathbf X}_i^{(t)})\right)\le 1$ into the update rule yields
\begin{equation}
\rho_{i,j,r}^{(t+1)}-\rho_{i,j,r}^{(t)}
\le
\frac{3\eta}{N} \left( 40\sqrt{\log\left(\frac{16NmP}{\delta}\right)}\,\hat{\rho}_{\max}^{(t)}\sigma_n^2 d \right)^2
=
\frac{4800\eta\sigma_n^4 d^2\log\left(\frac{16NmP}{\delta}\right)}{N} \left(\hat{\rho}_{\max}^{(t)}\right)^2.
\end{equation}
Taking the maximum over $(r,i,j)$ gives
\begin{equation}
\hat{\rho}_{\max}^{(t+1)}
\le
\hat{\rho}_{\max}^{(t)}
+
\frac{4800\eta\sigma_n^4 d^2\log\left(\frac{16NmP}{\delta}\right)}{N}
\left(\hat{\rho}_{\max}^{(t)}\right)^2.
\end{equation}

For the lower bound, using the maximizing triple $(r_t,i_t,j_t)$,
\begin{equation}
\begin{aligned}
y_{i_t}\langle \mathbf w_{r_t}^{(t)},\mathbf x_{i_t,j_t}\rangle
&=
\hat{\rho}_{\max}^{(t)}\|\mathbf x_{i_t,j_t}\|_2^2
+
\sum_{\substack{k\in\unlearnset,\ q\neq s(\mathbf X_k)\\(k,q)\neq(i_t,j_t)}}
y_{i_t}y_k\,\rho_{k,q,r_t}^{(t)}
\langle \mathbf x_{k,q},\mathbf x_{i_t,j_t}\rangle +
\sum_{k\in\learnset,\ q\neq s(\mathbf X_k)}
y_{i_t}y_k\,\rho_{k,q,r_t}^{(t)}
\langle \mathbf x_{k,q},\mathbf x_{i_t,j_t}\rangle \\
&\ge
\hat{\rho}_{\max}^{(t)}\|\mathbf x_{i_t,j_t}\|_2^2
-
\sum_{\substack{k\in\unlearnset,\ q\neq s(\mathbf X_k)\\(k,q)\neq(i_t,j_t)}}
\rho_{k,q,r_t}^{(t)}
\left|
\langle \mathbf x_{k,q},\mathbf x_{i_t,j_t}\rangle
\right| -
\sum_{k\in\learnset,\ q\neq s(\mathbf X_k)}
\rho_{k,q,r_t}^{(t)}
\left|
\langle \mathbf x_{k,q},\mathbf x_{i_t,j_t}\rangle
\right|.
\end{aligned}
\end{equation}
Using the bounds established above for the unlearnable \eqref{eqn:unlearanable_rho} and learnable \eqref{eqn:learanable_rho} components, respectively,
together with the lower bound \eqref{eq:eventE-noise-norm}, which gives
$\|\mathbf x_{i_t,j_t}\|_2^2 \ge \frac{1}{2}\sigma_n^2 d$, we obtain
\begin{equation}
\begin{aligned}
y_{i_t}\langle \mathbf w_{r_t}^{(t)},\mathbf x_{i_t,j_t}\rangle
&\ge
\frac{1}{2}\hat{\rho}_{\max}^{(t)}\sigma_n^2 d -
p_{\mathrm{un}}NP
\left(
25\sqrt{\log\left(\frac{16NmP}{\delta}\right)}
\,\hat{\rho}_{\max}^{(t)}
\right)
\cdot
2\sigma_n^2\sqrt{d\log\left(\frac{16N^2P^2}{\delta}\right)} \\
&\quad-
NP\left(
\frac{\sigma_0}{\sigma_n \sqrt d}
+
\frac{
25p_{\mathrm{un}}N\sigma_n^2\sqrt d
\log\left(\frac{16NmP}{\delta}\right)
}{
4\log^{2/3}T
}
\left(\hat{\rho}_{\max}^{(t)}\right)^2
\right)
\cdot
2\sigma_n^2\sqrt{d\log\left(\frac{16N^2P^2}{\delta}\right)}.
\end{aligned}
\end{equation}
By the same absorption estimates used for the upper bound, the two negative
cross-patch terms are together bounded by
$
\frac{1}{6}\hat{\rho}_{\max}^{(t)}\sigma_n^2d.
$
Hence,
\begin{equation}
y_{i_t}\langle \mathbf w_{r_t}^{(t)},\mathbf x_{i_t,j_t}\rangle
\ge
\frac{1}{3}\hat{\rho}_{\max}^{(t)}\sigma_n^2 d.
\end{equation}

Since $i_t\in\unlearnset$, \Cref{lem:unlearn_sigmoid_bound} gives
\begin{equation}
\psi\left(y_{i_t} f_{\mathbf W^{(t)}}(\tilde{\mathbf X}_{i_t}^{(t)})\right)
\ge
\frac{1}{1+\exp(4C_1^3)}.
\end{equation}
Therefore,
\begin{equation}
\begin{aligned}
\rho_{i_t,j_t,r_t}^{(t+1)}-\rho_{i_t,j_t,r_t}^{(t)}
\ge
\frac{3\eta}{N}
\cdot
\frac{1}{1+\exp(4C_1^3)}
\left(
\frac{1}{3}\hat{\rho}_{\max}^{(t)}\sigma_n^2 d
\right)^2
=
\frac{\eta\sigma_n^4 d^2}{3N\left(1+\exp(4C_1^3)\right)}
\left(\hat{\rho}_{\max}^{(t)}\right)^2.
\end{aligned}
\end{equation}
Hence,
\begin{equation}
\hat{\rho}_{\max}^{(t+1)}
\ge
\hat{\rho}_{\max}^{(t)}
+
\frac{\eta\sigma_n^4 d^2}{3N\left(1+\exp(4C_1^3)\right)}
\left(\hat{\rho}_{\max}^{(t)}\right)^2.
\end{equation}
This proves the stated dynamics for the maximum shifted noise coefficient.
\end{proof}

\begin{lemma}[Noise Memorization Time]
\label{lem:noise_time}
We formalize the noise-memorization statement in \Cref{lem:noise_growth_time_main}. Let \(T_1\) be the first hitting time at which the maximum shifted noise coefficient reaches the threshold \(C_1/((mP)^{1/3}\sigma_n^2 d)\):
\begin{equation}
T_1 \coloneqq \min \left\{ t \ge 0 \;\middle|\; \hat{\rho}_{\max}^{(t)} > \frac{C_1}{(mP)^{1/3} \sigma_n^2 d} \right\}.
\end{equation}
On the event \(\mathcal E\) defined in \Cref{lem:event_E}, this time satisfies
\begin{equation}
\frac{N}{76800 \log^{3/2}(16NmP/\delta) \eta \sigma_0 \sigma_n^3 d^{3/2}}
\le
T_1
\le
\frac{60 N \left(1+\exp(4C_1^3)\right)}{\eta \sigma_0 \sigma_n^3 d^{3/2}}.
\end{equation}
Thus, \(T_0<T_1\), where \(T_0\) is defined in \Cref{lem:signal_time}. 

Furthermore, there exist an unlearnable sample \(i\in\unlearnset\), a noise patch \(j\neq s(\mathbf X_i)\), and a filter \(r\in[m]\) such that for every iteration \(t\) with \(T_1\le t\le T\),
\begin{equation}
y_i \langle \mathbf{w}_r^{(t)}, \mathbf{x}_{i,j} \rangle
\ge
\frac{C_1}{3(mP)^{1/3}}.
\end{equation}
\end{lemma}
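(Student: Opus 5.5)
The plan is to treat $\hat\rho_{\max}^{(t)}$ as a scalar sequence obeying the two-sided quadratic recursion of \Cref{lem:unified_noise_dynamics}, valid as long as $\hat\rho_{\max}^{(t)}\le C_1/((mP)^{1/3}\sigma_n^2 d)$, i.e. for all $t<T_1$. Write
\[
A' \coloneqq \frac{4800\eta\sigma_n^4 d^2\log(16NmP/\delta)}{N},\qquad B' \coloneqq \frac{\eta\sigma_n^4 d^2}{3N(1+\exp(4C_1^3))}.
\]
Then $x_{t+1}\ge x_t+B'x_t^2$ and $x_{t+1}\le x_t+A'x_t^2$ hold up to $T_1$. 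The initial value is pinned by the event $\mathcal E$. From \eqref{eq:eventE-init-noise-max} and \eqref{eq:eventE-noise-norm} we get $\hat\rho_{\max}^{(0)}\ge \sigma_0/(6\sigma_n\sqrt d)$, as already computed in the proof of \Cref{lem:unified_noise_dynamics}. From \eqref{eq:eventE-init-noise-ip} we get $\hat\rho_{\max}^{(0)}\le 4\sigma_0\sqrt{\log(16NmP/\delta)}/(\sigma_n\sqrt d)$, and by \cref{cond:sigma_0} this is at most half the threshold.

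For the upper bound on $T_1$, I will apply the same tensor-power-type hitting-time lemma (\Cref{lem:tpm-1}) used in \Cref{lem:signal_time}. This gives $T_1\le 3/(B'x_0)+(8A'/B')\lceil\log_2(\text{threshold}/x_0)\rceil$. The first term is $18N(1+e^{4C_1^3})/(\eta\sigma_0\sigma_n^3 d^{3/2})$. The second term is only polylogarithmic, and by \cref{cond:eta} together with \cref{cond:sigma_0} it is dominated by the first. The total is therefore at most $60N(1+e^{4C_1^3})/(\eta\sigma_0\sigma_n^3d^{3/2})$, which is $\le T$ by \cref{cond:horizon}. For the lower bound, I will use the doubling argument. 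Before $x_t$ doubles, each increment is at most $4A'x_0^2$, so $T_1\ge 1/(4A'x_0)$. Inserting the upper bound on $x_0$ yields the stated $N/(76800\log^{3/2}(\cdot)\,\eta\sigma_0\sigma_n^3d^{3/2})$. Comparing this with $T_0\le \hat T_0$ from \Cref{lem:signal_time} gives $T_0<T_1$. The ratio is essentially $N(\alpha-\epsilon)^3/(\sigma_n^3d^{3/2})$ up to logs, and it is large by \cref{cond:alpha,cond:epsilon}.

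For the memorization claim, I fix the maximizing triple $(i,j,r)$ at time $T_1$, so that $\hat\rho_{i,j,r}^{(T_1)}>C_1/((mP)^{1/3}\sigma_n^2d)$. The step $T_1-1\to T_1$ overshoots the threshold by at most $A'\cdot\text{threshold}^2$, which is negligible by \cref{cond:eta}. Since $\rho$ is non-decreasing (\Cref{lem:noise_update_rule}) and the shift is constant, $\hat\rho_{i,j,r}^{(t)}$ stays above the threshold for all $t\ge T_1$. By \Cref{lem:uniform_noise_bound}, every $\rho$ remains $\le 10\log^{1/3}T/(\sigma_n^2d)$.

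Then I will lower bound $y_i\langle\mathbf w_r^{(t)},\mathbf x_{i,j}\rangle$ through the decomposition of \Cref{lem:noise_update_rule}. The diagonal term contributes $\hat\rho_{i,j,r}^{(t)}\|\mathbf x_{i,j}\|_2^2\ge \tfrac12\sigma_n^2 d\,\hat\rho_{i,j,r}^{(t)}$. The cross terms from unlearnable patches are bounded using the uniform bound times $p_{\mathrm{un}}NP\cdot 2\sigma_n^2\sqrt{d\log(\cdot)}$. The cross terms from learnable patches are bounded using \Cref{lem:proof-of-ih_noise_stability}.

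I expect the main obstacle to be this cross-term absorption for $t\ge T_1$, where the unlearnable coefficients may be as large as $\log^{1/3}T/(\sigma_n^2d)$ rather than threshold-sized. The interference from them is $O(p_{\mathrm{un}}NP\log^{1/3}T\sqrt{\log(\cdot)}/\sqrt d)$. I must show this is at most $\tfrac16 C_1/(mP)^{1/3}$, using $p_{\mathrm{un}}N\le C^{-1}\log d$ and $d\ge Cm^2P^2N^2\log^4(\cdot)$. After that absorption, the remaining bound is $\tfrac12 C_1/(mP)^{1/3}-\tfrac16C_1/(mP)^{1/3}=C_1/(3(mP)^{1/3})$, as claimed.
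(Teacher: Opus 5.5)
Your proposal is correct and follows the paper's proof essentially step for step. It uses the two-sided quadratic recursion from \Cref{lem:unified_noise_dynamics}, the hitting-time bound of \Cref{lem:tpm-1} for the upper estimate, the doubling argument with $\hat{\rho}_{\max}^{(0)}\le 4\sigma_0\sqrt{\log(16NmP/\delta)}/(\sigma_n\sqrt d)$ for the lower estimate, and monotonicity of the shifted coefficient plus the decomposition of \Cref{lem:noise_update_rule} for the memorization claim.

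The differences are minor. The leading term $3/(B'\hat{\rho}_{\max}^{(0)})$ is $54N(1+e^{4C_1^3})/(\eta\sigma_0\sigma_n^3d^{3/2})$, not $18N(\cdots)$; this is still within the stated $60N(\cdots)$. The paper also skips your learnable/unlearnable split of the cross terms: it bounds all $NP$ of them at once with \Cref{lem:uniform_noise_bound} and absorbs the resulting $20NP\log^{1/3}T\sqrt{\log(16N^2P^2/\delta)}/\sqrt d$ via \cref{cond:dimension}, so the obstacle you anticipate is not a real one.
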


\begin{proof}[Proof of \Cref{lem:noise_time}]
From \Cref{lem:unified_noise_dynamics}, define
\begin{equation}
A \coloneqq \frac{4800\eta\sigma_n^4 d^2\log\left(\frac{16NmP}{\delta}\right)}{N},
\qquad
B \coloneqq \frac{\eta \sigma_n^4 d^2}{3 N \left(1+\exp(4C_1^3)\right)}.
\end{equation}
Then the shifted maximum coefficient satisfies the recursive bounds
\begin{equation}
\hat{\rho}_{\max}^{(t+1)}
\le
\hat{\rho}_{\max}^{(t)} + A \left(\hat{\rho}_{\max}^{(t)}\right)^2,
\qquad
\hat{\rho}_{\max}^{(t+1)}
\ge
\hat{\rho}_{\max}^{(t)} + B \left(\hat{\rho}_{\max}^{(t)}\right)^2.
\end{equation}

We first bound the initialization scale of \(\hat{\rho}_{\max}^{(0)}\). Since \(\rho_{k,q,r}^{(0)}=0\), we have
\begin{equation}
\hat{\rho}_{\max}^{(0)}
=
\max_{r\in[m],\,k\in\unlearnset,\,q \neq s(\mathbf X_k)} y_k
\frac{\langle \mathbf w_r^{(0)},\mathbf x_{k,q}\rangle}{\|\mathbf x_{k,q}\|_2^2}.
\end{equation}
On the event \(\mathcal E\), by the lower bound in \eqref{eq:eventE-init-noise-max} and the upper bound in \eqref{eq:eventE-noise-norm},
\begin{equation}
\hat{\rho}_{\max}^{(0)}
\ge
\frac{\frac14 \sigma_0 \sigma_n \sqrt d}{\frac32 \sigma_n^2 d}
=
\frac{\sigma_0}{6\sigma_n\sqrt d}.
\end{equation}
Applying \Cref{lem:tpm-1} with target threshold $\frac{C_1}{(mP)^{1/3}\sigma_n^2 d}$, we obtain
\begin{equation}
T_1
\le
\frac{3}{B \hat{\rho}_{\max}^{(0)}}
+
\frac{8A}{B}
\left\lceil
\frac{
\log\!\left(
\frac{C_1}{(mP)^{1/3}\sigma_n^2 d\,\hat{\rho}_{\max}^{(0)}}
\right)
}{\log 2}
\right\rceil.
\end{equation}
Substituting \(\hat{\rho}_{\max}^{(0)} \ge \frac{\sigma_0}{6\sigma_n\sqrt d}\) and the definitions of \(A\) and \(B\) yields
\begin{equation}
\begin{aligned}
T_1
&\le
\frac{54 N \left(1+\exp(4C_1^3)\right)}{\eta \sigma_0 \sigma_n^3 d^{3/2}}
+
115200 \left(1+\exp(4C_1^3)\right)\log\left(\frac{16NmP}{\delta}\right)
\left\lceil
\log_2\!\left(
\frac{6 C_1}{(mP)^{1/3}\sigma_n\sqrt d\,\sigma_0}
\right)
\right\rceil.
\end{aligned}
\end{equation}
By \cref{cond:sigma_0,cond:eta}, the second term is dominated by the first term. Therefore,
\begin{equation}
T_1
\le
\frac{60 N \left(1+\exp(4C_1^3)\right)}{\eta \sigma_0 \sigma_n^3 d^{3/2}}.
\end{equation}

For the lower bound on the hitting time, let \(T_{\mathrm{dbl}}\) denote the minimum time required for \(\hat{\rho}_{\max}^{(t)}\) to double from its initial value. By \cref{cond:sigma_0}, the target threshold satisfies
\begin{equation}
\frac{C_1}{(mP)^{1/3}\sigma_n^2d}
\ge
2\hat{\rho}_{\max}^{(0)}.
\end{equation}
Therefore, before hitting the threshold, the process must first double from its initial value, and hence \(T_1\ge T_{\mathrm{dbl}}\).
For any \(t<T_{\mathrm{dbl}}\), we have
\begin{equation}
\hat{\rho}_{\max}^{(t)} \le 2\hat{\rho}_{\max}^{(0)}.
\end{equation}
Therefore, to traverse the distance from \(\hat{\rho}_{\max}^{(0)}\) to \(2\hat{\rho}_{\max}^{(0)}\), the number of steps must satisfy
\begin{equation}
T_{\mathrm{dbl}}
\ge
\frac{\hat{\rho}_{\max}^{(0)}}{4A\left(\hat{\rho}_{\max}^{(0)}\right)^2}
=
\frac{1}{4A\hat{\rho}_{\max}^{(0)}}.
\end{equation}
Since \(T_1\ge T_{\mathrm{dbl}}\), we obtain the same lower bound for \(T_1\).

On the event \(\mathcal E\), by \eqref{eq:eventE-init-noise-ip} and the lower bound in \eqref{eq:eventE-noise-norm},
\begin{equation}
\hat{\rho}_{\max}^{(0)}
\le
\max_{r,k,q}
\frac{|\langle \mathbf w_r^{(0)},\mathbf x_{k,q}\rangle|}{\|\mathbf x_{k,q}\|_2^2}
\le
\frac{2\sigma_0\sigma_n\sqrt{d\log\left(\frac{16NmP}{\delta}\right)}}{\frac12\sigma_n^2 d}
=
\frac{4\sigma_0\sqrt{\log\left(\frac{16NmP}{\delta}\right)}}{\sigma_n\sqrt d}.
\end{equation}
Substituting this bound together with \(A=\frac{4800\eta\sigma_n^4 d^2\log\left(\frac{16NmP}{\delta}\right)}{N}\) gives
\begin{equation}
T_1
\ge
\frac{N}{76800 \log^{3/2}(16NmP/\delta) \eta \sigma_0 \sigma_n^3 d^{3/2}}.
\end{equation}

Moreover, by \Cref{lem:signal_time},
\begin{equation}
T_0
\le
\frac{5\exp(2C_0^3)}{\eta (\alpha-\epsilon )^3 (1-p_{\mathrm{un}}) \sigma_0}.
\end{equation}
Comparing this with the lower bound on \(T_1\), it suffices that
\begin{equation}
(\alpha-\epsilon)^3 
>
\frac{384000 \exp(2C_0^3)\sigma_n^3 d^{3/2}\log^{3/2}(16NmP/\delta)}{N(1-p_{\mathrm{un}})}.
\end{equation}
Under \cref{cond:alpha,cond:epsilon}, the above inequality holds. Hence \(T_0 < T_1\).

It remains to convert the shifted-coefficient lower bound into a lower bound on the actual noise response. By the definition of \(T_1\), there exist \(i\in\unlearnset\), \(j\neq s(\mathbf X_i)\), and \(r\in[m]\) such that
\begin{equation}
\hat{\rho}_{i,j,r}^{(T_1)}
\ge
\frac{C_1}{(mP)^{1/3}\sigma_n^2 d}.
\end{equation}
Since each coefficient is monotonically non-decreasing by \Cref{lem:noise_update_rule}, and the initialization shift is constant over time, the shifted coefficient is also monotonically non-decreasing. Hence, for every \(t\) with \(T_1\le t\le T\),
\begin{equation}
\hat{\rho}_{i,j,r}^{(t)}
\ge
\frac{C_1}{(mP)^{1/3}\sigma_n^2 d}.
\end{equation}

By \Cref{lem:noise_update_rule},
\begin{equation}
y_i\langle \mathbf w_r^{(t)},\mathbf x_{i,j}\rangle
=
\hat{\rho}_{i,j,r}^{(t)}\|\mathbf x_{i,j}\|_2^2
+
\sum_{(k,q)\neq(i,j)}
y_i y_k\,\rho_{k,q,r}^{(t)}
\langle \mathbf x_{k,q},\mathbf x_{i,j}\rangle.
\end{equation}
Applying the triangle inequality, and using \eqref{eq:eventE-noise-norm}, \eqref{eq:eventE-noise-inner}, and \Cref{lem:uniform_noise_bound}, we obtain
\begin{equation}
\begin{aligned}
y_i\langle \mathbf w_r^{(t)},\mathbf x_{i,j}\rangle
&\ge
\frac{C_1}{(mP)^{1/3}\sigma_n^2 d}\cdot \frac{1}{2}\sigma_n^2 d
-
NP\cdot
\frac{10\log^{1/3}T}{\sigma_n^2 d}
\cdot
2\sigma_n^2\sqrt{d\log\left(\frac{16N^2P^2}{\delta}\right)} \\
&=
\frac{C_1}{2(mP)^{1/3}}
-
\frac{20NP\log^{1/3}T}{\sqrt d}
\sqrt{\log\left(\frac{16N^2P^2}{\delta}\right)}.
\end{aligned}
\end{equation}
By \cref{cond:dimension}, the second term is at most \(C_1/(6(mP)^{1/3})\). Therefore,
\begin{equation}
y_i \langle \mathbf{w}_r^{(t)}, \mathbf{x}_{i,j} \rangle
\ge
\frac{C_1}{3(mP)^{1/3}}.
\end{equation}
This completes the proof.
\end{proof}

\subsection{Adversarial Training with Unlearnable Samples}

We now combine the noise-memorization analysis above with the signal-learning guarantees from the previous section. This yields the final result for adversarial training in the sparse-unlearnable regime.

\begin{theorem}[Adversarial Training with Unlearnable Samples]
\label{thm:at_unlearn}
On the event $\mathcal E$ defined in \Cref{lem:event_E}, assume that the requirements in \Cref{lemcond:hyper} hold, with the dataset satisfying the sparse-unlearnable regime in \Cref{cond:pun}. Suppose we run AT for \(T\) iterations on this dataset, in which a fraction $p_{\mathrm{un}}$ of the training samples is unlearnable. Then the learned weights satisfy:
\begin{enumerate}
    \item There exists at least one filter $r \in [m]$ whose first coordinate is aligned with the learnable feature: 
    \begin{equation}
    w^{(T)}_{r,1} \ge \tilde{\Omega}(\alpha^{-1}).
    \end{equation}
    \item The network memorizes the noise features on at least one unlearnable sample. More precisely, there exists an index \(i \in \unlearnset\) such that
    \begin{equation}
    \max_{r \in [m], j \neq s(\mathbf{X}_i)} y_i \langle \mathbf{w}_r^{(T)}, \mathbf{x}_{i,j} \rangle \ge \tilde{\Omega}(1).
    \end{equation}

\end{enumerate}
Consequently, the robust training error is $o(1)$, while the robust test error is at least $\frac{1}{2} - o(1)$.
    
\end{theorem}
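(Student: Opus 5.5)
The proof follows the same four-step template as \Cref{thm:at_learn}; only the noise bound and the test-error argument change. For item 1, I invoke \Cref{lem:signal_time}. Its hypothesis, \Cref{ih_noise_stability}, holds for all $t\le T$ by \Cref{lem:proof-of-ih_noise_stability}, so some filter satisfies $w_{r,1}^{(T_0)}\ge C_0/(\alpha m^{1/3})$. Monotonicity from \Cref{lem:signal_update_bounds} carries this to time $T$, and $m\le C\log^C d$ turns it into $\tilde\Omega(\alpha^{-1})$. For item 2, \Cref{cond:horizon} gives $T\ge T_1$ once $C$ dominates the constant $60(1+\exp(4C_1^3))$ in \Cref{lem:noise_time}. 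That lemma then supplies $i\in\unlearnset$, $j$, and $r$ with $y_i\langle\mathbf w_r^{(T)},\mathbf x_{i,j}\rangle\ge C_1/(3(mP)^{1/3})=\tilde\Omega(1)$.

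\textbf{Robust training error.} On learnable samples, \Cref{lem:conv_learn} gives $\psi\le \tilde O(1/(\eta\alpha^2(T-T_0)))=o(1)$, so their margins are positive. For unlearnable samples the argument is not purely per-sample. The signal patch contributes nothing and the adversary perturbs only along $\mathbf v$, so the margin is $\sum_{r,j}y_i\langle\mathbf w_r,\mathbf x_{i,j}\rangle^3$. I would argue that each unlearnable sample keeps receiving gradient until its margin is positive. Concretely, I would apply the lower-dynamics argument of \Cref{lem:unified_noise_dynamics} to that sample's own maximal shifted coefficient, using $\hat\rho^{(0)}\ge\sigma_0/(6\sigma_n\sqrt d)$ from (P8) and the $\psi$ lower bound of \Cref{lem:unlearn_sigmoid_bound}. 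This shows each such coefficient crosses the threshold by time $\tilde O(T_1)$, which is still covered by \Cref{cond:horizon}, and then margin $\gtrsim C_1^3/(27mP)-$cross terms $>0$. I expect only a sketch-level treatment here, since the emphasis is on the test error.

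\textbf{Robust test error.} This is the main obstacle. Fix the memorized direction $\mathbf x_{i,j}$ and filter $r$ from item 2. For a fresh test sample $(\mathbf X,y)\sim\mathcal D_L$, the adversary adds $\boldsymbol\delta=-y\,\epsilon\,\mathrm{sign}(\mathbf w_r^{(T)})$ restricted to coordinates orthogonal to $\mathcal F$, on one noise patch. The resulting extra response is $-y\epsilon\|\mathbf w_r^{(T)}\|_1$ for that filter. I would lower-bound $\|\mathbf w_r^{(T)}\|_1$ by $\langle\mathbf w_r^{(T)},\mathbf x_{i,j}\rangle/\|\mathbf x_{i,j}\|_\infty\ge\tilde\Omega(1/\sigma_n)$ using (P3). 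The perturbed cubic term for filter $r$ is then $\le-\tilde\Omega((\epsilon/\sigma_n)^3)$, which is large by \Cref{cond:epsilon}. I would compare this against the signal margin, which is at most $(\alpha)^3\sum_r (w_{r,1}^{(T)})^3\le 27m\log T$ by \Cref{lem:signal_weight_bound}. The comparison requires showing $(\epsilon\,\|\mathbf w_r\|_1)^3$ beats $27m\log T$, and this needs care. If $\epsilon/\sigma_n$ is only polylogarithmically large, the gain must come from the $\ell_1$ norm being $\tilde\Omega(\sqrt d)\cdot\|\mathbf w_r\|_2$-scale. I would use $\|\mathbf w_r^{(T)}\|_1\ge\|\mathbf z_r^{(0)}\|_1$-type dispersion plus the memorized component $\rho\,\mathbf x_{i,j}$, whose $\ell_1$ norm is $\rho\,\sigma_n d\cdot\Theta(1)\ge\tilde\Omega(\sqrt d/\sigma_n\cdot 1/\sqrt d)$. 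The remaining Gaussian terms from the clean noise patches are then controlled with the same $\Psi_{\alpha,L}$ tail machinery as Step 4.3 of \Cref{thm:at_learn}. Those terms are symmetric in $y$, so after the adversarial shift dominates the signal, the sign of $yf$ is negative with probability at least $\tfrac12-o(1)$, which gives $\mathcal L^{\mathrm{rob}}_{\mathcal D_L}\ge\tfrac12-o(1)$. If the deterministic domination fails, the fallback is to show the perturbed output is a symmetric random variable plus a term of order at least the signal, which still yields the $\tfrac12$ bound.
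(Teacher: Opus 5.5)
Your items 1 and 2 coincide with the paper's Steps 1--2. Your overall plan for the test error is also the paper's: convert the memorized noise direction into an admissible $\ell_\infty$ attack whose cubic response beats the signal term $\alpha^3\sum_{r}(w_{r,1}^{(T)})^3\le 27m\log T$, with the $\tfrac12$ coming from a symmetric remainder.

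\textbf{The genuine gap is the attack direction.} You take $\boldsymbol\delta=-y\epsilon\,\mathrm{sign}(\mathbf w_r^{(T)})$, tailored to the single filter $r$, and analyze only that filter. But the perturbation enters every filter. Each $r'\neq r$ contributes a deterministic term $-\bigl(\epsilon\langle \mathbf w_{r'}^{(T)},\mathrm{sign}(\mathbf w_r^{(T)})\rangle\bigr)^3$ of unknown sign. On $\mathcal E$ the only available control is $|\langle \mathbf x_{k,q},\mathrm{sign}(\mathbf w_r^{(T)})\rangle|\le\|\mathbf x_{k,q}\|_1\lesssim\sigma_n d$. Combined with \Cref{lem:uniform_noise_bound}, the a priori bound on $|\langle \mathbf w_{r'}^{(T)},\mathrm{sign}(\mathbf w_r^{(T)})\rangle|$ is then of order $NP\log^{1/3}T/\sigma_n$. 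That is far above your lower bound $\|\mathbf w_r^{(T)}\|_1\ge C_1/\bigl(3(mP)^{1/3}\sigma_n\sqrt{2\log(16dNP/\delta)}\bigr)$. So the other filters could cancel the attack, and excluding this would need a new concentration statement for noise patches against a data-dependent sign vector.

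The paper avoids this by perturbing every noise patch along the memorized patch itself, $\boldsymbol\delta_{j'}^*=-y\,y_i\,\epsilon\,\mathbf x_{i,j}/\bigl(\sigma_n\sqrt{2\log(16dNP/\delta)}\bigr)$, which is admissible by \eqref{eq:eventE-noise-linf}. Every filter's coefficient on $y_i\mathbf x_{i,j}$ is $\rho_{i,j,r'}^{(T)}\ge 0$. Hence \Cref{lem:noise_update_rule}, together with \eqref{eq:eventE-noise-inner}, \eqref{eq:eventE-init-noise-ip} and \Cref{lem:uniform_noise_bound}, shows that $y_i\langle \mathbf z_{r'}^{(T)},\mathbf x_{i,j}\rangle$ has negligible negative part for all $r'$. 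The cubes therefore add coherently: $\sum_{r'}\bigl(y_i\langle \mathbf z_{r'}^{(T)},\mathbf x_{i,j}\rangle\bigr)^3\ge C_1^3/(54mP)$. This nonnegativity along the memorized patch is the missing idea.

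\textbf{Three further points.}

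First, your worry that the shift may not beat $27m\log T$ is unfounded. The H\"older bound via \eqref{eq:eventE-noise-linf} already gives a cubed shift of at least $(\epsilon/\sigma_n)^3C_1^3/\bigl(27mP(2\log(16dNP/\delta))^{3/2}\bigr)$. The lower bound $\epsilon\ge C\sigma_n mP^{1/2}\log(TdNP\delta^{-1})$ in \Cref{cond:epsilon} makes this $\gg m\log T$, so no $\ell_1$-dispersion argument is needed.

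Second, the clean-noise terms cannot be dismissed as in Step 4.3 of \Cref{thm:at_learn}. That step uses $\|\mathbf z_r^{(T)}\|_2\le 3\sigma_0\sqrt d$, whereas here the paper only has $\|\mathbf z_r^{(T)}\|_2\le 15\log^{1/3}T/\sigma_n$. The paper therefore splits the perturbed output into two parts:
\begin{itemize}
\item a part odd under $\mathbf X\mapsto-\mathbf X$, which is positive with probability $\tfrac12$;
\item an even part, in which $3\langle\mathbf z_r^{(T)},\mathbf x_j\rangle^2\langle\mathbf z_r^{(T)},\boldsymbol\delta_j\rangle$ is not symmetric and must be concentrated with the $\Psi_{1,1}$ bound against the deterministic cubic shift.
\end{itemize}
Your remark that the remaining terms are ``symmetric in $y$'' misses this even term.

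Third, for the training error you need not show that every unlearnable sample is fitted, and your per-sample sketch does not follow from \Cref{lem:unified_noise_dynamics}. Once one unlearnable sample memorizes, its cross-patch interference on other patches is of order $C_1/((mP)^{1/3}\sqrt d)$ up to logarithms. This exceeds the initialization scale $\sigma_0\sigma_n\sqrt d\le C^{-1}/(m^2P^{1/2}\sqrt d)$ that drives the other samples, so that lemma's absorption estimates break down. The paper simply bounds the unlearnable samples' contribution to the average loss by $O(p_{\mathrm{un}})$.
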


\begin{proof}[Proof of \Cref{thm:at_unlearn}]
\noindent\textbf{Step 1: Signal Alignment.}
By \Cref{lem:signal_time}, after $T_0 \le \frac{5\exp(2C_0^3)}{\eta (\alpha-\epsilon )^3 (1-p_{\mathrm{un}}) \sigma_0}$ iterations, there exists at least one filter $r \in [m]$ such that
$
w^{(T_0)}_{r,1} \ge \frac{C_0}{\alpha m^{1/3}}.
$
Since the lower bound of the signal update in \eqref{eq:signal_lower_bound} is non-negative, $w_{r,1}^{(t)}$ is non-decreasing in $t$. Hence, 
\begin{equation}
\label{eq:at_unlearn_signal_lower}
w^{(T)}_{r,1}
\ge
\frac{C_0}{\alpha m^{1/3}}
\ge
\tilde{\Omega}(\alpha^{-1}),
\end{equation}
where the last inequality follows from \cref{cond:m_and_P}.

\noindent\textbf{Step 2: Noise Memorization.}
By \Cref{lem:noise_time}, at the final iteration \(T\ge T_1\), there exist an unlearnable sample \(i\in\unlearnset\), a noise patch \(j\neq s(\mathbf X_i)\), and a filter \(r\in[m]\) such that
\begin{equation}
\label{eqn:large_noise}
y_i \langle \mathbf{w}_r^{(T)}, \mathbf{x}_{i,j} \rangle 
\ge
\frac{C_1}{3(mP)^{1/3}} \ge
\tilde{\Omega}(1),
\end{equation}
where the last inequality follows from \cref{cond:m_and_P}.

\noindent\textbf{Step 3: Training Error.}
Decomposing the average over the learnable and unlearnable subsets, we have:
\begin{equation}
\begin{aligned}
\frac{1}{N}\sum_{i \in [N]} \ell\left(y_i f_{\mathbf{W}^{(T)}}(\tilde{\mathbf{X}}_i^{(T)})\right) 
&=
\frac{1}{N}
\sum_{i\in\learnset}
\ell\left(y_i f_{\mathbf W^{(T)}}(\tilde{\mathbf X}_i^{(T)})\right)
+
\frac{1}{N}\sum_{i\in\unlearnset}
\ell\left(y_i f_{\mathbf W^{(T)}}(\tilde{\mathbf X}_i^{(T)})\right).
\end{aligned}
\end{equation}
By \Cref{lem:conv_learn}, the loss on the learnable set satisfies:
\begin{equation}
\frac{1}{N} \sum_{i \in \learnset} \ell\left(y_i f_{\mathbf{W}^{(T)}}(\tilde{\mathbf{X}}_i^{(T)})\right)
\le
\frac{10}{N} \sum_{i\in \learnset} \psi\left(y_i f_{\mathbf{W}^{(T)}}(\tilde{\mathbf{X}}_i^{(T)})\right)
\le
\frac{40m^{2/3}\log^{1/3}T}{C_0^2\left(1-\frac{\epsilon}{\alpha}\right)^3\eta\alpha^2(T-T_0)},
\end{equation}
where the first inequality follows from \Cref{lem:log_derivative_connection}. 
Moreover, since the logistic loss is uniformly bounded above by a constant for all samples under the current parameter regime, we have:
\begin{equation}
\frac{1}{N}\sum_{i\in\unlearnset}
\ell\left(y_i f_{\mathbf W^{(T)}}(\tilde{\mathbf X}_i^{(T)})\right)
\le
p_{\mathrm{un}} \max_{i\in\unlearnset}
\ell\left(y_i f_{\mathbf W^{(T)}}(\tilde{\mathbf X}_i^{(T)})\right)
\le
c\,p_{\mathrm{un}},
\end{equation}
for some absolute constant $c>0$. Therefore, the total robust training error is bounded by:
\begin{equation}
\frac{1}{N}\sum_{i \in [N]} \ell\left(y_i f_{\mathbf{W}^{(T)}}(\tilde{\mathbf{X}}_i^{(T)})\right) 
\le
\frac{40m^{2/3}\log^{1/3}T}{C_0^2\left(1-\frac{\epsilon}{\alpha}\right)^3\eta\alpha^2(T-T_0)}
+
c\,p_{\mathrm{un}} \le o(1).
\end{equation}
where the last inequality follows from \Cref{cond:horizon,cond:alpha,cond:pun}.

\noindent\textbf{Step 4: Test Error.}

\noindent\textbf{Step 4.1: Noise norm bound.}
We decompose the final weight into the signal and noise components:
\begin{equation}
\mathbf{w}_r^{(T)} = w_{r,1}^{(T)}\mathbf{e}_1 + \mathbf{z}_r^{(T)}, 
\qquad r\in[m],
\end{equation}
where $\mathbf{z}_r^{(T)} \in \mathrm{span}(\mathbf{e}_1)^\perp$. By the update rule decomposition,
\begin{equation}
\mathbf{z}_r^{(T)} = \mathbf{z}_r^{(0)} + \sum_{i=1}^N \sum_{j \neq s(\mathbf{X}_i)} \rho_{i,j,r}^{(T)} \mathbf{x}_{i,j}.
\end{equation}
Hence,
\begin{equation}
\|\mathbf{z}_r^{(T)}\|_2
\le
\|\mathbf{z}_r^{(0)}\|_2
+
\sum_{i=1}^N \sum_{j \neq s(\mathbf{X}_i)} \rho_{i,j,r}^{(T)} \|\mathbf{x}_{i,j}\|_2.
\end{equation}
On the event $\mathcal E$, using \eqref{eq:eventE-init-l2}, \eqref{eq:eventE-noise-norm}, and \Cref{lem:uniform_noise_bound}, we obtain
\begin{equation}
\begin{aligned}
\|\mathbf z_r^{(T)}\|_2
&\le
2\sigma_0\sqrt d
+
\sum_{i=1}^N \sum_{j\neq s(\mathbf X_i)}
\frac{10\log^{1/3}T}{\sigma_n^2 d}\,
\|\mathbf x_{i,j}\|_2 \\
&\le
2\sigma_0\sqrt d
+
NP \cdot
\frac{10\log^{1/3}T}{\sigma_n^2 d}
\cdot
\sqrt{\frac32}\,\sigma_n\sqrt d \\
&\le
2\sigma_0\sqrt d
+
20\log^{1/3}T \frac{NP}{\sigma_n\sqrt d}.
\end{aligned}
\end{equation}
By \cref{cond:dimension}, the second term is bounded by $\frac{10\log^{1/3}T}{\sigma_n}$, and by \cref{cond:sigma_0}, the first term is also bounded by $\frac{5\log^{1/3}T}{\sigma_n}$. Therefore,
\begin{equation}
\|\mathbf z_r^{(T)}\|_2
\le
\frac{15\log^{1/3}T}{\sigma_n}.
\end{equation}

Moreover, by \Cref{lem:signal_weight_bound} and \eqref{eq:at_unlearn_signal_lower},
\begin{equation}
\frac{C_0}{\alpha m^{1/3}}
\le
w^{(T)}_{r,1}
\le
\frac{3\log^{1/3}T}{\alpha}.
\end{equation}

\noindent\textbf{Step 4.2: Attack construction.}
We now construct an admissible attack using the memorized vulnerable patch from
\eqref{eqn:large_noise}. Let $(i,j,r)$ be a triple satisfying \eqref{eqn:large_noise}, and write
$\mathbf x_{\mathrm{vuln}} \coloneqq \mathbf x_{i,j}$ and
$y_{\mathrm{vuln}} \coloneqq y_i$. For every non-signal patch
$j' \neq s(\mathbf X)$ of a test sample $(\mathbf X,y)\sim \learnset$, define
\begin{equation}
\boldsymbol{\delta}_{j'}^*
\coloneqq
-y\,y_{\mathrm{vuln}} \cdot
\frac{\epsilon}
{\sigma_n \sqrt{2\log\left(\frac{16dNP}{\delta}\right)}}\,
\mathbf{x}_{\mathrm{vuln}}.
\end{equation}
On the event $\mathcal E$, \eqref{eq:eventE-noise-linf} gives
\begin{equation}
\|\mathbf{x}_{\mathrm{vuln}}\|_\infty
\le
\sigma_n\sqrt{2\log\left(\frac{16dNP}{\delta}\right)}.
\end{equation}
Therefore,
\begin{equation}
\|\boldsymbol{\delta}_{j'}^*\|_\infty
\le
\frac{\epsilon}
{\sigma_n \sqrt{2\log\left(\frac{16dNP}{\delta}\right)}}
\|\mathbf{x}_{\mathrm{vuln}}\|_\infty
\le
\epsilon.
\end{equation}
Thus $\boldsymbol{\delta}^*$ is an admissible $\ell_\infty$ perturbation.

Because the perturbation incorporates $-y$, we have
\begin{equation}
y \langle \mathbf{z}_{r'}^{(T)}, \boldsymbol{\delta}_{j'}^* \rangle^3 
= 
- \left(\frac{\epsilon}{\sigma_n \sqrt{2\log\left(\frac{16dNP}{\delta}\right)}}\right)^3
\left( y_{\mathrm{vuln}} \langle \mathbf{z}_{r'}^{(T)}, \mathbf{x}_{\mathrm{vuln}} \rangle \right)^3.
\end{equation}
Since $\mathbf x_{\mathrm{vuln}}$ is a noise patch, it is orthogonal to $\mathbf e_1$, and hence
$
\langle \mathbf w_r^{(T)}, \mathbf x_{\mathrm{vuln}} \rangle
=
\langle \mathbf z_r^{(T)}, \mathbf x_{\mathrm{vuln}} \rangle.
$
Therefore, \eqref{eqn:large_noise} yields for the vulnerable filter $r$:
\begin{equation}
y_{\mathrm{vuln}} \langle \mathbf{z}_r^{(T)}, \mathbf{x}_{\mathrm{vuln}} \rangle
\ge
\frac{C_1}{3(mP)^{1/3}}.
\end{equation}

For any other filter $r' \neq r$, expanding the inner product via \Cref{lem:noise_update_rule} and using the non-negativity of $\rho$ gives
\begin{equation}
y_{\mathrm{vuln}} \langle \mathbf{z}_{r'}^{(T)}, \mathbf{x}_{\mathrm{vuln}} \rangle
\ge
-\left| \langle \mathbf{w}_{r'}^{(0)}, \mathbf{x}_{i,j} \rangle \right|
-
\sum_{(k,q)\neq(i,j)}
\rho_{k,q,r'}^{(T)} \left| \langle \mathbf{x}_{k,q}, \mathbf{x}_{i,j} \rangle \right|.
\end{equation}
Applying \Cref{lem:uniform_noise_bound} together with \eqref{eq:eventE-init-noise-ip} and \eqref{eq:eventE-noise-inner}, we obtain
\begin{equation}
y_{\mathrm{vuln}} \langle \mathbf{z}_{r'}^{(T)}, \mathbf{x}_{\mathrm{vuln}} \rangle 
\ge 
-\left( 2\sigma_0\sigma_n\sqrt{d\log\left(\frac{16NmP}{\delta}\right)}
+
\frac{20NP\log^{1/3}T}{\sqrt d}\sqrt{\log\left(\frac{16N^2P^2}{\delta}\right)} \right).
\end{equation}
Using $(a+b)^3 \le 4(a^3+b^3)$, we bound the total negative cubic drift over the remaining filters:
\begin{equation}
\begin{aligned}
\sum_{r'\neq r}
\left|
\min\left(0,\, y_{\mathrm{vuln}}\langle \mathbf{z}_{r'}^{(T)}, \mathbf{x}_{\mathrm{vuln}} \rangle\right)
\right|^3
&\le
32\,m\,\sigma_0^3\sigma_n^3 d^{3/2}
\log^{3/2}\left(\frac{16NmP}{\delta}\right)\\
&\quad+
32000\,m\,\frac{N^3P^3\log T}{d^{3/2}}
\log^{3/2}\left(\frac{16N^2P^2}{\delta}\right)\\
&\le
\frac{C_1^3}{54mP},
\end{aligned}
\end{equation}
where the last inequality follows from \cref{cond:sigma_0,cond:dimension}. Hence,
\begin{equation}
\begin{aligned}
\sum_{r' \in [m]}
\left(y_{\mathrm{vuln}}\langle \mathbf z_{r'}^{(T)}, \mathbf x_{\mathrm{vuln}}\rangle\right)^3
&\ge
\left(\frac{C_1}{3(mP)^{1/3}}\right)^3
-
\sum_{r'\neq r}
\left|
\min\left(0,\, y_{\mathrm{vuln}}\langle \mathbf z_{r'}^{(T)}, \mathbf x_{\mathrm{vuln}}\rangle\right)
\right|^3\\
&\ge
\frac{C_1^3}{27mP} - \frac{C_1^3}{54mP}
=
\frac{C_1^3}{54mP}.
\end{aligned}
\end{equation}
Multiplying by the outer negative factor and summing over the $P-1$ non-signal patches yields
\begin{equation}
y\sum_{r' \in [m]} \sum_{j'\neq s(\mathbf{X})}
\langle \mathbf{z}_{r'}^{(T)}, \boldsymbol{\delta}_{j'}^* \rangle^3 
\le 
-(P-1)\left(\frac{\epsilon}{\sigma_n \sqrt{2\log\left(\frac{16dNP}{\delta}\right)}}\right)^3 \frac{C_1^3}{54mP}.
\end{equation}

\noindent\textbf{Step 4.3: Probability bound.}
We lower-bound the robust test error using the specific attack above. Since
\(
\alpha^3 \sum_{r' \in [m]} (w_{r',1}^{(T)})^3 \le 27m \log T,
\)
we have
\begin{equation}
\begin{aligned}
\mathcal{L}^{\mathrm{rob}}_{\mathcal{D}_L}(f_{\mathbf{W}^{(T)}}) 
&\ge 
\Pr_{(\mathbf{X},y)\sim \mathcal{D}_L}\left[
y\sum_{r \in [m]} \sum_{j\neq s(\mathbf{X})}
\langle \mathbf{z}_r^{(T)}, \mathbf{x}_j + \boldsymbol{\delta}_j^* \rangle^3
\le -27m\log T
\right].
\end{aligned}
\end{equation}
Define
\begin{equation}
A(\mathbf X,y)
\coloneqq
y\sum_{r \in [m]} \sum_{j\neq s(\mathbf{X})}
\left(
\langle \mathbf{z}_r^{(T)}, \mathbf{x}_j \rangle^3
+
3\langle \mathbf{z}_r^{(T)}, \mathbf{x}_j \rangle
\langle \mathbf{z}_r^{(T)}, \boldsymbol{\delta}_j^* \rangle^2
\right),
\end{equation}
and
\begin{equation}
B(\mathbf X,y)
\coloneqq
y\sum_{r \in [m]} \sum_{j\neq s(\mathbf{X})}
\left(
3\langle \mathbf{z}_r^{(T)}, \mathbf{x}_j \rangle^2
\langle \mathbf{z}_r^{(T)}, \boldsymbol{\delta}_j^* \rangle
+
\langle \mathbf{z}_r^{(T)}, \boldsymbol{\delta}_j^* \rangle^3
\right).
\end{equation}
Then
\begin{equation}
\mathcal{L}^{\mathrm{rob}}_{\mathcal{D}_L}(f_{\mathbf{W}^{(T)}})
\ge
1-\Pr_{(\mathbf{X},y)\sim \mathcal{D}_L}\left[A(\mathbf X,y)>0\right]-\Pr_{(\mathbf{X},y)\sim \mathcal{D}_L}\left[B(\mathbf X,y)>-27m\log T\right].
\end{equation}

We first control the odd part. Under the Gaussian data model, $\mathbf X$ and $-\mathbf X$ have the same distribution, while $\{\mathbf z_r^{(T)}\}_{r\in[m]}$ and $\{\boldsymbol\delta_j^*\}_{j \neq s(\mathbf X)}$ are fixed. Moreover,
\begin{equation}
\begin{aligned}
&y\left(\left\langle \mathbf{z}_r^{(T)}, -\mathbf{x}_j \right\rangle^3
+ 3\left\langle \mathbf{z}_r^{(T)}, -\mathbf{x}_j \right\rangle
\left\langle \mathbf{z}_r^{(T)}, \boldsymbol{\delta}_j^* \right\rangle^2\right) \\
&\qquad=
-y\left(\left\langle \mathbf{z}_r^{(T)}, \mathbf{x}_j \right\rangle^3
+ 3\left\langle \mathbf{z}_r^{(T)}, \mathbf{x}_j \right\rangle
\left\langle \mathbf{z}_r^{(T)}, \boldsymbol{\delta}_j^* \right\rangle^2\right).
\end{aligned}
\end{equation}
Hence $A(\mathbf X,y)$ is symmetric around zero, and therefore
\begin{equation}
\Pr_{(\mathbf{X},y)\sim \mathcal{D}_L}\left[A(\mathbf X,y)>0\right] =  \frac12.
\end{equation}

It remains to control the second term. By the deterministic bound on the cubic perturbation established above,
\begin{equation}
\Pr_{(\mathbf{X},y)\sim \mathcal{D}_L}\left[B(\mathbf X,y)>-27m\log T\right]
\le
\Pr_{(\mathbf{X},y)\sim \mathcal{D}_L}\left[
y\sum_{r \in [m]} \sum_{j\neq s(\mathbf{X})}
3\langle \mathbf{z}_r^{(T)}, \mathbf{x}_j \rangle^2
\langle \mathbf{z}_r^{(T)}, \boldsymbol{\delta}_j^* \rangle
>
\gamma
\right],
\end{equation}
where
\begin{equation}
\gamma
\coloneqq
(P-1)\frac{C_1^3\epsilon^3}{54mP\sigma_n^3\left(2\log\left(\frac{16dNP}{\delta}\right)\right)^{3/2}}
-27m\log T.
\end{equation}

By \Cref{lem:poly_gaussian_gbo}, for each $(r,j)$,
\begin{equation}
\left\|\langle \mathbf{z}_r^{(T)}, \mathbf{x}_j\rangle^2\right\|_{\Psi_{1,\,1}}
\le
c\,\sigma_n^2\|\mathbf{z}_r^{(T)}\|_2^2
\le
225c\log^{2/3} T,
\end{equation}
where we used $\|\mathbf{z}_r^{(T)}\|_2\le\frac{15\log^{1/3}T}{\sigma_n}$. 
Moreover, we have
\begin{equation}
\left|\langle \mathbf z_r^{(T)},\boldsymbol{\delta}_{j}^{*}\rangle\right|
\le
\frac{\epsilon}
{\sigma_n\sqrt{2\log\left(\frac{16dNP}{\delta}\right)}}
\left|
\langle \mathbf z_r^{(T)},\mathbf x_{\mathrm{vuln}}\rangle
\right|.
\end{equation}
Using \Cref{lem:uniform_noise_bound} together with
\eqref{eq:eventE-init-noise-ip}, \eqref{eq:eventE-noise-inner}, and
\eqref{eq:eventE-noise-norm}, we obtain
\begin{equation}
\begin{aligned}
\left|
\langle \mathbf z_r^{(T)},\mathbf x_{\mathrm{vuln}}\rangle
\right|
&\le
\left|\langle \mathbf w_r^{(0)},\mathbf x_{\mathrm{vuln}}\rangle\right|
+
\rho_{i,j,r}^{(T)}\|\mathbf x_{\mathrm{vuln}}\|_2^2
+
\sum_{(k,q)\neq(i,j)}
\rho_{k,q,r}^{(T)}
\left|
\langle \mathbf x_{k,q},\mathbf x_{\mathrm{vuln}}\rangle
\right| \\
&\le
2\sigma_0\sigma_n\sqrt{d\log\left(\frac{16NmP}{\delta}\right)}
+
\frac{10\log^{1/3}T}{\sigma_n^2d}
\cdot
\frac{3}{2}\sigma_n^2d +
NP\cdot
\frac{10\log^{1/3}T}{\sigma_n^2d}
\cdot
2\sigma_n^2
\sqrt{d\log\left(\frac{16N^2P^2}{\delta}\right)} \\
&=
2\sigma_0\sigma_n\sqrt{d\log\left(\frac{16NmP}{\delta}\right)}
+
15\log^{1/3}T
+
\frac{20NP\log^{1/3}T}{\sqrt d}
\sqrt{\log\left(\frac{16N^2P^2}{\delta}\right)}.
\end{aligned}
\end{equation}
Under \cref{cond:dimension,cond:sigma_0}, the first and third terms are bounded by sufficiently small multiples of $\log^{1/3}T$. Hence,
\begin{equation}
\left|
\langle \mathbf z_r^{(T)},\mathbf x_{\mathrm{vuln}}\rangle
\right|
\le
20\log^{1/3}T.
\end{equation}
Therefore,
\begin{equation}
\left|\langle \mathbf z_r^{(T)},\boldsymbol{\delta}_{j}^{*}\rangle\right|
\le
\frac{20\epsilon\log^{1/3}T}
{\sigma_n\sqrt{2\log\left(\frac{16dNP}{\delta}\right)}}.
\end{equation}

Applying \Cref{lem:gbo_quasi_triangle} with $Q_1=1$, we obtain
\begin{equation}
\begin{aligned}
\Biggl\|
\sum_{r \in [m]}\sum_{j \neq s(\mathbf X)}
3\langle \mathbf z_r^{(T)}, \mathbf x_j\rangle^2
\left(y\langle \mathbf z_r^{(T)}, \boldsymbol\delta_j^*\rangle\right)
\Biggr\|_{\Psi_{1,\,1}}
&\le
\sum_{r \in [m]}\sum_{j \neq s(\mathbf X)}
\left\|
3\langle \mathbf z_r^{(T)}, \mathbf x_j\rangle^2
\left(y\langle \mathbf z_r^{(T)}, \boldsymbol\delta_j^*\rangle\right)
\right\|_{\Psi_{1,\,1}} \\
&\le
13500c\,mP\,
\frac{\epsilon\log T}
{\sigma_n\sqrt{2\log\left(\frac{16dNP}{\delta}\right)}}
\coloneqq \kappa.
\end{aligned}
\end{equation}

Now apply \Cref{lem:gbo_tail} with $\alpha=1$, $L=1$, and the parameter $\kappa$:
\begin{equation}
\begin{aligned}
\Pr_{(\mathbf{X},y)\sim \mathcal{D}_L}\left[
y\sum_{r \in [m]} \sum_{j\neq s(\mathbf{X})}
3\langle \mathbf{z}_r^{(T)}, \mathbf{x}_j \rangle^2
\langle \mathbf{z}_r^{(T)}, \boldsymbol{\delta}_j^* \rangle
>
\gamma
\right]
&\le
2\exp\left(
-\min\left\{
\left(\frac{\gamma}{2\kappa}\right)^2,\;
\frac{\gamma}{2\kappa}
\right\}
\right).
\end{aligned}
\end{equation}
Under \cref{cond:epsilon}, the threshold $\gamma$ is positive and satisfies
$\gamma\gg \kappa$, and hence the right-hand side is $o(1)$. Therefore,
\begin{equation}
\Pr_{(\mathbf{X},y)\sim \mathcal{D}_L}\left[B(\mathbf X,y)>-27m\log T\right]\le o(1).
\end{equation}
Combining the two bounds, we conclude
\begin{equation}
\mathcal L^{\mathrm{rob}}_{\mathcal{D}_L}(f_{\mathbf W^{(T)}})
\ge
1-\frac12-o(1)
=
\frac12-o(1).
\end{equation}
Combining this robust test error bound with the signal alignment, the noise memorization bound, and the robust training error estimate established above proves all claims of the theorem.
\end{proof}

\subsection{Dichotomy of AT}
\label{subsec:at_dichotomy_proof}

\Cref{thm:at_dichotomy} follows immediately from \Cref{thm:at_learn} and \Cref{thm:at_unlearn}. In the learnable-only regime \(p_{\mathrm{un}}=0\), adversarial training learns the robust signal while keeping the noise responses controlled, yielding vanishing robust training and test error. In the sparse-unlearnable regime, adversarial training still drives the robust training error to zero, but it also memorizes sample-specific noise on unlearnable samples, leading to a large robust test error.

\section{The Dichotomy of Adversarial Distillation}
\label{sec:theory_ad}

In this section, we analyze the dynamics of adversarial distillation under different teacher behaviors. We begin with a generic observation on learnable samples, which holds for both good and bad teachers. We then study the bad-teacher regime, where the student inherits the same harmful gradient structure as standard adversarial training, and finally the good-teacher regime, where the teacher suppresses noise growth on unlearnable samples.

\begin{lemma}[Gradient Approximation on Learnable Samples]
\label{lem:ad_learnable_equivalence}
Consider Adversarial Distillation with a teacher satisfying the learnable-sample margin condition in \Cref{sec:supp_teacher_assumptions}. For any learnable sample $i \in \learnset$, the AD and AT gradients differ by at most an exponentially small term:
\begin{equation}
\left|
\frac{\partial \mathcal{L}_{\mathrm{AD}}}{\partial f_{\mathbf{W}}}
-
\frac{\partial \mathcal{L}_{\mathrm{AT}}}{\partial f_{\mathbf{W}}}
\right|
\le
e^{-Cd}.
\end{equation}
Consequently, the local update inequalities for learnable samples used in the AT analysis remain valid for AD up to negligible terms.
\end{lemma}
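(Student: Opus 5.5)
The plan is to compute both derivatives with respect to the student logit in closed form and compare them directly. Write $z \coloneqq y_i f_{\mathbf W}(\tilde{\mathbf X}_i)$ and $s \coloneqq \sigma(y_i f_{\mathbf W_{\mathrm T}}(\mathbf X_i))$, where the teacher sees the clean input as in \eqref{eq:loss_ad}.

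\textbf{Step 1: the two derivatives.} Since $\ell'(z) = -\psi(z)$, the chain rule gives
\begin{equation}
\frac{\partial \mathcal L_{\mathrm{AT}}}{\partial f_{\mathbf W}} = -y_i\,\psi(z).
\end{equation}
For $\mathcal L_{\mathrm{AD}} = s\,\ell(z) + (1-s)\,\ell(-z)$, using $\ell'(-z)\cdot(-1) = \psi(-z) = \sigma(z)$, we get
\begin{equation}
\frac{\partial \mathcal L_{\mathrm{AD}}}{\partial f_{\mathbf W}} = y_i\bigl(-s\,\psi(z) + (1-s)\,\sigma(z)\bigr) = y_i\bigl(\sigma(z) - s\bigr).
\end{equation}
Because $-\psi(z) = \sigma(z) - 1$, the difference is
\begin{equation}
\frac{\partial \mathcal L_{\mathrm{AD}}}{\partial f_{\mathbf W}} - \frac{\partial \mathcal L_{\mathrm{AT}}}{\partial f_{\mathbf W}} = y_i\,(1-s),
\qquad\text{so its absolute value is } 1-s = \sigma\bigl(-y_i f_{\mathbf W_{\mathrm T}}(\mathbf X_i)\bigr).
\end{equation}
This identity holds exactly, independent of the student state, so no dynamics enter.

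\textbf{Step 2: the bound.} For $i\in\learnset$ the teacher margin condition gives $y_i f_{\mathbf W_{\mathrm T}}(\mathbf X_i)\ge\Gamma$. Then $1-s\le e^{-\Gamma}\le e^{-Cd}$ by \cref{cond:teacher_margin}.

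\textbf{Step 3: the consequence.} The AD parameter gradient on sample $i$ equals the AT gradient, with $\psi(z)$ replaced by $\psi(z)-(1-s)$, multiplied by the same $\nabla_{\mathbf W} f_{\mathbf W}$. In the update formulas of \Cref{lem:signal_update_bounds} and \Cref{lem:noise_update_rule}, each learnable $\psi$ factor therefore shifts by at most $e^{-Cd}$. Summed over $T$ iterations, this adds at most $T e^{-Cd}$ times the bounded factors ($\alpha^3 w^2$, $\langle \mathbf w,\mathbf x\rangle^2$). I need these factors polynomial in $d$ via \Cref{lem:signal_weight_bound} and \Cref{lem:uniform_noise_bound}. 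I also need $T$ at most polynomial or $e^{o(d)}$; here I would just read the horizon as subexponential in $d$.

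The main obstacle is the "remains valid" claim. One issue is that the perturbed $\psi$ could become negative, breaking the monotonicity of $\rho$. I would handle this either by absorbing the error into the constants of the lower bounds, since $\psi$ stays $\ge$ a constant or its deviation is negligible relative to $T^{-3}$-type terms, or by noting that the adversarial example is generated with the AT loss.
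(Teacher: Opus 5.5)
Your Steps 1--2 are the paper's proof: it computes the same two logit derivatives, gets the difference $y_i\,\sigma(-y_i f_{\mathbf W_{\mathrm T}}(\mathbf X_i))$, and bounds it by $e^{-\Gamma}\le e^{-Cd}$ using \cref{cond:teacher_margin}. The paper handles the ``consequently'' clause with a one-line negligibility remark, so your Step 3 goes further than it does, and both of your worries can be closed from the paper's own conditions: \cref{cond:dimension} forces $\log T\le (d/C)^{1/4}$, so $Te^{-Cd}$ is negligible; and the learnable margins stay $O(m\log T)\ll\Gamma$ by \Cref{lem:signal_weight_bound} and the noise bounds, so the effective weight $\psi(z)-\sigma(-y_i f_{\mathbf W_{\mathrm T}}(\mathbf X_i))$ (with your $z$) stays positive and monotonicity is preserved.
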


\begin{proof}[Proof of \Cref{lem:ad_learnable_equivalence}]
For any learnable sample $i \in \learnset$, the teacher property in \Cref{sec:supp_teacher_assumptions} gives
\begin{equation}
y_i f_{\mathbf{W}_T}(\mathbf{X}_i) \ge \Gamma.
\end{equation}

We compare the point-wise gradients of the AD and AT losses with respect to the student logit $f_{\mathbf W}$. By the definition of the margin-based AD loss in
\eqref{eq:loss_ad},
\begin{equation}
\frac{\partial \mathcal{L}_{\mathrm{AD}}}{\partial f_{\mathbf{W}}}
=
-y_i \sigma(-y_i f_{\mathbf W})
+
y_i \sigma(-y_i f_{\mathbf{W}_T}(\mathbf{X}_i)).
\end{equation}
On the other hand, for the standard AT loss,
\begin{equation}
\frac{\partial \mathcal{L}_{\mathrm{AT}}}{\partial f_{\mathbf{W}}}
=
-y_i \sigma(-y_i f_{\mathbf W}).
\end{equation}
Subtracting the two gradients yields
\begin{equation}
\frac{\partial \mathcal{L}_{\mathrm{AD}}}{\partial f_{\mathbf{W}}}
-
\frac{\partial \mathcal{L}_{\mathrm{AT}}}{\partial f_{\mathbf{W}}}
=
y_i \sigma(-y_i f_{\mathbf{W}_T}(\mathbf{X}_i)).
\end{equation}
Taking absolute values and using $\sigma(-z)\le e^{-z}$ for $z\ge 0$, we obtain
\begin{equation}
\left|
\frac{\partial \mathcal{L}_{\mathrm{AD}}}{\partial f_{\mathbf{W}}}
-
\frac{\partial \mathcal{L}_{\mathrm{AT}}}{\partial f_{\mathbf{W}}}
\right|
=
\sigma(-y_i f_{\mathbf{W}_T}(\mathbf{X}_i))
\le
e^{-y_i f_{\mathbf{W}_T}(\mathbf{X}_i)}
\le
e^{-\Gamma}.
\end{equation}
By \cref{cond:teacher_margin}, $\Gamma\ge Cd$, and hence
\begin{equation}
e^{-\Gamma}
\le
e^{-Cd}.
\end{equation}
Since $e^{-Cd}$ is negligible compared with the polynomial-scale quantities tracked in the analysis, the local update inequalities for learnable samples used in the AT analysis remain valid for AD up to negligible terms.
\end{proof}

\subsection{Adversarial Distillation with Bad Teacher}

We now consider the bad-teacher regime. On learnable samples, \Cref{lem:ad_learnable_equivalence} already shows that the distillation gradient matches the adversarial-training gradient up to an exponentially small per-step error. We next show that the same is true on unlearnable samples when the teacher is bad.

\begin{lemma}[Gradient Approximation under a Bad Teacher. Formal statement of \Cref{lem:bad_teacher_equivalence_main}]
\label{lem:bad_teacher_equivalence}
Consider Adversarial Distillation under a Bad Teacher. For any unlearnable sample $i \in \unlearnset$, the AD and AT gradients differ by at most an exponentially small term:
\begin{equation}
\left|
\frac{\partial \mathcal{L}_{\mathrm{AD}}}{\partial f_{\mathbf{W}}}
-
\frac{\partial \mathcal{L}_{\mathrm{AT}}}{\partial f_{\mathbf{W}}}
\right|
\le
e^{-Cd}.
\end{equation}
Consequently, the local update inequalities for unlearnable samples used in the AT analysis remain valid for AD with a saturated Bad Teacher up to negligible terms.
\end{lemma}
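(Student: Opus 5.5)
The plan is to mirror the proof of \Cref{lem:ad_learnable_equivalence} exactly, replacing the learnable-sample margin condition with the Bad Teacher's margin condition on $\unlearnset$. Fix $i\in\unlearnset$ and write $z_{\mathrm T}\coloneqq y_i f_{\mathbf W_{\mathrm B}}(\mathbf X_i)$. By the definition of the Bad Teacher in \Cref{sec:supp_teacher_assumptions}, $z_{\mathrm T}\ge\Gamma$. The argument is purely pointwise in the student logit, so nothing about the student dynamics, the adversarial example $\tilde{\mathbf X}_i$, or the event $\mathcal E$ is needed.

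First, differentiate \eqref{eq:loss_ad} with respect to $f_{\mathbf W}$, evaluated at $f=f_{\mathbf W}(\tilde{\mathbf X}_i)$. Using $\ell'(z)=-\sigma(-z)$, we get
\[
\frac{\partial \mathcal L_{\mathrm{AD}}}{\partial f_{\mathbf W}}
=-y_i\,\sigma(z_{\mathrm T})\,\sigma(-y_i f)+y_i\,\sigma(-z_{\mathrm T})\,\sigma(y_i f).
\]
Since $\sigma(z_{\mathrm T})+\sigma(-z_{\mathrm T})=1$ and $\sigma(y_if)+\sigma(-y_if)=1$, this simplifies to $-y_i\sigma(-y_if)+y_i\sigma(-z_{\mathrm T})$. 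This is the same simplification used in the learnable case. The AT derivative is $-y_i\sigma(-y_if)$. Subtracting gives the exact identity
\[
\frac{\partial \mathcal L_{\mathrm{AD}}}{\partial f_{\mathbf W}}-\frac{\partial \mathcal L_{\mathrm{AT}}}{\partial f_{\mathbf W}}=y_i\,\sigma(-z_{\mathrm T}).
\]

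Second, bound the difference in absolute value by $\sigma(-z_{\mathrm T})\le e^{-z_{\mathrm T}}\le e^{-\Gamma}$. Then apply \cref{cond:teacher_margin}, $\Gamma\ge Cd$, to obtain $e^{-Cd}$.

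Third, the consequence. Propagate the error through the chain rule. Each parameter update for sample $i$ is $\partial\mathcal L/\partial f$ times $\nabla_{\mathbf W}f_{\mathbf W}(\tilde{\mathbf X}_i)$, which is a polynomially bounded quantity on $\mathcal E$ under the bounds of \Cref{lem:uniform_noise_bound} and \Cref{lem:signal_weight_bound}. Hence the per-step discrepancy between the AD and AT updates is at most $e^{-Cd}\,\mathrm{poly}(d,N,m,P,\log T)$. Summed over $T$ iterations this stays negligible, provided $T$ is at most $e^{cd}$, which is implicit in the logarithmic factors of \Cref{lemcond:hyper}. So $\psi$ in \Cref{lem:noise_update_rule}, \Cref{lem:unlearn_sigmoid_bound} and \Cref{lem:unified_noise_dynamics} is replaced by $\psi\pm e^{-Cd}$, which only perturbs the constants.

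The main obstacle is not the identity, which is exact. It is the bookkeeping in this last step. The perturbation enters $\rho_{i,j,r}$ additively and with a possibly negative sign, so the monotonicity of the $\rho$'s must be restated up to an $e^{-Cd}$ slack. It also must be checked that this slack stays below the lower bound $1/(1+\exp(4C_1^3))$ on $\psi$ used for noise growth. Since that lower bound is a constant, this holds trivially.
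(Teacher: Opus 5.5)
Your proposal is correct and follows the paper's proof. Both derive the exact identity $\partial\mathcal{L}_{\mathrm{AD}}/\partial f_{\mathbf W}-\partial\mathcal{L}_{\mathrm{AT}}/\partial f_{\mathbf W}=y_i\,\sigma\bigl(-y_i f_{\mathbf W_{\mathrm B}}(\mathbf X_i)\bigr)$ and bound it by $e^{-\Gamma}\le e^{-Cd}$, using the Bad Teacher's margin on $\unlearnset$ and the condition $\Gamma\ge Cd$. Your third step spells out in more detail than the paper's one-line ``negligible terms'' remark why the per-step error does not affect the AT bookkeeping, and that elaboration is sound: the dimension condition gives $\log T\le d^{1/4}$, so $T\,e^{-Cd}$ is indeed negligible.
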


\begin{proof}[Proof of \Cref{lem:bad_teacher_equivalence}]
For any unlearnable sample $i \in \unlearnset$, the teacher property in \Cref{def:teacher_types} gives
\begin{equation}
y_i f_{\mathbf W_{\mathrm B}}(\mathbf{X}_i) \ge \Gamma_{T_B}.
\end{equation}

We compare the point-wise gradients of the AD and AT losses with respect to the student logit $f_{\mathbf W}$. By the definition of the margin-based AD loss in
\eqref{eq:loss_ad},
\begin{equation}
\frac{\partial \mathcal{L}_{\mathrm{AD}}}{\partial f_{\mathbf{W}}}
=
-y_i \sigma(-y_i f_{\mathbf W})
+
y_i \sigma(-y_i f_{\mathbf W_{\mathrm B}}(\mathbf{X}_i)).
\end{equation}
On the other hand, for the standard AT loss,
\begin{equation}
\frac{\partial \mathcal{L}_{\mathrm{AT}}}{\partial f_{\mathbf{W}}}
=
-y_i \sigma(-y_i f_{\mathbf W}).
\end{equation}
Subtracting the two gradients yields
\begin{equation}
\frac{\partial \mathcal{L}_{\mathrm{AD}}}{\partial f_{\mathbf{W}}}
-
\frac{\partial \mathcal{L}_{\mathrm{AT}}}{\partial f_{\mathbf{W}}}
=
y_i \sigma(-y_i f_{\mathbf W_{\mathrm B}}(\mathbf{X}_i)).
\end{equation}
Taking absolute values and using $\sigma(-z)\le e^{-z}$ for $z\ge 0$, we obtain
\begin{equation}
\left|
\frac{\partial \mathcal{L}_{\mathrm{AD}}}{\partial f_{\mathbf{W}}}
-
\frac{\partial \mathcal{L}_{\mathrm{AT}}}{\partial f_{\mathbf{W}}}
\right|
=
\sigma(-y_i f_{\mathbf W_{\mathrm B}}(\mathbf{X}_i))
\le
e^{-y_i f_{\mathbf W_{\mathrm B}}(\mathbf{X}_i)}
\le
e^{-\Gamma_{T_B}}.
\end{equation}
By \cref{cond:teacher_margin}, $\Gamma_{T_B}\ge Cd$, and hence
\begin{equation}
e^{-\Gamma_{T_B}}
\le
e^{-Cd}.
\end{equation}
Since $e^{-Cd}$ is negligible compared with the polynomial-scale quantities tracked in the analysis, the local update inequalities for unlearnable samples used in the AT analysis remain valid for AD with a saturated Bad Teacher up to negligible terms.
\end{proof}

\begin{theorem}[Adversarial Distillation under a Bad Teacher]
\label{thm:ad_bad_formal}
On the event $\mathcal E$ defined in \Cref{lem:event_E}, assume that the requirements in \Cref{lemcond:hyper} hold, with the dataset satisfying the sparse-unlearnable regime in \Cref{cond:pun}. Suppose we run AD for \(T\) iterations under a Bad Teacher.
Then the learned weights satisfy:
\begin{enumerate}
    \item There exists at least one filter $r \in [m]$ whose first coordinate is aligned with the learnable feature: 
    \begin{equation}
    w^{(T)}_{r,1} \ge \tilde{\Omega}(\alpha^{-1}).
    \end{equation}
    \item The network memorizes the noise features on at least one unlearnable sample. More precisely, there exists an index \(i \in \unlearnset\) such that
    \begin{equation}
    \max_{r \in [m], j \neq s(\mathbf{X}_i)} y_i \langle \mathbf{w}_r^{(T)}, \mathbf{x}_{i,j} \rangle \ge \tilde{\Omega}(1).
    \end{equation}
\end{enumerate}
Consequently, the robust training error is $o(1)$, while the robust test error is at least $\frac{1}{2} - o(1)$.
    
\end{theorem}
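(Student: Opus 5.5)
The plan is to reduce \Cref{thm:ad_bad_formal} to \Cref{thm:at_unlearn} by treating AD under a Bad Teacher as a perturbation of AT. First write the per-sample AD gradient with respect to the student logit as
\begin{equation*}
\frac{\partial \mathcal L_{\mathrm{AD}}}{\partial f_{\mathbf W}} = -y_i\,\psi\bigl(y_i f_{\mathbf W}(\tilde{\mathbf X}_i)\bigr) + y_i\,\sigma\bigl(-y_i f_{\mathbf W_{\mathrm B}}(\mathbf X_i)\bigr).
\end{equation*}
By \Cref{lem:ad_learnable_equivalence} on $\learnset$ and \Cref{lem:bad_teacher_equivalence} on $\unlearnset$, the second term is at most $e^{-Cd}$ for every training sample. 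In AD, every update formula of \Cref{sec:signal_learn} and \Cref{sec:dichotomy_at} therefore holds with $\psi(\cdot)$ replaced by an effective weight $\psi_{\mathrm{eff}} = \psi(\cdot) - \sigma(-y_i f_{\mathbf W_{\mathrm B}})$, which differs from $\psi$ by at most $e^{-Cd}$. The inner-product decomposition of \Cref{lem:noise_update_rule} is purely structural, so it survives once $\rho^{(t)}_{i,j,r}$ is defined through $\psi_{\mathrm{eff}}$.

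The main obstacle is that $\psi_{\mathrm{eff}}$ need not be nonnegative. This breaks the monotonicity of $\rho^{(t)}_{i,j,r}$ and of $w^{(t)}_{r,1}$, which many earlier proofs use. I would avoid re-establishing monotonicity and instead carry an additive error term. Write $\rho^{(t)} = \rho^{(t)}_+ - \rho^{(t)}_-$, where $\rho^{(t)}_-$ collects the negative parts. Then
\begin{equation*}
\rho^{(t)}_- \le T\cdot \frac{3\eta}{N}\,e^{-Cd}\,\max_{\tau}\langle \mathbf w_r^{(\tau)},\mathbf x_{i,j}\rangle^2 \le e^{-Cd/2},
\end{equation*}
using the uniform inner-product bound of order $\log^{1/3}T$ (from the analogue of \Cref{lem:uniform_noise_bound}) and the fact that $T$ is polynomial in $d$. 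The same error bounds the drop in $w_{r,1}$. Each lemma in \Cref{tab:lemma_hierarchy} can then be rerun with every threshold inequality carrying an extra $e^{-Cd/2}$ slack. These thresholds include the signal-time recursion in \Cref{lem:signal_time}, the bootstrapping contradictions in \Cref{lem:signal_weight_bound,lem:uniform_noise_bound}, Hypothesis~\ref{ih_noise_stability} via \Cref{lem:proof-of-ih_noise_stability}, and the quadratic recursion for $\hat\rho_{\max}$ in \Cref{lem:unified_noise_dynamics}. Every threshold is polynomial in $d^{-1}$ or larger, so the slack is absorbed.

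For the two growth recursions the lower bounds become
\begin{equation*}
x_{t+1} \ge x_t + B x_t^2 - e^{-Cd/2}.
\end{equation*}
Since $x_t$ stays above its initial scale $\tilde\Omega(\sigma_0/(\sigma_n\sqrt d))$ and $B x_0^2 \gg e^{-Cd/2}$, I would absorb the error by replacing $B$ with $B/2$ and then apply \Cref{lem:tpm-1}. This gives $T_0$ and $T_1$ of the same order as in AT, hence $T \ge T_1$ by \cref{cond:horizon}. The lower bounds on $\psi$ in \Cref{lem:learn_sigmoid_bound,lem:unlearn_sigmoid_bound} are constants, so subtracting $e^{-Cd}$ leaves them intact.

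With these perturbed lemmas in hand, the final-iterate conclusions transfer directly. Signal alignment $w^{(T)}_{r,1} \ge C_0/(\alpha m^{1/3}) - e^{-Cd/2}$ gives item 1. Item 2 follows from \Cref{lem:noise_time} with monotonicity replaced by ``non-decreasing up to $e^{-Cd/2}$'', which still yields $y_i\langle \mathbf w_r^{(T)},\mathbf x_{i,j}\rangle \ge C_1/(4(mP)^{1/3})$. The training-error bound and the test-error argument (Steps 3 and 4 of \Cref{thm:at_unlearn}) depend only on the final weights: the signal bound, the noise-norm bound $\|\mathbf z_r^{(T)}\|_2 \le 15\log^{1/3}T/\sigma_n$, and the memorized vulnerable patch. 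They can therefore be reused verbatim, giving robust training error $o(1)$ and robust test error $\ge \frac12 - o(1)$.
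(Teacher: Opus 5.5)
Your proposal is correct and follows the paper's argument. The paper proves \Cref{thm:ad_bad_formal} in one paragraph by invoking \Cref{lem:ad_learnable_equivalence,lem:bad_teacher_equivalence} and rerunning \Cref{thm:at_unlearn}. Your tracking of the possibly negative effective weight $\psi-\sigma(-y_i f_{\mathbf W_{\mathrm B}})$ fills in what the paper dismisses as ``negligible terms.''

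Two small refinements. First, $T$ is not assumed polynomial in $d$, but \cref{cond:dimension} gives $\log T\lesssim d^{1/4}$, which is all you need for $Te^{-Cd}\le e^{-Cd/2}$. Second, the sign issue can be avoided entirely: the inherited bounds keep every student margin at $O(mP\log T)\ll\Gamma$, so a joint induction keeps the effective weight positive and preserves monotonicity.
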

\begin{proof}[Proof of \Cref{thm:ad_bad_formal}]
By \Cref{lem:ad_learnable_equivalence}, on learnable samples, the AD gradient matches the AT gradient up to a negligible error. By \Cref{lem:bad_teacher_equivalence}, the same holds on unlearnable samples under a Bad Teacher. Since these errors are exponentially small, they do not affect the signal update bounds or the noise update bounds used in \Cref{thm:at_unlearn}. Therefore, the same argument as in \Cref{thm:at_unlearn} gives the stated signal learning, unlearnable-noise memorization, robust training error, and robust test error bounds.
\end{proof}

\subsection{Adversarial Distillation with Good Teacher}
\label{subsec:ad_good_teacher}

We next consider the good-teacher regime. In this case, the teacher is uninformative on unlearnable samples, so the induced student updates no longer produce the persistent one-sided noise growth that appears in adversarial training and adversarial distillation under a bad teacher. Instead, once the unlearnable-sample margin approaches zero, the corresponding updates may enter a sign-changing regime.

To isolate the regime relevant for our analysis, we restrict attention to the finite horizon
\begin{equation}
T
\le
T_{\max}
\coloneqq
\frac{N}{
1000\,\eta\,mP\,\sigma_0^4\sigma_n^6 d^3
\log^2\left(\frac{16NmP}{\delta}\right)
}.
\label{eq:good_teacher_horizon}
\end{equation}
This horizon is already sufficient for the phenomenon of interest: over the whole interval \([0,T]\), the learnable-sample dynamics follow essentially the same signal-learning trajectory as in adversarial training.

Moreover, \(T_{\max}\) is still substantially longer than the critical noise-fitting timescale \(T_1\) from standard adversarial training. Indeed,
\Cref{lem:noise_time} gives
$
T_1
\le
\frac{60N\left(1+\exp(4C_1^3)\right)}
{\eta\sigma_0\sigma_n^3d^{3/2}}
$.
Under \cref{cond:sigma_0}, this upper bound is much smaller than \(T_{\max}\):
\begin{equation}
\frac{60N\left(1+\exp(4C_1^3)\right)}
{\eta\sigma_0\sigma_n^3d^{3/2}}
\ll
T_{\max}.
\end{equation}
Hence the admissible range \(T\le T_{\max}\) contains horizons beyond the noise memorization timescale of standard adversarial training or a bad teacher, while the good teacher prevents persistent unlearnable-noise growth throughout this interval.

\begin{ihypothesis}[Noise Stability under a Good Teacher]
\label{ih_noise_stability_good_teacher}
Consider Adversarial Distillation under a Good Teacher. On the event $\mathcal E$ defined in \Cref{lem:event_E}, for every $t\le T$ with $T$ satisfying \eqref{eq:good_teacher_horizon}, the following bounds hold.

For learnable samples,
\begin{equation}
\max_{i\in\learnset,\,j \neq s(\mathbf X_i),\,r\in[m]}
\left|\rho_{i,j,r}^{(t)}\right|
\le
200 \exp(2C_0^3)\log\left(\frac{16NmP}{\delta}\right)
\cdot
\frac{\sigma_0 \sigma_n^2 d}{N(\alpha-\epsilon)^3}.
\end{equation}

For unlearnable samples,
\begin{equation}
\max_{i\in\unlearnset,\,j \neq s(\mathbf X_i),\,r\in[m]}
\left|\rho_{i,j,r}^{(t)}\right|
\le
\frac{\sigma_0 \sqrt{\log\left(\frac{16NmP}{\delta}\right)}}{\sigma_n \sqrt{d}}.
\end{equation}
\end{ihypothesis}

See \Cref{lem:proof-of-ih_noise_stability_good_teacher} for the verification of this hypothesis.

\begin{lemma}[Inner Product Bound under a Good Teacher]
\label{lem:good_teacher_implies_original_ih}
Consider Adversarial Distillation under a Good Teacher. 
On the event $\mathcal E$ defined in \Cref{lem:event_E}, suppose that \Cref{ih_noise_stability_good_teacher} holds at iteration $t$. Then, for every $r\in[m]$, $i\in\learnset$, and $j \neq s(\mathbf X_i)$,
\begin{equation}
\left|\langle \mathbf{w}_r^{(t)}, \mathbf{x}_{i,j} \rangle \right|
\le
3\sigma_0\sigma_n\sqrt{d\log\left(\frac{16NmP}{\delta}\right)}.
\end{equation}
Consequently, the conclusion of \Cref{ih_noise_stability} holds at iteration $t$.
\end{lemma}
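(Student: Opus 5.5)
The plan is to bound the noise response through the decomposition in \Cref{lem:noise_update_rule}, which holds verbatim under AD with $\rho$ still defined as the coefficient of $y_i\mathbf x_{i,j}$. Under a Good Teacher the unlearnable-sample coefficients can change sign, so they need not be nonnegative. The triangle inequality must therefore use $|\rho_{k,q,r}^{(t)}|$, which \Cref{ih_noise_stability_good_teacher} controls. For fixed $r\in[m]$, $i\in\learnset$, and $j\neq s(\mathbf X_i)$, we write
\[
\left|\langle \mathbf w_r^{(t)},\mathbf x_{i,j}\rangle\right|\le \left|\langle \mathbf w_r^{(0)},\mathbf x_{i,j}\rangle\right| + |\rho_{i,j,r}^{(t)}|\,\|\mathbf x_{i,j}\|_2^2 + \sum_{(k,q)\neq(i,j)} |\rho_{k,q,r}^{(t)}|\,|\langle \mathbf x_{k,q},\mathbf x_{i,j}\rangle| .
\]
The last sum is split into a learnable part ($k\in\learnset$) and an unlearnable part ($k\in\unlearnset$).

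The four terms are handled as follows.

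\begin{itemize}
\item The initialization term is at most $2\sigma_0\sigma_n\sqrt{d\log(16NmP/\delta)}$ by \eqref{eq:eventE-init-noise-ip}.
\item The diagonal term uses the learnable bound of the hypothesis together with $\|\mathbf x_{i,j}\|_2^2\le \tfrac32\sigma_n^2 d$ from \eqref{eq:eventE-noise-norm}. It is at most $300\exp(2C_0^3)\log(16NmP/\delta)\,\sigma_0\sigma_n^4d^2/(N(\alpha-\epsilon)^3)$. Exactly as in the proof of \Cref{lem:noise_stability_pure_learnable}, \cref{cond:alpha,cond:epsilon} make $\sigma_n^3d^{3/2}/(N(\alpha-\epsilon)^3)$ smaller than any fixed inverse-polylogarithmic factor, so this term is at most $\tfrac14\sigma_0\sigma_n\sqrt{d\log(16NmP/\delta)}$.
\item The learnable cross term is at most $NP$ times the same coefficient bound times $2\sigma_n^2\sqrt{d\log(16N^2P^2/\delta)}$ by \eqref{eq:eventE-noise-inner}. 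Since \cref{cond:dimension} gives $NP\sqrt{\log}/\sqrt d\ll 1$, this is even smaller than the diagonal term.
\item The unlearnable cross term is at most $p_{\mathrm{un}}NP\cdot \frac{\sigma_0\sqrt{\log(16NmP/\delta)}}{\sigma_n\sqrt d}\cdot 2\sigma_n^2\sqrt{d\log(16N^2P^2/\delta)}$. This equals $2p_{\mathrm{un}}NP\,\sigma_0\sigma_n\sqrt{\log(16NmP/\delta)\log(16N^2P^2/\delta)}$. By \cref{cond:pun}, $p_{\mathrm{un}}N\le C^{-1}\log d$, and by \cref{cond:dimension}, $\sqrt d\ge C\,mPN\log^2(\cdot)$. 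Hence the factor $p_{\mathrm{un}}NP\sqrt{\log}$ is $\ll\sqrt d$, and the term is at most $\tfrac14\sigma_0\sigma_n\sqrt{d\log(16NmP/\delta)}$.
\end{itemize}

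Summing the four contributions gives at most $(2+\tfrac14+\tfrac14+\tfrac14)\sigma_0\sigma_n\sqrt{d\log(16NmP/\delta)}\le 3\sigma_0\sigma_n\sqrt{d\log(16NmP/\delta)}$, which is the claimed inner-product bound. The consequence is immediate: the right-hand side of \Cref{ih_noise_stability} is $3\sigma_0\sigma_n\sqrt{d\log(16NmP/\delta)}$ plus a nonnegative term, so it is dominated by what we just proved.

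The main obstacle is the bookkeeping for the unlearnable cross term. Under a Good Teacher the unlearnable coefficients are signed, so only their absolute bound $\sigma_0\sqrt{\log}/(\sigma_n\sqrt d)$ from the hypothesis is available. We must verify that multiplying by $p_{\mathrm{un}}NP$ and by the $\sqrt d$-scale cross-correlation still leaves this term a small fraction of the initialization scale $\sigma_0\sigma_n\sqrt d$. This requires combining the sparsity \cref{cond:pun} with the dimension \cref{cond:dimension} while tracking the logarithmic factors carefully.
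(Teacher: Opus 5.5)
Your proposal is correct and follows essentially the same argument as the paper. You use the decomposition from \Cref{lem:noise_update_rule} with absolute values on the (possibly signed) coefficients, bound the initialization term by $2\sigma_0\sigma_n\sqrt{d\log(16NmP/\delta)}$ via \eqref{eq:eventE-init-noise-ip}, and absorb the diagonal and cross terms into small fractions of that scale using \cref{cond:alpha,cond:epsilon,cond:sigma_0,cond:dimension,cond:pun}. The differences are cosmetic: you budget the diagonal and learnable cross terms separately, reaching $2.75$ instead of the paper's $2.5$, and \cref{cond:dimension} actually gives $\sqrt d\ge\sqrt{C}\,mPN\log^2(\cdot)$ rather than $C\,mPN\log^2(\cdot)$; neither affects the conclusion.
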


\begin{proof}[Proof of \Cref{lem:good_teacher_implies_original_ih}]
Fix any $r\in[m]$, $i\in\learnset$, and $j \neq s(\mathbf X_i)$. By the decomposition in \Cref{lem:noise_update_rule},
\begin{equation}
\begin{aligned}
\langle \mathbf{w}_r^{(t)}, \mathbf{x}_{i,j} \rangle
&=
\langle \mathbf{w}_r^{(0)}, \mathbf{x}_{i,j} \rangle
+
y_i \rho_{i,j,r}^{(t)} \|\mathbf{x}_{i,j}\|_2^2 \\
&\quad+
\sum_{\substack{k\in\learnset,\ q\neq s(\mathbf X_k) \\ (k,q)\neq(i,j)}}
y_k \rho_{k,q,r}^{(t)}
\langle \mathbf{x}_{k,q}, \mathbf{x}_{i,j} \rangle
+
\sum_{k\in\unlearnset,\ q\neq s(\mathbf X_k)}
y_k \rho_{k,q,r}^{(t)}
\langle \mathbf{x}_{k,q}, \mathbf{x}_{i,j} \rangle.
\end{aligned}
\end{equation}
Therefore,
\begin{equation}
\begin{aligned}
\left|\langle \mathbf{w}_r^{(t)}, \mathbf{x}_{i,j} \rangle\right|
&\le
\left|\langle \mathbf{w}_r^{(0)}, \mathbf{x}_{i,j} \rangle\right|
+
\left|\rho_{i,j,r}^{(t)}\right| \|\mathbf{x}_{i,j}\|_2^2 \\
&\quad+
\sum_{\substack{k\in\learnset,\ q\neq s(\mathbf X_k) \\ (k,q)\neq(i,j)}}
\left|\rho_{k,q,r}^{(t)}\right|
\left|\langle \mathbf{x}_{k,q}, \mathbf{x}_{i,j} \rangle\right|
+
\sum_{k\in\unlearnset,\ q\neq s(\mathbf X_k)}
\left|\rho_{k,q,r}^{(t)}\right|
\left|\langle \mathbf{x}_{k,q}, \mathbf{x}_{i,j} \rangle\right|.
\end{aligned}
\end{equation}

Applying the bounds from \Cref{ih_noise_stability_good_teacher} at iteration $t$, along with \eqref{eq:eventE-init-noise-ip}, \eqref{eq:eventE-noise-norm}, and \eqref{eq:eventE-noise-inner}, we obtain
\begin{equation}
\begin{aligned}
\left|\langle \mathbf{w}_r^{(t)}, \mathbf{x}_{i,j} \rangle\right|
&\le
2\sigma_0\sigma_n\sqrt{d\log\left(\frac{16NmP}{\delta}\right)} +
200 \exp(2C_0^3)\log\left(\frac{16NmP}{\delta}\right)
\cdot
\frac{\sigma_0 \sigma_n^2 d}{N(\alpha-\epsilon)^3}
\cdot
\frac{3}{2}\sigma_n^2 d \\
&\quad+
(1-p_{\mathrm{un}})NP
\cdot
200 \exp(2C_0^3)\log\left(\frac{16NmP}{\delta}\right)
\cdot
\frac{\sigma_0 \sigma_n^2 d}{N(\alpha-\epsilon)^3}
\cdot
2\sigma_n^2\sqrt{d\log\left(\frac{16N^2P^2}{\delta}\right)} \\
&\quad+
p_{\mathrm{un}}NP
\cdot
\frac{\sigma_0 \sqrt{\log\left(\frac{16NmP}{\delta}\right)}}{\sigma_n \sqrt{d}}
\cdot
2\sigma_n^2\sqrt{d\log\left(\frac{16N^2P^2}{\delta}\right)}.
\end{aligned}
\end{equation}

By \cref{cond:alpha,cond:epsilon,cond:sigma_0}, the second and third terms are strictly bounded by
\begin{equation}
\frac{1}{4}\sigma_0\sigma_n\sqrt{d\log\left(\frac{16NmP}{\delta}\right)}.
\end{equation}
Moreover, by \cref{cond:dimension,cond:pun}, the fourth term is also strictly bounded by
\begin{equation}
\frac{1}{4}\sigma_0\sigma_n\sqrt{d\log\left(\frac{16NmP}{\delta}\right)}.
\end{equation}
Hence,
\begin{equation}
\left|\langle \mathbf{w}_r^{(t)}, \mathbf{x}_{i,j} \rangle \right|
\le
3\sigma_0\sigma_n\sqrt{d\log\left(\frac{16NmP}{\delta}\right)},
\end{equation}
which proves the claim. Since
\begin{equation}
3\sigma_0\sigma_n\sqrt{d\log\left(\frac{16NmP}{\delta}\right)}
\le
3\sigma_0\sigma_n\sqrt{d\log\left(\frac{16NmP}{\delta}\right)}
+
\frac{21 p_{\mathrm{un}} N P \log^{1/3}T}{\sqrt d}
\sqrt{\log\left(\frac{16N^2P^2}{\delta}\right)},
\end{equation}
the conclusion of \Cref{ih_noise_stability} holds at iteration $t$.
\end{proof}

\begin{lemma}[Inherited AT Estimates under Good Teacher]
\label{lem:good_teacher_inherited_estimates}
Consider Adversarial Distillation under a Good Teacher. Assume that \Cref{ih_noise_stability_good_teacher} holds for all iterations $t \le T$. Then the following estimates continue to hold.

First, for every learnable sample $i\in\learnset$, if
$
w_{r,1}^{(t)} \le \frac{C_0}{\alpha m^{1/3}}
$
for all $r\in[m]$, then
\begin{equation}
\frac{1}{1 + \exp(2C_0^3)}
\le
\psi\!\left(y_i f_{\mathbf W^{(t)}}(\tilde{\mathbf X}_i^{(t)})\right)
\le 1.
\end{equation}

Second, the hitting time
$
T_0 \coloneqq \min\left\{ t\ge 0 \;\middle|\; \max_{r\in[m]} w_{r,1}^{(t)} > \frac{C_0}{\alpha m^{1/3}} \right\}
$
satisfies
\begin{equation}
\frac{1}{12\sqrt{2\log\left(\frac{16m}{\delta}\right)}\,\eta\alpha^3\sigma_0}
\le
T_0
\le
\frac{5\exp(2C_0^3)}{\eta(\alpha-\epsilon)^3(1-p_{\mathrm{un}})\sigma_0}.
\end{equation}

Third, for every $t\le T_0$,
\begin{equation}
\max_{i\in\learnset,\,j \neq s(\mathbf X_i),\,r\in[m]}
\left|\rho_{i,j,r}^{(t)}\right|
\le
100 \exp(2C_0^3)\log\left(\frac{16NmP}{\delta}\right)
\cdot
\frac{\sigma_0\sigma_n^2 d}{N(\alpha-\epsilon)^3(1-p_{\mathrm{un}})}.
\end{equation}

Fourth, for every $t\le T$ and every $r\in[m]$,
\begin{equation}
w_{r,1}^{(t)} \le \frac{3\log^{1/3}T}{\alpha}.
\end{equation}

Finally, for every $t\in[T_0+1,T]$ and every learnable sample $i\in\learnset$,
\begin{equation}
\psi\!\left(y_i f_{\mathbf W^{(t)}}(\tilde{\mathbf X}_i^{(t)})\right)
\le
\frac{4m^{2/3}\log^{1/3}T}
{C_0^2\left(1-\frac{\epsilon}{\alpha}\right)^3\eta\alpha^2(t-T_0)}.
\end{equation}
\end{lemma}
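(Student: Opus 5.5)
The plan is to re-derive each of the five AT estimates by going through its original proof and checking that every ingredient it uses is still available under AD with a Good Teacher. Two facts make this work. First, by \Cref{lem:good_teacher_implies_original_ih}, \Cref{ih_noise_stability_good_teacher} at iteration \(t\) implies the conclusion of \Cref{ih_noise_stability} at \(t\). This is the only place the AT proofs of \Cref{lem:learn_sigmoid_bound}, \Cref{lem:signal_time}, \Cref{lem:signal_weight_bound} and \Cref{lem:conv_learn} use noise control. Second, by \Cref{lem:ad_learnable_equivalence}, on every \(i\in\learnset\) the AD logit-derivative equals the AT one up to an additive error of at most \(e^{-Cd}\).

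On unlearnable samples the signal patch is \(\alpha y_i\mathbf v\), which is orthogonal to \(\mathbf e_1\). Hence the signal coordinate \(w_{r,1}\) receives updates only from learnable samples. So \Cref{lem:signal_update_bounds} holds for AD once \(\psi(y_if)\) is replaced by the AD factor \(\psi(y_if)-\psi(y_if_{\mathbf W_{\mathrm G}})\), which lies in \([\psi(y_if)-e^{-Cd},\psi(y_if)]\).

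The first estimate follows from the margin computation in \Cref{lem:learn_sigmoid_bound}. That computation depends only on \(w_{r,1}\) and on the noise bound, and both are available. For the second estimate, rerun \Cref{lem:signal_time}. The lower bound on the increment acquires an error term \(3\eta\alpha^3(w_{r,1}^{(t)})^2e^{-Cd}\). This is absorbed into \(B\) because \(e^{-Cd}\ll (1+\exp(2C_0^3))^{-1}\). Monotonicity of \(w_{r,1}\) is kept, since the AD factor is nonnegative: \(y_if\le \Gamma\) will hold for the student, a consequence of \(w_{r,1}\le 3\log^{1/3}T/\alpha\) and small noise, while \(y_if_{\mathbf W_{\mathrm G}}\ge\Gamma\). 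The upper bound on the increment is unchanged.

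For the third estimate, I would not re-prove anything. It is the learnable part of \Cref{ih_noise_stability_good_teacher}, which is assumed, except that the constant \(200\) should be compared with the form \(100/(1-p_{\mathrm{un}})\). To get the sharper constant, I would redo \Cref{lem:noise_stability_phase1} with signed coefficients. Under AD, the update of \(\rho_{i,j,r}\) for \(i\in\learnset\) is \(\frac{3\eta}{N}(\psi(y_if)-\psi(y_if_{\mathbf W_{\mathrm G}}))\langle\mathbf w_r,\mathbf x_{i,j}\rangle^2\), whose absolute value is at most the AT expression. The same summation over \(t\le\hat T_0\) then goes through verbatim.

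For the fourth and fifth estimates, follow \Cref{lem:signal_weight_bound} and \Cref{lem:conv_learn}. The upper increment bound is unchanged. In the telescoping identity of \Cref{lem:conv_learn}, the AD factor appears in place of \(\psi\); I would bound \(\sum_{\tau}\psi(y_if)\) by \(\sum_\tau(\text{AD factor})+Te^{-Cd}\). The main obstacle is this accumulated exponential error \(T e^{-Cd}\). It must stay negligible against \(m^{2/3}\log^{1/3}T/(\eta\alpha^2)\), and for that I would use \(T\le T_{\max}\), which is polynomial in \(d\) by \eqref{eq:good_teacher_horizon}, so \(Te^{-Cd}=o(1)\). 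The ratio argument comparing \(\psi\) across learnable samples uses only the margin sandwich \eqref{eq:conv_learn_margin}. That sandwich holds because the noise bound is inherited, which yields the pointwise decay at time \(t\).
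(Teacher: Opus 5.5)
Your proposal is correct and follows the paper's route. You inherit \Cref{ih_noise_stability} through \Cref{lem:good_teacher_implies_original_ih} and transfer the learnable-sample (hence signal-coordinate) dynamics through \Cref{lem:ad_learnable_equivalence}. For the third claim you redo the threshold argument of \Cref{lem:noise_stability_phase1} with absolute values, because the hypothesis constant $200$ is weaker than $100/(1-p_{\mathrm{un}})$.

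You are more careful than the paper, which only says the AT arguments survive ``up to negligible terms'': you check the sign of the AD factor so that $w_{r,1}$ stays monotone, and you track the accumulated $Te^{-Cd}$ error. That error is better controlled by \cref{cond:dimension}, which forces $\log T\lesssim d^{1/4}$, than by claiming $T_{\max}$ is polynomial in $d$, since $\eta$ and $\sigma_0$ are only bounded above.
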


\begin{proof}[Proof of \Cref{lem:good_teacher_inherited_estimates}]
By \Cref{lem:good_teacher_implies_original_ih}, the assumption \Cref{ih_noise_stability_good_teacher} implies that the conclusion of \Cref{ih_noise_stability} holds at every iteration $t\le T$. Therefore, the learnable-sample inner-product bound required in the AT analysis is available throughout training.

On learnable samples, \Cref{lem:ad_learnable_equivalence} shows that the AD gradient differs from the AT gradient by at most an exponentially small term, independently of whether the teacher is good or bad.
Hence every AT argument that depends only on the learnable-sample dynamics remains valid up to negligible terms.

In particular, the first claim follows from \Cref{lem:learn_sigmoid_bound}, and the second claim follows from \Cref{lem:signal_time}, since unlearnable samples do not contribute directly to the signal-coordinate update. The fourth claim follows from \Cref{lem:signal_weight_bound}, and the final claim follows from \Cref{lem:conv_learn}.

For the third claim, it suffices to repeat the threshold argument in the proof of
\Cref{lem:noise_stability_phase1} for learnable samples.
For $i\in\learnset$, we have
\begin{equation}
\left|
\rho_{i,j,r}^{(t+1)}-\rho_{i,j,r}^{(t)}
\right|
\le
\frac{3\eta}{N}
\psi\!\left(y_i f_{\mathbf W^{(t)}}(\tilde{\mathbf X}_i^{(t)})\right)
\langle \mathbf w_r^{(t)}, \mathbf x_{i,j}\rangle^2.
\end{equation}

Define
\begin{equation}
\rho_{\mathrm{th}}
\coloneqq
100 \exp(2C_0^3)\log\left(\frac{16NmP}{\delta}\right)
\cdot
\frac{\sigma_0\sigma_n^2 d}{N(\alpha-\epsilon)^3(1-p_{\mathrm{un}})}.
\end{equation}
Suppose for contradiction that there exists $t\le T_0$ such that $t$ is the first iteration satisfying
\begin{equation}
\max_{i\in\learnset,\,j \neq s(\mathbf X_i),\,r\in[m]}
\left|\rho_{i,j,r}^{(t)}\right| > \rho_{\mathrm{th}}.
\end{equation}
Then, by the minimality of $t$,
\begin{equation}
\max_{i\in\learnset,\,j \neq s(\mathbf X_i),\,r\in[m]}
\left|\rho_{i,j,r}^{(\tau)}\right| \le \rho_{\mathrm{th}}
\qquad
\text{for all } \tau<t.
\end{equation}
Under this bound, together with the unlearnable-sample coefficient bound in
\Cref{ih_noise_stability_good_teacher}, we estimate the learnable noise inner
product as follows. By the decomposition identity in \Cref{lem:noise_update_rule},
for every $\tau<t$, $i\in\learnset$, $j\neq s(\mathbf X_i)$, and $r\in[m]$,
\begin{equation}
\begin{aligned}
\left|\langle \mathbf w_r^{(\tau)},\mathbf x_{i,j}\rangle\right|
&\le
\left|\langle \mathbf w_r^{(0)},\mathbf x_{i,j}\rangle\right|
+
\left|\rho_{i,j,r}^{(\tau)}\right|\|\mathbf x_{i,j}\|_2^2 \\
&+
\sum_{\substack{k\in\learnset,\ q\neq s(\mathbf X_k)\\(k,q)\neq(i,j)}}
\left|\rho_{k,q,r}^{(\tau)}\right|
\left|\langle \mathbf x_{k,q},\mathbf x_{i,j}\rangle\right| 
+
\sum_{k\in\unlearnset,\ q\neq s(\mathbf X_k)}
\left|\rho_{k,q,r}^{(\tau)}\right|
\left|\langle \mathbf x_{k,q},\mathbf x_{i,j}\rangle\right|.
\end{aligned}
\end{equation}
Using the minimality of $t$, the learnable coefficients are bounded by
$\rho_{\mathrm{th}}$, while the unlearnable coefficients are bounded by
\Cref{ih_noise_stability_good_teacher}. Hence, together with
\eqref{eq:eventE-init-noise-ip}, \eqref{eq:eventE-noise-norm}, and
\eqref{eq:eventE-noise-inner},
\begin{equation}
\begin{aligned}
\left|\langle \mathbf w_r^{(\tau)},\mathbf x_{i,j}\rangle\right|
&\le
2\sigma_0\sigma_n\sqrt{d\log\left(\frac{16NmP}{\delta}\right)}
+
\rho_{\mathrm{th}}\cdot \frac32\sigma_n^2d +
(1-p_{\mathrm{un}})NP\cdot \rho_{\mathrm{th}}
\cdot
2\sigma_n^2\sqrt{d\log\left(\frac{16N^2P^2}{\delta}\right)} \\
&\quad+
p_{\mathrm{un}}NP\cdot
\frac{\sigma_0\sqrt{\log\left(\frac{16NmP}{\delta}\right)}}{\sigma_n\sqrt d}
\cdot
2\sigma_n^2\sqrt{d\log\left(\frac{16N^2P^2}{\delta}\right)}.
\end{aligned}
\end{equation}
By \cref{cond:alpha,cond:epsilon,cond:dimension,cond:pun}, the last three terms
are bounded by a sufficiently small multiple of
$\sigma_0\sigma_n\sqrt{d\log\left(\frac{16NmP}{\delta}\right)}$. Therefore,
\begin{equation}
\left|\langle \mathbf w_r^{(\tau)},\mathbf x_{i,j}\rangle\right|
\le
\sqrt{\frac{20}{3}}\,
\sigma_0\sigma_n\sqrt{d\log\left(\frac{16NmP}{\delta}\right)}.
\end{equation}
Consequently,
\begin{equation}
\left|
\rho_{i,j,r}^{(\tau+1)}-\rho_{i,j,r}^{(\tau)}
\right|
\le
\frac{20\eta}{N}\sigma_0^2\sigma_n^2 d
\log\left(\frac{16NmP}{\delta}\right).
\end{equation}

Summing over $\tau=0,\dots,t-1$ gives
\begin{equation}
\begin{aligned}
\left|\rho_{i,j,r}^{(t)}\right|
\le
\sum_{\tau=0}^{t-1}
\left|
\rho_{i,j,r}^{(\tau+1)}-\rho_{i,j,r}^{(\tau)}
\right| 
\le
t\cdot
\frac{20\eta}{N}\sigma_0^2\sigma_n^2 d
\log\left(\frac{16NmP}{\delta}\right) 
\le
T_0\cdot
\frac{20\eta}{N}\sigma_0^2\sigma_n^2 d
\log\left(\frac{16NmP}{\delta}\right) 
\le
\rho_{\mathrm{th}},
\end{aligned}
\end{equation}
where the last inequality uses the upper bound on $T_0$ from the second claim. This contradicts the choice of $t$. Therefore, for every $t\le T_0$,
\begin{equation}
\max_{i\in\learnset,\,j \neq s(\mathbf X_i),\,r\in[m]}
\left|\rho_{i,j,r}^{(t)}\right|
\le
\rho_{\mathrm{th}}.
\end{equation}
This proves the third claim.
\end{proof}

\begin{lemma}[Verification of Noise Stability under a Good Teacher. Verification of \Cref{ih_noise_stability_good_teacher}]
\label{lem:proof-of-ih_noise_stability_good_teacher}
Consider Adversarial Distillation under a Good Teacher.
On the event $\mathcal E$ defined in \Cref{lem:event_E}, suppose that
\eqref{eq:good_teacher_horizon} holds. Then both the learnable-sample and
unlearnable-sample noise coefficient bounds in
\Cref{ih_noise_stability_good_teacher} hold for all iterations $t\le T$.
\end{lemma}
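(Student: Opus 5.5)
We verify \Cref{ih_noise_stability_good_teacher} by a joint induction on $t$. The inductive claim is that both coefficient bounds hold at every $\tau\le t$. At $t=0$ all coefficients vanish. Assuming the claim up to $t$, \Cref{lem:good_teacher_implies_original_ih} gives the learnable inner-product bound at all $\tau\le t$. Hence the estimates of \Cref{lem:good_teacher_inherited_estimates} are available on $[0,t]$, since their proofs only use the hypothesis at earlier iterations.

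\textbf{Learnable coefficients.} For $i\in\learnset$, \Cref{lem:ad_learnable_equivalence} shows that the AD logit-derivative equals the AT one up to $e^{-Cd}$. The coefficient increment is therefore $\frac{3\eta}{N}(\psi_i^{(\tau)}+O(e^{-Cd}))\langle \mathbf w_r^{(\tau)},\mathbf x_{i,j}\rangle^2$. For $t+1\le T_0$ we use the third claim of \Cref{lem:good_teacher_inherited_estimates}. For $t+1>T_0$ we follow the proof of \Cref{lem:noise_stability_pure_learnable}:
\begin{itemize}
\item bound the squared inner product by $5\sigma_0^2\sigma_n^2 d\log(16NmP/\delta)$;
\item bound $\sum_{\tau\ge T_0}\psi_i^{(\tau)}$ by the cumulative form of \Cref{lem:conv_learn}, inherited via the fifth claim;
\item absorb the additional error $T\cdot\frac{3\eta}{N}e^{-Cd}\cdot\mathrm{poly}$, using $T\le T_{\max}$ and $e^{-Cd}$ being super-polynomially small.
\end{itemize}
This reproduces the $200\exp(2C_0^3)$ bound.

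\textbf{Unlearnable coefficients.} This is the main obstacle. For $i\in\unlearnset$ the Good Teacher has $y_if_{\mathbf W_{\mathrm G}}(\mathbf X_i)=0$, so $\partial\mathcal L_{\mathrm{AD}}/\partial f=-y_i\psi(y_if)+\tfrac12 y_i=y_i(\sigma(y_if)-\tfrac12)$. The coefficient increment is thus $\frac{3\eta}{N}\bigl(\tfrac12-\sigma(z_i^{(\tau)})\bigr)\langle\mathbf w_r^{(\tau)},\mathbf x_{i,j}\rangle^2$, which may take either sign. This is why absolute values appear in the hypothesis. The key observation is that the effective weight is small: $|\tfrac12-\sigma(z)|\le |z|/4$. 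Since the signal patch is invisible to the student, $z_i^{(\tau)}=\sum_{r,j}y_i\langle\mathbf w_r^{(\tau)},\mathbf x_{i,j}\rangle^3$. Under the inductive hypothesis we bound $|\langle \mathbf w_r^{(\tau)},\mathbf x_{i,j}\rangle|$ by the decomposition of \Cref{lem:noise_update_rule}: the initialization term is at most $2\sigma_0\sigma_n\sqrt{d\log(16NmP/\delta)}$, and the coefficient and cross terms are at most a small multiple of it, by \eqref{eq:eventE-noise-norm}, \eqref{eq:eventE-noise-inner}, \cref{cond:dimension} and \cref{cond:pun}. This gives $|\langle\mathbf w_r^{(\tau)},\mathbf x_{i,j}\rangle|\le 3\sigma_0\sigma_n\sqrt{d\log}$, hence $|z_i^{(\tau)}|\le 27mP\sigma_0^3\sigma_n^3d^{3/2}\log^{3/2}$.

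Combining the two factors, each increment is at most
\[
\frac{3\eta}{N}\cdot\frac{27mP\sigma_0^3\sigma_n^3d^{3/2}\log^{3/2}}{4}\cdot 9\sigma_0^2\sigma_n^2 d\log ,
\]
which is of order $\frac{\eta mP\sigma_0^5\sigma_n^5d^{5/2}\log^{5/2}}{N}$. Summing over $T\le T_{\max}$ steps, with $T_{\max}=N/(1000\eta mP\sigma_0^4\sigma_n^6d^3\log^2)$, gives $|\rho^{(t+1)}_{i,j,r}|\lesssim \frac{\sigma_0\log^{1/2}}{\sigma_n\sqrt d}\cdot c$ with $c<1$ from the constant $1000$. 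This closes the unlearnable bound, and the definition of $T_{\max}$ is evidently tuned exactly for this step.

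The delicate part is tracking constants. The $27\cdot 9/4\cdot 3$ product must fit under the $1000$ in $T_{\max}$. If it does not, we would sharpen the inner-product bound to $(2+o(1))\sigma_0\sigma_n\sqrt{d\log}$ and use $\sum_j$ over only $P-1$ patches. We must also check that cross-sample contributions and the $e^{-Cd}$ errors stay negligible uniformly over $T_{\max}$ steps.
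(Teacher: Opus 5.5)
Your proposal is correct and follows essentially the same route as the paper: a first-failure (equivalently, inductive) argument that inherits the adversarial-training estimates on learnable samples via \Cref{lem:good_teacher_implies_original_ih} and \Cref{lem:good_teacher_inherited_estimates}, and controls the sign-indefinite unlearnable increments through $|\tfrac12-\sigma(z)|\le |z|/4$, the cubic margin bound, and summation over $T\le T_{\max}$. One small correction: the self term $|\rho_{i,j,r}^{(\tau)}|\,\|\mathbf x_{i,j}\|_2^2$ is not small compared with the initialization term, since it can reach $\tfrac32\sigma_0\sigma_n\sqrt{d\log(16NmP/\delta)}$. The unlearnable inner-product bound should therefore be $4\sigma_0\sigma_n\sqrt{d\log(16NmP/\delta)}$, as in the paper, rather than $3$, and your fallback sharpening to $(2+o(1))$ is not available. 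The step still closes because $\tfrac34\cdot 64\cdot 16=768<1000$.
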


\begin{proof}[Proof of \Cref{lem:proof-of-ih_noise_stability_good_teacher}]
Assume for contradiction that there exists a first iteration $t\le T$ at which at least one of the two bounds fails. Then for every $\tau<t$, both the learnable-sample and unlearnable-sample bounds in \Cref{ih_noise_stability_good_teacher} hold.

Hence, by \Cref{lem:good_teacher_implies_original_ih}, the conclusion of \Cref{ih_noise_stability} holds at every iteration $\tau<t$. Therefore, by \Cref{lem:good_teacher_inherited_estimates}, all inherited AT estimates remain valid on the whole interval $\{0,1,\dots,t-1\}$.

We first rule out failure on unlearnable samples. Fix any $i\in\unlearnset$, any $j \neq s(\mathbf X_i)$, and any $r\in[m]$. For every $\tau<t$, the decomposition in \Cref{lem:noise_update_rule} gives
\begin{equation}
\begin{aligned}
\left|\langle \mathbf w_r^{(\tau)},\mathbf x_{i,j}\rangle\right|
&\le
\left|\langle \mathbf w_r^{(0)},\mathbf x_{i,j}\rangle\right|
+
\left|\rho_{i,j,r}^{(\tau)}\right|\|\mathbf x_{i,j}\|_2^2 \\
&\quad+
\sum_{\substack{k\in\learnset,\ q \neq s(\mathbf X_k)}}
\left|\rho_{k,q,r}^{(\tau)}\right|
\left|\langle \mathbf x_{k,q},\mathbf x_{i,j}\rangle\right|
+
\sum_{\substack{k\in\unlearnset,\ q \neq s(\mathbf X_k)\\(k,q)\neq(i,j)}}
\left|\rho_{k,q,r}^{(\tau)}\right|
\left|\langle \mathbf x_{k,q},\mathbf x_{i,j}\rangle\right|.
\end{aligned}
\end{equation}
Applying the two bounds from \Cref{ih_noise_stability_good_teacher} together with \eqref{eq:eventE-init-noise-ip}, \eqref{eq:eventE-noise-norm}, and \eqref{eq:eventE-noise-inner}, we obtain
\begin{equation}
\begin{aligned}
\left|\langle \mathbf w_r^{(\tau)},\mathbf x_{i,j}\rangle\right|
&\le
2\sigma_0\sigma_n\sqrt{d\log\left(\frac{16NmP}{\delta}\right)}
+
\frac32\sigma_n^2 d\cdot
\frac{\sigma_0\sqrt{\log\left(\frac{16NmP}{\delta}\right)}}{\sigma_n\sqrt d} \\
&\quad+
(1-p_{\mathrm{un}})NP\cdot
200\exp(2C_0^3)\log\left(\frac{16NmP}{\delta}\right)
\frac{\sigma_0\sigma_n^2 d}{N(\alpha-\epsilon)^3}
\cdot
2\sigma_n^2\sqrt{d\log\left(\frac{16N^2P^2}{\delta}\right)} \\
&\quad+
p_{\mathrm{un}}NP\cdot
\frac{\sigma_0\sqrt{\log\left(\frac{16NmP}{\delta}\right)}}{\sigma_n\sqrt d}
\cdot
2\sigma_n^2\sqrt{d\log\left(\frac{16N^2P^2}{\delta}\right)}.
\end{aligned}
\end{equation}
By \cref{cond:alpha,cond:epsilon,cond:sigma_0}, the learnable cross term satisfies
\begin{equation}
\begin{aligned}
&(1-p_{\mathrm{un}})NP\cdot
200\exp(2C_0^3)\log\left(\frac{16NmP}{\delta}\right)
\frac{\sigma_0\sigma_n^2 d}{N(\alpha-\epsilon)^3}
\cdot
2\sigma_n^2\sqrt{d\log\left(\frac{16N^2P^2}{\delta}\right)} \le
\frac14
\sigma_0\sigma_n\sqrt{d\log\left(\frac{16NmP}{\delta}\right)}.
\end{aligned}
\end{equation}
Similarly, by \cref{cond:dimension,cond:pun,cond:m_and_P}, the unlearnable cross term satisfies
\begin{equation}
\begin{aligned}
&p_{\mathrm{un}}NP\cdot
\frac{\sigma_0\sqrt{\log\left(\frac{16NmP}{\delta}\right)}}{\sigma_n\sqrt d}
\cdot
2\sigma_n^2\sqrt{d\log\left(\frac{16N^2P^2}{\delta}\right)} \le
\frac14
\sigma_0\sigma_n\sqrt{d\log\left(\frac{16NmP}{\delta}\right)}.
\end{aligned}
\end{equation}
Combining this with the initialization term and the self term gives, for every
$\tau<t$,
\begin{equation}
\left|\langle \mathbf w_r^{(\tau)},\mathbf x_{i,j}\rangle\right|
\le
4\sigma_0\sigma_n\sqrt{d\log\left(\frac{16NmP}{\delta}\right)}.
\label{eq:good_teacher_unlearnable_ip_bound}
\end{equation}

Since $i\in\unlearnset$ and the teacher is good, we have, for every $\tau<t$,
\begin{equation}
\rho_{i,j,r}^{(\tau+1)}-\rho_{i,j,r}^{(\tau)}
=
\frac{3\eta}{N}
\left(
\frac12-\sigma\!\left(y_i f_{\mathbf W^{(\tau)}}(\tilde{\mathbf X}_i^{(\tau)})\right)
\right)
\langle \mathbf w_r^{(\tau)},\mathbf x_{i,j}\rangle^2.
\end{equation}
Moreover, since the student remains orthogonal to $\mathbf v$,
\begin{equation}
y_i f_{\mathbf W^{(\tau)}}(\tilde{\mathbf X}_i^{(\tau)})
=
\sum_{r'=1}^m\sum_{j' \neq s(\mathbf X_i)}
y_i\langle \mathbf w_{r'}^{(\tau)},\mathbf x_{i,j'}\rangle^3.
\end{equation}
Using \eqref{eq:good_teacher_unlearnable_ip_bound}, we obtain, for every $\tau<t$,
\begin{equation}
\begin{aligned}
\left|y_i f_{\mathbf W^{(\tau)}}(\tilde{\mathbf X}_i^{(\tau)})\right|
\le
mP
\left(
4\sigma_0\sigma_n\sqrt{d\log\left(\frac{16NmP}{\delta}\right)}
\right)^3 
=
64\,mP\,\sigma_0^3\sigma_n^3 d^{3/2}
\log^{3/2}\left(\frac{16NmP}{\delta}\right).
\end{aligned}
\label{eq:good_teacher_unlearnable_margin_bound}
\end{equation}
Hence, using $\left|\frac12-\sigma(z)\right|\le \frac14 |z|$ together with \eqref{eq:good_teacher_unlearnable_ip_bound} and \eqref{eq:good_teacher_unlearnable_margin_bound}, we obtain, for every $\tau<t$,
\begin{equation}
\begin{aligned}
\left|\rho_{i,j,r}^{(\tau+1)}-\rho_{i,j,r}^{(\tau)}\right|
&\le
\frac{3\eta}{4N}
\left|y_i f_{\mathbf W^{(\tau)}}(\tilde{\mathbf X}_i^{(\tau)})\right|
\langle \mathbf w_r^{(\tau)},\mathbf x_{i,j}\rangle^2 \\
&\le
\frac{3\eta}{4N}
\cdot
64\,mP\,\sigma_0^3\sigma_n^3 d^{3/2}
\log^{3/2}\left(\frac{16NmP}{\delta}\right)
\cdot
16\,\sigma_0^2\sigma_n^2 d
\log\left(\frac{16NmP}{\delta}\right) \\
&\le
\frac{
1000\,\eta\,mP\,\sigma_0^5\sigma_n^5 d^{5/2}
\log^{5/2}\left(\frac{16NmP}{\delta}\right)
}{N}.
\end{aligned}
\label{eq:good_teacher_unlearnable_delta_rho}
\end{equation}
Summing from $\tau=0$ to $\tau=t-1$ and using $\rho_{i,j,r}^{(0)}=0$, we obtain
\begin{equation}
\left|\rho_{i,j,r}^{(t)}\right|
\le
T\,
\frac{
1000\,\eta\,mP\,\sigma_0^5\sigma_n^5 d^{5/2}
\log^{5/2}\left(\frac{16NmP}{\delta}\right)
}{N}
\le
\frac{\sigma_0\sqrt{\log\left(\frac{16NmP}{\delta}\right)}}{\sigma_n\sqrt d},
\end{equation}
where the last inequality follows from the assumed upper bound on $T$. Thus the unlearnable-sample bound cannot fail at time $t$.

We next rule out failure on learnable samples. Since both parts of \Cref{ih_noise_stability_good_teacher} hold for every $\tau<t$, \Cref{lem:good_teacher_implies_original_ih} yields the conclusion of \Cref{ih_noise_stability} for all $\tau<t$. Hence \Cref{lem:good_teacher_inherited_estimates} applies on $\{0,1,\dots,t-1\}$.

If $t\le T_0$, then the third claim of \Cref{lem:good_teacher_inherited_estimates} gives
\begin{equation}
\max_{i\in\learnset,\,j \neq s(\mathbf X_i),\,r\in[m]}
\left|\rho_{i,j,r}^{(t)}\right|
\le
100 \exp(2C_0^3)\log\left(\frac{16NmP}{\delta}\right)
\cdot
\frac{\sigma_0\sigma_n^2 d}{N(\alpha-\epsilon)^3(1-p_{\mathrm{un}})}.
\end{equation}
By \cref{cond:pun}, this is bounded by the claimed learnable-sample threshold. Hence it remains to consider $t\in(T_0,T]$.

Define
\begin{equation}
\rho_{\mathrm{th}}
\coloneqq
200 \exp(2C_0^3)\log\left(\frac{16NmP}{\delta}\right)
\cdot
\frac{\sigma_0 \sigma_n^2 d}{N(\alpha-\epsilon)^3}.
\end{equation}

Fix any $\tau<t$, any $r\in[m]$, any $i\in\learnset$, and any $j \neq s(\mathbf X_i)$. By the decomposition in \Cref{lem:noise_update_rule},
\begin{equation}
\begin{aligned}
\left|\langle \mathbf w_r^{(\tau)}, \mathbf x_{i,j}\rangle\right|
&\le
\left|\langle \mathbf w_r^{(0)}, \mathbf x_{i,j}\rangle\right|
+
\left|\rho_{i,j,r}^{(\tau)}\right|\|\mathbf x_{i,j}\|_2^2 \\
&\quad+
\sum_{\substack{k\in\learnset,\ q \neq s(\mathbf X_k)\\(k,q)\neq(i,j)}}
\left|\rho_{k,q,r}^{(\tau)}\right|
\left|\langle \mathbf x_{k,q},\mathbf x_{i,j}\rangle\right|
+
\sum_{k\in\unlearnset,\ q \neq s(\mathbf X_k)}
\left|\rho_{k,q,r}^{(\tau)}\right|
\left|\langle \mathbf x_{k,q},\mathbf x_{i,j}\rangle\right|.
\end{aligned}
\end{equation}
Applying the learnable-sample bound $\left|\rho_{k,q,r}^{(\tau)}\right|\le \rho_{\mathrm{th}}$, the assumed unlearnable-sample bound in \Cref{ih_noise_stability_good_teacher}, and the event $\mathcal E$, we obtain
\begin{equation}
\begin{aligned}
\left|\langle \mathbf w_r^{(\tau)}, \mathbf x_{i,j}\rangle\right|
&\le
2\sigma_0\sigma_n\sqrt{d\log\left(\frac{16NmP}{\delta}\right)}
+
\rho_{\mathrm{th}}\cdot \frac32 \sigma_n^2 d +
(1-p_{\mathrm{un}})NP\cdot \rho_{\mathrm{th}}
\cdot 2\sigma_n^2\sqrt{d\log\left(\frac{16N^2P^2}{\delta}\right)} \\
&+
p_{\mathrm{un}}NP\cdot
\frac{\sigma_0\sqrt{\log\left(\frac{16NmP}{\delta}\right)}}{\sigma_n\sqrt d}
\cdot
2\sigma_n^2\sqrt{d\log\left(\frac{16N^2P^2}{\delta}\right)}.
\end{aligned}
\end{equation}
By \cref{cond:alpha,cond:epsilon,cond:sigma_0}, the second and third terms are together bounded by
\begin{equation}
\frac12 \sigma_0\sigma_n\sqrt{d\log\left(\frac{16NmP}{\delta}\right)},
\end{equation}
and by \cref{cond:dimension,cond:pun,cond:m_and_P}, the last term is bounded by
\begin{equation}
\frac12 \sigma_0\sigma_n\sqrt{d\log\left(\frac{16NmP}{\delta}\right)}.
\end{equation}
Hence,
\begin{equation}
\left|\langle \mathbf w_r^{(\tau)}, \mathbf x_{i,j}\rangle\right|
\le
3\sigma_0\sigma_n\sqrt{d\log\left(\frac{16NmP}{\delta}\right)}
\qquad
\text{for all } \tau<t.
\label{eq:good_teacher_learnable_ip_bound_reproved}
\end{equation}

Now consider the learnable-sample noise update. For every $\tau \in [T_0, t-1]$, the local update inequality for learnable samples gives
\begin{equation}
\left| \rho_{i,j,r}^{(\tau+1)} - \rho_{i,j,r}^{(\tau)} \right| 
\le
\frac{3\eta}{N}
\psi\!\left(y_i f_{\mathbf W^{(\tau)}}(\tilde{\mathbf X}_i^{(\tau)})\right)
\langle \mathbf w_r^{(\tau)}, \mathbf x_{i,j}\rangle^2.
\end{equation}
Summing this from $\tau=T_0$ to $t-1$ and substituting the uniform inner product bound \eqref{eq:good_teacher_learnable_ip_bound_reproved}, we obtain
\begin{equation}
\left|\rho_{i,j,r}^{(t)}\right|
\le
\max_{\substack{i\in\learnset,\,j \neq s(\mathbf X_i),\,r\in[m]}}
\left|\rho_{i,j,r}^{(T_0)}\right|
+
\frac{27\eta\sigma_0^2\sigma_n^2 d\log\left(\frac{16NmP}{\delta}\right)}{N}
\sum_{\tau=T_0}^{t-1}
\psi\!\left(y_i f_{\mathbf W^{(\tau)}}(\tilde{\mathbf X}_i^{(\tau)})\right).
\end{equation}
By the post-$T_0$ gradient decay estimate inherited from \Cref{lem:good_teacher_inherited_estimates},
\begin{equation}
\sum_{\tau=T_0}^{t-1}
\psi\!\left(y_i f_{\mathbf W^{(\tau)}}(\tilde{\mathbf X}_i^{(\tau)})\right)
\le
\frac{4m^{2/3}\log^{1/3}T}{C_0^2\left(1-\frac{\epsilon}{\alpha}\right)^3\eta\alpha^2}.
\end{equation}
Substituting this sum bound and the phase-one bound for $\left|\rho_{i,j,r}^{(T_0)}\right|$ yields
\begin{equation}
\begin{aligned}
\left|\rho_{i,j,r}^{(t)}\right|
&\le
100 \exp(2C_0^3)\log\left(\frac{16NmP}{\delta}\right)
\cdot
\frac{\sigma_0\sigma_n^2 d}{N(\alpha-\epsilon)^3(1-p_{\mathrm{un}})} \\
&\quad+
\frac{108 m^{2/3}\log^{1/3}T \cdot \sigma_0^2\sigma_n^2 d\log\left(\frac{16NmP}{\delta}\right)}{C_0^2\left(1-\frac{\epsilon}{\alpha}\right)^3 N\alpha^2}.
\end{aligned}
\end{equation}
Under \cref{cond:pun,cond:sigma_0}, the two terms on the right-hand side are
together bounded by $\rho_{\mathrm{th}}$. Therefore,
\begin{equation}
\left|\rho_{i,j,r}^{(t)}\right|
\le
\rho_{\mathrm{th}},
\end{equation}
which contradicts the assumption that $t$ is the first iteration where the learnable-sample bound fails.

Hence the learnable-sample bound in \Cref{ih_noise_stability_good_teacher} holds for all $t\le T$, completing the proof.
\end{proof}

\begin{theorem}[Adversarial Distillation with Good Teacher]
\label{thm:ad_good_formal}
On the event $\mathcal E$ defined in \Cref{lem:event_E}, assume that the requirements of \Cref{lemcond:hyper} are satisfied, with the dataset satisfying the sparse-unlearnable regime in \Cref{cond:pun}. Suppose we run AD under a Good Teacher for a training horizon \(T\) satisfying
\begin{equation}
T_0
+
\frac{
m^{2/3}
\log(T_{\max})
}{
\eta\alpha^2
}
\le
T
\le
T_{\max}.
\label{eq:good_teacher_horizon_window}
\end{equation}
Then the learned weights satisfy:
\begin{enumerate}
    \item There exists at least one filter $r \in [m]$ whose first coordinate is aligned with the learnable feature:
    \begin{equation}
    w^{(T)}_{r,1} \ge \tilde{\Omega}(\alpha^{-1}).
    \end{equation}

    \item The network barely memorizes the noise features. More precisely, for every filter \(r\in[m]\) and every noise patch \((i,j)\) with \(i\in[N]\) and \(j\neq s(\mathbf X_i)\),
    \begin{equation}\left|\langle \mathbf{w}_r^{(T)}, \mathbf{x}_{i,j} \rangle \right| \le \tilde{O}(\sigma_0\sigma_n\sqrt{d}) 
    \end{equation}

\end{enumerate}
Consequently, both the robust training error and robust test error are \(o(1)\).
\end{theorem}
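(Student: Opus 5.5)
The plan is to mirror the proof of \Cref{thm:at_learn}, replacing \Cref{lem:noise_stability_pure_learnable} with the Good Teacher stability results. First, we condition on $\mathcal E$ and invoke \Cref{lem:proof-of-ih_noise_stability_good_teacher}. Under \eqref{eq:good_teacher_horizon}, it gives both coefficient bounds of \Cref{ih_noise_stability_good_teacher} for all $t\le T$. Through \Cref{lem:good_teacher_implies_original_ih} and \Cref{lem:good_teacher_inherited_estimates}, this makes every inherited AT estimate available on $[0,T]$.

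\textbf{Claim 1 (signal).} The second claim of \Cref{lem:good_teacher_inherited_estimates} gives $T_0$ with $\max_r w_{r,1}^{(T_0)}\ge C_0/(\alpha m^{1/3})$. The lower window in \eqref{eq:good_teacher_horizon_window} ensures $T\ge T_0$. Monotonicity of $w_{r,1}$ then gives $w_{r,1}^{(T)}\ge \tilde\Omega(\alpha^{-1})$ via \cref{cond:m_and_P}. This needs a quick check that \Cref{lem:signal_update_bounds} still applies under AD. The unlearnable samples contribute nothing to the $\mathbf e_1$ coordinate, and the learnable ones change only by $e^{-Cd}$ per \Cref{lem:ad_learnable_equivalence}.

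\textbf{Claim 2 (noise).} For learnable patches, \Cref{lem:good_teacher_implies_original_ih} directly gives $3\sigma_0\sigma_n\sqrt{d\log(16NmP/\delta)}$. For unlearnable patches, we reuse \eqref{eq:good_teacher_unlearnable_ip_bound} from the verification proof, which bounds them by $4\sigma_0\sigma_n\sqrt{d\log(16NmP/\delta)}$. Both are $\tilde O(\sigma_0\sigma_n\sqrt d)$.

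\textbf{Training error.} On learnable samples, the final claim of \Cref{lem:good_teacher_inherited_estimates} gives $\psi\le 4m^{2/3}\log^{1/3}T/(C_0^2(1-\epsilon/\alpha)^3\eta\alpha^2(T-T_0))$. The lower window makes this at most $O(1/\log T_{\max})=o(1)$, so these samples are robustly correct. The unlearnable samples have margin near $0$ by \eqref{eq:good_teacher_unlearnable_margin_bound}, but they form a $p_{\mathrm{un}}=o(1)$ fraction by \cref{cond:pun}. So the robust training error is $o(1)$, which is presumably what is meant; I would state the error metric accordingly.

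\textbf{Test error.} This copies Step 4 of \Cref{thm:at_learn}, and it is the main obstacle. Write $\mathbf w_r^{(T)}=w_{r,1}^{(T)}\mathbf e_1+\mathbf z_r^{(T)}$, with $\mathbf z_r^{(T)}=\mathbf z_r^{(0)}+\sum_{i,j}\rho_{i,j,r}^{(T)}y_i\mathbf x_{i,j}$. Now the $\rho$'s may be signed, so we must use $|\rho|$. Learnable coefficients contribute $NP\rho_{\mathrm{th}}\sqrt{3/2}\,\sigma_n\sqrt d\le O(NP\sigma_0)$, exactly as before. Unlearnable coefficients contribute at most
\[
p_{\mathrm{un}}NP\cdot\frac{\sigma_0\sqrt{\log}}{\sigma_n\sqrt d}\cdot\sigma_n\sqrt d=\tilde O(\sigma_0\log d),
\]
and both terms are $\ll\sigma_0\sqrt d$ by \cref{cond:dimension}. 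So $\|\mathbf z_r^{(T)}\|_2\le 3\sigma_0\sqrt d$ still holds, and $\|\mathbf z_r^{(T)}\|_1\le 3\sigma_0 d$. The signal margin is $\ge C_0^3(1-\epsilon/\alpha)^3/(2m)$, and the deterministic term $mP(3\epsilon\sigma_0 d)^3$ is absorbed by \cref{cond:epsilon}. The three Gaussian-polynomial sums $S_3,S_2,S_1$ are controlled by \Cref{lem:poly_gaussian_gbo}, \Cref{lem:gbo_quasi_triangle} and \Cref{lem:gbo_tail} with the same $\kappa_1,\kappa_2,\kappa_3$. This yields $\mathcal L^{\mathrm{rob}}_{\mathcal D_L}\le o(1)$.

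The only genuinely new checks are two. First, the signed-$\rho$ norm bound on $\mathbf z_r^{(T)}$. Second, that $T\le T_{\max}$ is compatible with the lower window. I expect both to follow from \cref{cond:sigma_0}.
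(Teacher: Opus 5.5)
Your proposal is correct and follows the paper's proof essentially step for step. The shared steps are: signal alignment from the inherited estimates; the learnable and unlearnable inner-product bounds from \Cref{lem:good_teacher_implies_original_ih} and \eqref{eq:good_teacher_unlearnable_ip_bound}; the learnable/unlearnable split of the training error; and the signed-coefficient bound $\|\mathbf z_r^{(T)}\|_2\le 3\sigma_0\sqrt d$, which lets Step~4 of \Cref{thm:at_learn} be reused verbatim. The differences are cosmetic. The paper bounds the logistic training loss via \Cref{lem:log_derivative_connection} rather than the $0$--$1$ error. Also, the window actually gives $\psi\lesssim \log^{1/3}T/\log T_{\max}\le \log^{-2/3}T_{\max}$ rather than $O(1/\log T_{\max})$, which is still $o(1)$.
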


\begin{proof}[Proof of \Cref{thm:ad_good_formal}]
We compare the present argument with the four-step proof of \Cref{thm:at_learn}.  
Under a Good Teacher, the dynamics on unlearnable samples are no longer identical to those of adversarial training without unlearnable samples. Nevertheless, the previous lemmas show that, within the finite horizon $t\le T$, these differences are absorbed before the final structural estimates used in \Cref{thm:at_learn}.

\noindent\textbf{Step 1: Signal alignment.}
By \Cref{lem:good_teacher_inherited_estimates}, the learnable-sample signal dynamics satisfy the same lower-bound and monotonicity estimates as in \Cref{thm:at_learn}. Hence there exists a filter $r\in[m]$ such that
\begin{equation}
w_{r,1}^{(T)} \ge \frac{C_0}{\alpha m^{1/3}} = \tilde{\Omega}(\alpha^{-1}),
\end{equation}
which proves the first claim of the theorem.

\noindent\textbf{Step 2: Noise inner-product bound.}
Here the argument differs from \Cref{thm:at_learn}, because unlearnable samples are present.  
However, \Cref{lem:proof-of-ih_noise_stability_good_teacher} shows that, throughout the finite horizon, the noise coefficients remain uniformly bounded on both learnable and unlearnable samples. Using these coefficient bounds, the proof of \Cref{lem:good_teacher_implies_original_ih} yields
\begin{equation}
\left|\langle \mathbf w_r^{(T)}, \mathbf x_{i,j}\rangle\right|
\le
3\sigma_0\sigma_n\sqrt{d\log\left(\frac{16NmP}{\delta}\right)}
\qquad
\text{for all } i\in\learnset,\ j \neq s(\mathbf X_i),\ r\in[m].
\end{equation}
On the other hand, applying the decomposition estimate leading to
\eqref{eq:good_teacher_unlearnable_ip_bound} in the proof of
\Cref{lem:proof-of-ih_noise_stability_good_teacher}, we obtain, for every
unlearnable sample $i\in\unlearnset$,
\begin{equation}
\left|\langle \mathbf w_r^{(T)}, \mathbf x_{i,j}\rangle\right|
\le
4\sigma_0\sigma_n\sqrt{d\log\left(\frac{16NmP}{\delta}\right)}
\qquad
\text{for all } j \neq s(\mathbf X_i),\ r\in[m].
\end{equation}
Therefore,
\begin{equation}
\left|\langle \mathbf w_r^{(T)}, \mathbf x_{i,j}\rangle\right|
\le
\tilde{O}(\sigma_0\sigma_n\sqrt d)
\qquad
\text{for all } i\in[N],\ j \neq s(\mathbf X_i),\ r\in[m],
\end{equation}
which proves the second claim of the theorem.

\noindent\textbf{Step 3: Training Error.}
We decompose the empirical robust logistic loss into the learnable and unlearnable parts:
\begin{equation}
\begin{aligned}
\frac{1}{N}\sum_{i\in[N]}
\ell\left(y_i f_{\mathbf W^{(T)}}(\tilde{\mathbf X}_i^{(T)})\right)
&=
\frac{1}{N}\sum_{i\in\learnset}
\ell\left(y_i f_{\mathbf W^{(T)}}(\tilde{\mathbf X}_i^{(T)})\right)
+
\frac{1}{N}\sum_{i\in\unlearnset}
\ell\left(y_i f_{\mathbf W^{(T)}}(\tilde{\mathbf X}_i^{(T)})\right).
\end{aligned}
\end{equation}

For the learnable part, \Cref{lem:good_teacher_inherited_estimates} gives the same post-$T_0$ decay estimate as in the pure learnable AT regime. Hence, by \Cref{lem:log_derivative_connection},
\begin{equation}
\begin{aligned}
\frac{1}{N}\sum_{i\in\learnset}
\ell\left(y_i f_{\mathbf W^{(T)}}(\tilde{\mathbf X}_i^{(T)})\right)
&\le
\frac{10}{N}\sum_{i\in\learnset}
\psi\left(y_i f_{\mathbf W^{(T)}}(\tilde{\mathbf X}_i^{(T)})\right) \\
&\le
\frac{40m^{2/3}\log^{1/3}T}
{C_0^2\left(1-\frac{\epsilon}{\alpha}\right)^3\eta\alpha^2(T-T_0)}.
\end{aligned}
\end{equation}
By \eqref{eq:good_teacher_horizon_window}, we have
\begin{equation}
\frac{
m^{2/3}\log^{1/3}T
}{
\eta\alpha^2(T-T_0)
}
\le
\frac{\log^{1/3}T}{\log(T_{\max})}
\le
\frac{1}{\log^{2/3}(T_{\max})}
=
o(1).
\end{equation}

For the unlearnable part, \eqref{eq:good_teacher_unlearnable_margin_bound} from the proof of \Cref{lem:proof-of-ih_noise_stability_good_teacher} yields the uniform margin bound
\begin{equation}
\left|
y_i f_{\mathbf W^{(T)}}(\tilde{\mathbf X}_i^{(T)})
\right|
\le
64\,mP\,\sigma_0^3\sigma_n^3 d^{3/2}
\log^{3/2}\left(\frac{16NmP}{\delta}\right)
\qquad
\text{for all } i\in\unlearnset.
\end{equation}
Therefore, under \cref{cond:sigma_0,cond:m_and_P}, the logistic loss on each unlearnable sample is bounded by an absolute constant, so
\begin{equation}
\frac{1}{N}\sum_{i\in\unlearnset}
\ell\left(y_i f_{\mathbf W^{(T)}}(\tilde{\mathbf X}_i^{(T)})\right)
\le
C\,p_{\mathrm{un}}
\end{equation}
for some absolute constant \(C>0\).

Combining the two bounds, we obtain
\begin{equation}
\frac{1}{N}\sum_{i\in[N]}
\ell\left(y_i f_{\mathbf W^{(T)}}(\tilde{\mathbf X}_i^{(T)})\right)
\le
\frac{40m^{2/3}\log^{1/3}T}
{C_0^2\left(1-\frac{\epsilon}{\alpha}\right)^3\eta\alpha^2(T-T_0)}
+
C\,p_{\mathrm{un}}.
\end{equation}
By \cref{cond:pun}, the robust training error converges to $o(1)$.

\noindent\textbf{Step 4: Robust test error.}
Write
\begin{equation}
\mathbf w_r^{(T)} = w_{r,1}^{(T)}\mathbf e_1 + \mathbf z_r^{(T)},
\qquad
\mathbf z_r^{(T)} \in \mathrm{span}(\mathbf e_1)^\perp.
\end{equation}
Using the decomposition of the noise component,
\begin{equation}
\mathbf z_r^{(T)}
=
\mathbf z_r^{(0)}
+
\sum_{i\in\learnset}\sum_{j \neq s(\mathbf X_i)}
\rho_{i,j,r}^{(T)} y_i \mathbf x_{i,j}
+
\sum_{i\in\unlearnset}\sum_{j \neq s(\mathbf X_i)}
\rho_{i,j,r}^{(T)} y_i \mathbf x_{i,j},
\end{equation}
we obtain
\begin{equation}
\begin{aligned}
\|\mathbf z_r^{(T)}\|_2
&\le
\|\mathbf z_r^{(0)}\|_2
+
\sum_{i\in\learnset}\sum_{j \neq s(\mathbf X_i)}
\left|\rho_{i,j,r}^{(T)}\right| \|\mathbf x_{i,j}\|_2 +
\sum_{i\in\unlearnset}\sum_{j \neq s(\mathbf X_i)}
\left|\rho_{i,j,r}^{(T)}\right| \|\mathbf x_{i,j}\|_2.
\end{aligned}
\end{equation}
Applying \eqref{eq:eventE-init-l2}, \eqref{eq:eventE-noise-norm}, and the two coefficient bounds from \Cref{lem:proof-of-ih_noise_stability_good_teacher}, we get
\begin{equation}
\begin{aligned}
\|\mathbf z_r^{(T)}\|_2
&\le
2\sigma_0\sqrt d +
(1-p_{\mathrm{un}})NP \cdot
200 \exp(2C_0^3)\log\left(\frac{16NmP}{\delta}\right)
\cdot
\frac{\sigma_0\sigma_n^2 d}{N(\alpha-\epsilon)^3}
\cdot
\sqrt{\frac32}\sigma_n\sqrt d \\
&\quad+
p_{\mathrm{un}}NP \cdot
\frac{\sigma_0\sqrt{\log\left(\frac{16NmP}{\delta}\right)}}{\sigma_n\sqrt d}
\cdot
\sqrt{\frac32}\sigma_n\sqrt d.
\end{aligned}
\end{equation}
The learnable contribution is bounded as in \Cref{thm:at_learn}, and the
unlearnable contribution is bounded by \cref{cond:dimension,cond:pun,cond:m_and_P}.
Therefore,
\begin{equation}
\|\mathbf z_r^{(T)}\|_2 \le 3\sigma_0\sqrt d
\qquad
\text{for all } r\in[m].
\end{equation}

At this point, the two structural conditions required in Steps 4.2 and 4.3 of \Cref{thm:at_learn} are identical: the signal margin is positive, and the noise component satisfies the same $L_2$ bound. Therefore, the tail-reduction and sub-Weibull concentration arguments from \Cref{thm:at_learn} apply without modification, and yield robust test error $o(1)$.
\end{proof}

\subsection{Dichotomy of AD}
\label{subsec:ad_dichotomy_proof}

\Cref{thm:ad_dichotomy} follows directly by combining \Cref{thm:ad_bad_formal} and \Cref{thm:ad_good_formal}. Adversarial Distillation succeeds when the teacher suppresses noise gradients (Good Teacher), whereas it fails when the student mimics spurious noise confidence (Bad Teacher).

\section{Auxiliary Lemmas}

\subsection{Tensor Power Method}
\begin{lemma}[Tensor power growth - Lemma K.15 from~\citealt{jelassi2022towards}]\label{lem:tpm-1}
Let $\{z^{(t)}\}_{t=0}^T$ be a positive sequence defined by
\begin{equation}
\begin{cases}
z^{(t+1)} \le z^{(t)} + A\big(z^{(t)}\big)^2,\\
z^{(t+1)} \ge z^{(t)} + B\big(z^{(t)}\big)^2,
\end{cases}
\end{equation}
where $z^{(0)}>0$ is the initialization and $m,M>0$.
Let $v>0$ be such that $z^{(0)}\le v$.
Then the time $t_0$ such that $z^{(t)}\ge v$ for all $t\ge t_0$ is
\begin{equation}
t_0
=
\frac{3}{B z^{(0)}}
+
\frac{8A}{B}\left\lceil\frac{\log(v/z^{(0)})}{\log 2}\right\rceil.
\end{equation}
\end{lemma}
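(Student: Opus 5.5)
The tensor-power growth bound will come from a dyadic decomposition: I split the trajectory into phases in which $z^{(t)}$ doubles, and I count the iterations spent in each phase using the lower recursion. Let $v_k \coloneqq 2^k z^{(0)}$ for $k=0,1,\dots,K$, where $K\coloneqq\lceil \log(v/z^{(0)})/\log 2\rceil$, so that $v_K\ge v$. Let $\tau_k$ be the first time with $z^{(t)}\ge v_k$. Since $B(z^{(t)})^2\ge 0$, the sequence is nondecreasing. Hence once $z^{(t)}\ge v$ it stays there, and it suffices to bound $\tau_K$. That monotonicity is also what turns ``first hitting time'' into ``for all $t\ge t_0$''.

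To bound the duration of phase $k$ (from $\tau_{k-1}$ to $\tau_k$), I use the lower recursion. While $z^{(t)}\ge v_{k-1}$, each step increases $z$ by at least $Bv_{k-1}^2$. The distance to cover is at most $v_k - z^{(\tau_{k-1})}\le v_k-v_{k-1}=v_{k-1}$, except that $z^{(\tau_{k-1})}$ may have overshot. I therefore bound the steps in phase $k$ by $\lceil v_{k-1}/(Bv_{k-1}^2)\rceil \le 1/(Bv_{k-1})+1$. Summing the geometric series gives
\[
\sum_{k=1}^K \frac{1}{B\,2^{k-1}z^{(0)}} \le \frac{2}{Bz^{(0)}},
\]
plus $K$ for the ceilings.

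The refinement that produces the $A/B$ term, and which I expect to be the main delicate step, is handling a single step that overshoots several dyadic levels, and bounding the ceiling rounding against $A$. The upper recursion gives $z^{(t+1)}\le z^{(t)}(1+Az^{(t)})$, so each step multiplies $z$ by at most $1+Az^{(t)}$. I will follow Jelassi--Li's argument here. I bound the per-phase rounding error by the number of steps needed when $z$ is only marginally above $v_{k-1}$. In that situation the step size lies between $Bz^2$ and $Az^2$, so the ratio of the maximum to the minimum progress across the phase is at most $4A/B$. This costs at most $8A/B$ extra steps per phase.

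Combining, I obtain
\[
\tau_K \le \frac{3}{Bz^{(0)}} + \frac{8A}{B}K .
\]
The extra $1/(Bz^{(0)})$ over the geometric sum absorbs the first-phase slack. Substituting $K=\lceil\log(v/z^{(0)})/\log 2\rceil$ gives the claim. As remarked, the statement is quoted from Lemma K.15 of \citet{jelassi2022towards}, so I would simply cite it there rather than reprove the constants.
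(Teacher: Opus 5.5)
The paper gives no proof of this lemma; it imports it from \citet{jelassi2022towards}, so your decision to cite it agrees with the paper. Your first two paragraphs already prove the lemma on their own, by a slightly different and sharper route than the cited one. Counting forward from $\tau_{k-1}$ and stopping before the crossing step gives $\tau_k-\tau_{k-1}\le 1+1/(Bv_{k-1})$, hence $\tau_K\le \frac{2}{Bz^{(0)}}+K$, using only the lower recursion. To reach the stated form you need one more line: $K\le \frac{8A}{B}K$, i.e.\ $B\le A$. This is forced by the hypotheses, since $z^{(0)}+B(z^{(0)})^2\le z^{(1)}\le z^{(0)}+A(z^{(0)})^2$ with $z^{(0)}>0$. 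Add that remark explicitly.

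Your third paragraph should be dropped, because it is both unnecessary and not a valid argument as written. A step that jumps several dyadic levels merely makes the skipped phases empty, at cost zero. There is also no rounding error left to charge to $A$, since your ceiling already accounts for it. Finally, the fact that step sizes in phase $k$ lie in $[Bv_{k-1}^2,\,4Av_{k-1}^2]$ does not by itself yield ``$8A/B$ extra steps per phase.''

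In the cited argument, the $A/B$ term arises differently. Each phase length is bounded by the total increase over the phase divided by the minimal increment $Bv_{k-1}^2$. That total includes the final overshooting step, whose size is controlled only through the upper recursion, by $Av_k^2=4Av_{k-1}^2$. Your forward count never looks at that last step, which is why it needs no upper recursion and gives the tighter bound $\frac{2}{Bz^{(0)}}+K$.
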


\begin{lemma}[Basic connection between derivative and loss - Lemma K.22 from~\citealt{jelassi2022towards}]
\label{lem:log_derivative_connection}
Let $x > 0$. Then, the negative sigmoid function and the logistic loss satisfy:
\begin{equation}
0.1\log(1+\exp(-x)) \le \frac{1}{1+\exp(x)} \le 10\log(1+\exp(-x)).
\end{equation}
\end{lemma}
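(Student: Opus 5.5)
Since $\log(1+e^{-x})$ and $\frac{1}{1+e^{x}}$ are both positive, decreasing functions of $x>0$ that decay like $e^{-x}$, I will compare them through the elementary bounds $\frac{u}{1+u}\le\log(1+u)\le u$ with $u\coloneqq e^{-x}\in(0,1)$. Note first that
\begin{equation}
\frac{1}{1+e^{x}}=\frac{e^{-x}}{1+e^{-x}}=\frac{u}{1+u}.
\end{equation}
Both claimed inequalities then reduce to statements about the single variable $u\in(0,1)$.

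For the upper bound, the inequality $\log(1+u)\ge \frac{u}{1+u}$ gives directly
\begin{equation}
\frac{1}{1+e^{x}}\le \log(1+e^{-x})\le 10\log(1+e^{-x}).
\end{equation}
This already holds with constant $1$ instead of $10$. The inequality $\log(1+u)\ge u/(1+u)$ follows, for instance, from $\log(1+u)=\int_0^u\frac{ds}{1+s}\ge \frac{u}{1+u}$.

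For the lower bound, I use $\log(1+u)\le u$ together with $1+u\le 2$ for $u\in(0,1)$:
\begin{equation}
0.1\log(1+e^{-x})\le 0.1\,u\le 0.2\cdot\frac{u}{1+u}\cdot\frac{1+u}{2}\le \frac{u}{1+u}=\frac{1}{1+e^{x}}.
\end{equation}
More simply, $0.1u\le \frac{u}{2}\le\frac{u}{1+u}$.

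Neither step is a real obstacle. The only care needed is to use $x>0$, which ensures $u<1$ and hence $1+u<2$ in the lower bound. Even that is inessential, since $0.1u\le \frac{u}{1+u}$ holds whenever $u\le 9$.
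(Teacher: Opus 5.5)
Your proof is correct. With $u=e^{-x}\in(0,1)$ you have $\frac{1}{1+e^{x}}=\frac{u}{1+u}$, and the elementary bounds $\frac{u}{1+u}\le\log(1+u)\le u$, together with $1+u<2$, give both inequalities.

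The paper does not prove this lemma. It imports the result as Lemma K.22 of the cited work, so there is no in-text argument to compare against. Your proof serves as a complete, self-contained substitute for that citation.

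Your route also yields a sharper statement. The same two bounds give $\tfrac12\log(1+e^{-x})\le\frac{1}{1+e^{x}}\le\log(1+e^{-x})$ for $x>0$. The right-hand inequality holds for every real $x$, and the constant-$0.1$ left-hand inequality holds at least whenever $e^{-x}\le 9$.

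In the paper, the lemma is only invoked as $\ell(z)\le 10\,\psi(z)$ on positive margins, in the training-error steps of the AT and AD theorems. Your factor $2$ would serve equally well there.
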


\subsection{Gaussian and Bernstein Bounds}

The following bounds will be repeatedly used in the proof of
\Cref{lem:event_E}.

\begin{lemma}[Gaussian tail bound -- Proposition 2.1.2 in~\citealt{vershynin2018high}]
\label{lem:gaussian_tail_basic}
For $z \sim \mathcal{N}(0,\sigma^2)$ and every $\gamma>0$,
\begin{equation}
\Pr\left( |z| \ge \gamma \right)
\le
2\exp\left(-\frac{\gamma^2}{2\sigma^2}\right).
\end{equation}
\end{lemma}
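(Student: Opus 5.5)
This is the standard Gaussian tail bound. I would prove it by the Chernoff (exponential moment) method, followed by symmetry.

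First reduce to the one-sided case. Since $z\sim\mathcal N(0,\sigma^2)$ is symmetric, $\Pr(|z|\ge\gamma)=2\Pr(z\ge\gamma)$. So it suffices to show $\Pr(z\ge\gamma)\le\exp(-\gamma^2/(2\sigma^2))$.

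Next compute the moment generating function. Completing the square in the Gaussian integral gives
\[
\mathbb E[e^{\lambda z}]=\int_{\mathbb R}\frac{1}{\sqrt{2\pi}\sigma}e^{\lambda x-x^2/(2\sigma^2)}\,dx=e^{\lambda^2\sigma^2/2},
\]
valid for every $\lambda\in\mathbb R$.

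Then apply Markov's inequality to $e^{\lambda z}$ with $\lambda>0$:
\[
\Pr(z\ge\gamma)\le e^{-\lambda\gamma}\,\mathbb E[e^{\lambda z}]=\exp\!\left(\frac{\lambda^2\sigma^2}{2}-\lambda\gamma\right).
\]
Minimizing the exponent over $\lambda$ gives the optimal choice $\lambda=\gamma/\sigma^2$. Substituting it yields $\Pr(z\ge\gamma)\le\exp(-\gamma^2/(2\sigma^2))$. Multiplying by $2$ then gives the stated two-sided bound.

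No step is a genuine obstacle. The only computation is the MGF integral, which is routine completion of the square. An alternative route is the Mills-ratio estimate $\int_\gamma^\infty e^{-x^2/2}dx\le e^{-\gamma^2/2}\int_0^\infty e^{-\gamma u-u^2/2}du$, but it has the inconvenient prefactor $\sqrt{\pi/2}$ when $\gamma$ is small. I would therefore use the Chernoff route.
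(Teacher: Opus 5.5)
Your Chernoff argument is correct and complete, and it takes a different route from the paper's. The paper's route is as follows:
\begin{itemize}
\item It rescales to $\sigma=1$ and quotes the Mills-ratio bound $\Pr(z\ge\gamma)\le \frac{1}{\gamma\sqrt{2\pi}}e^{-\gamma^2/2}$ (Vershynin's Proposition 2.1.2), doubling it by symmetry.
\item It then splits into two cases. For $\gamma\ge(2\pi)^{-1/2}$ the prefactor $(\gamma\sqrt{2\pi})^{-1}$ is at most $1$. For $\gamma<(2\pi)^{-1/2}$ the claim is trivial because $2e^{-\gamma^2/2}\ge 1$.
\end{itemize}

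Your MGF computation needs neither the citation nor the case split. It handles general $\sigma$ directly and gives the one-sided bound $e^{-\gamma^2/(2\sigma^2)}$ uniformly in $\gamma$. The paper's route keeps an extra $1/\gamma$ decay for large $\gamma$, which is sharper, but that gain is discarded anyway.

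One correction to your closing aside: the estimate you dismissed is not inconvenient. Once you include the density normalization $1/\sqrt{2\pi}$, the constant $\sqrt{\pi/2}$ becomes $\tfrac12$. That route therefore gives $\Pr(z\ge\gamma)\le\tfrac12 e^{-\gamma^2/2}$ for every $\gamma>0$, which is better than needed. The form that degenerates for small $\gamma$ is the $1/\gamma$ Mills bound, which is the one the paper uses.
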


\begin{proof}
By scaling, it is enough to consider the case $\sigma=1$. Then
$z \sim \mathcal N(0,1)$, and Proposition 2.1.2 in~\citet{vershynin2018high}
gives
\begin{equation}
\Pr(z\ge \gamma)\le \frac{1}{\gamma\sqrt{2\pi}}e^{-\gamma^2/2},
\qquad \gamma>0.
\end{equation}
By symmetry,
\begin{equation}
\Pr(|z|\ge \gamma)=2\Pr(z\ge \gamma)\le \frac{2}{\gamma\sqrt{2\pi}}e^{-\gamma^2/2}.
\end{equation}
If $\gamma\ge (2\pi)^{-1/2}$, then $(\gamma\sqrt{2\pi})^{-1}\le 1$, so
\begin{equation}
\Pr(|z|\ge \gamma)\le 2e^{-\gamma^2/2}.
\end{equation}
If $0<\gamma<(2\pi)^{-1/2}$, then trivially
\begin{equation}
\Pr(|z|\ge \gamma)\le 1 \le 2e^{-\gamma^2/2}.
\end{equation}
\end{proof}

\begin{lemma}[Bernstein bound for Gaussian squares -- Lemma 2.8.5 and Corollary 2.9.2 in~\citealt{vershynin2018high}]
\label{lem:bernstein_gaussian_square}
For i.i.d. standard Gaussian random variables $g_1,\dots,g_n$ and every $\gamma>0$,
\begin{equation}
\Pr\left(
\left|
\sum_{\ell=1}^n g_\ell^2 - n
\right|
\ge \gamma
\right)
\le
2\exp\left(
-c\min\left\{
\frac{\gamma^2}{n},\;
\gamma
\right\}
\right)
\end{equation}
for an absolute constant $c>0$.
\end{lemma}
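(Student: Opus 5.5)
The plan is to reduce the claim to a concentration statement for a sum of independent sub-exponential variables and then invoke a standard Bernstein inequality. Write $\sum_{\ell=1}^n g_\ell^2 - n = \sum_{\ell=1}^n X_\ell$ with $X_\ell \coloneqq g_\ell^2 - 1$. The $X_\ell$ are i.i.d. and mean zero, since $\mathbb E[g_\ell^2]=1$. The proof then has three steps: show each $X_\ell$ is sub-exponential with an absolute constant parameter, apply Bernstein for sums of independent sub-exponential variables, and absorb constants into $c$.

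For the first step, $g_\ell$ is standard Gaussian, so it is sub-Gaussian with $\|g_\ell\|_{\psi_2}\le K_0$ for an absolute constant $K_0$. By Lemma 2.8.5 in \citet{vershynin2018high} (the square of a sub-Gaussian variable is sub-exponential), $\|g_\ell^2\|_{\psi_1} = \|g_\ell\|_{\psi_2}^2 \le K_0^2$. Centering only changes the $\psi_1$ norm by an absolute factor, so $\|X_\ell\|_{\psi_1}\le C_0 K_0^2 \eqqcolon K$, which is again an absolute constant.

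For the second step, apply the sub-exponential Bernstein inequality (Corollary 2.9.2 in \citet{vershynin2018high}, i.e. Theorem 2.8.1 with equal weights). It gives
\[
\Pr\left(\left|\sum_{\ell=1}^n X_\ell\right|\ge \gamma\right)\le 2\exp\left(-c'\min\left\{\frac{\gamma^2}{nK^2},\frac{\gamma}{K}\right\}\right)
\]
for an absolute constant $c'>0$.

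For the last step, $K$ is absolute, so $\min\{\gamma^2/(nK^2),\gamma/K\}\ge \min\{1/K^2,1/K\}\cdot\min\{\gamma^2/n,\gamma\}$. Setting $c\coloneqq c'\min\{K^{-2},K^{-1}\}$ yields exactly the stated inequality.

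There is no real obstacle here. The only point needing care is that $K$ does not depend on $n$ or $\gamma$, which holds because every $g_\ell$ is exactly standard normal.
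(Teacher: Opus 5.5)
Your proposal is correct and follows the same route as the paper: center $g_\ell^2$, use that the square of a sub-Gaussian variable is sub-exponential with an absolute-constant $\psi_1$ norm, apply Bernstein's inequality for sums of independent sub-exponential variables, and absorb the constants into $c$. Your explicit choice $c = c'\min\{K^{-2},K^{-1}\}$ simply spells out the paper's ``after adjusting the absolute constant'' step.
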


\begin{proof}
Since $\mathbb E[g_\ell^2]=1$, the random variables $g_\ell^2-1$ are independent
and centered. By Lemma 2.8.5 in~\citet{vershynin2018high}, each $g_\ell^2-1$ is
sub-exponential with sub-exponential norm bounded by an absolute constant.
Applying Corollary 2.9.2 in~\citet{vershynin2018high} to the sum
$\sum_{\ell=1}^n (g_\ell^2-1)$ yields
\begin{equation}
\Pr\left(
\left|
\sum_{\ell=1}^n g_\ell^2 - n
\right|
\ge \gamma
\right)
\le
2\exp\left(
-c\min\left\{
\frac{\gamma^2}{n},\;
\gamma
\right\}
\right)
\end{equation}
after adjusting the absolute constant $c$.
\end{proof}

\begin{lemma}[Bernstein bound for products of independent Gaussians]
\label{lem:bernstein_gaussian_product}
For two independent collections of i.i.d. standard Gaussian random variables
$g_1,\dots,g_n$ and $z_1,\dots,z_n$, and every $\gamma>0$,
\begin{equation}
\Pr\left(
\left|
\sum_{\ell=1}^n g_\ell z_\ell
\right|
\ge \gamma
\right)
\le
2\exp\left(
-\frac{1}{4}\min\left\{
\frac{\gamma^2}{n},\;
\gamma
\right\}
\right).
\end{equation}
\end{lemma}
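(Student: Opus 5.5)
The plan is to reduce the statement to a sub-exponential tail bound for a sum of independent centered variables, following the same template as \Cref{lem:bernstein_gaussian_square}. Each product $g_\ell z_\ell$ is centered, since $g_\ell$ and $z_\ell$ are independent and mean zero. The pairs $(g_\ell,z_\ell)$ are independent across $\ell$, so the summands $X_\ell \coloneqq g_\ell z_\ell$ are i.i.d. centered random variables. The key structural fact is that a product of two independent sub-Gaussians is sub-exponential: by Lemma 2.7.7 in~\citet{vershynin2018high}, $\|g_\ell z_\ell\|_{\psi_1}\le \|g_\ell\|_{\psi_2}\|z_\ell\|_{\psi_2}$, which is an absolute constant.

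Since the statement has the explicit constant $1/4$ rather than an unspecified $c$, I would prove it directly with a Chernoff argument instead of quoting Corollary 2.9.2 with an unknown constant. The moment generating function is explicit. Conditioning on $z_\ell$ gives
\[
\mathbb E[e^{\lambda g_\ell z_\ell}]=\mathbb E[e^{\lambda^2 z_\ell^2/2}]=(1-\lambda^2)^{-1/2}\qquad\text{for } |\lambda|<1 .
\]
Using $-\log(1-u)\le 2u$ for $u\le 1/2$, this is at most $e^{\lambda^2}$ whenever $|\lambda|\le 1/\sqrt2$. Markov's inequality then yields
\[
\Pr\Bigl(\sum_\ell X_\ell\ge\gamma\Bigr)\le \exp(n\lambda^2-\lambda\gamma).
\]

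I would optimize over $\lambda\in[0,1/\sqrt2]$ by splitting into two cases.
\begin{itemize}
\item If $\gamma\le \sqrt2\,n$, take $\lambda=\gamma/(2n)$. This gives the bound $\exp(-\gamma^2/(4n))$.
\item If $\gamma>\sqrt2\,n$, take $\lambda=1/\sqrt2$. The exponent is $n/2-\gamma/\sqrt2\le \gamma/(2\sqrt2)-\gamma/\sqrt2=-\gamma/(2\sqrt2)\le-\gamma/4$.
\end{itemize}
In both cases the one-sided tail is at most $\exp(-\tfrac14\min\{\gamma^2/n,\gamma\})$. The same bound holds for the lower tail, since $-X_\ell$ has the same distribution as $X_\ell$. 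A union bound over the two tails gives the factor $2$.

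The main obstacle is checking that the constant $1/4$ survives in both regimes, especially at the boundary $\lambda=1/\sqrt2$. The inequality $-\tfrac12\log(1-u)\le u$ is only needed for $u=\lambda^2\le1/2$, where it holds because $-\log(1-u)\le u/(1-u)\le 2u$. Everything else is routine.
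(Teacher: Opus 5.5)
Your proof is correct and follows essentially the same route as the paper: the explicit moment generating function $(1-\lambda^2)^{-1/2}$ bounded by $e^{\lambda^2}$ on a restricted range of $\lambda$, a Chernoff bound with $\lambda=\gamma/(2n)$ or a boundary value of $\lambda$, and a union bound over the two tails (the sub-exponential remark via Lemma 2.7.7 is not needed). The only difference is cosmetic. The paper restricts to $|\lambda|\le 1/2$ and splits at $\gamma=n$, while you use $|\lambda|\le 1/\sqrt2$ and split at $\gamma=\sqrt2\,n$, and both choices yield the constant $1/4$.
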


\begin{proof}
The random variables $g_\ell z_\ell$ are independent and centered, since
$\mathbb E[g_\ell z_\ell]=\mathbb E[g_\ell]\mathbb E[z_\ell]=0$. Moreover, for every
$|\lambda|<1$,
\begin{equation}
\mathbb E\left[\exp\left(\lambda g_\ell z_\ell\right)\right]
=
(1-\lambda^2)^{-1/2}.
\end{equation}
If $|\lambda|\le 1/2$, then $-\frac{1}{2}\log(1-\lambda^2)\le \lambda^2$, and hence
\begin{equation}
\mathbb E\left[\exp\left(\lambda\sum_{\ell=1}^n g_\ell z_\ell\right)\right]
\le
\exp(n\lambda^2).
\end{equation}
Therefore, for every $\gamma>0$ and every $0<\lambda\le 1/2$, Chernoff's bound gives
\begin{equation}
\Pr\left(
\sum_{\ell=1}^n g_\ell z_\ell \ge \gamma
\right)
\le
\exp\left(-\lambda\gamma+n\lambda^2\right).
\end{equation}
Choosing $\lambda=\gamma/(2n)$ when $\gamma\le n$, and $\lambda=1/2$ when $\gamma>n$, we obtain
\begin{equation}
\Pr\left(
\sum_{\ell=1}^n g_\ell z_\ell \ge \gamma
\right)
\le
\exp\left(
-\frac{1}{4}\min\left\{
\frac{\gamma^2}{n},\;
\gamma
\right\}
\right).
\end{equation}
Applying the same bound to $-\sum_{\ell=1}^n g_\ell z_\ell$ and taking a union bound yields
\begin{equation}
\Pr\left(
\left|
\sum_{\ell=1}^n g_\ell z_\ell
\right|
\ge \gamma
\right)
\le
2\exp\left(
-\frac{1}{4}\min\left\{
\frac{\gamma^2}{n},\;
\gamma
\right\}
\right).
\end{equation}
\end{proof}

\begin{lemma}[Lower tail of Gaussian maxima]
\label{lem:gaussian_max_lower_tail}
For i.i.d. random variables $z_1,\dots,z_n \sim \mathcal{N}(0,\sigma^2)$,
\begin{equation}
\Pr\left(
\max_{r\in[n]} z_r \le \frac{1}{2}\sigma
\right)
\le
e^{-cn}
\end{equation}
for a sufficiently small absolute constant $c>0$.
\end{lemma}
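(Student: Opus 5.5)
The statement is the lower tail of a Gaussian maximum: $\Pr(\max_{r\in[n]} z_r \le \sigma/2)\le e^{-cn}$ for i.i.d.\ $z_r\sim\mathcal N(0,\sigma^2)$. The plan is to use independence to factor the event into $n$ identical single-coordinate events, and then bound the one-coordinate probability by an absolute constant strictly less than one.

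First, by scaling $z_r=\sigma g_r$ with $g_r\sim\mathcal N(0,1)$, it suffices to treat $\sigma=1$. The event $\{\max_r g_r\le 1/2\}$ is the intersection $\bigcap_r\{g_r\le 1/2\}$. Independence gives
\begin{equation}
\Pr\left(\max_{r\in[n]} g_r\le \tfrac12\right)=\Phi(1/2)^n,
\end{equation}
where $\Phi$ is the standard normal CDF.

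Next, I bound $\Phi(1/2)$ away from $1$ by an absolute constant. The cleanest route is to use $\Phi(1/2)=1-\Pr(g>1/2)$ together with an elementary lower bound on the upper tail. The standard density is at least $(2\pi)^{-1/2}e^{-1/2}$ on $[1/2,1]$. Hence $\Pr(g>1/2)\ge \tfrac12(2\pi)^{-1/2}e^{-1/2}>0.1$, which gives $\Phi(1/2)\le 0.9$.

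Finally, combining the two steps,
\begin{equation}
\Phi(1/2)^n\le 0.9^n=e^{-n\log(10/9)}\le e^{-cn}
\end{equation}
for any $c\le\log(10/9)$. This establishes the claim with an absolute constant $c$, and the "sufficiently small" qualifier only means we may shrink $c$ further. There is no real obstacle here. The only point needing care is that the one-coordinate bound $\Phi(1/2)<1$ be uniform, i.e.\ independent of $\sigma$ and $n$, and the scaling reduction guarantees this.
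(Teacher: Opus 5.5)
Your proposal is correct and follows essentially the same route as the paper: factor by independence to get $\Phi(1/2)^n$, then use that $\Phi(1/2)$ is an absolute constant strictly below $1$. The only difference is that you make the constant explicit via the density lower bound on $[1/2,1]$, yielding $c=\log(10/9)$, whereas the paper simply notes that $\Phi(1/2)\in(0,1)$.
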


\begin{proof}
By independence,
\begin{equation}
\Pr\left(
\max_{r\in[n]} z_r \le \frac{1}{2}\sigma
\right)
=
\prod_{r=1}^n \Pr\left(
z_r \le \frac{1}{2}\sigma
\right)
=
\Phi\left(\frac{1}{2}\right)^n.
\end{equation}
Since $\Phi(1/2)\in(0,1)$ is an absolute constant, there exists a sufficiently
small absolute constant $c>0$ such that
\begin{equation}
\Phi\left(\frac{1}{2}\right)^n \le e^{-cn}.
\end{equation}
\end{proof}

\subsection{Generalized Bernstein--Orlicz (GBO) quasi-norm}

We follow \citet{kuchibhotla2022moving} and work with the Generalized Bernstein--Orlicz (GBO)
quasi-norm, which captures a two-regime tail behavior (Gaussian for moderate deviations and
Weibull of order $\alpha$ for large deviations).

\begin{definition}[GBO quasi-norm - Definition 2.1 and 2.3 from~\citealt{kuchibhotla2022moving}]
\label{def:gbo_norm}
Let $g:[0,\infty)\to[0,\infty)$ be non-decreasing with $g(0)=0$. The $g$-Orlicz norm of a
real-valued random variable $X$ is defined by
\begin{equation}
\|X\|_{g}\coloneqq \inf\left\{\eta>0:\ \mathbb{E}[g(|X|/\eta)]\le 1\right\}.
\end{equation}
Fix $\alpha>0$ and $L\ge 0$. Define $\Psi_{\alpha,L}$ via its inverse
\begin{equation}
\Psi_{\alpha,L}^{-1}(t)
\coloneqq
\sqrt{\log(1+t)} + L\left(\log(1+t)\right)^{1/\alpha},\qquad t\ge 0.
\end{equation}
The GBO quasi-norm of $X$ is $\|X\|_{\Psi_{\alpha,L}}$ obtained by taking $g=\Psi_{\alpha,L}$
in the above Orlicz definition.
\end{definition}

\begin{lemma}[Tail bound - Proposition A.3 from~\citealt{kuchibhotla2022moving}]
\label{lem:gbo_tail}
Let $\kappa=\|X\|_{\Psi_{\alpha,L}}$. Then, for every $t\ge 0$,
\begin{equation}
\Pr\left(|X|\ge \kappa\left(\sqrt{t}+L t^{1/\alpha}\right)\right)\le 2e^{-t}.
\end{equation}
Equivalently, for every $\gamma\ge 0$,
\begin{equation}
\Pr\left(|X|\ge \gamma\right)
\le
2\exp \left(
-\min\left\{
\left(\frac{\gamma}{2\kappa}\right)^2,\;
\left(\frac{\gamma}{2L\kappa}\right)^\alpha
\right\}
\right),
\end{equation}
where for $L=0$ we interpret $\left(\frac{\gamma}{2L\kappa}\right)^\alpha=+\infty$ so that the minimum
reduces to the first term.
\end{lemma}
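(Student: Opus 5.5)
This is the GBO tail bound (Proposition A.3 of \citet{kuchibhotla2022moving}). The plan is to derive it directly from the Orlicz definition by Markov's inequality applied to $\Psi_{\alpha,L}$.

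\textbf{Step 1 (Markov).} Let $\kappa=\|X\|_{\Psi_{\alpha,L}}$, and assume $\kappa>0$; otherwise $X=0$ almost surely and the claim is trivial. By definition of the infimum, for every $\eta>\kappa$ we have $\mathbb E[\Psi_{\alpha,L}(|X|/\eta)]\le 1$. Monotone convergence as $\eta\downarrow\kappa$, together with continuity of $\Psi_{\alpha,L}$, gives the same at $\eta=\kappa$. Since $\Psi_{\alpha,L}$ is non-decreasing,
\[
\Pr(|X|\ge \kappa u)\le \Pr\bigl(\Psi_{\alpha,L}(|X|/\kappa)\ge \Psi_{\alpha,L}(u)\bigr)\le \frac{1}{\Psi_{\alpha,L}(u)}.
\]

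\textbf{Step 2 (inversion).} Choose $u=\sqrt t+Lt^{1/\alpha}=\Psi_{\alpha,L}^{-1}(e^t-1)$. Then $\Psi_{\alpha,L}(u)=e^t-1$, so $\Pr(|X|\ge\kappa u)\le 1/(e^t-1)$. This is at most $2e^{-t}$ whenever $e^t\ge 2$, i.e.\ $t\ge\log 2$. For $t<\log 2$ the right-hand side $2e^{-t}$ exceeds $1$, so the bound holds trivially. This proves the first display.

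\textbf{Step 3 (equivalent form).} Given $\gamma\ge 0$, set
\[
t=\min\Bigl\{\bigl(\tfrac{\gamma}{2\kappa}\bigr)^2,\ \bigl(\tfrac{\gamma}{2L\kappa}\bigr)^\alpha\Bigr\}.
\]
Then $\kappa\sqrt t\le\gamma/2$ and $\kappa Lt^{1/\alpha}\le\gamma/2$, so $\kappa(\sqrt t+Lt^{1/\alpha})\le\gamma$. Monotonicity of the tail probability in the threshold then gives the second display. When $L=0$, the second term is dropped, $t=(\gamma/2\kappa)^2$, and the same argument goes through.

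The only delicate point is the attainment of the infimum at $\eta=\kappa$ in Step 1. If one prefers to avoid it, one can run Steps 1--2 with $\eta=\kappa+\varepsilon$ and let $\varepsilon\to0$, using right-continuity of $u\mapsto\Pr(|X|\ge u)$ from the left side appropriately, or simply accept a bound with threshold slightly above $\kappa u$ and take limits.
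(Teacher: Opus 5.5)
Your proof is correct. Step 3 is exactly the paper's argument. The paper likewise takes $t=\min\{(\gamma/(2\kappa))^2,(\gamma/(2L\kappa))^\alpha\}$ (or $t=(\gamma/(2\kappa))^2$ if $L=0$), checks $\kappa(\sqrt t+Lt^{1/\alpha})\le\gamma$, and concludes by inclusion of events.

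The difference is the first display. The paper does not prove it; it imports it verbatim from Proposition A.3 of \citet{kuchibhotla2022moving}. You instead derive it from the Orlicz definition, as follows:
\begin{enumerate}
\item every $\eta>\kappa$ is admissible, by monotonicity in $\eta$;
\item the infimum is attained at $\eta=\kappa$, by monotone convergence and continuity of $\Psi_{\alpha,L}$;
\item Markov's inequality applies to $\Psi_{\alpha,L}(|X|/\kappa)$;
\item the identity $\Psi_{\alpha,L}(\sqrt t+Lt^{1/\alpha})=e^t-1$ holds;
\item $1/(e^t-1)\le 2e^{-t}$ precisely when $t\ge\log 2$, and the bound is vacuous otherwise.
\end{enumerate}
This makes the lemma self-contained, relying only on \Cref{def:gbo_norm}, at essentially no extra cost.

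There are two minor corrections.

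\textbf{The case $\kappa=0$ is not trivial.} Then $X=0$ almost surely, and the first display would read $1=\Pr(|X|\ge 0)\le 2e^{-t}$, which is false for $t>\log 2$. So the statement tacitly requires $0<\kappa<\infty$. You should assume this rather than dismiss the degenerate case as trivial.

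\textbf{The fallback in your last paragraph does not by itself give the non-strict event.} Letting $\eta=\kappa+\varepsilon\downarrow\kappa$ only yields $\Pr(|X|>\kappa u)\le 1/(e^t-1)$. This is because $s\mapsto\Pr(|X|\ge s)$ is left-continuous, not right-continuous. You would also have to pass through a slightly smaller $t'<t$ and let $t'\uparrow t$. Your monotone-convergence argument already sidesteps this, so keep that one.
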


\begin{proof}[Proof of \Cref{lem:gbo_tail}]
The first display is exactly Proposition A.3 from~\citet{kuchibhotla2022moving}. We show it implies the threshold form.
Fix $\gamma\ge 0$. If $L=0$, choose $t=(\gamma/(2\kappa))^2$. Then $\kappa\sqrt{t}=\gamma/2\le \gamma$, and hence
\begin{equation}
\Pr(|X|\ge \gamma)\le \Pr \left(|X|\ge \kappa\sqrt{t}\right)\le 2e^{-t}
=2\exp \left(-\left(\frac{\gamma}{2\kappa}\right)^2\right).
\end{equation}

Now assume $L>0$ and set
\begin{equation}
t \coloneqq \min\left\{
\left(\frac{\gamma}{2\kappa}\right)^2,\;
\left(\frac{\gamma}{2L\kappa}\right)^\alpha
\right\}.
\end{equation}
By construction, $\kappa\sqrt{t}\le \gamma/2$ and $\kappa L t^{1/\alpha}\le \gamma/2$, so
\begin{equation}
\kappa\left(\sqrt{t}+L t^{1/\alpha}\right)\le \gamma.
\end{equation}
Therefore, since $\{|X|\ge \gamma\}\subseteq \{|X|\ge \kappa(\sqrt{t}+L t^{1/\alpha})\}$, we have
\begin{equation}
\Pr(|X|\ge \gamma)
\le
\Pr\left(|X|\ge \kappa\left(\sqrt{t}+L t^{1/\alpha}\right)\right)
\le 2e^{-t}.
\end{equation}
Substituting the chosen $t$ yields the claimed bound.
\end{proof}

\begin{lemma}[Moment characterization of GBO norm - Proposition A.4 from~\citealt{kuchibhotla2022moving}]
\label{lem:gbo_moment_equiv}
For any random variable \(X\) and parameters \(\alpha > 0, L \ge 0\), the GBO quasi-norm is equivalent to the growth rate of moments:
\begin{equation}
c^*(\alpha) \sup_{p\ge 1} \frac{\|X\|_p}{\sqrt{p} + L p^{1/\alpha}}
\le \|X\|_{\Psi_{\alpha,L}}
\le c_*(\alpha) \sup_{p\ge 1} \frac{\|X\|_p}{\sqrt{p} + L p^{1/\alpha}},
\end{equation}
where \(\|X\|_p \coloneqq (\mathbb{E}[|X|^p])^{1/p}\) is the \(L_p\)-norm, and the constants are defined as:
\begin{equation}
c^*(\alpha) \coloneqq \frac{1}{2} \min\{1, \alpha^{1/\alpha}\}
\quad \text{and} \quad
c_*(\alpha) \coloneqq e \max\{2, 4^{1/\alpha}\}.
\end{equation}
\end{lemma}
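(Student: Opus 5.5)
The plan is to prove the two inequalities separately. The lower bound (moments are controlled by the norm) comes from integrating the tail bound of \Cref{lem:gbo_tail}. The upper bound (the norm is controlled by the moments) comes from expanding the Orlicz function \(\Psi_{\alpha,L}\) as an exponential series and bounding each term by a moment. Throughout, write \(\kappa=\|X\|_{\Psi_{\alpha,L}}\) and \(M\coloneqq\sup_{p\ge1}\|X\|_p/(\sqrt p+Lp^{1/\alpha})\).

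\emph{Lower bound.} By \Cref{lem:gbo_tail} we may write \(|X|\le \kappa(\sqrt{S}+L S^{1/\alpha})\), where \(S\) is a random variable with \(\Pr(S\ge t)\le 2e^{-t}\) (take \(S\) as the generalized inverse of \(|X|/\kappa\) through \(t\mapsto\sqrt t+Lt^{1/\alpha}\)). For \(p\ge1\), Minkowski's inequality gives
\[
\|X\|_p\le \kappa\bigl(\|S\|_{p/2}^{1/2}+L\|S\|_{p/\alpha}^{1/\alpha}\bigr).
\]
For any \(q>0\), \(\mathbb E S^{q}=\int_0^\infty qt^{q-1}\Pr(S\ge t)\,dt\le 2\Gamma(q+1)\). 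Stirling-type bounds \(\Gamma(q+1)^{1/q}\le \max\{1,q\}\), with the factor \(2^{1/q}\) absorbed, then give \(\|S\|_{p/2}^{1/2}\lesssim\sqrt p\) and \(\|S\|_{p/\alpha}^{1/\alpha}\lesssim (p/\alpha)^{1/\alpha}\). Tracking constants should yield \(\|X\|_p\le \kappa\,(\sqrt p+Lp^{1/\alpha})/c^*(\alpha)\), with the factor \(\alpha^{-1/\alpha}\) producing the \(\min\{1,\alpha^{1/\alpha}\}\) in \(c^*(\alpha)\). The cases \(q<1\), which occur when \(p<\alpha\), are handled by monotonicity of \(L_q\) norms.

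\emph{Upper bound.} If \(x=\Psi_{\alpha,L}^{-1}(t)\) and \(s=\log(1+t)\), then \(\sqrt s\le x\) and \(Ls^{1/\alpha}\le x\). Hence
\[
\Psi_{\alpha,L}(x)\le \exp\bigl(\min\{x^2,(x/L)^\alpha\}\bigr)-1 .
\]
Setting \(Y=|X|/\eta\), it therefore suffices to show \(\sum_{k\ge1}\mathbb E[\min\{Y^2,(Y/L)^\alpha\}^k]/k!\le 1\) for \(\eta=c_*(\alpha)M\). Fix \(k\) and split according to which term of \(\sqrt p+Lp^{1/\alpha}\) dominates.
\begin{itemize}
\item If \(\sqrt{2k}\ge L(\alpha k)^{1/\alpha}\), bound the minimum by \(Y^{2}\) and use the \(p=2k\) moment: \(\mathbb E Y^{2k}\le (2M\sqrt{2k}/\eta)^{2k}\).
\item Otherwise, bound the minimum by \((Y/L)^\alpha\) and use the \(p=\alpha k\) moment: \(\mathbb E (Y/L)^{\alpha k}\le (2M(\alpha k)^{1/\alpha}/\eta)^{\alpha k}\). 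When \(\alpha k<1\), use Jensen's inequality, \(\|X\|_{\alpha k}\le\|X\|_1\), so that \(p\ge1\) is preserved.
\end{itemize}
In both cases the \(k\)-th term is at most \((Ck/\eta')^k/k!\le (Ce/\eta')^k\) by \(k!\ge(k/e)^k\). With \(\eta\) a large enough multiple of \(M\) this is a geometric series with ratio at most \(1/2\), so the sum is at most \(1\). The constant \(e\max\{2,4^{1/\alpha}\}\) arises from the factor \(2\) in \(\sqrt p+Lp^{1/\alpha}\le 2\max\{\cdot\}\), raised to the power \(2\) or \(\alpha\) per unit of \(k\), together with the \(e\) from Stirling.

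The main obstacle is this second part. One must choose the moment order \(p\) per \(k\) consistently across the two regimes, including the low-order regime \(\alpha k<1\) where \(p\ge1\) fails and Jensen must be used. One must also keep the constants tight enough to land exactly on \(c_*(\alpha)\). If exact constants prove awkward, the fallback is to establish the equivalence with unspecified \(\alpha\)-dependent constants, since that is all the paper's applications of \Cref{lem:gbo_tail} require.
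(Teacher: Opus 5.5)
\textbf{Paper's route.} The paper gives no proof of this lemma; it imports it from Kuchibhotla--Chakrabortty. So I judge your argument against the statement itself.

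\textbf{Lower bound: correct.} With the estimates you cite, it lands exactly on $c^*(\alpha)$. The bounds $\Gamma(q+1)^{1/q}\le\max\{1,q\}$ and $2^{1/p}\le 2$ give, for $p\ge 1$, $(\mathbb{E}S^{p/2})^{1/p}\le 2\sqrt{p}$ and $(\mathbb{E}S^{p/\alpha})^{1/p}\le 2\max\{1,\alpha^{-1/\alpha}\}\,p^{1/\alpha}$. Hence $\|X\|_p\le \kappa(\sqrt{p}+Lp^{1/\alpha})/c^*(\alpha)$. The tail lemma you invoke is just Markov's inequality applied to $\Psi_{\alpha,L}(|X|/\kappa)$, so this half is self-contained.

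\textbf{Upper bound: genuine gap.} Your case split compares $\sqrt{2k}$ with $L(\alpha k)^{1/\alpha}$, i.e.\ the two envelope terms at two \emph{different} orders.
In the first case you need $L(2k)^{1/\alpha}\le\sqrt{2k}$ to get $\|X\|_{2k}\le 2M\sqrt{2k}$; your condition implies this only when $\alpha\ge 2$.
In the second case you need $\sqrt{\alpha k}\le L(\alpha k)^{1/\alpha}$ to get $\|X\|_{\alpha k}\le 2ML(\alpha k)^{1/\alpha}$; your condition implies this only when $\alpha\le 2$.

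Re-splitting does not rescue this. Set $g(p)\coloneqq Lp^{1/\alpha-1/2}$ and take $\alpha\neq 2$. For every $k$ there are values of $L$ for which $g$ crosses $1$ strictly between $p=2k$ and $p=\alpha k$, and then at neither order does the favorable term dominate. Whichever single order you use, the envelope gives a factor up to $1+(2/\alpha)^{1/\alpha-1/2}$ in place of $2$, raised to the power $2k$ or $\alpha k$. The constant you can certify this way therefore grows like $\sqrt{\alpha}$ as $\alpha\to\infty$, whereas the lemma claims $2e$ for all $\alpha\ge 2$. It also outgrows $4^{1/\alpha}$ as $\alpha\to 0$.

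Even in the clean cases, your ratio-$\tfrac12$ bookkeeping yields $4\sqrt{e}$ and $2(2e\alpha)^{1/\alpha}$. So $e\max\{2,4^{1/\alpha}\}$ does not arise the way you describe. Your fallback with unspecified constants would still cover the paper's only uses ($\alpha\in\{2,1,2/3\}$, $L=1$, constants absorbed in \Cref{lem:poly_gaussian_gbo}), but it is weaker than the lemma as stated.

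\textbf{The missing idea: go through tails, not the power series.} Hölder interpolation between the two orders at the crossover point would also remove the window loss, but with messier constants. The tail route is cleaner:
\begin{itemize}
\item For $t\ge1$, Markov's inequality at order $p=t$ gives $\Pr\big(|X|\ge eM(\sqrt{t}+Lt^{1/\alpha})\big)\le e^{-t}$.
\item Put $c=\max\{2,4^{1/\alpha}\}$ and $\eta=ecM$. Then $c\sqrt{s}\ge\sqrt{4s}$ and $c\,s^{1/\alpha}\ge(4s)^{1/\alpha}$.
\item Substitute $u=e^s-1$ in $\mathbb{E}\,\Psi_{\alpha,L}(|X|/\eta)=\int_0^\infty\Pr\big(\Psi_{\alpha,L}(|X|/\eta)>u\big)\,du$, and bound the probability by $1$ for $s<1/4$ and by $e^{-4s}$ for $s\ge 1/4$.
\end{itemize}
This gives
\[
\mathbb{E}\,\Psi_{\alpha,L}\!\left(\frac{|X|}{\eta}\right)
=\int_0^\infty e^{s}\,\Pr\Big(|X|>\eta\big(\sqrt{s}+Ls^{1/\alpha}\big)\Big)\,ds
\le\int_0^{1/4}e^{s}\,ds+\int_{1/4}^{\infty}e^{-3s}\,ds
=e^{1/4}-1+\tfrac13e^{-3/4}<1,
\]
which is exactly $\|X\|_{\Psi_{\alpha,L}}\le c_*(\alpha)M$.
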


\begin{lemma}[Quasi-triangle inequality - Proposition A.5 from~\citealt{kuchibhotla2022moving}]
\label{lem:gbo_quasi_triangle}
For any (possibly
dependent) random variables $X_1,\dots,X_N$ and any $\alpha>0$, $L\ge 0$,
\begin{equation}
\left\|\sum_{i=1}^N X_i\right\|_{\Psi_{\alpha,L}}
\le
Q_\alpha \sum_{i=1}^N \|X_i\|_{\Psi_{\alpha,L}},
\end{equation}
where
\begin{equation}
Q_\alpha \coloneqq
\begin{cases}
(2e^{4/\alpha})^{1/\alpha}, & 0<\alpha<1,\\
1, & \alpha\ge 1.
\end{cases}
\end{equation}
\end{lemma}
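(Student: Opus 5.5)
The statement is the quasi-triangle inequality for the GBO quasi-norm. I plan to prove it through the moment characterization in \Cref{lem:gbo_moment_equiv} for the constant-free structure, and then directly from the Orlicz definition to obtain the stated explicit constant $Q_\alpha$.

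\textbf{Case $\alpha\ge 1$.} Here I will show that $\Psi_{\alpha,L}$ is convex, so that $\|\cdot\|_{\Psi_{\alpha,L}}$ is a genuine Orlicz norm and the triangle inequality holds with $Q_\alpha=1$. The inverse is $\Psi^{-1}(t)=h(\log(1+t))$ with $h(s)=\sqrt s+Ls^{1/\alpha}$. For $\alpha\ge1$, $h$ is concave and increasing, and $s\mapsto\log(1+t)$ is concave and increasing in $t$, so $\Psi^{-1}$ is concave and increasing with $\Psi^{-1}(0)=0$. Hence $\Psi$ is convex, increasing, and satisfies $\Psi(0)=0$. The standard argument then applies. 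Set $\eta_i=\|X_i\|_{\Psi}$ and $\eta=\sum_i\eta_i$. By Jensen's inequality with weights $\eta_i/\eta$,
\[
\Psi\Big(\Big|\sum_i X_i\Big|/\eta\Big)\le\sum_i\frac{\eta_i}{\eta}\Psi(|X_i|/\eta_i).
\]
Taking expectations gives a bound of at most $1$, which proves the case. Dependence among the $X_i$ plays no role in this step.

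\textbf{Case $0<\alpha<1$.} Now $\Psi$ is not convex near infinity, and I will show it is "convex up to a rescaling." Concretely, I will look for a convex function $\tilde\Psi$ with $\tilde\Psi(x)\le\Psi(x)\le\tilde\Psi(cx)$ for a constant $c=c(\alpha)$, for instance the convex minorant, or $\Psi$ modified below the inflection point $x_\alpha$. Running the Jensen argument for $\tilde\Psi$ then gives $\|\sum X_i\|_{\Psi}\le c\|\sum X_i\|_{\tilde\Psi}\le c\sum\|X_i\|_{\tilde\Psi}\le c\sum\|X_i\|_{\Psi}$.

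The shape of $Q_\alpha=(2e^{4/\alpha})^{1/\alpha}$ suggests the comparison comes from the elementary inequality $(a+b)^{1/\alpha}\le 2^{1/\alpha-1}(a^{1/\alpha}+b^{1/\alpha})$ together with $\log(1+t)$ growth estimates of the form $\log(1+t)\ge (4/\alpha)$-shifts. An alternative route would go through \Cref{lem:gbo_moment_equiv}: use Minkowski for $L_p$ norms, $\|\sum X_i\|_p\le\sum\|X_i\|_p$, which immediately gives the bound with constant $c_*(\alpha)/c^*(\alpha)$.

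\textbf{Main obstacle.} Matching the exact constant $(2e^{4/\alpha})^{1/\alpha}$ is the hard part. Since the result is cited from \citet{kuchibhotla2022moving}, I would first verify the convexification comparison constant there carefully. If it does not match, I would fall back on the moment-equivalence route, which yields an inequality of the same form with an $\alpha$-dependent absolute constant. That version suffices for every use in the paper, since $Q_\alpha$ only enters absorbed constants in the tail bounds.
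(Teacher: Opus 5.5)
The paper does not prove this lemma; it imports it from Proposition A.5 of \citet{kuchibhotla2022moving}. Your proposal is therefore a self-contained substitute rather than a reconstruction of an argument in the paper.

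Your case $\alpha\ge 1$ is complete. $\Psi_{\alpha,L}^{-1}$ is concave, increasing and vanishes at $0$, so $\Psi_{\alpha,L}$ is a convex Young function, and the weighted Jensen argument gives $Q_\alpha=1$. You only need to add two routine remarks. First, the infimum defining $\|X_i\|_{\Psi_{\alpha,L}}$ is attained, by monotone convergence. Second, indices with $\|X_i\|_{\Psi_{\alpha,L}}=0$ have $X_i=0$ a.s.\ and can be dropped.

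For $0<\alpha<1$, your convexification route is only a sketch. No $\tilde\Psi$ is exhibited, and the real work would be a comparison constant uniform in $L$; reproducing the exact form of $Q_\alpha$ is not the difficulty. You can skip that route, because your fallback already proves the lemma exactly as stated. \Cref{lem:gbo_moment_equiv} together with Minkowski's inequality in $L_p$, $p\ge 1$, gives
\[
\Bigl\|\sum_{i=1}^N X_i\Bigr\|_{\Psi_{\alpha,L}}
\le \frac{c_*(\alpha)}{c^*(\alpha)}\sum_{i=1}^N \|X_i\|_{\Psi_{\alpha,L}}
= 2e\Bigl(\frac{4}{\alpha}\Bigr)^{1/\alpha}\sum_{i=1}^N \|X_i\|_{\Psi_{\alpha,L}} .
\]
Moreover, $2e(4/\alpha)^{1/\alpha}\le (2e^{4/\alpha})^{1/\alpha}$. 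Raised to the power $\alpha$, this reads $(2e)^\alpha\cdot\frac{4}{\alpha}\le 2e^{4/\alpha}$. That follows from $(2e)^\alpha\cdot\frac{2}{\alpha}\le \frac{4e}{\alpha}\le e^{4/\alpha}$, where the last step is $es\le e^{s}$ with $s=4/\alpha$.

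A smaller constant implies the stated inequality. So convexity for $\alpha\ge 1$ plus moments for $\alpha<1$ is a complete proof. The split is genuinely needed, since for $\alpha\ge 1$ the moment route only gives $2e\max\{2,4^{1/\alpha}\}>1$.

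Compared with the bare citation, your argument is self-contained and gives a much smaller constant for $\alpha<1$. The cost is that it depends on the explicit constants of the moment equivalence, which the paper also imports without proof.
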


\begin{lemma}[Linear, quadratic, and cubic Gaussian bounds]
\label{lem:poly_gaussian_gbo}
Let \(Z\sim \mathcal N(0,\sigma^2)\). Then there exists an absolute constant \(c>0\) such that
\begin{equation}
\|Z\|_{\Psi_{2,\,1}} \le c\,\sigma,
\qquad
\|Z^2\|_{\Psi_{1,\,1}} \le c\,\sigma^2,
\qquad
\|Z^3\|_{\Psi_{2/3,\,1}} \le c\,\sigma^3.
\end{equation}
\end{lemma}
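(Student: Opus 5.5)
The plan is to use the moment characterization in \Cref{lem:gbo_moment_equiv}, which reduces every GBO bound to a bound on Gaussian moment growth. Write $Z=\sigma g$ with $g\sim\mathcal N(0,1)$. The standard Gaussian moment estimate is $\|g\|_p\le C\sqrt p$ for all $p\ge 1$. It follows, for instance, from $\mathbb E|g|^p=2^{p/2}\Gamma((p+1)/2)/\sqrt\pi$ together with Stirling's formula, or from Proposition 2.5.2 in \citet{vershynin2018high}. From this we get $\|Z^k\|_p=\sigma^k\|g\|_{kp}^k\le \sigma^k (C\sqrt{kp})^k$ for $k\in\{1,2,3\}$.

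By the upper inequality of \Cref{lem:gbo_moment_equiv}, it then suffices to bound, for each case, the quantity $\sup_{p\ge1}\|Z^k\|_p/(\sqrt p+p^{1/\alpha})$ with $L=1$:
\begin{itemize}
\item For $k=1$ and $\alpha=2$, the denominator is $2\sqrt p$, and the ratio is at most $C\sigma/2$.
\item For $k=2$ and $\alpha=1$, the denominator is $\sqrt p+p\ge p$, and the ratio is at most $2C^2\sigma^2$.
\item For $k=3$ and $\alpha=2/3$, the denominator is $\sqrt p+p^{3/2}\ge p^{3/2}$, and the ratio is at most $3^{3/2}C^3\sigma^3$.
\end{itemize}
Multiplying by $c_*(\alpha)$, namely $2e$, $4e$ and $8e$ respectively, gives an absolute constant $c$ that works for all three bounds.

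There is no real obstacle. The only point that needs care is citing a clean Gaussian moment bound $\|g\|_p\lesssim\sqrt p$ valid for all real $p\ge1$, not just for integers. The Gamma-function formula handles this directly.
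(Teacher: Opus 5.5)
Your proposal is correct and follows the paper's proof: the paper uses the Gaussian moment bound $\|Z\|_p\le c_0\sigma\sqrt p$, the identity $\|Z^k\|_p=\|Z\|_{kp}^k$, and the upper inequality of \Cref{lem:gbo_moment_equiv} with $(\alpha,L)=(2,1),(1,1),(2/3,1)$. Your explicit values $c_*(\alpha)=2e,\,4e,\,8e$ and the three ratio bounds are also correct.
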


\begin{proof}[Proof of \Cref{lem:poly_gaussian_gbo}]
For all \(p\ge 1\), Gaussian moments satisfy
\begin{equation}
\|Z\|_p \le c_0 \sigma \sqrt p
\end{equation}
for an absolute constant \(c_0>0\).

We first consider the linear case. Let \(Y_1=Z\). Then
\begin{equation}
\sup_{p\ge 1}\frac{\|Y_1\|_p}{\sqrt p + p^{1/2}}
\le
c_0\,\sigma.
\end{equation}
Applying \Cref{lem:gbo_moment_equiv} with \(\alpha=2\) and \(L=1\) yields
\begin{equation}
\|Z\|_{\Psi_{2,\,1}}
\le
c_*(2)\sup_{p\ge 1}\frac{\|Y_1\|_p}{\sqrt p + p^{1/2}}
\le
c\,\sigma,
\end{equation}
after absorbing constants.

Next, consider the quadratic case. Let \(Y_2=Z^2\). Then
\begin{equation}
\|Y_2\|_p = \|Z\|_{2p}^2 \le (c_0\sigma\sqrt{2p})^2 \le C\,\sigma^2 p
\end{equation}
for an absolute constant \(C>0\). Therefore,
\begin{equation}
\sup_{p\ge 1}\frac{\|Y_2\|_p}{\sqrt p + p}
\le
C\,\sigma^2.
\end{equation}
Applying \Cref{lem:gbo_moment_equiv} with \(\alpha=1\) and \(L=1\) yields
\begin{equation}
\|Z^2\|_{\Psi_{1,\,1}}
\le
c_*(1)\sup_{p\ge 1}\frac{\|Y_2\|_p}{\sqrt p + p}
\le
c\,\sigma^2,
\end{equation}
after absorbing constants.

Finally, consider the cubic case. Let \(Y_3=Z^3\). Then
\begin{equation}
\|Y_3\|_p = \|Z\|_{3p}^3 \le (c_0\sigma\sqrt{3p})^3 \le C\,\sigma^3 p^{3/2}
\end{equation}
for an absolute constant \(C>0\). Hence,
\begin{equation}
\sup_{p\ge 1}\frac{\|Y_3\|_p}{\sqrt p + p^{3/2}}
\le
C\,\sigma^3.
\end{equation}
Applying \Cref{lem:gbo_moment_equiv} with \(\alpha=2/3\) and \(L=1\) gives
\begin{equation}
\|Z^3\|_{\Psi_{2/3,\,1}}
\le
c_*(2/3)\sup_{p\ge 1}\frac{\|Y_3\|_p}{\sqrt p + p^{3/2}}
\le
c\,\sigma^3,
\end{equation}
after absorbing constants.
\end{proof}

\end{document}